\documentclass{article}
\usepackage[utf8]{inputenc}
\usepackage{fancyhdr}
\usepackage{extramarks}
\usepackage{amsmath}
\usepackage{amsthm}
\usepackage{amsfonts}
\usepackage{tikz}
\usepackage{algorithm}
\usepackage{algpseudocode}
\usepackage{bbm}
\usepackage{calc}  
\usepackage{enumitem}  
\usepackage{booktabs}   
\usepackage{ltablex}
\usepackage{hyperref}
\usepackage{commath}
\usepackage{mathtools}
\usepackage{bbm}
\usepackage{bm}
\usepackage{amssymb}
\usepackage{url}
\usepackage{authblk}

\hypersetup{
    colorlinks=true,
    linkcolor=blue,
    filecolor=magenta,      
    urlcolor=cyan,
    citecolor = blue,
    pdftitle={Overleaf Example},
    pdfpagemode=FullScreen,
    }

\urlstyle{same}

\newtheorem*{assumption*}{\assumptionnumber}
\providecommand{\assumptionnumber}{}
\makeatletter
\newenvironment{assumption}[1]
 {%
  \renewcommand{\assumptionnumber}{Assumption #1}%
  \begin{assumption*}%
  \protected@edef\@currentlabel{#1}%
 }
 {%
  \end{assumption*}
 }
\makeatother

\makeatletter
\newenvironment{breakablealgorithm}
  {
   \begin{center}
     \refstepcounter{algorithm}
     \hrule height.8pt depth0pt \kern2pt
     \renewcommand{\caption}[2][\relax]{
       {\raggedright\textbf{\fname@algorithm~\thealgorithm} ##2\par}%
       \ifx\relax##1\relax 
         \addcontentsline{loa}{algorithm}{\protect\numberline{\thealgorithm}##2}%
       \else 
         \addcontentsline{loa}{algorithm}{\protect\numberline{\thealgorithm}##1}%
       \fi
       \kern2pt\hrule\kern2pt
     }
  }{
     \kern2pt\hrule\relax
   \end{center}
  }
\makeatother

\newtheorem{prop}{Proposition}
\newtheorem{thm}{Theorem}
\newtheorem{lem}{Lemma}

\newcommand{\ind}{\perp\!\!\!\!\perp}

%
%

\topmargin=-0.45in
\evensidemargin=0in
\oddsidemargin=0in
\textwidth=6.5in
\textheight=9.0in
\headsep=0.25in

\linespread{1.1}

\allowdisplaybreaks

\title{
    \textmd{\textbf{Online Linear Programming with Batching}}\\
    }
\author[1]{Haoran Xu}
\author[1]{Peter W. Glynn}
\author[1]{Yinyu Ye}
\affil[1]{Department of Management Science and Engineering, Stanford University}
\affil[1]{\{haoran14, glynn, yyye\}@stanford.edu}
\date{}

\begin{document}

\maketitle
\begin{abstract}
    We study Online Linear Programming (OLP) with batching. The planning horizon is cut into $K$ batches, and the decisions on customers arriving within a batch can be delayed to the end of their associated batch. Compared with OLP without batching, the ability to delay decisions brings better operational performance, as measured by regret. Two research questions of interest are: (1) What is a lower bound of the regret as a function of $K$? (2) What algorithms can achieve the regret lower bound? These questions have been analyzed in the literature when the distribution of the reward and the resource consumption of the customers have finite support. By contrast, this paper analyzes these questions when the conditional distribution of the reward given the resource consumption is continuous, and we show the answers are different under this setting. When there is only a single type of resource and the decision maker knows the total number of customers, we propose an algorithm with a $O(\log K)$ regret upper bound and provide a $\Omega(\log K)$ regret lower bound. We also propose algorithms with $O(\log K)$ regret upper bound for the setting in which there are multiple types of resource and the setting in which customers arrive following a Poisson process. All these regret upper and lower bounds are independent of the length of the planning horizon, and all the proposed algorithms delay decisions on customers arriving in only the first and the last batch. We also take customer impatience into consideration and establish a way of selecting an appropriate batch size.
\end{abstract}
\section{Introduction}
Many resource allocation problems can be formulated as Linear Programming (LP) problems. In a static environment, the decision maker can first collect the information of all the customers and then solve the LP problem to obtain the optimal allocation decision. However, assuming a static environment may not be realistic in many real-world applications. Thus, a variant of the problem in a dynamic environment called online resource allocation has attracted attentions of the community of operations research and management science. In this paper, we use Online Linear Programming (OLP) (See \cite{Agrawal2014}) as the framework to study the online resource allocation problem.

A key feature of online resource allocation is that customers arrive sequentially and the decision maker is required to make immediate and irrevocable decisions without the information of future customers. This requirement brings online resource allocation problem closer to real-world applications in which data and information reveals sequentially; however, since customers are usually willing to wait for a while though not forever, this requirement maybe also too restrictive because it implicitly assumes that all customers are completely impatient. Under this strong assumption, the decision maker may lose the opportunity to improving performance by delaying decisions on some customers. Thus, it is worthwhile to study the online resource allocation problem when batching is allowed.

The impact of postponing real-time decisions has been recently studied in many applications of online decision making, including order fulfillment (\cite{Wang2023}), kidney exchange (\cite{Ashlagi2021}), and ride hailing (\cite{Feng.et.al.2023}). In this paper, we study the impact of batching operation on OLP. We are interested in the following two research questions: (1) What is a lower bound of the regret as a function of the number of batches? (2) What algorithms can achieve the lower bound? Main contributions of this paper are summarized in the following subsection.
\subsection{Main Contributions}
We study OLP with batching under the random input setting introduced in \cite{li2019olp}. Reward and resource consumption of customers are modeled as i.i.d. random variables, and we assume the conditional distribution of the reward given the resource consumption is continuous. The total amount of resource available to the decision maker is limited, and the goal of the decision maker is to maximize the expected total reward by accepting and rejecting the requests of the customers. Let $T$ be the length of the planning horizon, and we consider two natural models of $T$. The first model specifies $T$ in terms of the number of customers, and the second model specifies $T$ in terms of time. In this paper, we use the first model when the total number of customers is known in advance, and we use the second model when the total number of customers is random. In the batching operation, the planning horizon is cut into $K$ batches with equal length, and the decisions on the customers arriving within a batch can be delayed to the end of their associated batch. Throughout this paper, we call $B=\frac{T}{K}$ the batch size.   

We propose algorithms and provide regret analyses for different settings of OLP with batching. We first study the setting in which the total number of customers is known. When there is only a single type of resource, we propose an algorithm that only delays the decisions on customers arriving in the first and the last batch. In addition, the proposed algorithm assumes the distribution of the reward and the resource consumption is unknown. We prove the regret of the proposed algorithm is $O(\log K)$. In addition, we formulate the problem as a Markov Decision Process and define the optimal online policy using the Bellman equation. Compared with the proposed algorithm, the optimal online policy needs to delay decisions on customers arriving in all the batches, and it also assumes the distribution of the reward and the resource consumption is known. We show the regret of the optimal online policy is $\Omega(\log K)$. Thus, the regret of this setting is $\Theta(\log K)$, and only delaying decisions on customers arriving in the first and the last batch is sufficient to achieve the optimal order of regret. Knowing the distribution of the reward and the resource consumption is also not necessary for achieving the optimal order of the regret. When there are multiple types of resource, we additionally assume that some historical data is available at the beginning of the planning horizon. With this extra assumption, we propose an algorithm having $O(\log K)$ regret, and the proposed algorithm only delays the decisions on the customers arriving in the last batch.  

We then study the setting in which customers arrive following a Poisson process, and we only focus on the setting in which there is a single type of resource. The proposed algorithm still only delays the decisions on the customers arriving in the first and the last batch, and it assumes that the arrival rate of the Poisson process and the distribution of the reward and the resource consumption are unknown. By assuming the inter-arrival time is independent of the reward and resource consumption, we show that the proposed algorithm also has a $O(\log K)$ regret upper bound. Thus, under appropriate assumptions, the total number of customers being random does not lead to a higher order of regret when the batching operation is allowed. In addition, this $O(\log K)$ regret upper bound implies that, if we fix the the length of the planning horizon and all the customers arriving in the first and the last batch are willing to wait for at least $B$ units of time, the regret is uniformly bounded in the expected total number of customers. If batching is not allowed, the uniformly bounded regret is impossible when the conditional distribution of the reward given the resource consumption is continuous (\cite{Bray2019}).

All algorithms proposed in this paper are based on the Action-History-Dependent Learning Algorithm (AhdLA) proposed in \cite{li2019olp}. AhdLA needs to solve a LP for each customer arriving in the planning horizon, but our algorithms only solve one LP for each batch. Thus, in addition to reducing the regret, our algorithms show that the batching operation also improves the computation complexity.  

In this paper, we also consider customer impatience. When the decisions on the customers are delayed, some impatient customers may leave the resource allocation procedure. The decision maker cannot allocate resource to these impatient customers and lose the opportunity to collect rewards from them. We propose an algorithm to deal with this additional setting and provide a regret upper bound for it. We observe that the regret upper bound does not decay monotonically in the batch size $B$. This regret upper bound suggests that there should be an optimal batch size when customer impatience is considered. We then provide a way to select the batch size by minimizing the regret upper bound.
\subsection{Related works}
Online resource allocation has been extensively studied by the community of operations research and management science. We refer readers to \cite{BalseiroSurvey} for a survey of the online resource allocation problem. There has been some works studying the impact of delay and batching on online resource allocation problem. \cite{Golrezaei2021} study an online resource allocation problem with both impatient customers and partially patient customers. The decision maker needs to make immediate and irrevocable decisions on impatient customers, but the decisions on the partially patient customers can be delayed for several time periods.  \cite{Xie.et.al.2023} study two different settings of postponing real-time decisions when there are finite types of customers. In one setting, the decision on each customer is delayed for $B$ time periods. In the other setting, customers are batched together with the batch size equal $B+1$, i.e., the decisions on the $j$th customer of a given batch is delayed $B+1-j$ time periods. \cite{Xie.et.al.2023} show that the regret of online resource allocation decays exponentially in $B$ in the delay setting and decays in the order of $O(\frac{1}{B+1})$ in the batching setting. One difference between this paper and \cite{Xie.et.al.2023} is that we assume that the conditional distribution of the reward given the resource consumption is continuous. This difference has significant influence on the regret when delaying real-time decision is not allowed. If the distribution of the reward and the resource consumption have finite support, the regret of the online resource allocation has a constant regret upper bound (\cite{Chen.et.al.2023}, \cite{Vera2021}); however, if the conditional distribution of the reward given the resource consumption is continuous, \cite{Bray2019} provides a logarithmic lower bound for the regret. Compared with \cite{Xie.et.al.2023}, our regret analysis show that the impacts of batching are also different in these two settings. Another difference is that our algorithms need to delay the decisions at most on customers arriving in the first and the last batch. For customers arriving in all the other batches, our algorithms can make immediate irrevocable decisions on them. We show that we can still reduce the regret significantly even if we violate the online assumption in this weaker way. 

There are also works studying how batching and delay can improve algorithm performance in online matching problem. Many of them focus on the analysis of competitive ratio. \cite{LeeSingla} study the online matching problems in which the online arrivals of edges of a graph are batched into $s$ stages. For the $s$-stage integral general matching problem, they propose a polynomial time algorithm with $\frac{1}{2}+\frac{1}{2^{O(s)}}$ competitive ratio. For the two-stage fractional general matching problem, they propose a polynomial time algorithm with $0.6$ competitive ratio. \cite{FengandNia2022} study a vertex weighted matching problem, where online arrivals are batched into $K$ stages. They propose a matching algorithm with $1-\left(1-\frac{1}{K}\right)^K$ competitive ratio for the problem. Randomness of online arrivals and the impact of customer impatience are also studied in the literature related to online matching with batching and delay. For instance, \cite{Blanchet2022} consider online matching problems in a two-side market, where buyers and sellers arrive following Poisson processes and leave the market after an exponentially distributed amount of time due to impatience. The decision maker needs to decide when to match and how to match based on the number of buyers and sellers and the utility of possible matches. In our paper, we study similar issues in the context of online resource allocation, and we propose a method to decide appropriate batch size by taking the randomness of online arrivals and customer impatience into consideration. For more works about the impact of delay on online matching, we refer readers to \cite{Kerimov2023} and references therein.

In this paper, we study the online resource allocation problem using the framework of OLP. There are two major classes of online linear programming algorithms. One class of OLP algorithms make decisions by repeatedly solving LPs, and the other class of OLP algorithms use first-order methods. \cite{Agrawal2014} study the OLP under random permutation model and propose the Dynamic Learning Algorithm (DLA). DLA periodically solves dual LPs to update the dual price and uses the dual price to make resource allocation decision. These dual LPs are formulated based on revealed information of the customers, but the average resource used in the formulation is always the initial average resource. \cite{li2019olp} study the OLP under random input model and propose the Action-History-Dependent Learning Algorithm (AhdLA). AhdLA can be viewed as a modification of DLA, which formulates dual LPs using the remaining average resource instead of the initial average resource. \cite{li2019olp} introduce the concept of dual convergence and use the dual convergence result to prove that AhdLA has $O(\log n\log\log n)$ regret. \cite{Bray2019} improves the dual convergence result and shows that the regret of AhdLA is $O(\log n)$. In addition, \cite{Bray2019} provides a $\Omega(\log n)$ lower bound for the problem and shows that AhdLA achieves the optimal order of regret. AhdLA needs to solve a LP when each customer arrives. Thus, AhdLA has a high computation complexity. Compared to algorithms that repeatedly solving LPs, algorithms using first-order methods have lower computation complexity. \cite{Balseiro2023} propose a dual mirror decent method, and \cite{lisunye2023} propose a projected stochastic subgradient descent method. Both of the methods have $O(\sqrt{n})$ regret. \cite{Gao2024} propose a first-order method that has $o(\sqrt{n})$ regret. Although first-order method has lower computation complexity, to our best knowledge, there is no first-order method that can achieve logarithmic regret. Our work is mostly related to \cite{li2019olp}. We propose algorithms based on the idea of Action-History-Dependent. Since batching is allowed, our algorithms only need to solve one LP for each batch. Thus, our algorithms have lower computation complexity than AhdLA..

The paper is organized as follows. Section \ref{Section: Preliminary} introduce preliminaries and useful properties of dual price used in the regret analysis. The algorithms and the regret analysis are provided in Section \ref{sec: olp with delay}. Numerical experiments and discussions of the algorithms are given in Section \ref{Section: NumExp}. Finally, we summarize the paper in Section \ref{Sec: Conclusion}.
\section{Preliminaries}\label{Section: Preliminary}
The linear resource allocation problem can be formulated as the following linear programming model.
\begin{equation}\label{eqn: PrimalLinearProgramming}
    \begin{array}{ll}
        \max & \sum_{j=1}^nr_jx_j  \\
         S.T & \sum_{j=1}^na_j\cdot x_j\leq b_0 \\
          & 0\leq x_j\leq 1\quad\forall j=1,\cdots,n
    \end{array}
\end{equation}
where $n$ is the total number of customers, $b_0\in\mathbb{R}^m_{+}$ is the total resource, $m$ is the number of the types of resources, and  $(r_j,a_j)\in\mathbb{R}^{m+1}$ is the reward and resource consumption of the $j$th customer. The dual formulation of (\ref{eqn: PrimalLinearProgramming}) is
\begin{equation}\label{eqn:DualLinearProgramming}
    \begin{array}{ll}
       \min  & b_0^Tp + \sum_{j=1}^n y_j \\
       S.T  & {a_j}^Tp+y_j\geq r_j \quad\forall j=1,\cdots,n\\
       &p\geq 0,\;y_j\geq 0\quad\forall j=1,\cdots,n\;\;\;\;.
    \end{array}
\end{equation}
Let $x_j^*$ and $p_n^*$ be the optimal solution of (\ref{eqn: PrimalLinearProgramming}) and (\ref{eqn:DualLinearProgramming}). By complementary slackness,
\begin{equation}\label{eqn: ComplementarySlackness}
    x_j^*=\left\{
    \begin{array}{ll}
      1   & \mathrm{if}\;r_j>{a_j}^Tp_n^* \\
      0   & \mathrm{if}\;r_j<{a_j}^Tp_n^*\;\;\;\;.
    \end{array}
    \right.
\end{equation}
(\ref{eqn: ComplementarySlackness}) implies that $\left\{\mathbbm{1}\left\{r_j>{a_j}^Tp_n^*\right\}\right\}_{j=1}^n$ is a near-optimal solution. In addition, if $p_n^*$ is provided to the decision maker in advance, the decision $\mathbbm{1}\left\{r_j>{a_j}^Tp_n^*\right\}$ for each customer $j$ can be implemented without knowing the reward and the resource consumption of all the other customers. In the setting of OLP, customers arrive sequentially, and the decision maker needs to make the allocation decision on the current customer without knowing the information of future customers. Therefore, (\ref{eqn: ComplementarySlackness}) motivates the extensively studied dual-based algorithms of OLP. Let $\mathcal{H}_j$ be all the information collected by the decision maker up to the arrival of customer $j$. Then, the idea of dual-based algorithms is to compute a dual price $p(\mathcal{H}_j)$ and make the allocation decision on customer $j$ through the dual decision rule (\ref{eqn:DualDecisionRule}).
\begin{equation}\label{eqn:DualDecisionRule}
    x_j=\mathbbm{1}\{r_j>{a_j}^Tp(\mathcal{H}_j)\}
\end{equation}
Since the information of a customer is unknown before his arrival in the setting of Online Linear Programming, we model the reward and resource consumption $(r_j,a_j)$ as a random vector. We also model the total number of customers $n$ as a random variable. When $n$ is deterministic, we view it as a trivial random variable. Throughout this paper, we make the following assumptions.
\begin{assumption}{1}\label{Assumption: DistributionMulti}
    \;\\
    (a) $\left\{(r_j,a_j)\right\}_{j=1}^{+\infty}$ is a sequence of i.i.d. random vectors in $\mathbb{R}^{m+1}$\\
    (b) Let $\left(r_j,a_j\right)\stackrel{d}{=}(r,a)$, and the distribution of $(r,a)$ satisfies
    \begin{equation}\begin{aligned}
        &\mathbb{P}\{0\leq r\leq\bar{r}\}=1\\
        &\mathbb{P}\{\norm{a}_2\leq\bar{a}\}=1\\
        &\mathbb{P}\{a\geq\underline{a}e\}=1
    \end{aligned}\end{equation}
    with $\bar{r}>0$, $\bar{a}>0$, $\underline{a}>0$, and $e\in\mathbb{R}^m$ is the vector with all components being 1.\\
    (c) The initial average resource $d_0=\frac{b_0}{\mathbb{E}[n]}\in\otimes_{i=1}^m(\underline{d},\bar{d})$ where $\bar{d}>\underline{d}>0$.
\end{assumption}
\noindent For a positive integer $N$, define the offline dual price $p_N^*(\cdot):\mathbbm{R}^m_{+}\longrightarrow\mathbbm{R}^m_{+}$ to be the random function
\begin{equation}
    p_N^*(d) = \arg\min_{p\geq 0} d^Tp + \frac{1}{N}\sum_{j=1}^N\left(r_j-{a_j}^Tp\right)^+
\end{equation}
We call $N$ the sample size of the offline dual price in this paper. In addition, for a non-positive integer $N\leq 0$, define
\begin{equation}
    p_N^*(d) = 0\quad\forall d\geq 0
\end{equation}
Define the population dual price $p^*(\cdot):\mathbbm{R}^m_{+}\longrightarrow\mathbbm{R}^m_{+}$ to be a deterministic function
\begin{equation}
    p^*(d) = \arg\min_{p\geq 0} d^Tp + \mathbb{E}\left[\left(r-a^Tp\right)^+\right]
\end{equation}
To conduct the regret analysis, we make the following additional assumptions throughout the paper. These assumptions are inherited from \cite{li2019olp} and \cite{Bray2019}.
\begin{assumption}{2}\label{Assumption: DistributionDensity}
    \;\\
    (a) ${e_i}^Tp^*(d_0)>0\;\forall i=1,\cdots, m$, where $e_1,\cdots,e_m$ is the standard basis of $\mathbb{R}^m$.\\
    (b) Given $a$, $r$ has a bounded conditional density $f_{r|a}$, i.e, there exists a constant $\beta>0$ such that, for all $v$ in the support of $a$,
    \begin{equation}
        f_{r|a}(u|v)\leq\beta\quad\forall u
    \end{equation}
    (c) There exists $\delta_p>0$ such that the Hessian of $g(p)=d^Tp+\mathbb{E}\left[\left(r-a^Tp\right)^+\right]$ exists, is positive definite, and is continuous in $p$ in $\otimes_{i=1}^m [{e_i}^Tp^*(d_0)-\delta_p,{e_i}^Tp^*(d_0)+\delta_p]$.
\end{assumption}
\noindent Assumption~\ref{Assumption: DistributionDensity}(a) can be viewed as a non-degeneracy assumption, which requires all resources are asymptotically consumed when the number of customers goes to infinity. Assumption~\ref{Assumption: DistributionDensity}(b) and (c) indicates that the conditional distribution of the reward given the resource consumption is continuous. Assumption~\ref{Assumption: DistributionMulti} and Assumption~\ref{Assumption: DistributionDensity} are used to guarantee that we can apply the following useful properties of the offline and population dual prices from the literature.
\begin{lem}\label{lem: BoundedLipschitz}
    \;\\
    (a) (Proposition 1 of \cite{li2019olp}) For all $d\in \otimes_{i=1}^m(\underline{d},\bar{d})$ and positive integers $N>m$,\\
    \begin{equation}\begin{aligned}
        &e^Tp^*(d)\in \left[0,\frac{\bar{r}}{\underline{d}}\right]\\
        &\mathbb{P}\left\{e^Tp_N^*(d)\in \left[0,\frac{\bar{r}}{\underline{d}}\right]\right\}=1
    \end{aligned}\end{equation}
    where $e\in\mathbb{R}^m$ is the vector with all components being 1.\\
    (b) (Lemma 1 of \cite{Bray2019}, Lemma 2 of \cite{Bray2019}, Lemma 12 of \cite{li2019olp}) There exists a neighborhood $\Omega_d$ of $d_0$ such that $\Omega_d\subseteq\otimes_{i=1}^m(\underline{d},\bar{d})$, and for all $d\in\Omega_d$,
    \begin{equation}
        {e_i}^Td = \mathbb{E}\left[{e_i}^Ta\mathbbm{1}\{r>a^Tp^*(d)\}\right]
    \end{equation}
    and $p^*(\cdot)$ is Lipschitz continuous on $\Omega_d$, i.e., there exists a constant $L$ such that,
    \begin{equation}
        \norm{p^*(d)-p^*(\Tilde{d})}_2^2\leq L\norm{d-\Tilde{d}}_2^2\quad\forall d,\Tilde{d}\in\Omega_d
    \end{equation}
\end{lem}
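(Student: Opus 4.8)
\emph{Part (a).} The plan for the two bounds is a one-line comparison of objective values. Fix $d\in\otimes_{i=1}^m(\underline{d},\bar{d})$ and, for a positive integer $N$, set $g_N(p)=d^Tp+\tfrac1N\sum_{j=1}^N(r_j-a_j^Tp)^+$ (the population version replaces $\tfrac1N\sum_{j=1}^N$ by $\mathbb{E}$). By Assumption~\ref{Assumption: DistributionMulti}, $r_j\ge0$, so $g_N(0)=\tfrac1N\sum_{j=1}^N r_j\le\bar{r}$ almost surely, and optimality of $p_N^*(d)$ gives $g_N(p_N^*(d))\le\bar{r}$. Conversely, dropping the nonnegative $(\cdot)^+$ terms and using $d\ge\underline{d}\,e$ together with $p_N^*(d)\ge0$ yields $g_N(p_N^*(d))\ge d^Tp_N^*(d)\ge\underline{d}\,e^Tp_N^*(d)$. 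Hence $0\le e^Tp_N^*(d)\le\bar{r}/\underline{d}$ almost surely, and the population bound follows identically from $\mathbb{E}[r]\le\bar{r}$; the hypothesis $N>m$ enters only in guaranteeing that the empirical $\arg\min$ is almost surely well defined.

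\emph{Part (b).} Here I would build $\Omega_d$ through the implicit function theorem applied to the stationarity map of $g_d(p):=d^Tp+\mathbb{E}[(r-a^Tp)^+]$. This function is convex in $p$, and Assumption~\ref{Assumption: DistributionDensity}(b)--(c) makes it $C^2$ on the box $\mathcal{B}:=\otimes_{i=1}^m[e_i^Tp^*(d_0)-\delta_p,\,e_i^Tp^*(d_0)+\delta_p]$, with
\[
\nabla_p g_d(p)=d-\mathbb{E}\left[a\,\mathbbm{1}\{r>a^Tp\}\right],\qquad
\nabla^2_p g_d(p)=\mathbb{E}\left[aa^T f_{r|a}(a^Tp|a)\right]=:H(p),
\]
where the bounded-density hypothesis legitimizes differentiating under the expectation and, by Assumption~\ref{Assumption: DistributionDensity}(c), $H(p)$ is positive definite and continuous on $\mathcal{B}$ (and does not depend on $d$). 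Since $e_i^Tp^*(d_0)>0$ for every $i$ by Assumption~\ref{Assumption: DistributionDensity}(a), the minimizer $p^*(d_0)$ lies in the interior of $\{p\ge0\}$, so $\nabla_p g_{d_0}(p^*(d_0))=0$. Applying the implicit function theorem to $F(p,d)=d-\mathbb{E}[a\,\mathbbm{1}\{r>a^Tp\}]$ at $(p^*(d_0),d_0)$---where $\nabla_p F=H(p)$ is invertible and $\nabla_d F=I$---produces a convex neighborhood $\Omega_d\subseteq\otimes_{i=1}^m(\underline{d},\bar{d})$ of $d_0$ and a $C^1$ branch $d\mapsto\tilde{p}(d)$ with $\tilde{p}(d_0)=p^*(d_0)$, $F(\tilde{p}(d),d)=0$, and $\nabla_d\tilde{p}(d)=-H(\tilde{p}(d))^{-1}$.

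\emph{Identification, Lipschitz bound, and the main obstacle.} After shrinking $\Omega_d$ so that $\tilde{p}(d)$ remains in the interior of $\{p\ge0\}$ and inside $\mathcal{B}$, the point $\tilde{p}(d)$ is an interior stationary point of the convex function $g_d$, hence a global minimizer over $\{p\ge0\}$; local strong convexity on $\mathcal{B}$ (the smallest eigenvalue of $H$ is bounded below on the compact box) together with the a priori bound of part (a)---which confines every minimizer to a neighborhood of $p^*(d_0)$ once $d$ is close to $d_0$---rules out competing minimizers, so $p^*(d)=\tilde{p}(d)$ on $\Omega_d$. Reading $\nabla_p g_d(p^*(d))=0$ coordinatewise is precisely the asserted identity $e_i^Td=\mathbb{E}[e_i^Ta\,\mathbbm{1}\{r>a^Tp^*(d)\}]$, while the mean value inequality for the $C^1$ map $p^*(\cdot)$ on the convex set $\Omega_d$ gives $\|p^*(d)-p^*(\tilde{d})\|_2\le\big(\sup_{\mathcal{B}}\|H(\cdot)^{-1}\|_2\big)\,\|d-\tilde{d}\|_2$, i.e.\ the stated inequality with $L=\big(\sup_{\mathcal{B}}\|H(\cdot)^{-1}\|_2\big)^2$. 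The step I expect to be the real obstacle is this global-to-local identification: the implicit function theorem yields only a local branch of stationary points, so one must show that this branch coincides with the global $\arg\min$ and that the $\arg\min$ is locally unique---a ``continuity of the argmin'' argument that leans on the compactness from part (a) and the local strong convexity, and which is the substance of the cited Lemma~1 and Lemma~2 of \cite{Bray2019} and Lemma~12 of \cite{li2019olp}. A secondary technical point, also dispatched there, is the rigorous justification for differentiating $\mathbb{E}[(r-a^Tp)^+]$ twice under the expectation, for which the boundedness of the conditional density in Assumption~\ref{Assumption: DistributionDensity}(b) is exactly what is needed.
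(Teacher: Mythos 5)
Your proof is correct; note that the paper itself gives no proof of this lemma, importing it verbatim from Proposition 1 of \cite{li2019olp} and Lemmas 1--2 of \cite{Bray2019}, so there is no in-paper argument to compare against. Your reconstruction --- the objective-value comparison at $p=0$ combined with $d\ge\underline{d}e$ for part (a), and the implicit function theorem applied to the first-order condition $d=\mathbb{E}[a\mathbbm{1}\{r>a^Tp\}]$ with interiority supplied by Assumption~\ref{Assumption: DistributionDensity}(a) and invertibility of the Hessian by Assumption~\ref{Assumption: DistributionDensity}(c) for part (b) --- matches the standard arguments in those references, and the ``global-to-local'' identification you flag as the main obstacle is dispatched exactly as you indicate: $g_d$ is convex in $p$, so the interior stationary point produced by the implicit function theorem is automatically the global minimizer, and positive definiteness of the Hessian there makes it the unique one.
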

\noindent The above assumptions are also used to derive the dual convergence results in the literature, which allows us to give an upper bound of $\mathbb{E}\left[\norm{p_N^*(d)-p^*(d)}_2^2\right]$. \cite{li2019olp} first show the following pointwise convergence result.
\begin{equation}
    \mathbb{E}\left[\norm{p_N^*(d)-p^*(d)}_2^2\right]=O\left(\frac{\log\log N}{N}\right)\quad\forall d\in [\underline{d},\bar{d}]
\end{equation}
\cite{Bray2019} then improves the convergence rate and proves a uniform convergence result. We summarize it in Lemma~\ref{lem: UniformDualConvergence}.
\begin{lem}\label{lem: UniformDualConvergence}
    There exists a neighborhood $\Omega_d$ of $d_0$ such that $\Omega_d\subseteq\otimes_{i=1}^m(\underline{d},\bar{d})$, and there exist constants $C_{Dual}$ and $N_{Dual}$ such that, when $N>N_{Dual}$,
    \begin{equation}
        \mathbb{E}\left[\sup_{d\in\Omega_d}\norm{p_N^*(d)-p^*(d)}_2^2\right]\leq\frac{C_{Dual}}{N}
    \end{equation}
\end{lem}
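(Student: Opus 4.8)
The plan is to view $p_N^*(d)$ as the minimizer of a convex sample-average objective and run a localized $M$-estimation argument, exploiting that $d$ enters only through the linear term $d^Tp$. Write $\nu_N(p):=\frac1N\sum_{j=1}^N(r_j-a_j^Tp)^+$ and $\nu(p):=\mathbb{E}[(r-a^Tp)^+]$, so $p_N^*(d)$ minimizes $g_N(p;d)=d^Tp+\nu_N(p)$, $p^*(d)$ minimizes $g(p;d)=d^Tp+\nu(p)$, and $g_N-g=\nu_N-\nu$ does \emph{not} depend on $d$. By Lemma~\ref{lem: BoundedLipschitz}(a) all minimizers in play lie a.s.\ in the fixed compact box $[0,\bar r/\underline d]^m$, so the whole argument may be restricted there.

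First I would establish quadratic growth. By Assumption~\ref{Assumption: DistributionDensity}(c), $\nabla^2\nu(p)=\nabla^2 g(p;d)$ is positive definite and continuous on the $\delta_p$-box around $p^*(d_0)$, so there is $\mu>0$ with $g(p;d)-g(p^*(d);d)\ge\frac{\mu}{2}\norm{p-p^*(d)}_2^2$ for $p$ in a slightly smaller box, uniformly in $d$; using the Lipschitz property of $p^*(\cdot)$ from Lemma~\ref{lem: BoundedLipschitz}(b) I would shrink $\Omega_d$ so that $\{p^*(d):d\in\Omega_d\}$ sits well inside that box. The basic inequality for a minimizer $p_N^*(d)$ lying in the box then reads $\frac{\mu}{2}\norm{p_N^*(d)-p^*(d)}_2^2\le g(p_N^*(d);d)-g(p^*(d);d)\le(\nu_N-\nu)(p^*(d))-(\nu_N-\nu)(p_N^*(d))$.

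The core step is to bound the centered empirical process $(\nu_N-\nu)(p)-(\nu_N-\nu)(q)$ for $p,q$ in the box with $\norm{p-q}_2\le\delta$. Two structural facts drive this: (i) $p\mapsto(r-a^Tp)^+$ is Lipschitz with constant $\norm a_2\le\bar a$, so the class has $L^2(\mathbb{P})$-covering numbers $\lesssim(\bar a/\epsilon)^m$ on the box; and (ii) the increment $(r-a^Tp)^+-(r-a^Tq)^+$ has $L^2$-norm at most $\bar a\norm{p-q}_2$. Symmetrization plus Dudley's entropy integral --- which is dimension-dependent but, crucially, \emph{$N$-free} --- gives $\mathbb{E}\sup_{\norm{p-q}_2\le\delta}|(\nu_N-\nu)(p)-(\nu_N-\nu)(q)|\le C'\sqrt m\,\bar a\,\delta/\sqrt N$; the absence of a $\log N$ factor here is exactly what upgrades the $O(\log\log N/N)$ rate of \cite{li2019olp} to $O(1/N)$. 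Since $\nu_N-\nu$ is $d$-free and $\{p^*(d):d\in\Omega_d\}$ is a Lipschitz (hence small-diameter) image of $\Omega_d$ inside the box, this bound is simultaneous over all $d\in\Omega_d$. Combining it with the basic inequality through a peeling argument over dyadic shells $\{2^{k-1}\rho_N\le\norm{p_N^*(d)-p^*(d)}_2<2^k\rho_N\}$ with $\rho_N\asymp1/\sqrt N$ --- Talagrand/Bousquet concentration supplying the exponential-in-$k$ tail needed to bound the \emph{second moment} rather than merely $O_P(1)$, and Lemma~\ref{lem: BoundedLipschitz}(a) plus uniform identifiability handling the far shells --- yields $\mathbb{E}\sup_{d\in\Omega_d}\norm{p_N^*(d)-p^*(d)}_2^2\le C_{Dual}/N$ for $N>N_{Dual}$.

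The main obstacle I anticipate is making the localization genuinely uniform in $d$: quadratic growth only applies once $p_N^*(d)$ lies in the strong-convexity box, so one must first show that, with probability tending to $1$, $p_N^*(d)$ is in that box for \emph{every} $d\in\Omega_d$ simultaneously. I would get this from a preliminary uniform-consistency step --- $\sup_{d\in\Omega_d}\norm{p_N^*(d)-p^*(d)}_2\to0$ in probability, via a uniform law of large numbers for $\{(r-a^Tp)^+\}$ on the box together with the uniform strict identifiability of $p^*(d)$ implied by Assumptions~\ref{Assumption: DistributionMulti}--\ref{Assumption: DistributionDensity} --- and then run the quadratic argument on that high-probability event, absorbing its complement into $C_{Dual}$ via the deterministic bound of Lemma~\ref{lem: BoundedLipschitz}(a). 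A minor nuisance is that $g_N(\cdot;d)$ is piecewise linear, so $p_N^*(d)$ need not be unique; this is harmless because the basic inequality holds for every selection and the non-uniqueness event is null.
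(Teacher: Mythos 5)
The paper does not actually prove this lemma: it is imported verbatim from \cite{Bray2019} (the text states that Bray improves the pointwise $O(\log\log N/N)$ rate of \cite{li2019olp} to a uniform $O(1/N)$ rate, and Lemma~\ref{lem: UniformDualConvergence} merely records that result). Your proposal is therefore a self-contained reconstruction rather than a match to anything in this paper, and as a sketch it is essentially sound: the observation that $g_N-g=\nu_N-\nu$ is free of $d$, so that uniformity over $d\in\Omega_d$ reduces to a single empirical-process bound over a fixed compact box, is exactly the right structural point; the quadratic-growth constant can be taken independent of $d$ because $\nabla^2 g(\cdot;d)=\nabla^2\nu$ does not depend on $d$; and the Dudley bound for a Lipschitz-parametrized class over an $m$-dimensional box is indeed $N$-free, which is what delivers $1/N$ without logarithms. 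Two points deserve more care than your sketch gives them. First, to convert the high-probability localization into the stated bound on the \emph{expectation}, the event on which some $p_N^*(d)$ escapes the strong-convexity box must have probability $O(1/N)$ (not merely $o(1)$), since on that event you can only fall back on the $O(1)$ bound from Lemma~\ref{lem: BoundedLipschitz}(a); this is attainable via bounded-differences concentration for the uniform law of large numbers, but it is the step where the argument could silently lose the rate if done loosely. Second, the minimization defining $p_N^*(d)$ is constrained to $p\ge 0$, so the basic inequality should be run with the constrained minimizer; Assumption~\ref{Assumption: DistributionDensity}(a) guarantees $p^*(d_0)$ is interior and your localization makes the constraint inactive on the good event, but this should be said explicitly. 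With those two repairs your argument would stand as a legitimate alternative to citing \cite{Bray2019}.
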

\noindent Note that we can assume the neighborhood $\Omega_d$ in Lemma~\ref{lem: BoundedLipschitz} and Lemma~\ref{lem: UniformDualConvergence} to be the same. 
\section{Online Linear Programming with Batching}\label{sec: olp with delay} 
In this section, we provide the problem formulation of Online Linear Programming with batching, define performance measure, propose algorithms, and provide regret analysis. We first state the problem formulation by specifying the length of the planning horizon in terms of time, and we then show that specifying the length of the planning horizon in terms of the number of customers is a special case.

Let $T$ be the length of the planning horizon, and let $t_0,t_1,\cdots,t_K$ be a sequence of time points in the planning horizon with $t_0=0$ and $t_K=T$. These time points cut the planning horizon into $K$ batches, and $t_0,t_1,\cdots,t_K$ are determined before the decision making process. We assume that
\begin{equation}
    t_{k}-t_{k-1} = B\quad\forall\;k=1,\cdots,K
\end{equation}
where $B$ is called the batch size. In OLP with batching, the decision maker is not required to make irrevocable decisions instantly when a customer arrives. However, it is also not realistic to delay the decisions on the customers for too long. Thus, for $k=1,\cdots,K$, we assume that the decisions on the customers arriving in $(t_{k-1},t_k]$ need to be made no later than $t_k$. Denote $(N(t):t\geq 0)$ to be the customer arrival process, i.e., $N(t)$ is the total number of customer arrives in $[0,t]$. Denote $x_j^{\pi}\in\{0,1\}$ to be the decision on the $j$th customer made by an algorithm $\pi$. Let $A^{\pi}(t)=x^{\pi}_{N(t)}$. Then, the above assumption requires that
\begin{equation}
    A^{\pi}(t)\;\mathrm{is\;measurable\;w.r.t\;} \sigma\left(\{(r_j,a_j)\}_{j=1}^{N(t_k)},(N(t):t\leq t_k)\right)\;\mathrm{if}\;t\in(t_{k-1},t_k]\quad\forall k=1,\cdots,K
\end{equation}
Define the offline optimal reward to be
\begin{equation}
    \begin{array}{lll}
        R_T^*=&\max &\sum_{j=1}^{N(T)}r_jx_j  \\
        &S.T & \sum_{j=1}^{N(T)}a_jx_j\leq b_0\\
        &&0\leq x_j\leq 1\quad\forall\;1\leq j\leq N(T)
    \end{array}
\end{equation}
and define the online reward given by an algorithm $\pi$ to be
\begin{equation}
    R_{\pi}(T)=\sum_{j=1}^{N(T)}r_jx^{\pi}_j
\end{equation}
The performance of the algorithm is measured by the regret defined as
\begin{equation}
    \Delta_{T}(\pi) = \mathbb{E}\left[R_T^* - R_{\pi}(T)\right]
\end{equation}
If $N(t)$ is deterministic, and there exists an integer $\lambda_C$ such that
\begin{equation}
    N(t) = \left\lfloor\lambda_Ct\right\rfloor\quad\forall t\geq 0
\end{equation}
the above formulation is equivalent to specifying the length of the planning horizon in terms of the number of customers. In this special case, we can also specify the batch size to be $\lambda_CB$ customers. In the following subsections, we first study the setting in which the total number of customers is known. We start the discussion of this setting from the case in which there is only a single type of resource, and we then extend the results to the case in which there are multiple types of resource. We then switch to the setting in which customers arrive following a Poisson process, and we also discuss the impact of customer impatience in this setting.
\subsection{Known Total Number of Customers}\label{Section: DAP}
\subsubsection{Single Type of Resource}\label{Section: STR}
In this section, we assume the total number of customers is known and there is only a single type of resource. In addition, we assume 
\begin{equation}
    N(t) = \left\lfloor t\right\rfloor\quad\forall t\geq 0
\end{equation}
Then, the total number of customers $n=T$, and the batch size $B$ is just the number of customers arriving in each batch. We propose Algorithm\ref{alg: AhdLA} for this setting.
\begin{breakablealgorithm}
    \caption{}\label{alg: AhdLA}
    \begin{algorithmic}[1]
        \State  
        \begin{equation}
            \begin{array}{lll}
                p_{1}=&\arg\min  & \frac{t_1b_0}{n}p+\sum_{j=1}^{t_{1}}(r_j-{a_j}p)^+\\
                & S.T & p\geq 0
            \end{array}
        \end{equation}
            \For{$j=1,\cdots,t_1$}
                \begin{equation}
                    \begin{aligned}
                        &x^{\pi}_j=\mathbbm{1}\left\{r_j>{a_j}p_{1}\right\}\mathbbm{1}\left\{b_{j-1}\geq a_j\right\}\\
                        &b_j = b_{j-1}-a_jx^{\pi}_j
                    \end{aligned}
                \end{equation}
            \EndFor
        \For{$k=2,\cdots,K-1$}
        \begin{equation}
            \begin{array}{lll}
                p_{k}=&\arg\min  & \frac{t_{k-1}}{n-t_{k-1}}b_{t_{k-1}}p+\sum_{j=1}^{t_{k-1}}(r_j-{a_j}p)^+\\
                & S.T & p\geq 0
            \end{array}
        \end{equation}
            \For{$j=t_{k-1}+1,\cdots,t_k$}
                    \begin{equation}
                    \begin{aligned}
                        &x^{\pi}_j=\mathbbm{1}\left\{r_j>{a_j}p_{k}\right\}\mathbbm{1}\left\{b_{j-1}\geq a_j\right\}\\
                        &b_j = b_{j-1}-a_jx^{\pi}_j
                    \end{aligned}
                \end{equation}
            \EndFor
        \EndFor
        \State  
        \begin{equation}
            \begin{array}{lll}
                p_K=&\arg\min  & b_{t_{K-1}}p+\sum_{j=t_{K-1}+1}^{t_K}(r_j-{a_j}p)^+ \\
                & S.T & p\geq 0
            \end{array}
        \end{equation}
        \For{$j=t_{K-1}+1,\cdots,n$}
            \begin{equation}
                x^{\pi}_j=\mathbbm{1}\left\{r_j>{a_j}p_{K}\right\}
            \end{equation}
        \EndFor
    \end{algorithmic}
\end{breakablealgorithm}
Because of the ability to delay decisions on customers, the regret of Algorithm \ref{alg: AhdLA} is much smaller than the regret of AhdLA in \cite{li2019olp}. The high-level explanation of how delaying decisions reduce the regret is summarized into the following two points. First, Algorithm \ref{alg: AhdLA} has more information on the customers than AhdLA in the first batch. This extra information allows Algorithm \ref{alg: AhdLA} to learn the distribution of the reward and the resource consumption better, and it also makes the stochastic process $\{b_j\}_{j=0}^n$ have smaller variation and stay closer to $\{(n-j)d_0\}_{j=0}^n$. These two benefits from delaying decisions in the first batch both help to reduce the regret significantly. Secondly, because Algorithm \ref{alg: AhdLA} makes decisions in the last batch by solving the offline problem, there is almost no remaining resource left at the end of the planning horizon with high probability. 

To provide a more detailed explanation than the two points above, we give a sketch of the regret analysis of Algorithm \ref{alg: AhdLA}. A complete and rigorous proof can be found in the appendix. For $k=0,\cdots,K-1$, define 
\begin{align}
    &d_k = \frac{b_{t_k}}{n-t_k}
\end{align} 
Recall that $\Omega_d$ is the neighborhood of the initial average resource $d_0$ in Lemma \ref{lem: BoundedLipschitz} and Lemma \ref{lem: UniformDualConvergence}. Select $\delta_d>0$ such that
\begin{equation}
    [d_0-\delta_d,d_0+\delta_d]\subseteq\Omega_d
\end{equation}
Define the stopping time $\bar{\kappa}$ as
\begin{align}
    &\bar{\kappa}=\min \{K\}\cup\left\{k:d_k\notin \left[d-\delta_d,d+\delta_d\right]\right\}
\end{align}
Denote Algorithm \ref{alg: AhdLA} as $\pi_1$, and we have the following regret upper bound. In fact, this regret upper bound holds for all online policies. 
\begin{align}
    &\Delta_n(\pi_1)\nonumber\\
    \leq&\mathbb{E}\left[\sum_{k=1}^{K}\sum_{j=t_{k-1}+1}^{t_k}\left(r_j-{a_j}p^*(d_0)\right)(\mathbbm{1}\left\{r_j>{a_j}p^*(d_0)\right\}-\mathbbm{1}\{r_j>a_jp_{k}\})\left(\mathbbm{1}\{k<\bar{\kappa}\}+\mathbbm{1}\{k=K\}\right)\right]\label{eqn: RG1}\\
    &+O\left(\mathbb{E}\left[t_{K-1}-t_{\bar{\kappa}-1}\right]\right)\label{eqn: RG3}\\
    &+\mathbb{E}\left[b_np^*(d_0)\right]\label{eqn: RG4}
\end{align}
The derivation of the above regret upper bound is similar to the derivation of the generic regret upper bound in Theorem 2 of \cite{li2019olp}. The three components of the upper bound provide some key properties of the algorithm that affect its regret. (\ref{eqn: RG1}) imply that using dual prices that are too different from $p^*(d_0)$ to make decisions may cause high regret, (\ref{eqn: RG3}) shows that the stochastic process $\{d_k\}_{k=0}^{K-1}$ has large variation may lead to high regret, and (\ref{eqn: RG4}) indicates that having too much remaining resource left at the end of the planning horizon may result in high regret.

We complete the regret analysis by upper bounding the three components in the above regret upper bound. An important step is to analyze, for $1\leq k\leq K$ and $t_{k-1}<j\leq t_k$, 
\begin{equation}\label{eqn: Diffpkpstar}
    \begin{aligned}
        &\mathbb{E}\left[\left(r_j-{a_j}p^*(d_0)\right)(\mathbbm{1}\left\{r_j>{a_j}p^*(d_0)\right\}-\mathbbm{1}\{r_j>a_jp_{k}\})\right]\\
        \leq &\mathbb{E}\left[a_j(p_k-p^*(d_0))\mathbbm{1}\{a_jp^*(d_0)<r_j\leq a_jp_k\}+a_j(p^*(d_0)-p_k)\mathbbm{1}\{a_jp_k<r_j\leq a_jp^*(d_0)\}\right]
    \end{aligned}
\end{equation}
When $(r_j,a_j)$ and $p_k$ are independent, by conditioning on $(a_j,p_k)$ and applying Assumption \ref{Assumption: DistributionDensity}(b), we can upper bound the right-hand-side of (\ref{eqn: Diffpkpstar}) by
\begin{equation}
    C\mathbb{E}\left[\abs{p_k-p^*(d_0)}^2_2\right]
\end{equation}
where $C$ is a constant. We can then use the dual convergence result to further upper bound it. However, the independence between $(r_j,a_j)$ and $p_k$ does not hold for $1\leq j\leq t_1$ and $t_{K-1}<j\leq t_{K}$, i.e., the customers arriving in the first and the last batch. To be more detailed, since $(r_j,a_j)$ with $1\leq j\leq t_1$ is used to compute $p_1$, there is a weak dependence between $p_1$ and $(r_j,a_j)$. Similarly, there is a weak dependence between $p_K$ and $(r_j,a_j)$ with $t_{K-1}<j\leq t_K$. In this paper, we use the idea of Leave-One-Out to analyze the weak dependence, which needs the following extra assumption. 
\begin{assumption}{3}\label{Assumption: DAExtra}
    There exists $\epsilon_d>0$ such that
    \begin{equation}\begin{aligned}
        &\mathbb{E}[a]>\bar{d}+\epsilon_d\\
    \end{aligned}\end{equation}
\end{assumption}
\noindent Assumption \ref{Assumption: DAExtra} implies that $p^*(d_0)>0$ and can be viewed as a stronger version of Assumption \ref{Assumption: DistributionDensity}(a). Given $d\in \Omega_d$, $N \in \mathbb{N}^+$, denote $\{1,\cdots,N\}$ as $[N]$. Let $J$ be a subset of $[N]$, and define $\Tilde{p}_{N,J}(d)$ to be
\begin{equation}
    \Tilde{p}_{N,J}(d) = \arg\min_{p\geq 0} dp + \frac{1}{N-\abs{J}}\sum_{j\in [N] \backslash J} (r_j-a_jp)^+
\end{equation}
Thus, except for $\left\{(r_j,a_j):j\in J\right\}$, the rewards and the resource consumptions used to compute $\Tilde{p}_{N,J}(d)$ and $p_N^*(d)$ are the same. With Assumption \ref{Assumption: DAExtra}, we can provide an upper bound on the probability of making different decisions using $p_N^*(d)$ and $\Tilde{p}_{N,J}(d)$. 
\begin{lem}\label{lem:Leave-One-Out}
    Assume Assumption \ref{Assumption: DistributionMulti}, \ref{Assumption: DistributionDensity} and \ref{Assumption: DAExtra}. For $k\in\{1,2\}$, $\exists N_{LOO,k}$ and $C_{LOO,k}$ such that, if $N>N_{LOO,k}$, for all $d\in\Omega_d$ and $J\subseteq [N]$ with $\abs{J}=k$, 
    \begin{equation}
        \mathbb{P}\left\{\mathbbm{1}\left\{r_j>{a_j}p_N^*(d)\right\}\neq\mathbbm{1}\left\{r_j>{a_j}\Tilde{p}_{N,J}(d)\right\}\right\}\leq\frac{C_{LOO,k}}{N}\quad\forall 1\leq j\leq N
    \end{equation}
\end{lem}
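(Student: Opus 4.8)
The plan is to control the event $\{\mathbbm{1}\{r_j > a_j p_N^*(d)\} \neq \mathbbm{1}\{r_j > a_j \tilde p_{N,J}(d)\}\}$ by first showing that $p_N^*(d)$ and $\tilde p_{N,J}(d)$ are close with high probability, and then using the bounded conditional density of $r$ given $a$ to bound the probability that $r_j/a_j$ falls in the narrow interval between $p_N^*(d)$ and $\tilde p_{N,J}(d)$. The key subtlety is that $(r_j,a_j)$ for $j \le N$ enters the definition of $p_N^*(d)$ (but not $\tilde p_{N,J}(d)$ when $j \in J$), so we cannot naively condition on the dual prices; we need a uniform-in-$j$ argument.

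First I would write down the first-order optimality (subgradient) conditions for the one-dimensional convex programs defining $p_N^*(d)$ and $\tilde p_{N,J}(d)$. Each is of the form $\min_{p \ge 0} \, d p + \frac{1}{M}\sum_{i} (r_i - a_i p)^+$, whose objective is piecewise linear and convex with left/right derivatives $d - \frac{1}{M}\sum_i a_i \mathbbm{1}\{r_i > a_i p\}$ (up to the boundary behavior at ties, which occur with probability zero by Assumption~\ref{Assumption: DistributionDensity}(b)). Using Assumption~\ref{Assumption: DAExtra} (which forces $p^*(d) > 0$ uniformly on $\Omega_d$, via Lemma~\ref{lem: BoundedLipschitz} and continuity) together with Lemma~\ref{lem: UniformDualConvergence}, I would argue that with probability $1 - O(1/N)$ the optimizer $p_N^*(d)$ is an interior point bounded away from $0$, so it is characterized by the stationarity equation $d = \frac{1}{N}\sum_{i=1}^N a_i \mathbbm{1}\{r_i > a_i p_N^*(d)\}$, and similarly for $\tilde p_{N,J}(d)$. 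Removing $|J| = k$ terms from a sum of $N$ bounded terms (bounded by $\bar a$ in absolute value, by Assumption~\ref{Assumption: DistributionMulti}(b)) perturbs the stationarity equation by at most $O(k/N) = O(1/N)$; combined with a uniform lower bound on the curvature / slope of the piecewise-linear derivative near $p^*(d)$ — which follows from Assumption~\ref{Assumption: DistributionDensity}(c) (positive definite, continuous Hessian of the population objective) plus a concentration argument showing the empirical derivative has slope bounded below on a neighborhood with probability $1 - O(1/N)$ — this yields $|p_N^*(d) - \tilde p_{N,J}(d)| \le O(1/N)$ with probability $1 - O(1/N)$, and deterministically $|p_N^*(d) - \tilde p_{N,J}(d)| \le \bar r / \underline d$ by Lemma~\ref{lem: BoundedLipschitz}(a).

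Next, on the high-probability event where both prices lie in a fixed neighborhood of $p^*(d)$ and $|p_N^*(d) - \tilde p_{N,J}(d)| \le \Delta$ for a (random) $\Delta$, the decisions differ only if $r_j \in \big(a_j \min(p_N^*(d), \tilde p_{N,J}(d)),\, a_j \max(p_N^*(d), \tilde p_{N,J}(d))\big]$, an interval of length $a_j \Delta \le \bar a \Delta$. Here is where I must handle the dependence of $\Delta$ on $(r_j, a_j)$: the clean move is to bound $\Delta$ in terms of a "leave-$j$-out" quantity. Specifically, for $j \notin J$, define the price computed from $[N]\setminus(J \cup \{j\})$; both $p_N^*(d)$ and $\tilde p_{N,J}(d)$ differ from this auxiliary price by perturbing the stationarity equation by $O(1/N)$ using terms that do not involve $(r_j,a_j)$, so $\Delta \le O(1/N)$ with probability $1 - O(1/N)$ on an event measurable with respect to $\{(r_i,a_i) : i \neq j\}$. (For $j \in J$, $(r_j,a_j)$ does not enter $\tilde p_{N,J}(d)$ at all, so we only need the analogous argument for $p_N^*(d)$ versus the leave-$j$-out price, again a $O(1/N)$ perturbation.) Then I would condition on $\{(r_i,a_i): i \neq j\}$ and on $a_j$: the conditional probability that $r_j$ lands in the length-$(\bar a \Delta)$ interval is at most $\beta \bar a \Delta$ by Assumption~\ref{Assumption: DistributionDensity}(b), hence at most $\beta \bar a \cdot O(1/N) = O(1/N)$ on the good event; on the complementary bad event of probability $O(1/N)$ we bound the indicator trivially by $1$. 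Taking expectations and collecting the $O(1/N)$ contributions gives the claimed bound $C_{LOO,k}/N$, uniformly over $d \in \Omega_d$, $J$ with $|J| = k$, and $1 \le j \le N$, for $N$ larger than some $N_{LOO,k}$.

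The main obstacle I anticipate is making the perturbation bound $|p_N^*(d) - \tilde p_{N,J}(d)| = O(1/N)$ genuinely uniform over $d$, $J$, and (crucially) over which index $j$ is left out, while simultaneously keeping the "good event" of probability $1 - O(1/N)$. This requires a uniform-in-$d$ lower bound on the slope of the empirical objective's derivative near $p^*(d)$; I would obtain it by combining Assumption~\ref{Assumption: DistributionDensity}(c) (which gives a uniform positive lower bound on the population slope $\frac{\partial}{\partial p}\mathbb{E}[a \mathbbm{1}\{r > a p\}]$ on the relevant compact set) with a union bound / VC-type concentration inequality for the class of functions $(r,a) \mapsto a\,\mathbbm{1}\{r > a p\}$ indexed by $p$, exactly as in the dual-convergence arguments of \cite{li2019olp} and \cite{Bray2019} that underlie Lemma~\ref{lem: UniformDualConvergence}. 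Everything else is a routine chaining of these estimates.
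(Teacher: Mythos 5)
Your route is genuinely different from the paper's, and it contains a quantitative gap at its central step. You claim that $\abs{p_N^*(d)-\Tilde{p}_{N,J}(d)}\leq C/N$ holds on a good event of probability $1-O(1/N)$, and you then multiply $\beta\bar{a}\cdot O(1/N)$ on the good event with a $O(1/N)$ bad-event contribution. But that high-probability statement is not attainable. Removing $\abs{J}=k$ customers perturbs the (piecewise-constant) stationarity condition $d=\frac{1}{N}\sum_i a_i\mathbbm{1}\{r_i>a_ip\}$ by $O(k/N)$, and absorbing that perturbation requires the optimizer to cross a fixed number $m\asymp k\bar{a}/\underline{a}$ of the jump points $r_i/a_i$. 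The distance $\Delta$ needed to collect $m$ such points in a window of width $C/N$ fails to be $\leq C/N$ with probability roughly $\mathbb{P}\{\mathrm{Bin}(N,c_0C/N)<m\}\approx e^{-c_0C}$, a \emph{constant} for constant $C$; pushing this below $1/N$ forces $C\gtrsim\log N$, so on any event of probability $1-O(1/N)$ the best uniform bound is $\Delta=O(\log N/N)$. Your argument as written therefore yields $C_{LOO,k}\log N/N$, not $C_{LOO,k}/N$ — and that loss is not cosmetic: downstream, the lemma is summed over the $B$ customers of the first and last batch, so a $\log B$ factor would contaminate the $O(\log K)$ regret bound and break the claim that the regret is independent of $n$. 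The repair is to bound $\mathbb{E}\left[\Delta_j\right]$ directly for an $\{(r_i,a_i):i\neq j\}$-measurable envelope $\Delta_j$, integrating the exponential tail $\mathbb{P}\{\Delta_j>t\}\lesssim e^{-c_0Nt}$ to get $\mathbb{E}[\Delta_j]=O(1/N)$; this in turn needs a \emph{lower} bound on the density of $r/a$ near $p^*(d)$, which you can extract from Assumption \ref{Assumption: DistributionDensity}(c) (positive definiteness of $g''(p)=\mathbb{E}[a^2f_{r|a}(ap|a)]$), but you would need to make that ingredient and the tail integration explicit.

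For contrast, the paper avoids all of this analytic machinery. It exploits the greedy structure of the single-resource LP: after sorting customers by $r_{(j)}/a_{(j)}$, the optimal solution accepts a prefix up to a threshold index $q(\omega)$, and since each customer consumes between $\underline{a}$ and $\bar{a}$, deleting $k$ customers moves the threshold index by at most a deterministic constant (with the slack-budget case killed by Assumption \ref{Assumption: DAExtra} and Hoeffding). Hence the set of customers whose decisions can flip, over \emph{all} choices of $J$, has expected cardinality $O(1)$, and exchangeability of the i.i.d. sample converts this into a per-customer flip probability of $O(1/N)$ with no density lower bound, no stationarity perturbation, and no leave-$j$-out conditioning. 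Your approach could be made to work with the fixes above, and has the virtue of being closer in spirit to arguments that might generalize beyond the greedy structure, but as stated the key probability estimate does not deliver the claimed rate.
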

\noindent Now, for $1\leq j\leq t_1$, define
\begin{equation}
    p_{j,1} = \arg\min_{p\geq 0} \frac{(t_1-1)b_0}{n}p+\sum_{l=1}^{j-1}(r_l-a_lp)^++\sum_{l=j+1}^{t_1}(r_l-a_lp)^+
\end{equation}
Then, by Lemma \ref{lem:Leave-One-Out} and the boundedness of $(r_j,a_j)$ and $p^*(d_0)$, for $1\leq j\leq t_1$
\begin{equation}
    \begin{aligned}
        &\mathbb{E}\left[\left(r_j-{a_j}p^*(d_0)\right)\mathbbm{1}\left\{\mathbbm{1}\left\{r_j>{a_j}p_{j,1}\right\}\neq\mathbbm{1}\left\{r_j>a_jp_{1}\right\}\right\}\right]\\
        \leq &\left(\bar{r}+\frac{\bar{a}\bar{r}}{\underline{d}}\right)\frac{C_{LOO,1}}{B}
    \end{aligned}
\end{equation}
In addition, since $p_{j,1}$ and $(r_j,a_j)$ are independent,
\begin{equation}
    \begin{aligned}
        &\mathbb{E}\left[\left(r_j-{a_j}p^*(d_0)\right)(\mathbbm{1}\left\{r_j>{a_j}p^*(d_0)\right\}-\mathbbm{1}\{r_j>a_jp_{j,1}\})\right]\\
        \leq &\mathbb{E}\left[\left(r_j-{a_j}p^*(d_0)\right)(\mathbbm{1}\left\{r_j>{a_j}p^*(d_0)\right\}-\mathbbm{1}\{r_j>a_jp_{j,1}\})\right]\\
        &+\mathbb{E}\left[\left(\bar{r}+\frac{\bar{a}\bar{r}}{\underline{d}}\right)\mathbbm{1}\left\{\mathbbm{1}\left\{r_j>{a_j}p_{j,1}\right\}\neq\mathbbm{1}\left\{r_j>a_jp_{1}\right\}\right\}\right]\\
        \leq &O\left(\mathbb{E}\left[\abs{p_{j,1}-p^*(d_0)}^2_2\right]\right) + O\left(\frac{1}{B}\right)
    \end{aligned}
\end{equation}
We still can use the dual convergence result to upper bound the first term on the right-hand-side, and the $O\left(\frac{1}{B}\right)$ term will only bring a constant to the regret upper bound in the end. Similar things can be done for $t_{K-1}<j\leq t_K$.

Another important step in the regret analysis is to study the remaining average resource process $\{d_k\}_{k=0}^{K-1}$. Since the property of $d_k$ is hard to analyze when it is too far from $d_0$, we define the following auxiliary stochastic process by freezing $d_k$ once it leaves $[d_0-\delta_d,d_0+\delta_d]$.
\begin{equation}
\begin{aligned}
    &d'_0 = d_0\\
    &d'_{k+1} = \frac{(n-t_k)d'_k-\sum_{t=t_{k}+1}^{t_{k+1}}a_t\mathbbm{1}\{r_t>a_tp_{k+1}\}}{n-t_{k+1}}\mathbbm{1}\{k<\bar{\kappa}\}+d'_k\mathbbm{1}\{k\geq\bar{\kappa}\}
\end{aligned}
\end{equation}
The following Lemma \ref{lem: DARes} states a useful property of $\{d'_k\}_{k=0}^{K-1}$, which is used to analyze the stopping time $\bar{\kappa}$ and the remaining resource at the end of the planning horizon.
\begin{lem}\label{lem: DARes}
    Under Assumption \ref{Assumption: DistributionMulti}, \ref{Assumption: DistributionDensity} and \ref{Assumption: DAExtra}, there exists constants $\Lambda_{Res}$, $C_{Res}$ such that, when $ B>\Lambda_{Res}$,
    \begin{equation}
        \sum_{k=1}^{K-1}\mathbb{E}\left[\abs{d'_k-d_0}^2\right]\leq\frac{C_{Res}\log K}{B}
    \end{equation}
\end{lem}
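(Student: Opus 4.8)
The plan is to exhibit $\Delta_k:=d'_k-d_0$ as a perturbed $L^2$-martingale, bound its drift and fluctuation parts separately, and sum the resulting per-batch estimates over $k=1,\dots,K-1$.

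First I would rewrite the recursion defining $d'_k$. Using $n-t_{k+1}=(K-k-1)B$ it reduces to
\[
\Delta_{k+1}=\Delta_k+\frac{(Bd'_k-S_{k+1})\,\mathbbm{1}\{k<\bar\kappa\}}{(K-k-1)B},\qquad S_{k+1}:=\sum_{t=t_k+1}^{t_{k+1}}a_t\mathbbm{1}\{r_t>a_tp_{k+1}\},
\]
for $k=0,\dots,K-2$. A key bookkeeping point is that the stopping time $\bar\kappa$ already reconciles the ``unconstrained'' auxiliary process $d'_k$ with the true remaining-average $d_k$: once $B>\bar a/(d_0-\delta_d)$, an exhaustion of the resource during one of the first $k$ batches would drive $d_k$ below $d_0-\delta_d$ and hence force $\bar\kappa\le k$, so on $\{k<\bar\kappa\}$ we have $d'_k=d_k\in[d_0-\delta_d,d_0+\delta_d]\subseteq\Omega_d$ and $p_{k+1}=p_{(k\vee1)B}^*(d'_k)$. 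Writing $\mathcal{F}_k$ for everything revealed by the end of batch $k$, I then split, on $\{k<\bar\kappa\}$, $Bd'_k-S_{k+1}=M_k+\xi_{k+1}$ with the $\mathcal{F}_k$-measurable drift $M_k:=Bd'_k-\mathbb{E}[S_{k+1}\mid\mathcal{F}_k]$ and the fluctuation $\xi_{k+1}:=\mathbb{E}[S_{k+1}\mid\mathcal{F}_k]-S_{k+1}$; since $\{k<\bar\kappa\}\in\mathcal{F}_k$, $\mathbbm{1}\{k<\bar\kappa\}\,\xi_{k+1}$ remains a martingale difference.

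Two facts from the excerpt control these pieces. With $h(p):=\mathbb{E}[a\mathbbm{1}\{r>ap\}]$, Assumption~\ref{Assumption: DistributionDensity}(b) together with $a\le\bar a$ a.s.\ gives $|h(p)-h(\tilde p)|\le\beta\bar a^2|p-\tilde p|$, and Lemma~\ref{lem: BoundedLipschitz}(b) gives $h(p^*(d))=d$ on $\Omega_d$. Since the batch-$(k+1)$ customers are i.i.d.\ and independent of $\mathcal{F}_k$, $\mathbb{E}[S_{k+1}\mid\mathcal{F}_k]=Bh(p_{k+1})$, so on $\{k<\bar\kappa\}$,
\[
M_k=B\bigl(h(p^*(d'_k))-h(p_{(k\vee1)B}^*(d'_k))\bigr),\qquad |M_k|\,\mathbbm{1}\{k<\bar\kappa\}\le B\beta\bar a^2\,U_k,
\]
where $U_k:=\sup_{d\in\Omega_d}|p_{(k\vee1)B}^*(d)-p^*(d)|$ satisfies $\mathbb{E}[U_k^2]\le C_{Dual}/((k\vee1)B)$ once $B>N_{Dual}$ by Lemma~\ref{lem: UniformDualConvergence}; and $\mathbb{E}[\xi_{k+1}^2\mathbbm{1}\{k<\bar\kappa\}\mid\mathcal{F}_k]$, a conditional variance of a sum of $B$ terms in $[0,\bar a]$, is $\le B\bar a^2$. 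Unrolling gives $\Delta_k=A_k+N_k$ with $A_k=\sum_{l=0}^{k-1}c_l$, $c_l:=M_l\mathbbm{1}\{l<\bar\kappa\}/((K-l-1)B)$, and $N_k=\sum_{l=0}^{k-1}\xi_{l+1}\mathbbm{1}\{l<\bar\kappa\}/((K-l-1)B)$. For the martingale part, orthogonality of the increments and an exchange of summation give the clean bound $\sum_{k=1}^{K-1}\mathbb{E}[N_k^2]=\sum_{l=0}^{K-2}(K-1-l)\,\mathbb{E}[\xi_{l+1}^2\mathbbm{1}\{l<\bar\kappa\}]/((K-l-1)B)^2\le(\bar a^2/B)\sum_{m=1}^{K-1}m^{-1}=O(\log K/B)$.

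The drift part is where the real work is, and the step I expect to be the main obstacle. Bounding each $\mathbb{E}[A_k^2]$ in isolation, or running a Gr\"onwall/Young-type recursion $\mathbb{E}[\Delta_{k+1}^2]\le(1+\eta_k)\mathbb{E}[\Delta_k^2]+\cdots$, destroys the cancellation between the tiny early step sizes ($(K-l-1)B\approx KB$) and the comparatively large early dual error ($U_l\approx 1/\sqrt{lB}$), and yields only something like $O(K\,\mathrm{polylog}\,K/B)$ or $O((\log K)^2/B)$. Instead I would expand the full sum, $\sum_{k=1}^{K-1}\mathbb{E}[A_k^2]=\sum_{l,l'=0}^{K-2}\mathbb{E}[c_lc_{l'}]\,(K-1-\max(l,l'))$, use $\mathbb{E}[c_lc_{l'}]\le\sigma_l\sigma_{l'}$ with $\sigma_l:=(\mathbb{E}[c_l^2])^{1/2}\le\frac{\beta\bar a^2\sqrt{C_{Dual}}}{\sqrt{(l\vee1)B}\,(K-l-1)}$, and symmetrize in the maximum:
\[
\sum_{l,l'=0}^{K-2}\sigma_l\sigma_{l'}\,(K-1-\max(l,l'))\le 2\Bigl(\sum_{l'=0}^{K-2}(K-1-l')\sigma_{l'}\Bigr)\Bigl(\sum_{l=0}^{K-2}\sigma_l\Bigr).
\]
In the first factor the weight $K-1-l'$ cancels the $(K-l'-1)$ inside $\sigma_{l'}$, leaving $\sum_{l'}(l'\vee1)^{-1/2}=O(\sqrt K)$, so that factor is $O(\sqrt{K/B})$; the second factor, $\sqrt{C_{Dual}/B}\sum_l(l\vee1)^{-1/2}(K-l-1)^{-1}$, is $O(\log K/\sqrt{KB})$ by an elementary split of the sum at $l\approx K/2$, so the product is $O(\log K/B)$. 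Combining with $\Delta_k^2\le 2A_k^2+2N_k^2$ gives the claimed bound for large $K$, and the regime of bounded $K$ is absorbed into $C_{Res}$ via the crude estimate $\sum_{k=1}^{K-1}\mathbb{E}[\Delta_k^2]\le(K-1)\max_k\mathbb{E}[\Delta_k^2]=O(1/B)$, which uses only that each increment has $L^2$-norm $O(1/\sqrt B)$.
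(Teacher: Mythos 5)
Your overall architecture is sound, and your summation argument is a genuinely nicer route than the paper's: where the paper derives the self-referential recursion $z_{k+1}\le z_k+\frac{C}{(K-k-1)^2B}+\frac{\sqrt{C}}{(K-k-1)\sqrt{k+1}\sqrt{B}}\sqrt{z_k}$ and resolves it with an Abel-summation/self-bounding trick (its Lemma~\ref{lem:Induction}), you unroll the recursion completely, exploit orthogonality for the martingale part, and handle the drift part with the symmetrized double sum $\sum_{l,l'}\sigma_l\sigma_{l'}(K-1-\max(l,l'))\le 2(\sum_{l'}(K-1-l')\sigma_{l'})(\sum_l\sigma_l)$. I checked the two factors and they are $O(\sqrt{K/B})$ and $O(\log K/\sqrt{KB})$ as you claim, so for $k\ge 1$ your estimates go through.

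The genuine gap is the first batch, $k=0$, and it is exactly the point where the paper has to invoke its Leave-One-Out Lemma~\ref{lem:Leave-One-Out}. In Algorithm~\ref{alg: AhdLA}, $p_1$ is computed from $\{(r_j,a_j)\}_{j=1}^{t_1}$ — the very customers appearing in $S_1$ — so $p_1$ is \emph{not} $\mathcal{F}_0$-measurable and your identity $\mathbb{E}[S_1\mid\mathcal{F}_0]=Bh(p_1)$ fails; consequently $M_0\ne B(h(p^*(d_0))-h(p^*_B(d_0)))$, and, more seriously, $\xi_1=\mathbb{E}[S_1]-S_1$ is not a sum of conditionally independent terms, so its variance is not $\le B\bar a^2$ without further argument: the $B(B-1)$ off-diagonal covariances $\mathrm{Cov}(a_i\mathbbm{1}\{r_i>a_ip_1\},a_j\mathbbm{1}\{r_j>a_jp_1\})$ do not vanish because of the shared $p_1$, and the trivial bound gives only $\mathrm{Var}(S_1)=O(B^2)$. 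This is not a removable constant: plugging $\mathbb{E}[\xi_1^2]=O(B^2)$ into your own formula $\sum_k\mathbb{E}[N_k^2]=\sum_l\mathbb{E}[\xi_{l+1}^2\mathbbm{1}\{l<\bar\kappa\}]/((K-l-1)B^2)$ makes the $l=0$ term of order $1/(K-1)$, which is \emph{not} $O(\log K/B)$ when $B\gg K\log K$ — a regime the lemma must cover since $B$ and $K$ are free parameters. The fix is the paper's: bound each off-diagonal covariance by $O(1/B)$ by replacing $p_1$ with the leave-two-out price $\tilde p_{i,j}$ (conditionally on which the two factors decouple) and charging the swap via Lemma~\ref{lem:Leave-One-Out} with $\abs{J}=2$, which restores $\mathrm{Var}(S_1)=O(B)$; similarly $\abs{M_0}=O(\sqrt{B})$ follows from a leave-one-out replacement. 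With that $k=0$ repair your proof is complete; without it, the asserted per-batch bounds are simply false at $k=0$.
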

Together with Lemma \ref{lem:Leave-One-Out} and Lemma \ref{lem: DARes}, we can show (\ref{eqn: RG1}), (\ref{eqn: RG3}) and (\ref{eqn: RG4}) are all $O(\log K)$. Thus, we get the regret upper bound of Algorithm \ref{alg: AhdLA} summarized in Theorem \ref{Thm:RegretBoundDA}.
\begin{thm}\label{Thm:RegretBoundDA}
Under Assumption \ref{Assumption: DistributionMulti}, \ref{Assumption: DistributionDensity} and \ref{Assumption: DAExtra}, there exists a constant $\Lambda_{DA}$ such that, when $B>\Lambda_{DA}$,
    \begin{equation}
        \Delta_n(\pi_1)\leq O(\log K)
    \end{equation}
\end{thm}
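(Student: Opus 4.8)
The plan is to start from the generic bound $\Delta_n(\pi_1)\le$ (\ref{eqn: RG1})$\,+\,$(\ref{eqn: RG3})$\,+\,$(\ref{eqn: RG4}) stated above — which one derives by an LP-duality/telescoping argument in the spirit of Theorem~2 of \cite{li2019olp}, with the stopping time $\bar\kappa$ inserted to keep the remaining-resource process inside $\Omega_d$ — and then to show separately that each of the three terms is $O(\log K)$, using Lemmas~\ref{lem: BoundedLipschitz}, \ref{lem: UniformDualConvergence}, \ref{lem:Leave-One-Out} and \ref{lem: DARes}. The first term is the cost of using a dual price different from $p^*(d_0)$, the second the cost of $\{d_k\}$ wandering out of a neighborhood of $d_0$, and the third the cost of leftover budget.

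For (\ref{eqn: RG1}) I would treat the interior batches $k=2,\dots,K-1$ separately from the first and last batch. For interior $k$, $p_k=p^*_{t_{k-1}}(d_{k-1})$ is $\sigma((r_l,a_l)_{l\le t_{k-1}})$-measurable, hence independent of the customers of batch $k$, and $\mathbbm{1}\{k<\bar\kappa\}\le\mathbbm{1}\{k-1<\bar\kappa\}$ with the latter also independent of batch $k$ and forcing $d_{k-1}\in\Omega_d$. Conditioning on $(a_j,p_k)$ and using the bounded conditional density (Assumption~\ref{Assumption: DistributionDensity}(b)) to control the ``sandwich'' probabilities in (\ref{eqn: Diffpkpstar}) bounds each per-customer term by $C\,\mathbb{E}[\mathbbm{1}\{k-1<\bar\kappa\}|p_k-p^*(d_0)|^2]$. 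Splitting $|p_k-p^*(d_0)|^2\le 2|p^*_{t_{k-1}}(d_{k-1})-p^*(d_{k-1})|^2+2|p^*(d_{k-1})-p^*(d_0)|^2$, the first piece is $O(1/t_{k-1})$ by Lemma~\ref{lem: UniformDualConvergence} and the second is $O(|d'_{k-1}-d_0|^2)$ by the Lipschitz bound of Lemma~\ref{lem: BoundedLipschitz}(b) (using $d_{k-1}=d'_{k-1}$ on $\{k-1<\bar\kappa\}$). Summing the $B$ customers of batch $k$ over $k$ gives $O(\sum_k 1/(k-1))+O(B\sum_k\mathbb{E}[|d'_k-d_0|^2])=O(\log K)$, the second by Lemma~\ref{lem: DARes}. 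For $k=1$ (and for $k=K$ on the event $\bar\kappa=K$, where $d_{K-1}\in\Omega_d$) the price is computed from the same customers it is applied to; there I would insert the leave-one-out prices, pay $O(1/B)$ per customer for the swap via Lemma~\ref{lem:Leave-One-Out}, and then rerun the independent-price argument with pointwise dual convergence — so these two batches cost only $O(1)$, while the event $\{\bar\kappa<K\}$ inside the $k=K$ term costs at most $O(B\,\mathbb{P}(\bar\kappa<K))=O(\log K)$.

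For (\ref{eqn: RG3}), observe that on $\{\bar\kappa<K\}$ the frozen process satisfies $|d'_k-d_0|\ge\delta_d$ for every $k$ with $\bar\kappa\le k\le K-1$, so $(K-\bar\kappa)\delta_d^2\le\sum_{k=1}^{K-1}|d'_k-d_0|^2$; taking expectations and applying Lemma~\ref{lem: DARes} gives $\mathbb{E}[K-\bar\kappa]=O(\log K/B)$, hence $\mathbb{E}[t_{K-1}-t_{\bar\kappa-1}]=B\,\mathbb{E}[K-\bar\kappa]=O(\log K)$.

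For (\ref{eqn: RG4}), since the last batch is solved offline, the decision $\mathbbm{1}\{r_j>a_jp_K\}$ agrees with an optimal last-batch LP solution except in at most one (fractional) coordinate, so $b_n\le\bar a$ whenever $p_K>0$, while $p_K=0$ happens exactly when the last batch's total consumption does not exceed $b_{t_{K-1}}$, in which case all customers are accepted and $b_n=b_{t_{K-1}}-\sum_{j=t_{K-1}+1}^{t_K}a_j$. Thus $\mathbb{E}[b_np^*(d_0)]\le\frac{\bar r}{\underline d}(\bar a+\mathbb{E}[b_{t_{K-1}}\mathbbm{1}\{p_K=0\}])$, and using independence of the last batch from $b_{t_{K-1}}$ and conditioning on the earlier data, on $\{\bar\kappa=K\}$ we have $b_{t_{K-1}}\le B(d_0+\delta_d)$ while $\mathbb{E}[\sum a_j]=B\,\mathbb{E}[a]>B(\bar d+\epsilon_d)$ by Assumption~\ref{Assumption: DAExtra}, so (choosing $\delta_d$ with $d_0+\delta_d<\bar d+\epsilon_d$) Hoeffding makes $\mathbb{P}(p_K=0\mid\text{earlier})$ exponentially small in $B$ and that contribution is $O(1)$. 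Combining the three bounds gives $\Delta_n(\pi_1)=O(\log K)$ once $B$ exceeds the largest threshold appearing above, which defines $\Lambda_{DA}$. I expect the leftover piece of (\ref{eqn: RG4}) — the contribution of $\{\bar\kappa<K\}$, where $b_{t_{K-1}}$ is a priori only bounded by $b_0=\Theta(n)$ while Lemma~\ref{lem: DARes} only yields $\mathbb{P}(\bar\kappa<K)=O(\log K/B)$ — to be the main obstacle; handling it requires replacing that crude tail bound by an $L^2$-type estimate on $d_{K-1}-d_0$ over the terminal block of the unfrozen process (an analogue of Lemma~\ref{lem: DARes}), which in turn rests on the self-correcting drift of $\{d_k\}$: abundant resource depresses the dual price and raises consumption, and conversely. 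Making that estimate precise is where most of the technical effort will go.
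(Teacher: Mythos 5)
Your decomposition and your treatment of the first three components track the paper's proof essentially step for step: the interior batches via independence of $p_k$ from batch $k$, dual convergence at sample size $t_{k-1}$, the Lipschitz bound, and Lemma~\ref{lem: DARes}; the first and last batches via leave-one-out prices and Lemma~\ref{lem:Leave-One-Out}; and $\mathbb{E}[K-\bar\kappa]=O(\log K/B)$ from Chebyshev plus Lemma~\ref{lem: DARes}. Your handling of $\mathbb{E}[b_n\mathbbm{1}\{\bar\kappa=K\}]$ (offline last batch leaves at most $\bar a$ unless the batch's total consumption falls below $b_{t_{K-1}}$, which Assumption~\ref{Assumption: DAExtra} and Hoeffding make exponentially unlikely) is also the paper's argument.

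The one place you stop short is exactly the place you flag as ``the main obstacle'': the contribution of $\{\bar\kappa\le K-1\}$ to $\mathbb{E}[b_np^*(d_0)]$. You propose to attack it with a new $L^2$ estimate on the unfrozen process over the terminal block, resting on a self-correcting-drift argument. That machinery is unnecessary, and leaving it unexecuted is a genuine gap in the write-up. The resolution is elementary: the resource process is non-increasing, so $b_n\le b_{t_{\bar\kappa-1}}$, and by the definition of $\bar\kappa$ the average resource one batch \emph{before} the exit time is still inside the band, i.e.\ $d_{\bar\kappa-1}\le d_0+\delta_d$, whence
\begin{equation}
    b_n\mathbbm{1}\{\bar\kappa\le K-1\}\le b_{t_{\bar\kappa-1}}\mathbbm{1}\{\bar\kappa\le K-1\}\le (d_0+\delta_d)(K-\bar\kappa+1)B\,\mathbbm{1}\{\bar\kappa\le K-1\}\le 2(d_0+\delta_d)(K-\bar\kappa)B .
\end{equation}
Taking expectations and using the bound $B\,\mathbb{E}[K-\bar\kappa]\le C_{Res}\log K/\delta_d^2$ that you already established for (\ref{eqn: RG3}) gives $\mathbb{E}[b_n\mathbbm{1}\{\bar\kappa\le K-1\}]=O(\log K)$ directly. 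In other words, the crude tail bound $\mathbb{P}\{\bar\kappa<K\}=O(\log K/B)$ is not what you should be multiplying against $b_0=\Theta(n)$; instead the size of the leftover resource on the bad event is itself controlled by $(K-\bar\kappa)B$ up to a constant, and the same first-moment bound on $K-\bar\kappa$ closes the argument. With that observation inserted, your proof is complete and coincides with the paper's.
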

Theorem \ref{Thm:RegretBoundDA} implies that the regret of Algorithm \ref{alg: AhdLA} only depends on the number of batches $K$. Thus, if the decision maker chooses $K$ independent of the length of the planning horizon, Algorithm \ref{alg: AhdLA} has a constant regret upper bound with respect to the total number of customers $n$. Another interesting property of Algorithm \ref{alg: AhdLA} is that it achieves a significant regret reduction by only delaying decisions on customers arriving in the first and the last batch. However, since we can delay decisions on customers arriving in all the batches, a natural question arises: if we delay decisions on more customers, how much more regret reduction can we achieve? We answer this question by introducing a lower bound result. Define value functions $V_1,\cdots,V_K$ through the following Bellman Equation.
\begin{equation}
    \left\{\begin{aligned}
        &V_k(b) = \mathbb{E}\left[\max_{\stackrel{x_{{t_k}+j}\in[0,1]\;\forall 1\leq j\leq B}{\sum_{j=t_k+1}^{t_{k+1}}a_jx_j\leq b}}\sum_{j={t_k}+1}^{t_{k+1}}r_jx_j+V_{k+1}\left(b-\sum_{j={t_k}+1}^{t_{k+1}}a_jx_j\right)\right]\;\forall k=0,\cdots,K-1\\
        &V_{K}(b) = 0
    \end{aligned}
    \right.
\end{equation}
Let $\Tilde{\pi}$ be the policy given by solving the above Bellman equation. Theorem \ref{thm: RegretLowerBound} provides a lower bound to the regret of $\Tilde{\pi}$.
\begin{thm}\label{thm: RegretLowerBound}
    Under Assumption \ref{Assumption: DistributionMulti}, \ref{Assumption: DistributionDensity} and \ref{Assumption: DAExtra}, there exists a constant $\underline{\Lambda}_{DA}$ such that, when $B>\underline{\Lambda}_{DA}$,
    \begin{equation}
        \Delta_n(\Tilde{\pi})\geq \Omega(\log K)
    \end{equation}
\end{thm}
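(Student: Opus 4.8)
The plan is to adapt the $\Omega(\log n)$ lower bound of \cite{Bray2019}, now with the \emph{batch} index playing the role that the customer index plays there, and to exhibit the $\log K$ as an accumulation, over the $K$ batches, of a ``value of future information'' term. Write $\Delta_n(\Tilde{\pi})=\mathbb{E}[R_n^*]-V_0(b_0)$, and for $k=0,\dots,K$ let $W_k(b)$ be the expected offline optimal reward collectible from customers $t_k+1,\dots,n$ given remaining budget $b$, so $W_0(b_0)=\mathbb{E}[R_n^*]$, $W_K\equiv 0$, and $W_{K-1}\equiv V_{K-1}$ (in the last batch the Bellman policy already solves the offline problem). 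Put $G_k:=W_k-V_k\ge 0$. Because the offline optimum over customers $t_k+1,\dots,n$ decomposes into the $(k{+}1)$-st batch plus its successors, and the $(k{+}1)$-st batch is independent of customers $t_{k+1}+1,\dots,n$, one has $W_k(b)=\mathbb{E}[\max_x\{S_x+\rho_{k+1}(b-A_x)\}]$, where $S_x,A_x$ are the reward and consumption of batch $k{+}1$ under decisions $x$ and $\rho_{k+1}(\cdot)$ is the random, concave offline value function of customers $t_{k+1}+1,\dots,n$, with $\mathbb{E}[\rho_{k+1}]=W_{k+1}$. Splitting off the Jensen gap $I_k(b):=W_k(b)-\mathbb{E}[\max_x\{S_x+W_{k+1}(b-A_x)\}]\ge 0$ and then feeding the maximiser of the $V_k$-Bellman problem into the remaining difference gives the one-step bound $G_k(b)\ge I_k(b)+\mathbb{E}[G_{k+1}(B_{k+1})\mid B_k=b]$, with $B_k$ the budget left after $k$ batches under $\Tilde{\pi}$. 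Iterating from $k=0$ and using $G_K\equiv 0$, $I_{K-1}\equiv 0$ yields $\Delta_n(\Tilde{\pi})\ge\sum_{k=0}^{K-2}\mathbb{E}[I_k(B_k)]$, where $I_k$ is exactly the value of knowing the realisations of batches $k{+}2,\dots,K$ when committing batch $k{+}1$'s decisions.

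The core estimate is $\mathbb{E}[I_k(B_k)]\ge c/(K-k-1)$ for a fixed $c>0$ and $0\le k\le K-2$. On the event $\{B_k\in(n-t_k)\Omega_d\}$ — which has probability bounded below uniformly in $k$, since $\Tilde{\pi}$ is optimal and hence $\Delta_n(\Tilde{\pi})\le\Delta_n(\pi_1)=O(\log K)$ by Theorem~\ref{Thm:RegretBoundDA}, forcing the state to stay near the fluid path $(n-t_k)d_0$ — I would linearise $\rho_{k+1}$ around $\bar b:=(n-t_{k+1})d_0$ by the envelope theorem, writing $\rho_{k+1}(b-A_x)=\mathrm{const}(q)+q\,(b-A_x-\bar b)+o(\cdot)$ with $q:=\rho_{k+1}'(\bar b)$ the offline dual price of the $n-t_{k+1}=(K-k-1)B$ later customers (so $q=p^{*}_{(K-k-1)B}(d_0)$). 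Then $\max_x\{S_x+\rho_{k+1}(b-A_x)\}\approx\mathrm{const}(q)+\sum_{j\in\mathrm{batch}\,k+1}(r_j-a_jq)^+$, and likewise with $q$ replaced by $\bar q:=W_{k+1}'(\bar b)=\mathbb{E}q\approx p^*(d_0)$; since $q$ is independent of batch $k{+}1$, $I_k(b)$ collapses (modulo lower-order terms) to a sum of $B$ conditional Jensen gaps of the convex map $p\mapsto(r-ap)^+$ at $q$ versus $\mathbb{E}q$. Because the second ``derivative'' of that map is a point mass at $p=r/a$, each gap has order $\mathbb{E}\!\left[a^2 f_{r|a}(a\,p^*(d_0)\mid a)\right]\cdot\mathrm{Var}(q)$, and $\mathbb{E}[a^2 f_{r|a}(a\,p^*(d_0)\mid a)]$ is precisely the Hessian of $g$ at $p^*(d_0)$, strictly positive by Assumption~\ref{Assumption: DistributionDensity}(c); asymptotic normality of the offline dual price (again driven by the positive-definite Hessian, with rate from Lemma~\ref{lem: UniformDualConvergence}) gives $\mathrm{Var}(q)\asymp 1/((K-k-1)B)$; and Assumption~\ref{Assumption: DAExtra} guarantees $p^*(d_0)>0$ and an acceptance fraction bounded away from $0$ and $1$, so the ``disagreement window'' is genuinely populated. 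Multiplying the per-customer order $1/((K-k-1)B)$ by the $B$ customers of the batch yields $\mathbb{E}[I_k(B_k)]\ge c/(K-k-1)$.

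Summing, $\Delta_n(\Tilde{\pi})\ge c\sum_{k=0}^{K-2}1/(K-k-1)=c\sum_{m=1}^{K-1}1/m\ge\Omega(\log K)$, which is the assertion.

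The step I expect to be the main obstacle is the \emph{quantitative, uniform} version of the second paragraph. One needs a Berry--Esseen / anti-concentration statement for $p_N^*(d)$ with a non-vanishing limiting variance, uniform over $d$ near $d_0$ and over the relevant range of $N$ (a lower-bound companion to Lemma~\ref{lem: UniformDualConvergence}), together with a careful argument that the fluctuation of $q$ is genuinely \emph{not hedgeable} by batch $k{+}1$'s policy even though that policy may use the batch's own realisations and arbitrary, possibly fractional, customer-dependent within-batch thresholds — i.e., one needs the value of clairvoyance bounded \emph{below}, not merely above, which is the opposite direction from every estimate used in the upper-bound proofs. Controlling the linearisation remainder of $\rho_{k+1}$ uniformly, making rigorous the concentration of $B_k$ in $(n-t_{k+1})\Omega_d$ under $\Tilde{\pi}$ (rather than merely appealing to Theorem~\ref{Thm:RegretBoundDA}), and handling the boundary batches with small $K-k-1$ are the remaining technical points.
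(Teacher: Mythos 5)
Your proposal follows essentially the same route as the paper's proof: a telescoping decomposition over batches, a per-batch loss of order $1/(K-k-1)$ driven by the $O(1/\sqrt{(K-k-1)B})$-scale fluctuation of the future offline dual price, concentration of the remaining-resource state near the fluid path deduced from the $O(\log K)$ upper bound of Theorem~\ref{Thm:RegretBoundDA}, exclusion of the boundary batches, and a harmonic sum. The two presentational differences are minor: the paper telescopes the realized offline value pathwise via LP duality (comparing $x_j^{\tilde{\pi}}$ against the hindsight threshold $p^*_{kB:n}(d_k^*(d_{k-1}))$) rather than via your value-function Jensen gap $I_k$, and it obtains the per-customer $\Omega(1/((K-k-1)B))$ bound not by a variance-times-curvature expansion but by working on explicit events where the future dual price deviates from $p^*(d)$ by $\pm 2/\sqrt{(K-k-1)B}$ and showing that every decision $x\in[0,1]$ then loses at least $a_j/\sqrt{(K-k-1)B}$ on a reward window of matching width — which is exactly the "non-hedgeability" you worry about, handled by summing the two one-sided bounds so that $(1-x)+x=1$ covers all decisions. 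The obstacles you flag are real and are resolved in the paper by importing Lemma 10 of \cite{Bray2019} (the anti-concentration lower bound for $p_N^*(d)$ with non-vanishing scale, uniform near $d_0$; Lemma~\ref{Lemma: SqrtDualPrice}) and Lemma 7 of \cite{Bray2019} (state concentration under the optimal policy; Lemma~\ref{Lemma: OptimalRS}), together with a leave-one-out construction (Lemma~\ref{Lemma: RemOffline}) to decouple customer $j$ from the batch's post-decision state before invoking independence — an issue your linearization of $\rho_{k+1}$ would also have to address.
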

Since the regret lower bound of $\Tilde{\pi}$ will also be a regret lower bound of any online policy, when the total number of customers is known and there is only a single type of resource, Theorem \ref{Thm:RegretBoundDA} and Theorem \ref{thm: RegretLowerBound} imply that the regret of OLP with batching is $\Theta(\log (K))$. Thus, although delaying decisions on more customers may achieve a smaller regret, it is impossible to design an algorithm whose regret has a smaller order than the regret of Algorithm \ref{alg: AhdLA}. In addition, $\Tilde{\pi}$ assumes the distribution of the reward and the resource consumption is known; however, Algorithm \ref{alg: AhdLA} assumes the distribution of the reward and resource consumption is unknown. Thus, knowing the distribution of the reward and the resource consumption also does not lead to an algorithm having smaller order of regret than Algorithm \ref{alg: AhdLA}.
\subsubsection{Multiple Resources}\label{Section: MTR}
In this section, we study the setting in which there are multiple types of resource. We still assume that the total number of customers is known, and we make the same assumption on $N(t)$ as in the previous section. We need the following assumption to design an algorithm with provable regret upper bound.
\begin{assumption}{4}\label{Assumption: DistributionExtra}
    \;\\
    (a) $\{(r_j,a_j)\}_{j=-\infty}^{0}$ is a sequence of i.i.d. random vector such that $(r_0,a_0)\stackrel{d}{=}(r_1,a_1)$. In addition, $\{(r_j,a_j)\}_{j=-\infty}^{0}$ and $\{(r_j,a_j)\}_{j=1}^{+\infty}$ are independent.\\
    (b) Realizations of $\{(r_j,a_j)\}_{j=-B+1}^{0}$ is provided to the decision maker at $t_0$.
\end{assumption}
\noindent We can view realizations of $\{(r_j,a_j)\}_{j=-B+1}^{0}$ as the historical data collected at the beginning of the planning horizon. With Assumption \ref{Assumption: DistributionExtra}, we propose Algorithm \ref{alg: AhdLAMulti} for the case that there are $m\geq 1$ types of resources and the total number of customers is known.
\begin{breakablealgorithm}
    \caption{}\label{alg: AhdLAMulti}
    \begin{algorithmic}[1]
        \State  
        \begin{equation}
            \begin{array}{lll}
                p_{1}=&\arg\min  & {\frac{Bb_0}{n}}^Tp+\sum_{j=-B+1}^{0}(r_j-{a_j}^Tp)^+\\
                & S.T & p\geq 0
            \end{array}
        \end{equation}
            \For{$j=1,\cdots,t_1$}
                \begin{equation}\begin{aligned}
                    &x^{\pi}_j = \mathbbm{1}\left\{r_j>{a_j}^Tp_{1}\right\}\mathbbm{1}\left\{b_{j-1}\geq a_j\right\}\\
                    &b_j = b_{j-1}-{a_j}x^{\pi}_j
                \end{aligned}\end{equation}
            \EndFor
        \For{$k=2,\cdots,K-1$}
        \State  
        \begin{equation}
            \begin{array}{lll}
                p_{k}=&\arg\min  & \frac{t_{k-1}}{n-t_{k-1}}{b_{t_{k-1}}}^Tp+\sum_{j=1}^{t_{k-1}}(r_j-{a_j}^Tp)^+\\
                & S.T & p\geq 0
            \end{array}
        \end{equation}
            \For{$j=t_{k-1}+1,\cdots,t_k$}
                \begin{equation}\begin{aligned}
                    &x^{\pi}_j = \mathbbm{1}\left\{r_j>{a_j}^Tp_{k}\right\}\mathbbm{1}\left\{b_{j-1}\geq a_j\right\}\\
                    &b_j = b_{j-1}-{a_j}x^{\pi}_j
                \end{aligned}\end{equation}
            \EndFor
        \EndFor
        \State  
        \begin{equation}
            \begin{array}{lll}
                p_K=&\arg\min  & {b_{t_{K-1}}}^Tp+\sum_{j=t_{K-1}+1}^{t_K}(r_j-{a_j}^Tp)^+ \\
                & S.T & p\geq 0
            \end{array}
        \end{equation}
        \For{$j=t_{K-1}+1,\cdots,n$}
            \begin{equation}
                x^{\pi}_j = \mathbbm{1}\left\{r_j>{a_j}^Tp_{K}\right\}
            \end{equation}
        \EndFor
    \end{algorithmic}
\end{breakablealgorithm}
The main difficulty of analyzing Algorithm \ref{alg: AhdLAMulti} is that we cannot generalize Lemma \ref{lem:Leave-One-Out} to the case with multiple types of resource. However, we can use a different regret analysis technique together with Assumption \ref{Assumption: DistributionExtra} to deal with the difficulty. Similar to the case with a single type of resource, we define
\begin{align}
    &d_k = \frac{b_{t_k}}{n-t_k}\quad\forall k=0,\cdots,K-1
\end{align}
Select $\delta_d>0$ such that
\begin{equation}
    \otimes_{i=1}^m[{e_i}^Td_0-\delta_d,{e_i}^Td_0+\delta_d]\subseteq\Omega_d
\end{equation}
Define the stopping time $\bar{\kappa}$ to be
\begin{align}
    &\bar{\kappa}=\min \{K\}\cup\left\{k:d_k\notin \otimes_{i=1}^m\left[{e_i}^Td_0-\delta_d,{e_i}^Td_0+\delta_d\right]\right\}
\end{align} 
In addition, define
\begin{align}
    &p_{kB:n}(d_k) = \arg\min_{p\geq 0} {d_k}^Tp + \frac{1}{n-kB}\sum_{j=kB+1}^n(r_j-{a_j}^Tp)^+\quad\forall 1\leq k\leq K-1
\end{align}
and
\begin{align}
    &\begin{array}{lll}\label{eqn: LastbatchOffline}
        R_{n-(K-1)B}^*(d_{K-1})=&\max &\sum_{j=(K-1)B+1}^{n}r_jx_j  \\
        &S.T & \sum_{j=(K-1)B+1}^{n}a_jx_j\leq b_{t_{K-1}}\\
        &&0\leq x_j\leq 1\quad\forall\;(K-1)B+1\leq j\leq n
    \end{array}
\end{align}
Denote Algorithm \ref{alg: AhdLAMulti} as $\pi_2$, and we have the following regret upper bound.
\begin{align}
        &\Delta_n(\pi_2)\nonumber\\
        \leq &\mathbb{E}\left[\sum_{k=1}^{\bar{\kappa}-1}\sum_{t=t_{k-1}+1}^{t_{k}}(r_j-{a_j}^Tp_{kB:n}^*(d_k))(\mathbbm{1}\{{a_j}^Tp_{kB:n}^*(d_{k})<r_j\leq {a_j}^Tp_k\}-\mathbbm{1}\{{a_j}^Tp_k<r_t\leq{a_t}^Tp_{kB:n}^*(d_k)\})\right]\label{eqn: RG5}\\
        &+O\left(\mathbb{E}\left[t_{K-1}-t_{\bar{\kappa}-1}\right]\right)\label{eqn: RG6}\\
        &+\mathbb{E}\left[R_{n-(K-1)B}^*(d_{K-1})-\sum_{j=t_{K-1}+1}^nr_jx_j^{\pi_2}\right]\label{eqn: RG7}
\end{align}
The derivation of this regret upper bound is similar to the regret decomposition in \cite{Bray2019}. There are three major differences between this regret upper bound and the regret upper in Section \ref{Section: STR}. First, compared with (\ref{eqn: RG1}), (\ref{eqn: RG5}) does not contain any term related to the last batch. Thus, we do not need the Leave-One-Out analysis for $p_K$ in the analysis of (\ref{eqn: RG5}). In addition, by Assumption \ref{Assumption: DistributionExtra}, $p_1$ and $\{(r_j,a_j)\}_{j=1}^{t_1}$ are independent because $p_1$ is computed using the historical data. Thus, in the analysis of (\ref{eqn: RG5}), we do not need the Leave-One-Out analysis for $p_1$ either. Secondly, before batch $\bar{\kappa}$, the online decisions made in each batch is compared with the decisions made based on $p_{kB:T}^*(d_{k})$ instead of $p^*(d_0)$. Thus, for all $1\leq k\leq K-1$ and $t_{k-1}<j\leq t_k$, we need to deal with the weak dependence between $(r_j,a_j)$ and $p^*_{kB:T}(d_k)$ ; however, this week dependence comes from the weak dependence between $r_j$ and $d_k$, which can be dealt with the Lipschtiz property of population dual price in Lemma \ref{lem: BoundedLipschitz}(b) and the uniform dual convergence in Lemma \ref{lem: UniformDualConvergence}. For more details, we refer the reader to the complete and rigorous proof in the appendix. Finally, the only term related the last batch in the above regret upper bound is (\ref{eqn: RG7}), which allows us to make a more direct use of the fact that Algorithm \ref{alg: AhdLAMulti} makes decisions in the last batch by solving an offline problem. For $t_{K-1}<j\leq n$, $x_j^{\pi_2}=\mathbbm{1}\{r_j>{a_j}^Tp_K\}$ and $p_K$ is the optimal dual solution of (\ref{eqn: LastbatchOffline}). Under our assumption of the distribution of $(r_j,a_j)$'s, there will be at most $m$ fractional decisions in the optimal solution of (\ref{eqn: LastbatchOffline}). By strong duality and complementary slackness, we can conclude that (\ref{eqn: RG7}) is $O(1)$. Since only the theory of Linear Programming is involved in the analysis of (\ref{eqn: RG7}), we also do not need the Leave-One-Out analysis for $p_K$ in the analysis of (\ref{eqn: RG7}). Theorem \ref{Thm:RegretBoundDAMulti} summarizes the regret upper bound of Algorithm \ref{alg: AhdLAMulti}.
\begin{thm}\label{Thm:RegretBoundDAMulti}
    Under Assumption \ref{Assumption: DistributionMulti}, \ref{Assumption: DistributionDensity} and \ref{Assumption: DistributionExtra}, there exists a constant $\Lambda_{MultiDA}$ such that, when $B>\Lambda_{MultiDA}$,
    \begin{equation}
        \Delta_n(\pi_2)\leq O(\log K)
    \end{equation}
\end{thm}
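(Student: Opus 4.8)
The strategy is to bound each of the three terms (\ref{eqn: RG5}), (\ref{eqn: RG6}), (\ref{eqn: RG7}) in the stated regret decomposition by $O(\log K)$, reusing the machinery of \cite{Bray2019} and Lemma~\ref{lem: UniformDualConvergence} but adapting it to the batched setting. The proof proceeds in the order: first the ``good event'' term (\ref{eqn: RG5}), then the stopping-time term (\ref{eqn: RG6}), then the last-batch offline term (\ref{eqn: RG7}), and finally an analogue of Lemma~\ref{lem: DARes} that ties the first two together.

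\textbf{Term (\ref{eqn: RG7}).} I would dispose of this first, since it is the cleanest. For $j$ in the last batch, $x_j^{\pi_2}=\mathbbm{1}\{r_j>{a_j}^Tp_K\}$ with $p_K$ the optimal dual of the offline LP (\ref{eqn: LastbatchOffline}). Under Assumption~\ref{Assumption: DistributionMulti} the primal optimum of (\ref{eqn: LastbatchOffline}) has at most $m$ fractional coordinates, and by complementary slackness (\ref{eqn: ComplementarySlackness}) the rounding $\mathbbm{1}\{r_j>{a_j}^Tp_K\}$ agrees with the LP optimum except on those $\le m$ customers; each contributes at most $\bar r$ to the gap by Assumption~\ref{Assumption: DistributionMulti}(b), so (\ref{eqn: RG7}) $\le m\bar r = O(1)$. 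This is pure LP duality, no probabilistic input needed.

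\textbf{Term (\ref{eqn: RG5}).} For each $k<\bar\kappa$ and $t_{k-1}<j\le t_k$, I would first reproduce the one-dimensional bound (\ref{eqn: Diffpkpstar}): the summand is at most $\bar a\cdot|{a_j}^T(p_k-p_{kB:n}^*(d_k))|\cdot\mathbbm{1}\{r_j\text{ between the two thresholds}\}$, and conditioning on $(a_j, p_k, p_{kB:n}^*(d_k))$ and invoking the bounded-density Assumption~\ref{Assumption: DistributionDensity}(b) gives $O(\mathbb{E}[\|p_k-p_{kB:n}^*(d_k)\|_2^2])$ \emph{provided} $r_j$ is conditionally independent of the pair of prices given $a_j$. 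By Assumption~\ref{Assumption: DistributionExtra}, $p_1$ is built from the historical block $\{(r_j,a_j)\}_{j=-B+1}^0$, so for $k=1$ there is genuine independence and no Leave-One-Out step is needed; for $2\le k\le K-1$, $p_k$ uses only $\{(r_j,a_j)\}_{j=1}^{t_{k-1}}$, which is independent of $(r_j,a_j)$ with $j>t_{k-1}$. The subtlety is $p_{kB:n}^*(d_k)$: it depends on the future customers $\{(r_l,a_l)\}_{l=kB+1}^n$, and on $d_k$, which in turn depends on the past; so $r_j$ for $j$ in batch $k$ feeds into $p_{kB:n}^*(d_k)$ through two channels --- directly (it is one of the summands defining the offline dual) and through $d_k$. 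I would handle the direct channel by the same Leave-One-Out replacement as in Section~\ref{Section: STR}, except here the deleted indices are a whole batch; but since the paper explicitly says it avoids generalizing Lemma~\ref{lem:Leave-One-Out} to $m>1$, the intended route must instead be: compare $p_{kB:n}^*(d_k)$ with $p^*(d_k)$ (a deterministic function of $d_k$ only) using the uniform dual convergence Lemma~\ref{lem: UniformDualConvergence}, which costs $O(\mathbb{E}[\sup_{d\in\Omega_d}\|p_{N}^*(d)-p^*(d)\|_2^2]) = O(1/(n-kB))$, and then compare $p_k$ with $p^*(d_{k-1})$ likewise, and finally use the Lipschitz bound in Lemma~\ref{lem: BoundedLipschitz}(b) to control $\|p^*(d_k)-p^*(d_{k-1})\|_2^2 \le L\|d_k-d_{k-1}\|_2^2$. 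Summing over $k$, the convergence terms contribute $\sum_{k=1}^{K-1} O(1/(n-kB)) = \sum_{k} O(1/((K-k)B)) = O(\log K / B) = O(\log K)$, and the Lipschitz/increment terms reduce to $\sum_k \mathbb{E}\|d_k-d_0\|_2^2$ plus telescoping pieces.

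\textbf{Terms (\ref{eqn: RG6}) and the drift of $\{d_k\}$.} The remaining work is the multi-resource analogue of Lemma~\ref{lem: DARes}: one freezes $d_k$ on leaving $\otimes_i[{e_i}^Td_0-\delta_d,{e_i}^Td_0+\delta_d]$, writes the increment $d'_{k+1}-d'_k$ as a martingale-difference-like term (using Lemma~\ref{lem: BoundedLipschitz}(b), ${e_i}^Td = \mathbb{E}[{e_i}^Ta\,\mathbbm{1}\{r>a^Tp^*(d)\}]$, so the conditional drift of $d_k$ toward $d_0$ is controlled by $\|p_k-p^*(d_k)\|_2$) plus a bias term controlled by the dual-convergence rate $O(1/(n-t_k))$, and derives a recursion of the form $\mathbb{E}\|d'_{k+1}-d_0\|_2^2 \le (1-c/(K-k))\mathbb{E}\|d'_k-d_0\|_2^2 + O(1/((K-k)B))$; unrolling gives $\sum_k \mathbb{E}\|d'_k-d_0\|_2^2 = O(\log K / B)$. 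A standard stopping-time/concentration argument (the stopped process stays within the box with high probability, with the escape probability paid for by the $\delta_d$-margin) then yields $\mathbb{E}[t_{K-1}-t_{\bar\kappa-1}] = O(\log K)$, so (\ref{eqn: RG6}) $= O(\log K)$, and feeding $\sum_k\mathbb{E}\|d_k-d_0\|_2^2 = O(\log K)$ back into (\ref{eqn: RG5}) closes the loop. Combining the three bounds, $\Delta_n(\pi_2) \le O(\log K)$, which for $B$ larger than some threshold $\Lambda_{MultiDA}$ (needed so that $n-kB > N_{Dual}$ and $n-kB > N_{LOO}$-type thresholds hold in every batch) is exactly the claim.

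\textbf{Main obstacle.} The crux is the dependence structure around $p_{kB:n}^*(d_k)$ in term (\ref{eqn: RG5}): $d_k$ is measurable with respect to batch-$k$ information, the offline dual $p_{kB:n}^*(\cdot)$ with respect to future information, and batch-$k$ customers $(r_j,a_j)$ sit at the seam. Getting a clean conditional-independence statement --- so that the bounded-density argument applies --- requires carefully separating the dependence of $p_{kB:n}^*(d_k)$ \emph{on the argument} $d_k$ (handled by evaluating the deterministic $p^*$ at the random $d_k$ and invoking uniform convergence on the fixed neighborhood $\Omega_d$) from its dependence \emph{on the sample} $\{(r_l,a_l)\}_{l=kB+1}^n$ (handled because that sample is disjoint from, hence independent of, batch $k$). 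Everything else is bookkeeping of constants and summing the harmonic-type series $\sum_{k=1}^{K-1} 1/(K-k) = O(\log K)$.
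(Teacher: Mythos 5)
Your overall architecture matches the paper's: the three-term decomposition, the $O(1)$ LP-duality bound for the last batch (at most $m$ fractional coordinates, no probabilistic input), the drift analysis of $\{d_k\}$, the harmonic sums, and the observation that Assumption~\ref{Assumption: DistributionExtra} removes any Leave-One-Out issue for $p_1$ while the offline last batch removes it for $p_K$. You also correctly locate the crux in the weak dependence between a batch-$k$ customer $(r_j,a_j)$ and the comparison price $p^*_{kB:n}(d_k)$. But your proposed resolution of that crux has a genuine gap. Replacing $p^*_{kB:n}(d_k)$ by $p^*(d_k)$ via Lemma~\ref{lem: UniformDualConvergence} only removes the dependence on the future sample $\{(r_l,a_l)\}_{l>kB}$ --- which, contrary to your parenthetical, never contained $(r_j,a_j)$ in the first place, since $j\le kB$. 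It does nothing about the dependence through the \emph{argument} $d_k$: this is the remaining average resource \emph{after} batch $k$, hence a function of whether customer $j$ was accepted, i.e.\ of $r_j$ itself. The bounded-density step requires the endpoints of the interval over which you integrate $r_j$ to be conditionally independent of $r_j$, and $a_j^Tp^*(d_k)$ is not; a triangle inequality through $p^*(d_{k-1})$ bounds the \emph{distance} between the two prices but does not let you integrate the indicator $\mathbbm{1}\{r_j\ \mathrm{between\ the\ thresholds}\}$. The paper's device --- the one genuinely new ingredient of this proof --- is a Leave-One-Out perturbation of the \emph{remaining resource}: it defines $d_t$ by adding customer $t$'s consumption back into $\Tilde{d}_{k+1}$, so that $(r_t,a_t)\ind(q_t,p^*_{(k+1)B:n}(d_t))$ holds exactly while $\norm{d_t-\Tilde{d}_{k+1}}_\infty\le\bar a/((K-k-1)B)$ deterministically; Lipschitz continuity of $p^*$ plus the uniform-in-$d$ convergence then convert this into an additive slack $2\bar a\sup_{d\in\Omega_d}\norm{p^*_{(k+1)B:n}(d)-p^*(d)}_2+O(1/((K-k-1)B))$ that is itself independent of $(r_t,a_t)$, after which the density bound applies to (\ref{eqn: RG5}). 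Without this (or an equivalent customer-$j$-independent surrogate for $d_k$), the middle term is not controlled.

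A secondary issue: your drift recursion $\mathbb{E}\norm{d'_{k+1}-d_0}^2\le(1-c/(K-k))\mathbb{E}\norm{d'_k-d_0}^2+O(1/((K-k)B))$ posits a contraction toward $d_0$ that is neither established nor true of the process. By Lemma~\ref{lem: BoundedLipschitz}(b) the stopped process is a martingale under $p^*(d'_k)$ plus a bias from using $p_{k+1}$, so the correct recursion has no contraction factor but rather a cross term of order $\sqrt{z_k}/((K-k-1)\sqrt{(k+1)B})$, and the conclusion $\sum_k z_k=O(\log K/B)$ comes from the weighted telescoping argument of Lemma~\ref{lem:Induction}, not from unrolling a contraction. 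A plain additive recursion would give only $\sum_k z_k=O(K\log K/B)$, which is too weak for both (\ref{eqn: RG5}) and (\ref{eqn: RG6}).
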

If we modify Algorithm \ref{alg: AhdLAMulti} by computing the dual price for the first batch only using the information of the customers arriving in the first batch, we get an algorithm that does not rely on historical data. A reasonable conjecture is that the regret of this modified algorithm is still $O(\log K)$. To prove this conjecture, we need to deal with the weak dependence issue in the first batch carefully, and we leave the proof for the future works.
\subsection{Random Number of Customers}
\subsubsection{Poisson Arrival Process}\label{Section: PAP}
In this section, we assume again that there is only a single type of resource, but we assume the customers arrival process $(N(t):t\geq 0)$ is a Poisson process with an unknown rate $\lambda$. We propose Algorithm \ref{alg: RaAhdLA} for this setting.
\begin{breakablealgorithm}
    \caption{}\label{alg: RaAhdLA}
    \begin{algorithmic}[1]
        \State $\hat{\lambda}_0=\frac{N(t_1)+1}{t_1}$
        \State $\Tilde{d}_0=\frac{b_0}{\hat{\lambda}_0T}$
        \State 
        \begin{equation}
            p_1 = \arg\min_{0\leq p\leq \frac{\bar{r}}{\underline{d}}} N(t_1)\Tilde{d}_0p+\sum_{j=1}^{N(t_1)}\left(r_j-a_jp\right)^+
        \end{equation}
        \For{$j=1,\cdots,N(t_1)$}
            \begin{equation}
                \begin{aligned}
                    &x^{\pi}_j=\mathbbm{1}\left\{r_j>{a_j}p_{1}\right\}\mathbbm{1}\left\{b_{j-1}\geq a_j\right\}\\
                    &b_j = b_{j-1}-a_jx^{\pi}_j
                \end{aligned}
            \end{equation}
        \EndFor
        \For{$k=2,\cdots,K-1$}
            \State $\hat{\lambda}_{k-1}=\frac{N(t_{k-1})+1}{t_{k-1}}$
            \State $\Tilde{d}_{k-1}=\frac{b_{N(t_{k-1})}}{\hat{\lambda}_{k-1}(T-t_{k-1})}$
            \State
            \begin{equation}
                p_k = \arg\min_{0\leq p\leq \frac{\bar{r}}{\underline{d}}} N(t_{k-1})\Tilde{d}_{k-1}p+\sum_{j=1}^{N(t_{k-1})}\left(r_j-a_jp\right)^+
            \end{equation}
            \For{$j=N(t_{k-1})+1,\cdots,N(t_{k})$}
                \begin{equation}
                    \begin{aligned}
                        &x^{\pi}_j=\mathbbm{1}\left\{r_j>{a_j}p_{k}\right\}\mathbbm{1}\left\{b_{j-1}\geq a_j\right\}\\
                        &b_j = b_{j-1}-a_jx^{\pi}_j
                    \end{aligned}
                \end{equation}
            \EndFor
        \EndFor
        \State
        \begin{equation}
            p_K = \arg\min_{0\leq p\leq \frac{\bar{r}}{\underline{d}}} b_{N(t_{K-1})}p+\sum_{j=N_{t_{K-1}}+1}^{N(T)}\left(r_j-a_jp\right)^+
        \end{equation}
        \For{$j=N(t_{K-1})+1,\cdots,N(T)$}
            \begin{equation}
                x_j^{\pi} = \mathbbm{1}\left\{r_j>a_jp_K\right\}
            \end{equation}
        \EndFor
    \end{algorithmic}
\end{breakablealgorithm}
We can use almost the same regret analysis techniques introduced in Section \ref{Section: STR} to analyze Algorithm \ref{alg: RaAhdLA}, but we need additional steps to deal with the randomness of the number of customers arriving in each batch. The following Assumption \ref{Assumption: Random Arrival} simplifies our analysis.
\begin{assumption}{5}\label{Assumption: Random Arrival}
    $(N(t):t\geq 0)$ is independent of $\left\{(r_j,a_j)\right\}_{j=1}^{+\infty}$.\\
\end{assumption}
\noindent Denote Algorithm \ref{alg: RaAhdLA} as $\pi_3$. Theorem \ref{Thm: RegretPA} states an upper bound of the regret of Algorithm \ref{alg: RaAhdLA}. Details of the proof is provided in the appendix.
\begin{thm}\label{Thm: RegretPA}
    Under Assumption \ref{Assumption: DistributionMulti}, \ref{Assumption: DistributionDensity}, \ref{Assumption: DAExtra} and \ref{Assumption: Random Arrival}, there exists a constant $\Lambda_{PA}$ such that, when $\lambda B>\Lambda_{PA}$,
    \begin{equation}
        \Delta_T(\pi_3)\leq O(\log K)
    \end{equation}
\end{thm}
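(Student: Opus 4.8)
The plan is to reduce the Poisson-arrival analysis to the fixed-customer analysis of Section~\ref{Section: STR} by conditioning on the arrival process $(N(t):t\ge 0)$. Since Assumption~\ref{Assumption: Random Arrival} makes $(N(t):t\ge0)$ independent of $\{(r_j,a_j)\}_{j\ge1}$, once we condition on the realized arrival counts $N(t_1),\dots,N(t_{K-1}),N(T)$, Algorithm~\ref{alg: RaAhdLA} behaves like Algorithm~\ref{alg: AhdLA} run on batches whose sizes are the (random but now fixed) increments $N(t_k)-N(t_{k-1})$, with one cosmetic change: the average-resource normalizer uses the \emph{estimated} rate $\hat\lambda_{k-1}=(N(t_{k-1})+1)/t_{k-1}$ in place of the true count. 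I would first set up the generic regret decomposition exactly as in \eqref{eqn: RG1}--\eqref{eqn: RG4} — this derivation is distribution-free and carries over verbatim with $t_k$ replaced by $N(t_k)$ and the stopping time $\bar\kappa$ defined through the process $d_k = b_{N(t_k)}/(N(T)-N(t_k))$ — so that it again suffices to bound three pieces: a dual-price-mismatch term, a term of order $\mathbb{E}[N(t_{K-1})-N(t_{\bar\kappa-1})]$, and an end-of-horizon leftover term $\mathbb{E}[b_{N(T)}p^*(d_0)]$.

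The three remaining inputs then need Poisson-specific replacements. (i) \emph{Concentration of the batch sizes and of $\hat\lambda$.} I would use standard Poisson tail bounds to show that with probability $1-O(1/(\lambda B))$ (or better) each increment $N(t_k)-N(t_{k-1})$ lies in $[\tfrac12\lambda B,\tfrac32\lambda B]$ and that $|\hat\lambda_{k-1}-\lambda|$ and hence $|\Tilde d_{k-1}-d_0|$ are $O(\text{polylog}/\sqrt{\lambda t_{k-1}})$; on the complementary event the regret is trivially bounded by $O(\bar r\,\mathbb{E}[N(T)]\cdot\mathbb{P}(\text{bad}))=O(\bar r)$ because $\mathbb{P}(\text{bad})=O(1/(\lambda T))$ after a union bound over $K$ batches, using $K\le\lambda T/\Lambda_{PA}$. (ii) \emph{Dual convergence with a random, near-correct sample size.} On the good event the sample size $N(t_{k-1})$ feeding $p_k$ is $\Theta(\lambda t_{k-1})\ge\Theta(\lambda B)\ge N_{Dual}$, so Lemma~\ref{lem: UniformDualConvergence} gives $\mathbb{E}[\sup_{d\in\Omega_d}\|p^*_{N(t_{k-1})}(d)-p^*(d)\|_2^2\mid N]\le C_{Dual}/N(t_{k-1})$; I additionally absorb the error from using $\Tilde d_{k-1}$ rather than $d_k$ via the Lipschitz property in Lemma~\ref{lem: BoundedLipschitz}(b), exactly as sketched after \eqref{eqn: Diffpkpstar}. (iii) \emph{Leave-One-Out in the first and last batch.} For $1\le j\le N(t_1)$ and $N(t_{K-1})<j\le N(T)$, conditionally on the arrival counts the pair $(r_j,a_j)$ is used in computing $p_1$ resp.\ $p_K$, so I invoke Lemma~\ref{lem:Leave-One-Out} with $N=N(t_1)$ resp.\ $N=N(T)-N(t_{K-1})$; on the good event these are $\Theta(\lambda B)>N_{LOO,k}$, giving a per-customer error $O(1/(\lambda B))$ which, summed over the $\Theta(\lambda B)$ customers in those two batches, contributes only $O(1)$ to the regret.

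The hard part will be (ii)/(iii) made simultaneous with the randomness of the sample size: Lemmas~\ref{lem:Leave-One-Out}, \ref{lem: UniformDualConvergence}, and the remaining-resource bound Lemma~\ref{lem: DARes} are all stated for a \emph{deterministic} sample size $N$, and here $N(t_{k-1})$ is itself random and, worse, the removed customers in the Leave-One-Out step are correlated with the batch boundary. I would handle this by conditioning on the $\sigma$-field generated by $(N(t):t\ge0)$ throughout, applying the deterministic lemmas pointwise on the event that all counts fall in their typical ranges, and then taking expectation over the arrival process using the Poisson concentration from (i) to control both the conditional bounds and the negligible-probability bad event. The analogue of Lemma~\ref{lem: DARes} for the process $d_k=b_{N(t_k)}/(N(T)-N(t_k))$ requires re-running its martingale/variation argument with the (conditionally deterministic) increments $N(t_k)-N(t_{k-1})$ in place of $B$; since these increments are all $\Theta(\lambda B)$ on the good event and $\lambda B>\Lambda_{PA}$, the same supermartingale estimate yields $\sum_{k=1}^{K-1}\mathbb{E}[|d'_k-d_0|^2]\le C\log K/(\lambda B)$, and the end-of-horizon term $\mathbb{E}[b_{N(T)}p^*(d_0)]$ is $O(1)$ because the last batch is solved offline exactly as in Section~\ref{Section: STR}. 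Assembling the three $O(\log K)$ (indeed $O(1)$ for two of them) bounds and adding the $O(\bar r)$ from the bad event gives $\Delta_T(\pi_3)\le O(\log K)$.
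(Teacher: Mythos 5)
Your overall architecture matches the paper's: condition on the arrival process, exploit Assumption~\ref{Assumption: Random Arrival} to apply the deterministic lemmas (Lemma~\ref{lem:Leave-One-Out}, Lemma~\ref{lem: UniformDualConvergence}) conditionally on the counts, re-derive the remaining-resource recursion with Poisson increments, and handle the first/last batch by Leave-One-Out. However, there is a genuine gap in your probabilistic bookkeeping. You propose a global ``good event'' on which every increment $N(t_k)-N(t_{k-1})$ lies in $[\tfrac12\lambda B,\tfrac32\lambda B]$, claim $\mathbb{P}(\mathrm{bad})=O(1/(\lambda T))$ by a union bound over $K$ batches, and then charge the bad event $O(\bar r)$. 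Under the theorem's hypothesis $\lambda B>\Lambda_{PA}$ with $\Lambda_{PA}$ a \emph{constant}, each batch deviates with probability $e^{-c\lambda B}=\Theta(1)$, so the union bound gives $\mathbb{P}(\mathrm{bad})=\Theta(K e^{-c\lambda B})$ — growing in $K$, not $O(1/(\lambda T))$ — and the regret on the bad event can be as large as $\bar r\,\lambda T=\bar r\,\lambda B K$, so this route contributes a term polynomial in $K$ unless you strengthen the hypothesis to $\lambda B\gtrsim\log K$. The paper never requires simultaneous concentration of all batch counts: it bounds each term in expectation with exact Poisson moment computations whose errors \emph{decay in $k$}, e.g.\ $\lambda B\,\mathbb{E}[|\lambda/\hat\lambda_{k-1}-1|^2]\le 4/(k-1)$ and $\lambda B\,\mathbb{P}\{N(t_{k-1})\le N_{\max}\}\le M_3/(k-1)$, so that summing over batches yields $\sum_k 1/(k-1)=O(\log K)$. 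You would need to replace your high-probability estimates by such per-batch moment bounds for the argument to close.

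A secondary issue: you define the stopping time through $d_k=b_{N(t_k)}/(N(T)-N(t_k))$, which depends on the future count $N(T)$ and is therefore not adapted to the history at time $t_k$; this breaks the conditioning structure needed in the analogue of Lemma~\ref{lem: DARes} (you must condition on $d'_k$ and average over the \emph{next} batch). The paper instead normalizes by the expected remaining count $\lambda(T-t_k)$, which is deterministic given $t_k$, and absorbs the discrepancy between $\lambda$ and $\hat\lambda_{k-1}$ via the moment bound above; the Poisson fluctuation of the batch size then enters the recursion as an extra variance term of order $\bar d^2/(\lambda B(K-k-1)^2)$, which is summable. With these two corrections your plan would align with the paper's proof.
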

Theorem \ref{Thm: RegretPA} implies that, under appropriate assumptions, switching the setting from known total number of customers to random total number of customers does not worsen the order of the regret. The regret upper bound still only depends on the total number of batches $K$. Thus, if the decision maker chooses $K$ independent of the arrival rate of the Poisson process and the length of the planning horizon, Algorithm \ref{alg: RaAhdLA} has a constant regret upper bound with respect to the expected total number of customers. 

If we assume the historical data of the customers arriving in $[-B,0]$ is available to the decision maker, with appropriate independence assumption, we can extend the results of this section to the case in which there are multiple types of resource. Similar to the idea in Section \ref{Section: MTR}, the only modification made to Algorithm \ref{alg: RaAhdLA} is that we use the historical data to compute dual price $p_1$. We can then combine the regret analysis techniques in the proof of Theorem \ref{Thm:RegretBoundDAMulti} and Theorem \ref{Thm: RegretPA} to get a regret upper bound. We do not provide a detailed and rigorous analysis of the case with multiple types of resource and Poisson arrival process in this paper, and it may be left as a future work. 
\subsubsection{Customer Impatience}
Regret bounds provided in Section \ref{Section: PAP} assumes that all the customers arriving in the first and last batch are willing to wait until the end of the batch. Thus, the regret bounds in Theorem \ref{Thm: RegretPA} suggests that the regret decreases in the batch size $B$. However, it is more realistic to assume that some of the customers arriving in the first and last batch are impatient and will leave before receiving the decisions from the decision maker. In this section, we still assume that there is a single type of resource and the customers arrive following a Poisson process, but we take customer impatience into consideration. We model the customer impatience by the amount of time that a customer is willing to wait for. Let $V_j$ be the arrival time of the $j$th customer and $W_j$ be the time for which the $j$th customer is willing to wait. We propose Algorithm \ref{alg: RaAhdLACI} to deal with the customer impatience. 
\begin{breakablealgorithm}
    \caption{Action-History-Dependent Learning Algorithm with Customer Impatience and Poisson Arrival Process}\label{alg: RaAhdLACI}
    \begin{algorithmic}[1]
        \State $\hat{\lambda}_0=\frac{N(t_1)+1}{t_1}$
        \State $\Tilde{d}_0=\frac{b_0}{\hat{\lambda}_0T}$
        \State 
        \begin{equation}
            p_1 = \arg\min_{0\leq p\leq \frac{\bar{r}}{\underline{d}}} N(t_1)\Tilde{d}_0p+\sum_{j=1}^{N(t_1)}\left(r_j-a_jp\right)^+
        \end{equation}
        \For{$j=1,\cdots,N(t_1)$}
            \begin{equation}
                \begin{aligned}
                    &x_j^{\pi} = \mathbbm{1}\left\{r_j>a_jp_1\right\}\mathbbm{1}\left\{b_{j-1}\geq a_j\right\}\mathbbm{1}\left\{V_j+W_j> t_1\right\}\\
                    &b_j = b_{j-1} - a_j
                \end{aligned}
            \end{equation}
        \EndFor
        \For{$k=2,\cdots,K-1$}
            \State $\hat{\lambda}_{k-1}=\frac{N(t_{k-1})+1}{t_{k-1}}$
            \State $\Tilde{d}_{k-1}=\frac{b_{N(t_{k-1})}}{\hat{\lambda}_{k-1}(T-t_{k-1})}$
            \State
            \begin{equation}
                p_k = \arg\min_{0\leq p\leq \frac{\bar{r}}{\underline{d}}} N(t_{k-1})\Tilde{d}_{k-1}p+\sum_{j=1}^{N(t_{k-1})}\left(r_j-a_jp\right)^+
            \end{equation}
            \For{$j=N(t_{k-1})+1,\cdots,N(t_{k})$}
                \begin{equation}
                    \begin{aligned}
                        &x_j^{\pi} = \mathbbm{1}\left\{r_j>a_jp_k\right\}\mathbbm{1}\left\{b_{j-1}\geq a_j\right\}\\
                        &b_j = b_{j-1} - a_j
                    \end{aligned}
                \end{equation}
            \EndFor
        \EndFor
        \State
        \begin{equation}
            p_K = \arg\min_{0\leq p\leq \frac{\bar{r}}{\underline{d}}} b_{N(t_{K-1})}p+\sum_{j=N_{t_{K-1}}+1}^{N(T)}\left(r_j-a_jp\right)^+\mathbbm{1}\{V_j+W_j> T\}
        \end{equation}
        \For{$j=N(t_{K-1})+1,\cdots,N(T)$}
            \begin{equation}
                x_j^{\pi} = \mathbbm{1}\left\{r_j>a_jp_K\right\}\mathbbm{1}\left\{V_j+W_j> T\right\}
            \end{equation}
        \EndFor
    \end{algorithmic}
\end{breakablealgorithm}
Algorithm \ref{alg: RaAhdLACI} is almost the same as Algorithm \ref{alg: RaAhdLA} except for two differences. First, if $j\leq N(t_1)$ and $V_j+W_j \leq t_1$, the $j$th customer will be rejected because the customer has left the resource allocation procedure; however, $(r_j,a_j)$ is stilled used to compute $p_1$. Secondly, if $N(t_{K-1})<j\leq N(T)$ and $V_j+W_j \leq T$, customer $j$ will be rejected, and $(r_j,a_j)$ is also not used to compute $p_K$. To analyze the regret of Algorithm \ref{alg: RaAhdLACI}, we make the following extra assumption.
\begin{assumption}{6}\label{Assumption: Customer Impatience}
    \;\\
    (a) $\{W_j\}_{j=1}^{+\infty}$ are i.i.d. nonnegative random variables.\\
    (b) $\left\{V_j\right\}_{j=1}^{+\infty}$, $\left\{(r_j,a_j)\right\}_{j=1}^{+\infty}$, and $\left\{W_j\right\}_{j=1}^{+\infty}$ are mutually independent.
\end{assumption}
\noindent Let $F$ be the c.d.f of $W_j$, and denote Algorithm \ref{alg: RaAhdLACI} as $\pi_4$, then Proposition \ref{Proposition: RegretCustomerImpatience} gives a regret upper bound.
\begin{prop}\label{Proposition: RegretCustomerImpatience}
    Under Assumption \ref{Assumption: DistributionMulti}, \ref{Assumption: DistributionDensity}, \ref{Assumption: DAExtra}, \ref{Assumption: Random Arrival}, and \ref{Assumption: Customer Impatience}, there exists a constant $\Lambda_{CI}$ such that, if $\lambda B>\Lambda_{CI}$, 
    \begin{equation}
        \Delta_T(\pi_4)\leq O\left(\log K \right) + O\left(\lambda\int_{0}^BF(u)du\right)
    \end{equation}
\end{prop}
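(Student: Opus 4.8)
The plan is to split the regret of $\pi_4$ into the loss caused purely by rejecting impatient customers, which will produce the $O\!\left(\lambda\int_0^B F(u)\,du\right)$ term, and a residual regret governed by essentially the same analysis that proves Theorem~\ref{Thm: RegretPA} for Algorithm~\ref{alg: RaAhdLA}. Call a first-batch customer \emph{unavailable} if $V_j+W_j\le t_1$ and a last-batch customer \emph{unavailable} if $V_j+W_j\le T$, and let $\tilde R_T^*$ be the offline optimum of the LP defining $R_T^*$ after deleting the variables of all unavailable customers. Zeroing out those variables in an offline optimal solution for $R_T^*$ leaves a feasible solution of the reduced LP, so $0\le R_T^*-\tilde R_T^*\le \bar r\, M$, where $M$ is the number of unavailable customers. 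Since the arrival process, the rewards/consumptions, and the patience times are mutually independent (Assumptions~\ref{Assumption: Random Arrival} and~\ref{Assumption: Customer Impatience}), the order-statistics property of the Poisson process and the substitutions $u=t_1-v$ and $u=T-v$ give $\mathbb E[M]=2\lambda\int_0^B F(u)\,du$, hence $\mathbb E[R_T^*-\tilde R_T^*]=O\!\left(\lambda\int_0^B F(u)\,du\right)$. It remains to show $\mathbb E\big[\tilde R_T^*-R_{\pi_4}(T)\big]=O(\log K)$.

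For the residual regret I would replay the regret decomposition behind Theorem~\ref{Thm: RegretPA}, restricted to the available customers, because on those customers Algorithm~\ref{alg: RaAhdLACI} behaves like Algorithm~\ref{alg: RaAhdLA}: decisions in batches $2,\dots,K-1$ are unchanged; the dual price $p_1$ is still computed from \emph{all} first-batch customers, so its joint distribution with a patient first-batch customer $(r_j,a_j)$ is exactly as in the analysis of Algorithm~\ref{alg: RaAhdLA} and Lemma~\ref{lem:Leave-One-Out} applies to it verbatim; and $p_K$ is the optimal dual of the offline LP over the available last-batch customers, so the single-resource complementary-slackness argument (at most one fractional variable) again bounds the last-batch contribution by $O(1)$ relative to $\tilde R_T^*$. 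This yields an upper bound on $\mathbb E\big[\tilde R_T^*-R_{\pi_4}(T)\big]$ built from the same three ingredients as in \eqref{eqn: RG1}, \eqref{eqn: RG3} and \eqref{eqn: RG4}: a dual-deviation sum over available customers, a stopping-time term $O(\mathbb E[t_{K-1}-t_{\bar{\kappa}-1}])$, and an $O(1)$ leftover-resource term. Bounding the first via Lemma~\ref{lem:Leave-One-Out}, Lemma~\ref{lem: UniformDualConvergence} and Assumption~\ref{Assumption: DistributionDensity}(b), and the second and third via the analogue of Lemma~\ref{lem: DARes} together with the random-arrival arguments already used for Theorem~\ref{Thm: RegretPA}, all give $O(\log K)$.

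The one genuinely new step — and where I expect the real work to be — is that $\pi_4$'s remaining-resource ratios $\{d_k\}$ differ from the impatience-free trajectory because the unavailable first-batch customers are rejected, so a little resource is ``saved'' and the action-history-dependent feedback reacts to it over all subsequent batches. I would control this by noting that the discrepancy in $b_{N(t_1)}$ is at most $\bar a\,M_1$, where $M_1$ is the number of unavailable first-batch customers with $\mathbb E[M_1]=\lambda\int_0^B F(u)\,du$, so, conditioning on the arrival process, the induced perturbation of $d_k$ is $O\!\big(\int_0^B F(u)\,du/((K-k)B)\big)$ in expectation; since $\int_0^B F(u)\,du\le B$, squaring and summing over $k$ adds only $O(1)$ to the bound from the analogue of Lemma~\ref{lem: DARes}, and the Lipschitz property of $p^*(\cdot)$ (Lemma~\ref{lem: BoundedLipschitz}(b)) together with Lemma~\ref{lem: UniformDualConvergence} propagates this into only an $O(1)$ change in the dual-deviation terms, while the same perturbation is far too small to move $b_{N(t_{K-1})}$ away from order $\lambda B d_0$ and make the last-batch LP slack (non-degeneracy, Assumption~\ref{Assumption: DistributionDensity}(a)). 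The delicate part is to make this uniform in $k$ and in the random per-batch arrival counts, and in particular to verify that the perturbed $\{d_k\}$ still stays inside the window $[d_0-\delta_d,d_0+\delta_d]$ defining $\bar{\kappa}$ with high enough probability; once that is done the Theorem~\ref{Thm: RegretPA} analysis carries over with an additive $O(1)$, and combining with the first paragraph gives $\Delta_T(\pi_4)\le O(\log K)+O\!\left(\lambda\int_0^B F(u)\,du\right)$.
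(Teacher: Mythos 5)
Your proposal is correct and follows essentially the same route as the paper: the paper likewise isolates the impatience loss by comparing $R_T^*$ to the offline optimum $\tilde R_T$ over available customers (via the dual price of the reduced LP rather than your primal zeroing-out argument, but with the same $O(\lambda\int_0^B F(u)\,du)$ conclusion from the Poisson order-statistics property), and then asserts $\mathbb{E}[\tilde R_T - R_T(\pi_4)]\le O(\log K)$ "by almost the same arguments" as Theorem~\ref{Thm: RegretPA}, omitting the details. Your third paragraph on the perturbation of the remaining-resource trajectory caused by rejecting impatient first-batch customers is a genuine subtlety the paper glosses over; note only that the squared-mean part of that perturbation contributes up to $O(\lambda\int_0^B F(u)\,du)$ rather than strictly $O(1)$, which is still absorbed by the stated bound.
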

Since $K=\frac{T}{B}$, Proposition \ref{Proposition: RegretCustomerImpatience} implies that the regret of Algorithm \ref{alg: RaAhdLACI} does not decrease monotonically in $B$. Thus, when customer impatience is considered, there should exist an optimal batch size $B^*$. In section \ref{Section: BSS}, we discuss how to select appropriate batch size using Proposition \ref{Proposition: RegretCustomerImpatience}.
\section{Experiments and Discussions}\label{Section: NumExp}
\subsection{Numerical Results: Known Total Number of Customers}
In this section, we test the proposed algorithms for the case in which the total number of customers is known. We test Algorithm \ref{alg: AhdLA} when there is only one type of resource ($m=1$). The total initial resource is set to $b_0=5n$. The reward $r_j\stackrel{i.i.d.}{\sim}Uniform(1,19)$, and the reward $a_j\stackrel{i.i.d.}{\sim}Uniform(1,19)$. In addition, $r_j\ind a_j$ for all $j$. We test Algorithm \ref{alg: AhdLAMulti} when there are four types of resource ($m=4$). For each type of resource, the total initial resource is set to 5 times of the total number of customers, i.e., $b_0(i) = 5n$ for all $i$. The reward $r_j\stackrel{i.i.d.}{\sim}Uniform(1,19)$, and the reward $a_{ij}\stackrel{i.i.d.}{\sim}Uniform(1,19)$. In addition, $r_j\ind a_{1j}\ind a_{2j}\ind a_{3j}\ind a_{4j}$ for all $j$. For both of the settings, we run numerical experiments for different number of batches $K(=2,8,32,64,128)$ and different total number of customers $n(=1280,6400,32000,64000,128000)$. The values of $n$ are selected to make them divisible by the largest value of $K$. For each numerical test instance, we run $1000$ simulation trials to estimate the regret. Table \ref{Table: KnownTotalCustomers} and Table \ref{alg: AhdLAMulti} summarize the estimated regrets of Algorithm \ref{alg: AhdLA} and Algorithm \ref{alg: AhdLAMulti} respectively.
\begin{table}[H]
    \centering
    \begin{tabular}{l|c c c c c}
    \hline
       \multicolumn{1}{l}{}  & \multicolumn{1}{c}{$K=2$} & \multicolumn{1}{c}{$K=8$} & \multicolumn{1}{c}{$K=32$} & \multicolumn{1}{c}{$K=64$} & \multicolumn{1}{c}{$K=128$}\\
       \hline
         $m=1,\;n=1280$  &2.26 &14.47 &19.90 &22.25 &24.74   \\
         $m=1,\;n=6400$  &2.20 &14.09 &19.99 &22.52 &25.50   \\
         $m=1,\;n=32000$  &2.13 &13.48 &18.91 &21.93 &24.67   \\
         $m=1,\;n=64000$  &2.20 &13.85 &19.58 &22.35 &25.13   \\
         $m=1,\;n=128000$  &2.25 &13.79 &19.25 &22.50 &25.33   \\  
        \end{tabular}
    \caption{Regret of Algorithm \ref{alg: AhdLA}}
    \label{Table: KnownTotalCustomers}
\end{table}
\begin{table}[H]
    \centering
    \begin{tabular}{l|c c c c c}
    \hline
       \multicolumn{1}{l}{}  & \multicolumn{1}{c}{$K=2$} & \multicolumn{1}{c}{$K=8$} & \multicolumn{1}{c}{$K=32$} & \multicolumn{1}{c}{$K=64$} & \multicolumn{1}{c}{$K=128$}\\
       \hline
         $m=4,\;n=1280$  &65.39 &109.40 &118.86 &120.83 &121.75   \\
         $m=4,\;n=6400$  &67.12 &132.70 &160.52 &168.09 &174.10   \\
         $m=4,\;n=32000$  &68.28 &130.56 &170.95 &188.99 &204.79   \\
         $m=4,\;n=64000$  &66.74 &131.41 &171.00 &192.61 &209.96   \\
         $m=4,\;n=128000$  &67.30 &132.16 &169.73 &192.32 &215.66 \\  
        \end{tabular}
    \caption{Regret of Algorithm \ref{alg: AhdLAMulti}}
    \label{Table: KnownTotalCustomersMulti}
\end{table}
We use Figure \ref{Figure: KC_Regret_Constant} and Figure \ref{Figure: KC_Regret_Log} to discuss the insights of the numerical results in Table \ref{Table: KnownTotalCustomers} and Table \ref{Table: KnownTotalCustomersMulti}. Figure \ref{Figure: KC_Regret_Constant} demonstrates the impact of the total number of customers on the regret of Algorithm \ref{alg: AhdLA} and Algorithm \ref{alg: AhdLAMulti}. For each fixed $K$, Figure \ref{Figure: KC_Regret_Constant} shows that the regret is constant in $n$ when there is only a single type of resource. This is consistent with Theorem \ref{Thm:RegretBoundDA}, which suggests that there is a regret upper bound that is independent of $n$. When there are four types of resource, Figure \ref{Figure: KC_Regret_Constant} shows that the regret becomes constant after $n$ is large enough for each $K$. Although we observe that the regret increases in $n$ when $n$ is small and larger $n$ is required to see the regret becomes constant for larger $K$, the numerical results still justifies the fact that there is a regret upper bound that is independent of the total number of customers for OLP with batching when there are multiple types of resource. Figure \ref{Figure: KC_Regret_Log} demonstrates the impact of the number of batches on the regret. For both the case with a single type of resource and the case with four types of resource, Figure \ref{Figure: KC_Regret_Log} shows that regret is logarithmic in $K$, which justifies the $O(\log K)$ regret upper bounds in Theorem \ref{Thm:RegretBoundDA} and \ref{Thm:RegretBoundDAMulti}. In addition, Figure \ref{Figure: KC_Regret_Log} shows that the regret increases in $m$ when $n$ and $K$ are fixed. In the regret analysis of this paper, we always fix $m$ and view it as a constant. An interesting future research will be studying the impact of the number of types of resource on the regret.
\begin{figure}[H]
    \centering
    \includegraphics[width=\textwidth]{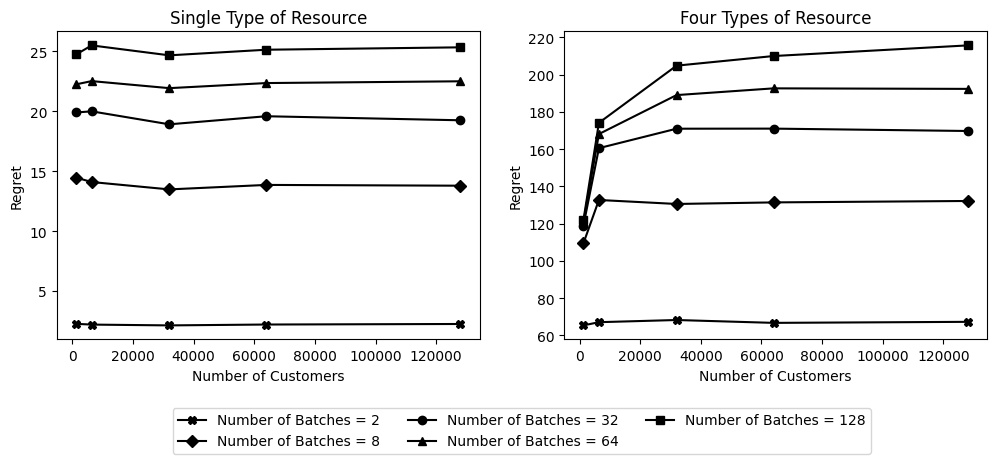}
    \caption{Regret of Algorithm \ref{alg: AhdLA} and Algorithm\ref{alg: AhdLAMulti}: Impact of $n$}
    \label{Figure: KC_Regret_Constant}
\end{figure}
\begin{figure}[H]
    \centering
    \includegraphics[width=\textwidth]{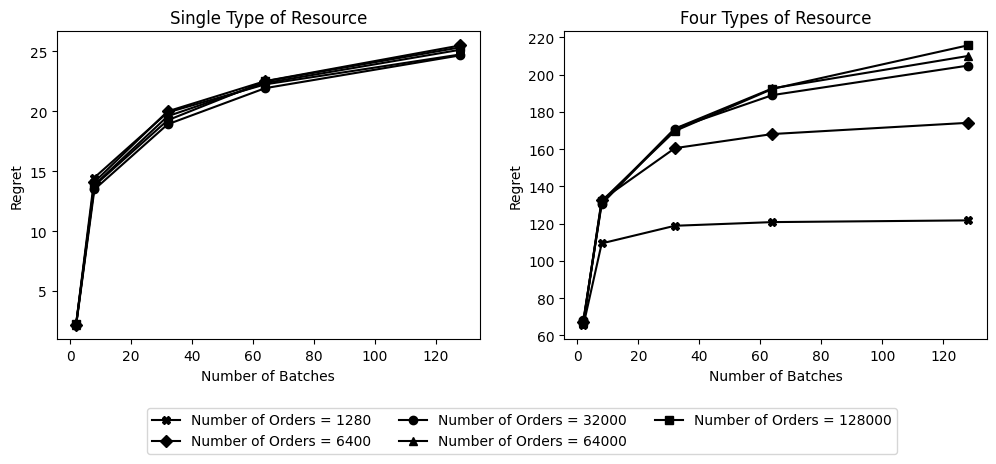}
    \caption{Regret of Algorithm \ref{alg: AhdLA} and Algorithm \ref{alg: AhdLAMulti}: Impact of $K$}
    \label{Figure: KC_Regret_Log}
\end{figure}
\subsection{Numerical Results: Poisson Arrival Process}
We test Algorithm \ref{alg: RaAhdLA} in this section. The initial total resource is set to $b_0 = 5\lambda T$. The reward $r_j\stackrel{i.i.d.}{\sim}Uniform(1,19)$, and the reward $a_j\stackrel{i.i.d.}{\sim}Uniform(1,19)$. In addition, $r_j\ind a_j$ for all $j$. We run numerical experiments for different values of arrival rate $\lambda(=10,50,100,500)$ and length of planning horizon $T(=128,512,1024)$, and number of batches $K(=2,8,32,64,128,256)$. For each numerical test instance, we run 1000 simulation trials to estimate regrets. Table \ref{Table: UnknownRate} summarizes the estimated the regrets of Algorithm \ref{alg: RaAhdLA}.
\begin{table}[H]
    \centering
    \begin{tabular}{l|c c c c c c c}
    \hline
       \multicolumn{1}{l}{}  & \multicolumn{1}{c}{$K=2$} & \multicolumn{1}{c}{$K=8$} & \multicolumn{1}{c}{$K=32$} & \multicolumn{1}{c}{$K=64$} & \multicolumn{1}{c}{$K=128$} & \multicolumn{1}{c}{$K=256$} \\
       \hline
     $\lambda=10,\;T=128$  & 3.75 & 25.60 & 35.83 & 39.88 & 45.93 & 48.22  \\
     $\lambda=10,\;T=512$  &  4.31 & 24.36 & 36.23 & 41.73 & 47.01 & 52.11 \\
     $\lambda=10,\;T=1024$  & 3.82 & 26.06 & 36.26 & 40.07 & 46.54 & 51.33 \\
     $\lambda=50,\;T=128$  & 4.39 & 25.04 & 36.09 & 41.28 & 45.78 & 50.86 \\
     $\lambda=50,\;T=512$  & 3.90 & 26.91 & 35.71 & 40.70 & 45.69 & 52.64 \\
     $\lambda=50,\;T=1024$  & 3.97 & 25.85 & 36.66 & 41.46 & 45.82 & 51.30 \\
     $\lambda=100,\;T=128$  & 3.71 & 26.85 & 36.55 & 41.68 & 46.11 & 53.38 \\
     $\lambda=100,\;T=512$  & 4.29 & 25.96 & 35.04 & 41.52 & 46.99 & 52.73 \\
     $\lambda=100,\;T=1024$  & 4.00 & 24.02 & 37.80 & 41.89 & 45.73 & 51.22 \\
     $\lambda=500,\;T=128$  & 3.92 & 25.84 & 34.42 & 40.30 & 46.83 & 52.32 \\
     $\lambda=500,\;T=512$  & 4.03 & 25.85 & 36.38 & 41.04 & 46.09 & 53.08 \\
     $\lambda=500,\;T=1024$  & 3.83 & 25.40 & 36.60 & 41.31 & 46.15 & 50.24 \\
    \end{tabular}
    \caption{Regret of Algorithm \ref{alg: RaAhdLA}}
    \label{Table: UnknownRate}
\end{table}
The arrival rate of the Poisson process is assumed to be unknown in Algorithm \ref{alg: RaAhdLA}, and only decisions on customers arriving in the first and the last batch are delayed. It is interesting to investigate how much improvement can be achieved if we assume the arrival rate is known at the beginning of the planning horizon and decisions on customers arriving in every batch are delayed. We make the following two modifications to Algorithm \ref{alg: RaAhdLA}. First, we replace the estimated arrival rate $\hat{\lambda}_k$ with the known arrival rate $\lambda$, i.e.,
\begin{equation}
    \hat{\lambda}_{k} = \lambda\quad\forall k=0,\cdots,K-1
\end{equation}
Secondly, we include the information from customers arriving in the current batch into the computation of dual prices used in batch $k=2,\cdots,K-1$, i.e.,
\begin{equation}
    p_k = \arg\min_{0\leq p\leq \frac{\bar{r}}{\underline{d}}} N(t_{k})\Tilde{d}_{k-1}p+\sum_{j=1}^{N(t_{k})}\left(r_j-a_jp\right)^+\quad\forall k=2,\cdots,K-1
\end{equation}
After making these two modifications, $O(\log K)$ is still a regret upper bound, but these two modifications should improve the performance of Algorithm \ref{alg: RaAhdLA}. Table \ref{Table: KnownRate} summarizes the estimated regrets of the modified version of Algorithm \ref{alg: RaAhdLA}.
\begin{table}[H]
    \centering
    \begin{tabular}{l|c c c c c c c}
    \hline
       \multicolumn{1}{l}{}  & \multicolumn{1}{c}{$K=2$} & \multicolumn{1}{c}{$K=8$} & \multicolumn{1}{c}{$K=32$} & \multicolumn{1}{c}{$K=64$} & \multicolumn{1}{c}{$K=128$} & \multicolumn{1}{c}{$K=256$}\\
       \hline
     $\lambda=10,\;T=128$  & 3.68 & 14.11 & 23.46 & 28.01 & 32.49 & 35.09 \\
     $\lambda=10,\;T=512$  & 4.04 & 14.03 & 24.16 & 29.26 & 33.59 & 38.39 \\
     $\lambda=10,\;T=1024$  & 3.90 & 14.44 & 24.13 & 26.95 & 32.85 & 36.22 \\
     $\lambda=50,\;T=128$  & 4.24 & 13.40 & 24.24 & 29.27 & 31.90 & 36.37 \\
     $\lambda=50,\;T=512$  & 4.34 & 15.07 & 23.95 & 27.84 & 32.27 & 37.02 \\
     $\lambda=50,\;T=1024$  & 3.75 & 13.99 & 23.72 & 28.21 & 32.14 & 36.72 \\
     $\lambda=100,\;T=128$  & 3.69 & 14.87 & 23.46 & 28.45 & 32.12 & 38.28 \\
     $\lambda=100,\;T=512$  & 4.43 & 13.87 & 22.79 & 28.37 & 32.34 & 37.18 \\
     $\lambda=100,\;T=1024$  & 3.98 & 14.05 & 24.47 & 28.53 & 32.25 & 36.35 \\
     $\lambda=500,\;T=128$  & 3.76 & 14.63 & 23.06 & 27.77 & 32.61 & 36.12 \\
     $\lambda=500,\;T=512$  & 4.24 & 14.43 & 24.17 & 27.47 & 32.30 & 37.32 \\
     $\lambda=500,\;T=1024$  & 3.63 & 14.04 & 23.96 & 28.63 & 33.24 & 35.76 \\
    \end{tabular}
    \caption{Regret of Modified Algorithm \ref{alg: RaAhdLA}}
    \label{Table: KnownRate}
\end{table}
To illustrate the insights provided by the numerical results in Table \ref{Table: UnknownRate} and \ref{Table: KnownRate}, we select several numerical test instances to make Figure \ref{Figure: Regret_Constant} and Figure \ref{Figure: Regret_Log}. Figure \ref{Figure: Regret_Constant} justifies Theorem \ref{Thm: RegretPA} by indicating that the regret of Algorithm \ref{alg: RaAhdLA} is independent of the arrival rate $\lambda$ and the length of the planning horizon $T$. We may interpret $T$ as the time between two inventory replenishment and $\lambda$ as the scale of the business. It is reasonable to assume $T$ is fixed since it may be subject to the operation of other components of the supply chain. Then, by setting the initial total resource $b_0=5\lambda T$, we mean that the amount of the inventory ordered in each replenishment is proportional to the scale of the business. Let $B$ be fixed as well. Under the assumption on the initial total resource, if we always can persuade customers arriving during the first $B$ units of time and the last $B$ units of time to wait regardless of business scale, then Theorem \ref{Thm: RegretPA} together with the above numerical results show that Algorithm \ref{alg: RaAhdLA} can achieve uniformly bounded regret. Furthermore, Figure \ref{Figure: Regret_Log} shows that, although knowing arrival rate and delaying decisions of customers arriving in every batch improve the performance, the order of the regret seems not change.
\begin{figure}[H]
    \centering
    \includegraphics[width=\textwidth]{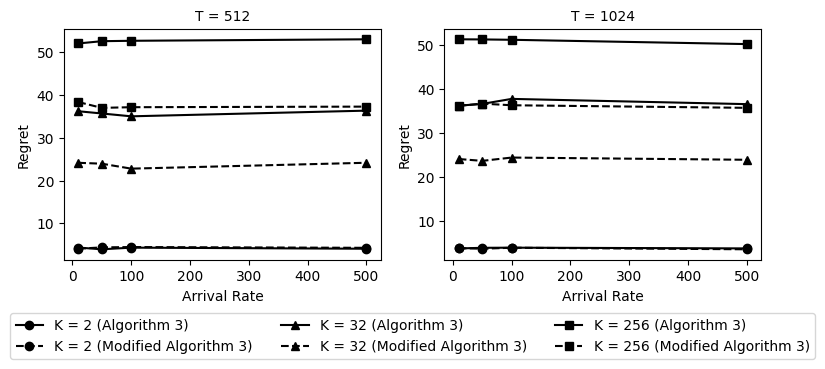}
    \caption{Regret of Algorithm \ref{alg: RaAhdLA}: Impact of $\lambda$ and $T$}
    \label{Figure: Regret_Constant}
\end{figure}
\begin{figure}[H]
    \centering
    \includegraphics[width=0.5\textwidth]{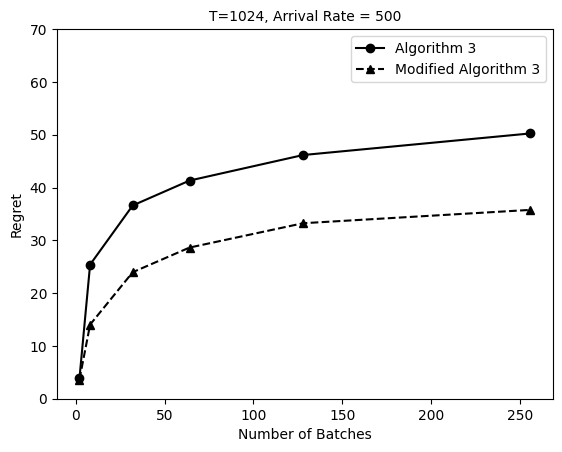}
    \caption{Regret of Algorithm \ref{alg: RaAhdLA}: Impact of $K$}
    \label{Figure: Regret_Log}
\end{figure}
\subsection{Batch Size Selection}\label{Section: BSS}
Although it is impossible to compute the optimal batch size $B^*$ without knowing the distributions of $(r_j,a_j)$ and $\lambda$, Proposition \ref{Proposition: RegretCustomerImpatience} may still provide a recommended way of scaling the batch size $B$ in terms of $\lambda$. In the following, we fix $T=10$, set $F$ to be the c.d.f of $Exp(1)$. Then,
\begin{equation}
    \lambda\int_{0}^{B}F(u)du = \lambda B + \lambda \exp(-B) - \lambda
\end{equation}
Suppose we restrict $B=\lambda^{-\gamma}$ with $\gamma\in [0,1]$. Plugging this $B$ into the upper bound in Proposition \ref{Proposition: RegretCustomerImpatience}, we observe that
\begin{equation}
    \left\{\begin{aligned}\label{eqn:OptimalK}
        &\log\left(\frac{T}{B}\right) = \gamma\log \lambda+\log T\\
        &\lim_{\lambda\rightarrow+\infty}\frac{\lambda^{1-\gamma}+\lambda\exp(-\lambda^{-\gamma})-\lambda}{\log \lambda}=0\quad\mathrm{if}\;\gamma\geq 0.5\\
        &\lim_{\lambda\rightarrow+\infty}\frac{\lambda^{1-\gamma}+\lambda\exp(-\lambda^{-\gamma})-\lambda}{\log \lambda}=\infty\quad\mathrm{if}\;\gamma< 0.5
    \end{aligned}\right.
\end{equation}
(\ref{eqn:OptimalK}) then suggests that choosing $\gamma=0.5$ minimize the upper bound in Proposition \ref{Proposition: RegretCustomerImpatience} when $\lambda$ is large enough. We run numerical experiments to verify if choosing $\gamma=0.5$ is also good for minimizing the regret of Algorithm \ref{alg: RaAhdLACI}. The plot on the left of Figure \ref{Figure: Impatience} indicates that $\gamma=0.1$ and $\gamma=0.3$ are significantly worse than the other three options. In the plot on the right of Figure \ref{Figure: Impatience}, we show the difference among $\gamma=0.5$, $\gamma=0.7$, $\gamma=0.9$ by also doing the test with larger arrival rate. It shows that $\gamma = 0.5$ is the best among at least these five options.
\begin{figure}[H]
    \centering
    \includegraphics[width=\textwidth]{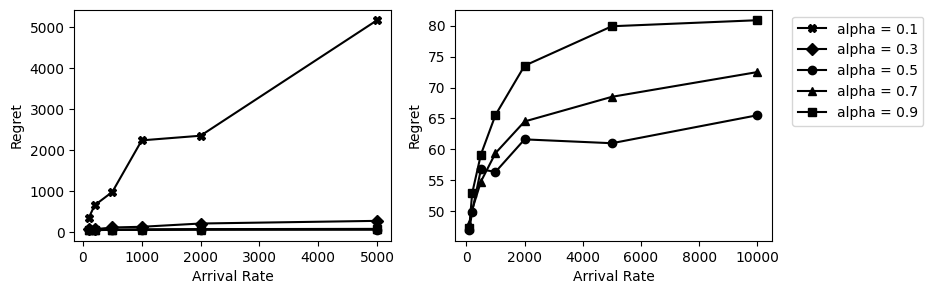}
    \caption{Regret with Poisson Process and Exponential Customer Impatience}
    \label{Figure: Impatience}
\end{figure}
In fact, when customer impatience is taken into consideration, the optimal batch size $B^*$ should be viewed as a function of the arrival rate $\lambda$. We can minimize the upper bound in Proposition \ref{Proposition: RegretCustomerImpatience} by solving the following equation. 
\begin{equation}\label{eqn:recommendbatchsize}
    BF(B) = \frac{C}{\lambda}
\end{equation}
where $C$ is a constant. Thus, a recommended batch size should has the same order with respect to the arrival rate as the solution of (\ref{eqn:recommendbatchsize}).
\section{Conclusion}\label{Sec: Conclusion}
In this paper, we study OLP with batching, and we focus on the case in which the conditional distribution of the reward given the resource consumption is continuous. We propose algorithms following the idea of Action-History-Dependent, which solve one LP for each batch. When the total number of customers is known and there is only a single type of resource, we show the regret of OLP with batching is $\Theta(\log K)$. We also extend the $O(\log K)$ regret upper bound to the case in which there are multiple types of resource and the case in which customers arrive following a Poisson process. Our regret analysis justifies the intuition that OLP with batching has smaller regret than OLP without batching. In addition, one managerial insight given by the design of the algorithms and the regret analysis is that only delaying decisions on customers arriving in the first and the last batch is sufficient to achieve a significant regret reduction. Since all the regret upper bounds only depend on the total number of batches $K$, another managerial insight given by the regret analysis is that OLP with batching has a constant regret upper bound with respect to the length of the planning horizon and the arrival rate of the customers if the total number of batches is fixed. Furthermore, we consider the setting in which customers are only willing to wait for a random amount of time. The regret upper bound of this setting does not monotonically decrease in the batch size, which implies that there exists an optimal batch size when we take customer impatience into consideration. We then provide a way to select the batch size by minimizing the regret upper bound. 

There are some future research questions about OLP with batching. First, how should we do the regret analysis for the cases with multiple types of resource when there is no historical data available? Developing a Leave-One-Out analysis for the dual price when there are multiple types of resource is one possible way to solve the problem, though there may be other regret analysis techniques that can be used to address this question. Secondly, if we fix the total number of batches, our numerical experiments show that the regret is larger when there are more types of resource. Thus, another future research question is about the impact of the number of types of resource on the regret of OLP with batching. Finally, we only develop a $\Omega(\log K)$ regret lower bound for the case in which there is only a single type of resource and the total number of customers is known. Although it is reasonable to assume that $\Omega(\log K)$ is still a lower bound for more complicated settings, it is still interesting to provide rigorous proof for the regret lower bound of other more complicated settings.

\bibliographystyle{apalike}
\bibliography{references}

\begin{thebibliography}{}

\bibitem[Agrawal et~al., 2014]{Agrawal2014}
Agrawal, S., Wang, Z., and Ye, Y. (2014).
\newblock A dynamic near-optimal algorithm for online linear programming.
\newblock {\em Operations Research}, 62(4).

\bibitem[Ashlagi and Roth, 2023]{Ashlagi2021}
Ashlagi, I. and Roth, A.~E. (2023).
\newblock Kidney exchange: An operations perspective.
\newblock {\em Management Science}, 67(9):5455-5478.

\bibitem[Balseiro et~al., 2023a]{BalseiroSurvey}
Balseiro, S.~R., Besbes, O., and Pizarro, D. (2023a).
\newblock Survey of dynamic resource-constrained reward collection problems: Unified model and analysis.
\newblock {\em Operations Research}.

\bibitem[Balseiro et~al., 2023b]{Balseiro2023}
Balseiro, S.~R., Lu, H., and Mirrokni, V. (2023b).
\newblock The best of many worlds: Dual mirror descent for online allocation problems.
\newblock {\em Operations Research}, 71(1),101-119.

\bibitem[Blanchet et~al., 2022]{Blanchet2022}
Blanchet, J.~H., Reiman, M.~I., Shah, V., Wein, L.~M., and Wu, L. (2022).
\newblock Asymptotically optimal control of a centralized dynamic matching market with general utilities.
\newblock {\em Operations Research}, 70(6), 3355-3370.

\bibitem[Bray, 2023]{Bray2019}
Bray, R.~L. (2023).
\newblock Logrithmic regret in multisecretary and online linear programs with continuous valuations.
\newblock {\em arXiv preprint}, arxiv:1912.08917.

\bibitem[Chen et~al., 2022]{Chen.et.al.2023}
Chen, G., Li, X., and Ye, Y. (2022).
\newblock An improved analysis of lp-based control for revenue management.
\newblock {\em Operations Research}.

\bibitem[Feng and Niazadeh, 2022]{FengandNia2022}
Feng, Y. and Niazadeh, R. (2022).
\newblock Batching and optimal multi-stage bipartite allocations.
\newblock {\em arXiv preprint}, arXiv:2211.16581.

\bibitem[Feng et~al., 2023]{Feng.et.al.2023}
Feng, Y., Niazadeh, R., and Saberi, A. (2023).
\newblock Two-stage stochastic matching and pricing with applications to ride hailing.
\newblock {\em Operations Research}.

\bibitem[Gao et~al., 2024]{Gao2024}
Gao, W., Sun, C., Xue, C., Ge, D., and Ye, Y. (2024).
\newblock Decoupling learning and decision-making: Breaking the $\mathcal {O}(\sqrt {T}) $ barrier in online resource allocation with first-order methods.
\newblock {\em arXiv preprint}, arXiv:2402.07108.

\bibitem[Golrezaei and Yao, 2021]{Golrezaei2021}
Golrezaei, N. and Yao, E. (2021).
\newblock Upfront commitment in online resource allocation with patient customers.
\newblock {\em arXiv preprint}, arXiv:2108.03517.

\bibitem[Kerimov et~al., 2023]{Kerimov2023}
Kerimov, S., Ashlagi, I., and Gurvich, I. (2023).
\newblock Dynamic matching: Characterizing and achieving constant regret.
\newblock {\em Management Science}.

\bibitem[Lee and Singla, 2020]{LeeSingla}
Lee, W. and Singla, S. (2020).
\newblock Maximum matching in the online batch-arrival model.
\newblock {\em ACM Transactions on Algorithms (TALG)}, 16(4),1-31.

\bibitem[Li et~al., 2023]{lisunye2023}
Li, X., Sun, C., and Ye, Y. (2023).
\newblock Simple and fast algorithm for binary integer and online linear programming.
\newblock {\em Mathematical Programming}, 200(2), 831-875.

\bibitem[Li and Ye, 2022]{li2019olp}
Li, X. and Ye, Y. (2022).
\newblock Online linear programming: Dual convergence, new algorithms, and regret bounds.
\newblock {\em Operations Research}, 70(5),2948-2966.

\bibitem[Vera and Banerjee, 2021]{Vera2021}
Vera, A. and Banerjee, S. (2021).
\newblock The bayesian prophet: A low-regret framework for online decision making.
\newblock {\em Management Science}, 67(3), 1368-1391.

\bibitem[Wang et~al., 2023]{Wang2023}
Wang, Y., Wang, T., Wang, X., Deng, Y., and Cao, L. (2023).
\newblock Data-driven order fulfillment consolidation for online grocery retailing.
\newblock {\em INFORMS Jorunal on Applied Analytics}.

\bibitem[Xie et~al., 2023]{Xie.et.al.2023}
Xie, Y., Ma, W., and Xin, L. (2023).
\newblock The benefits of delay to online decision-making.
\newblock {\em SSRN}, 4248326.

\end{thebibliography}
\newpage
\section*{Appendix}
\subsection{Proof of Lemma~\ref{lem:Leave-One-Out}}
\begin{proof} 
Fix $k\in \{1,2\}$, and fix $N>N_{LOO,k}$ be a positive integer, where $N_{LOO,k}$ is given in Lemma \ref{Lemma: auxJackknife}. Let $\{(r_{(j)},a_{(j)})\}_{j=1}^N$ be a permutation of $\{(r_t,a_t)\}_{t=1}^N$ such that 
\begin{equation}
    \frac{r_{(1)}}{a_{(1)}}\geq\cdots\frac{r_{(j)}}{a_{(j)}}\geq \frac{r_{(j+1)}}{a_{(j+1)}}\geq \cdots\geq \frac{r_{(N)}}{a_{(N)}}
\end{equation}
Let $\mathcal{J}_k$ be all the subsets $J$ of $\{1,\cdots,N\}$ with $\abs{J}=k$. For each $d\in\Omega_d$, define the following random set
\begin{equation}
    I_N(d,k) = \bigcup_{J\in\mathcal{J}_k}\left\{j:\mathbbm{1}\left\{r_{(j)}>{a_{(j)}}p_N^*(d)\right\}\neq\mathbbm{1}\left\{r_{(j)}>{a_{(j)}}\Tilde{p}_{N,J}(d)\right\}\right\}
\end{equation}
By Lemma~\ref{Lemma: auxJackknife}, for all $d\in \Omega_d$, $J\subseteq [N]$ with $\abs{J}=k$ and $1\leq t\leq N$,
\begin{equation}\begin{aligned}
    &\mathbb{P}\left\{\mathbbm{1}\left\{r_t>{a_t}p_N^*(d)\right\}\neq\mathbbm{1}\left\{r_t>{a_t}\Tilde{p}_{n,J}(d)\right\}\right\}\\
    \leq & \mathbb{P}\left\{\exists j\in I_n(d,k) \;\mathrm{s.t}\;\left(a_t,r_t\right)=\left(a_{(j)},r_{(j)}\right)\right\} \\
    =&\mathbb{E}\left[\mathbb{P}\left\{\exists j\in I_n(d,k) \;\mathrm{s.t}\;\left(a_t,r_t\right)=\left(a_{(j)},r_{(j)}\right)\bigg\rvert\left(a_{(1)},r_{(1)}\right),\cdots,\left(a_{(n)},r_{(n)}\right) \right\}\right]\\
    =&\mathbb{E}\left[\frac{\abs{I_n(d,k)}}{n}\right]\\
    \leq &\frac{C_{LOO,k}}{n}
\end{aligned}\end{equation}
\end{proof}
\begin{lem}\label{Lemma: auxJackknife}
     Given $k\in \{1,2\}$, $\exists N_{LOO,,k}$ and $C_{LOO,k}$ such that, if $N>N_{LOO,k}$, for any $d\in \Omega_d$,
    \begin{equation}
        \mathbb{E}\left\{\abs{I_N(d,k)}\right\}\leq C_{LOO,k}
    \end{equation}
\end{lem}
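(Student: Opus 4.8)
The goal is to show that the expected size of the random index set $I_N(d,k)$—the set of ranked positions $j$ at which the decision $\mathbbm{1}\{r_{(j)}>a_{(j)}p\}$ can flip for *some* $J$ of cardinality $k$ when we pass from $p_N^*(d)$ to $\widetilde p_{N,J}(d)$—is bounded by a constant uniformly in $d\in\Omega_d$. The plan is to exploit the one-dimensional structure of the problem: since $m=1$, the dual decision rule $\mathbbm{1}\{r_j>a_jp\}=\mathbbm{1}\{r_j/a_j>p\}$ is a threshold rule on the ratios $r_j/a_j$. Thus $p_N^*(d)$ corresponds to a threshold index in the sorted list $r_{(1)}/a_{(1)}\ge\cdots\ge r_{(N)}/a_{(N)}$, and $\widetilde p_{N,J}(d)$ to another threshold index; the positions that flip are exactly those lying between the two thresholds. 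So $|I_N(d,k)|$ is controlled by how far $\widetilde p_{N,J}(d)$ can move away from $p_N^*(d)$, measured in number of order statistics of the ratios falling in the intervening interval.

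The key steps, in order, would be: \textbf{(1)} Write $p_N^*(d)$ and $\widetilde p_{N,J}(d)$ as minimizers of convex piecewise-linear objectives $\phi_N(p)=dp+\frac1N\sum_{j=1}^N(r_j-a_jp)^+$ and $\phi_{N,J}(p)=dp+\frac1{N-k}\sum_{j\notin J}(r_j-a_jp)^+$, respectively. Removing $k$ terms and rescaling perturbs the subgradient of the objective by $O(k/N)$ uniformly (using the boundedness $r_j\le\bar r$, $\underline a\le a_j\le\bar a$ from Assumption~\ref{Assumption: DistributionMulti}). \textbf{(2)} Invoke the second-order/curvature information: Assumption~\ref{Assumption: DistributionDensity}(c) gives a positive-definite continuous Hessian of the population objective $g$ near $p^*(d_0)$, and the dual convergence Lemma~\ref{lem: UniformDualConvergence} tells us $p_N^*(d)$ concentrates near $p^*(d)\in\Omega_d$; combined with Assumption~\ref{Assumption: DAExtra} (which forces $p^*(d_0)>0$, so the constraint $p\ge0$ is slack and the minimizer is interior) one gets a uniform lower bound on the local strong convexity of $\phi_N$ near its minimum, on an event of high probability. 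Hence an $O(k/N)$ subgradient perturbation moves the minimizer by at most $O(k/N)$. \textbf{(3)} Translate the displacement bound $|\widetilde p_{N,J}(d)-p_N^*(d)|=O(k/N)$ (on the good event) into a bound on the number of ratios $r_j/a_j$ lying in the interval between the two values: by Assumption~\ref{Assumption: DistributionDensity}(b) the ratio $r/a$ has a bounded density near $p^*(d_0)$ (since $a$ is bounded away from $0$ and $\infty$ and $r|a$ has density $\le\beta$), so in expectation $O(N)\cdot O(k/N)=O(k)$ such ratios fall in an interval of width $O(k/N)$. Summing the contribution over all $\binom{N}{k}$ subsets $J$ is \emph{not} done by union bound on the width—rather, one notes $I_N(d,k)$ is contained in a single interval of positions whose length is governed by the \emph{worst} $J$, and the worst-case displacement is still $O(k/N)$ because removing any $k$ bounded terms perturbs the objective by at most $O(k/N)$. \textbf{(4)} Handle the low-probability complement event (where $p_N^*(d)$ strays far, or strong convexity degrades) crudely: there $|I_N(d,k)|\le N$, but Lemma~\ref{lem: UniformDualConvergence} makes this event have probability $O(1/N)$ or smaller, contributing $O(1)$ to the expectation.

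The main obstacle I anticipate is \textbf{Step (2)}: getting a \emph{uniform-in-$d$ and high-probability} lower bound on the local curvature of the empirical objective $\phi_N$ at its minimizer. The empirical objective is only piecewise linear, so "strong convexity" must be interpreted as a quantitative statement that the left and right slopes at the minimizer differ by $\Omega(1/N)$ per unit step, i.e. that the density of ratios $r_j/a_j$ near $p_N^*(d)$ is bounded \emph{below}—which needs a lower bound on $f_{r|a}$ near the threshold, or at least a statement that $\mathbb E[a\,\mathbbm 1\{r>ap\}]$ has nonvanishing derivative in $p$ at $p^*(d)$; this is exactly what Assumption~\ref{Assumption: DistributionDensity}(c) secures at the population level, and the passage to the empirical level is where the uniform dual convergence Lemma~\ref{lem: UniformDualConvergence} and a uniform law of large numbers for the slope process must be combined carefully. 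A secondary subtlety is that the flip-set is defined with a union over all $J$ with $|J|=k$, so one must argue that the union does not blow up the interval—this follows because every $\widetilde p_{N,J}(d)$ lies within $O(k/N)$ of $p_N^*(d)$ simultaneously, but the bookkeeping should be spelled out.
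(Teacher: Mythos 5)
Your starting point is right: with $m=1$ the dual decision is a threshold rule on the ratios $r_j/a_j$, removing $k$ terms and rescaling perturbs the (sub)gradient of the empirical objective by $O(k/N)$ uniformly in $p$, and the flip set is exactly the set of order statistics caught between the two thresholds. The gap is in your Step (2): you want to convert the $O(k/N)$ subgradient perturbation into an $O(k/N)$ displacement of the minimizer via ``local strong convexity of $\phi_N$,'' and then count points in a window of width $O(k/N)$ via the density bound $\beta$. Neither half of this is available. The empirical objective is piecewise linear, so its ``curvature'' at scale $1/N$ is the empirical count of ratios in a window of width $O(k/N)$; that count is a sum of $N$ indicators with success probability $O(k/N)$, i.e.\ Poisson-like with mean $O(k)$, and it is zero with probability bounded away from zero --- so the displacement $\abs{\Tilde{p}_{N,J}(d)-p_N^*(d)}$ is \emph{not} $O(k/N)$ with high probability, and no uniform law of large numbers at that resolution can rescue it. Moreover the assumptions give only an \emph{upper} bound $\beta$ on $f_{r|a}$; Assumption~\ref{Assumption: DistributionDensity}(c) gives population curvature near $p^*(d_0)$, not an empirical slope lower bound at the random location $p_N^*(d)$. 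Step (3) inherits the problem: the interval whose points you count has random, data-dependent endpoints, so an expected-count-times-width argument does not apply.

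The repair --- and this is what the paper's proof does, phrased with cumulative resource sums rather than subgradients --- is to bound the \emph{number of crossed order statistics} directly and deterministically, never bounding the displacement in $p$-space at all. Each accepted item contributes at least $\underline{a}>0$ to the running resource total, while the two budgets $Nd$ and $(N-k)d$, together with the removal of $k$ items of size at most $\bar{a}$, differ by at most $k(\bar{a}+d)$; hence the two threshold indices $q(\omega)$ and $\Tilde{q}_{J}(\omega)$ differ by at most $\frac{k\max\{d,\bar{a}\}}{\underline{a}}+O(1)$ positions, uniformly over all $J$ with $\abs{J}=k$ and with no probabilistic input. The only stochastic step is the degenerate case where the reduced budget is not binding (so $\Tilde{p}_{N,J}(d)=0$), which the paper confines to the last $O(k)$ positions on the event $\left\{\sum_{t=1}^N a_t>Nd\right\}$ and controls on the complement by Hoeffding via Assumption~\ref{Assumption: DAExtra} ($\mathbb{E}[a]>\bar{d}+\epsilon_d$), contributing $N\exp(-N\epsilon_d^2/\bar{a}^2)=O(1)$ to the expectation. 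If you replace your Step (2) by ``the subgradient increases by at least $\underline{a}/N$ per crossed point while the total perturbation is at most $2k\bar{a}/N$, hence at most $2k\bar{a}/\underline{a}$ points are crossed,'' your argument closes and coincides with the paper's.
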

\begin{proof}
     Fix $k\in \{1,2\}$, and fix $N>2$. Given $d\in \Omega_d$, define event $E_1$ and $E_2$ to be
    \begin{equation}
    \begin{aligned}
        &E_1 = \left\{\frac{r_{(1)}}{a_{(1)}}\neq\cdots\frac{r_{(j)}}{a_{(j)}}\neq \frac{r_{(j+1)}}{a_{(j+1)}}\neq \cdots\neq \frac{r_{(N)}}{a_{(N)}}\right\}\\
        &E_2 = \left\{\sum_{t=1}^Na_t>Nd\right\}\\
    \end{aligned}
    \end{equation}
    Given $\omega\in E_1\cap E_2$, then set $q(\omega)\in\{1,\cdots,N\}$ such that
    \begin{equation}
    \left\{\begin{aligned}
        &r_{(j)}>{a_{(j)}}p_N^*(d)\;\mathrm{if} j < q(\omega)\\
        &r_{(j)}\leq{a_{(j)}}p_N^*(d)\;\mathrm{if} j \geq q(\omega)\\
        & \sum_{j=1}^{q(\omega)-1}a_{(j)}(\omega)\leq Nd\\
        & \sum_{j=1}^{q(\omega)}a_{(j)}(\omega)\geq Nd
    \end{aligned}\right.
    \end{equation}
    Given $J\subseteq[N]$ such that $\abs{J}=k$. Define
    \begin{equation}
        I(J,\omega) = \{i:\exists j\in J s.t. (r_j(\omega),a_j(\omega))=(r_{(i)}(\omega),a_{(i)}(\omega)))\}
    \end{equation}
    If $\sum_{j\in[N]\backslash J}a_j(\omega)>(N-k)d$, then set $\Tilde{q}_{J}(\omega)\in [N]$ such that
    \begin{equation}
    \left\{\begin{aligned}
        &r_{(j)}>{a_{(j)}}\Tilde{p}_{N,J}\;\mathrm{if} j < \Tilde{q}_{J}(\omega)\\
        &r_{(j)}\leq{a_{(j)}}\Tilde{p}_{N,J}  \;\mathrm{if} j \geq \Tilde{q}_{J}(\omega)\\
        & \sum_{j\in [\Tilde{q}_{J}(\omega)-1]\backslash I(J,\omega)}a_{(j)}(\omega)\leq (N-k)d\\
        & \sum_{j\in [\Tilde{q}_{J}(\omega)]\backslash I(J,\omega)}a_{(j)}(\omega)\geq (N-k)d
    \end{aligned}\right.
    \end{equation}
    Suppose that $q(\omega)-\Tilde{q}_{J}(\omega)\geq\frac{kd}{\underline{a}}+2$, then
    \begin{equation}\begin{aligned}
        \sum_{j\in [\Tilde{q}_{J}(\omega)]\backslash I(J,\omega)}a_{(t)}(\omega)\leq & \sum_{j=1}^{\Tilde{q}_{J}(\omega)}a_{(j)}(\omega)\\
        = & \sum_{j=1}^{q(\omega)-1}a_{(j)}(\omega) - \sum_{j=\Tilde{q}_{J}(\omega)+1}^{q(\omega)-1}a_{(j)}(\omega)\\
        \leq & Nd - \underline{a}(q(\omega)-1-\Tilde{q}_{J}(\omega))\\
        \leq & (N-k)d - \underline{a}
    \end{aligned}\end{equation}
    which gives a contradiction, and implies that $q(\omega)-\Tilde{q}_{J}(\omega)<\frac{kd}{\underline{a}}+2$. On the other hand, suppose that $\Tilde{q}_{J}(\omega)-q(\omega)\geq\frac{k\bar{a}}{\underline{a}}+2$, then
    \begin{equation}\begin{aligned}
        \sum_{j\in [\Tilde{q}_{J}(\omega)-1]\backslash I(J,\omega)}a_{(j)}(\omega)\geq & \sum_{j=1}^{\Tilde{q}_{J}(\omega)-1}a_{(j)}(\omega)-k\bar{a}\\
        = & \sum_{j=1}^{q(\omega)}a_{(j)}(\omega) + \sum_{j=q(\omega)+1}^{\Tilde{q}_{J'}(\omega)-1}a_{(j)}(\omega)-k\bar{a}\\
        \geq & Nd + \underline{a}(\Tilde{q}_{J}(\omega)-q(\omega)-1) - k\bar{a}\\
        \geq & Nd + \underline{a}
    \end{aligned}\end{equation}
    which gives a contradiction, and implies that $q(\omega)-\Tilde{q}_{J}(\omega)<\frac{k\bar{a}}{\underline{a}}+2$. Thus,
    \begin{equation}
        \abs{q(\omega)-\Tilde{q}_{J}(\omega)}<\frac{k\max\{d,\bar{a}\}}{\underline{a}}+2
    \end{equation}
    Then,
    \begin{equation}
    \begin{aligned}
        &\left\{j:\mathbbm{1}\left\{r_{(j)}>{a_{(j)}}p_N^*\right\}(\omega)\neq\mathbbm{1}\left\{r_{(j)}>{a_{(j)}}\Tilde{p}_{N,J}\right\}(\omega)\right\}\\
        \subseteq&\left\{q(\omega)-\left\lceil\frac{k\max\{d,\bar{a}\}}{\underline{a}}+2\right\rceil,\cdots,q(\omega)+\left\lceil\frac{2k\max\{d,\bar{a}\}}{\underline{a}}+2\right\rceil\right\}
    \end{aligned}
    \end{equation}
    If $\sum_{j\in[N]\backslash J}a_j(\omega)\leq (N-k)d$, 
    \begin{equation}\begin{aligned}
        \sum_{j=q(\omega)+1}^{N}a_{(j)}(\omega)\leq & \sum_{j=1}^{N}a_{(j)}(\omega) - \sum_{j=1}^{q(\omega)}a_{(j)}(\omega)\\
        \leq & \sum_{j\in[N]\backslash J}a_j(\omega) + k\bar{a} - Nd\\
        \leq &k(\bar{a}-d)
    \end{aligned}\end{equation}
    Thus, 
    \begin{equation}
        q(\omega)\geq N - \frac{k(\bar{a}-d)}{\underline{a}}
    \end{equation}
    Then, 
    \begin{equation}
        \left\{j:\mathbbm{1}\left\{r_{(j)}>{a_{(j)}}p_N^*\right\}(\omega)\neq\mathbbm{1}\left\{r_{(j)}>{a_{(j)}}\Tilde{p}_{N,J}\right\}(\omega)\right\}\subseteq \left\{\left\lceil N - \frac{k(\bar{a}-d)}{\underline{a}}\right\rceil,\cdots,N\right\}
    \end{equation}
    Thus, for any $J\subseteq[N]$ such that $\abs{J}=k$,
    \begin{equation}
        \begin{aligned}
        &\{j:\mathbbm{1}\left\{r_{(j)}>{a_{(j)}}^Tp_n^*\right\}(\omega)\neq\mathbbm{1}\left\{r_{(j)}>{a_{(j)}}^T\Tilde{p}_{n,J'}\right\}(\omega)\}\\
        \subseteq &\left\{q(\omega)-\left\lceil\frac{2k\max\{d,\bar{a}\}}{\underline{a}}+2\right\rceil,\cdots,q(\omega)+\left\lceil\frac{2k\max\{d,\bar{a}\}}{\underline{a}}+2\right\rceil\right\}\cup\left\{\left\lceil N - \frac{k(\bar{a}-d)}{\underline{a}}\right\rceil,\cdots,N\right\}
        \end{aligned}
    \end{equation}
    Set $C(d,k)=\frac{k(\bar{a}-d)}{\underline{a}}+\frac{2k\max\{d,\bar{a}\}}{\underline{a}}+5$, then
    \begin{equation}
        \abs{I_n(d,k)}(\omega)\leq C(d,k)
    \end{equation}
    Under Assumption~\ref{Assumption: DistributionMulti} and Assumption~\ref{Assumption: DAExtra}, $\mathbb{P}\{E_1\}=1$, and
    \begin{equation}\begin{aligned}
        \mathbb{P}\{E_2\}&\geq\mathbb{P}\left\{\frac{1}{n}\sum_{t=1}^na_t\geq \mathbb{E}[a]-\epsilon_d\right\}\\
        &\geq 1-\exp\left(-\frac{N\epsilon_d^2}{\bar{a}^2}\right)
    \end{aligned}\end{equation}
    Then, 
    \begin{equation}
        \mathbb{P}\left\{\abs{I_N(d,k)}\leq C(d,k)\right\}\geq 1-\exp\left(-\frac{N\epsilon_d^2}{\bar{a}^2}\right)
    \end{equation}
    Thus, there exists $\Tilde{N}$ such that, if $n>\Tilde{N}$,
    \begin{equation}\begin{aligned}
        &\mathbb{E}\left[\abs{I_N(d,k)}\right]\\
        \leq &C(d,k) + N\mathbb{P}\left\{\abs{I_N(d,k)}> C(d,k)\right\}\\
        \leq &C(d,k) +1
    \end{aligned}\end{equation}
    Take $N_{LOO,k}=\max\{\Tilde{N},2\}$ and $C_{LOO,k}=\sup_{d\in\Omega_d}C(d,k)+1$. Then, if $N>N_{LOO,k}$, for all $d\in\Omega_d$,
    \begin{equation}
        \mathbb{E}\left[\abs{I_N(d,k)}\right]\leq C_{LOO,k}
    \end{equation}
\end{proof}
\subsection{Proof of Theorem~\ref{Thm:RegretBoundDA}}
\begin{proof}
Fix an integer $B$ such that
\begin{equation}
    B>\Lambda_{DA}=\max\left\{\frac{\bar{a}}{\underline{d}},N_{Dual},N_{LOO,1},N_{LOO,2}\right\}
\end{equation}
Fix $\delta_d$ such that
\begin{equation}\begin{aligned}
    [d_0-\delta_d,d_0+\delta_d]\subseteq\Omega_d
\end{aligned}\end{equation}
Let $d_k=\frac{b_{t_k}}{n-t_k}$, and define 
\begin{equation}
    \bar{\kappa}=\min \{K\}\cup\left\{k:d_k\notin \left[d-\delta_d,d+\delta_d\right]\right\}
\end{equation}
To simplify the notation, we write $x^{\pi_1}_j$ as $x_j$, and $p^*$ in this section refers to the population dual price evaluated at the initial average resource $d_0$, i.e., $p^*=p^*(d_0)$. The total reward given by $\pi_1$ can be written as
\begin{equation}\begin{aligned}
    \mathbb{E}\left[R_n(\pi_1)\right]
    =&\mathbb{E}\left[\sum_{k=1}^{K}\sum_{t=t_{k-1}+1}^{t_k}r_tx_t\right]\\
    =&\mathbb{E}\left[\sum_{k=1}^{K}\sum_{t=t_{k-1}+1}^{t_k}r_tx_t+\left(b_0-\sum_{k=1}^{K}\sum_{t=t_{k-1}+1}^{t_k}a_tx_t\right)p^*\right]-\mathbb{E}\left[b_np^*\right]\\
    =&\mathbb{E}\left[b_0p^*+\sum_{k=1}^{K}\sum_{t=t_{k-1}+1}^{t_k}(r_t-a_tp^*)x_t\right]-\mathbb{E}\left[b_np^*\right]
\end{aligned}\end{equation}
The second line is by the definition of $b_n$, which is the remaining resource at the end of the planning horizon. Since $p^*$ is a feasible solution to the offline problem, we have the following upper bound of the offline optimal total reward.
\begin{equation}\begin{aligned}
    \mathbb{E}\left[R_n^*\right]&\leq\mathbb{E}\left[b_0p^*+\sum_{t=1}^n\left(r_t-{a_t}p^*\right)^+\right]\\
    &=\mathbb{E}\left[b_0p^*+\sum_{k=1}^K\sum_{t=t_{k-1}+1}^{t_k}\left(r_t-{a_t}p^*\right)\mathbbm{1}\left\{r_t>{a_t}p^*\right\}\right]
\end{aligned}\end{equation}
Then, we have the following generic regret upper bound
\begin{align}
    &\mathbb{E}\left[R_n^*-R_n(\pi_1)\right]\nonumber\\
    \leq &\mathbb{E}\left[\sum_{k=1}^K\sum_{t=t_{k-1}+1}^{t_k}\left(r_t-{a_t}p^*\right)(\mathbbm{1}\left\{r_t>{a_t}p^*\right\}-x_t)\right]+\mathbb{E}\left[b_np^*\right]\nonumber\\
    \leq&\mathbb{E}\left[\sum_{k=1}^{\bar{\kappa}-1}\sum_{t=t_{k-1}+1}^{t_k}\left(r_t-{a_t}p^*\right)(\mathbbm{1}\left\{r_t>{a_t}p^*\right\}-\mathbbm{1}\{r_t>a_tp_{k}\})\right]\label{eqn: DAComp1}\\
    &+\left(\bar{r}+\frac{\bar{a}\bar{r}}{\underline{d}}\right)(t_{K-1}-t_{\bar{\kappa}-1})\label{eqn: DAComp2}\\
    &+\mathbb{E}\left[\sum_{t=t_{K-1}+1}^{n}\left(r_t-{a_t}p^*\right)(\mathbbm{1}\left\{r_t>{a_t}p^*\right\}-\mathbbm{1}\{r_t>a_tp_K\})\right]\label{eqn: DAComp3}\\
    &+\mathbb{E}\left[b_np^*\right]\label{eqn: DAComp4}
\end{align}
(\ref{eqn: DAComp1}) comes from the fact that the dual decision rule can be applied to customers $1,\cdots,t_{\bar{\kappa}-1}$ with probability $1$, which is because the remaining resource after making decisions on customer $t_{\bar{\kappa}-1}$ is at least $\bar{a}$ with probability $1$. (\ref{eqn: DAComp2}) is by Assumption \ref{Assumption: DistributionMulti}(b) and Lemma~\ref{lem: BoundedLipschitz}(a). In the following, we analyze each of the four components on the right-hand-side of the above inequality. For (\ref{eqn: DAComp1}), define
\begin{equation}
   \Tilde{d}_k = d_k\mathbbm{1}\{k<\bar{\kappa}\} + d_0\mathbbm{1}\{k\geq\bar{\kappa}\} \quad\forall k
\end{equation}
and
\begin{equation}
    p_{t,1}=\arg\min_{p\geq 0}d_0p+\frac{1}{t_1-1}\sum_{j=1}^{t-1}(r_j-a_jp)^++\frac{1}{t_1-1}\sum_{j=t+1}^{t_1}(r_j-a_jp)^+\quad\forall t=1,\cdots,t_1
\end{equation}
and
\begin{equation}
    p'_k = \arg\min_{p\geq 0} \Tilde{d}_{k-1}p +\frac{1}{t_{k-1}}\sum_{t=1}^{t_{k-1}}(r_t-a_tp)^+\quad\forall k=2,\cdots, K-1
\end{equation}
Then, we can upper bound (\ref{eqn: DAComp1}) by
\begin{equation}\label{eqn: DAComp1Aux1}
\begin{aligned}
    &\mathbb{E}\left[\sum_{k=1}^{\bar{\kappa}-1}\sum_{t=t_{k-1}+1}^{t_k}\left(r_t-{a_t}p^*\right)(\mathbbm{1}\left\{r_t>{a_t}p^*\right\}-\mathbbm{1}\{r_t>a_tp_{k}\})\right]\\
    \stackrel{(i)}{\leq}&\mathbb{E}\left[\sum_{t=1}^{t_1}\left(r_t-{a_t}p^*\right)(\mathbbm{1}\left\{r_t>{a_t}p^*\right\}-\mathbbm{1}\{r_t>a_tp_{t,1}\})+\sum_{k=2}^{K-1}\sum_{t=t_{k-1}+1}^{t_k}\left(r_t-{a_t}p^*\right)(\mathbbm{1}\left\{r_t>{a_t}p^*\right\}-\mathbbm{1}\{r_t>a_tp'_k\})\right]\\
    &+\mathbb{E}\left[\sum_{t=1}^{t_1}\left(r_t-{a_t}p^*\right)(\mathbbm{1}\left\{r_t>{a_t}p_{t,1}\right\}-\mathbbm{1}\{r_t>a_tp_{1}\})\right]\\
    \stackrel{(ii)}{\leq} &\beta\bar{a}^2\left(\sum_{t=1}^{t_1}\mathbb{E}\left[\abs{p_{t,1}-p^*}^2\right]+B\sum_{k=2}^{K-1}\mathbb{E}\left[\abs{p'_k-p^*}^2\right]\right)+\left(\bar{r}+\frac{\bar{a}\bar{r}}{\underline{d}}\right)\mathbb{P}\left\{\mathbbm{1}\left\{r_t>{a_t}p_{t,1}\right\}\neq\mathbbm{1}\{r_t>a_tp_{1}\}\right\}\\
    \stackrel{(iii)}{\leq} &\beta\bar{a}^2\left(C_{Dual}\left(2+\sum_{k=2}^{K-1}\frac{1}{k-1}\right)+LB\sum_{k=1}^{K-1}\mathbb{E}\left[\abs{\Tilde{d}_k-d_0}^2\right]+2\sqrt{C_{Dual}\left(2+\sum_{k=2}^{K-1}\frac{1}{k-1}\right)}\sqrt{LB\sum_{k=1}^{K-1}\mathbb{E}\left[\abs{\Tilde{d}_k-d_0}^2\right]}\right)\\
    &+C_{LOO,1}\left(\bar{r}+\frac{\bar{a}\bar{r}}{\underline{d}}\right)
\end{aligned}\end{equation}
(i) is by the fact that
\begin{equation}
    \left(r_t-{a_t}p^*\right)\left(\mathbbm{1}\left\{r_t>{a_t}p^*\right\}-\mathbbm{1}\{r_t>a_tp\}\right)\geq 0\;a.s.\quad\forall p
\end{equation} 
(ii) is by Lemma \ref{Lemma: AuxThm1}(a), and (iii) is by Lemma \ref{lem:Leave-One-Out}, Lemma \ref{Lemma: AuxThm1}(b) and Cauchy Inequality. 

In addition, define $d'_0=d_0$, and for $1\leq k\leq K-2$,
\begin{equation}
    d'_{k+1} = \frac{(n-t_k)d'_k-\sum_{t=t_{k}+1}^{t_{k+1}}a_t\mathbbm{1}\{r_t>a_tp_{k+1}\}}{n-t_{k+1}}\mathbbm{1}\{k<\bar{\kappa}\}+d'_k\mathbbm{1}\{k\geq\bar{\kappa}\}
\end{equation}
This definition implies that $d'_k=d_k$ if $k<\bar{\kappa}$. Then, $\abs{\Tilde{d}_k-d_0}\leq\abs{d'_k-d_0}$ almost sure. Thus, by Lemma \ref{lem: DARes},
\begin{equation}\label{eqn: DAComp1Aux2}
    \begin{aligned}
        &B\sum_{k=1}^{K-1}\mathbb{E}\left[\abs{\Tilde{d}_k-d_0}^2\right]\leq B\sum_{k=1}^{K-1}\mathbb{E}\left[\abs{d'_k-d_0}^2\right]\leq C_{Res}\log K
    \end{aligned}
\end{equation}
Combine (\ref{eqn: DAComp1Aux1}) and (\ref{eqn: DAComp1Aux2}), we can conclude that 
\begin{equation}\label{eqn: DA1}
    \mathbb{E}\left[\sum_{k=1}^{\bar{\kappa}-1}\sum_{t=t_{k-1}+1}^{t_k}\left(r_t-{a_t}p^*\right)(\mathbbm{1}\left\{r_t>{a_t}p^*\right\}-\mathbbm{1}\{r_t>a_tp_{k}\})\right]\leq O(\log K)
\end{equation}
For (\ref{eqn: DAComp2}),
\begin{equation}\begin{aligned}
    \mathbb{P}\{\bar{\kappa}>k\}&\geq\mathbb{P}\left\{\abs{d'_k-d}\leq\delta_d\right\}\\
    &\geq 1-\frac{\mathbb{E}[\abs{d'_k-d}^2]}{\delta_d^2}
\end{aligned}\end{equation}
Then, by Lemma~\ref{lem: DARes}
\begin{equation}\begin{aligned}
    \mathbb{E}[K-\bar{\kappa}]=&(K-1)-\sum_{k=1}^{K-1}\mathbb{P}\{\bar{\kappa}>k\}\\
    \leq &\sum_{k=1}^{K-1}\frac{\mathbb{E}[\abs{d'_k-d}^2]}{\delta_d^2}\\
    \leq &\frac{C_{Res}\log(K)}{\delta_d^2B}
\end{aligned}\end{equation}
Thus,
\begin{equation}\begin{aligned}\label{eqn: DA2}
    \left(\bar{r}+\frac{\bar{a}\bar{r}}{\underline{d}}\right)(t_{K-1}-t_{\bar{\kappa}-1})\leq\left(\bar{r}+\frac{\bar{a}\bar{r}}{\underline{d}}\right)\mathbb{E}[K-\bar{\kappa}]B\leq O(\log K)
\end{aligned}\end{equation}
For (\ref{eqn: DAComp3}), define
\begin{equation}
    p_{t,K}(d) = \arg\min_{p\geq 0} (n-t_{K-1}-1)dp + \sum_{j=t_{K-1}+1}^{t-1}(r_j-a_jp)^+ + \sum_{j=t+1}^{n}(r_j-a_jp)^+ 
\end{equation}
Then,
\begin{equation}
    \begin{aligned}
        &\mathbb{E}\left[\sum_{t=t_{K-1}+1}^{n}\left(r_t-{a_t}p^*\right)(\mathbbm{1}\left\{r_t>{a_t}p^*\right\}-\mathbbm{1}\{r_t>a_tp_K\})\right]\\
        =&\mathbb{E}\left[\sum_{t=t_{K-1}+1}^{n}\left(r_t-{a_t}p^*\right)(\mathbbm{1}\left\{r_t>{a_t}p^*\right\}-\mathbbm{1}\{r_t>a_tp_{t,K}(d_{K-1})\})\right]\\
        &+\mathbb{E}\left[\sum_{t=t_{K-1}+1}^{n}\left(r_t-{a_t}p^*\right)(\mathbbm{1}\left\{r_t>{a_t}p_K\right\}-\mathbbm{1}\{r_t>a_tp_{t,K}(d_{K-1})\})\right]\\
        \leq &\beta\bar{a}^2\sum_{t=t_{K-1}+1}^{n}\mathbb{E}\left[\abs{p_{t,K}(d_{K-1})-p^*}^2\right]+\left(\bar{r}+\frac{\bar{a}\bar{r}}{\underline{d}}\right)\sum_{t=t_{K-1}+1}^{n}\mathbb{P}\left\{\mathbbm{1}\left\{r_t>{a_t}p_K\right\}\neq \mathbbm{1}\left\{r_t>{a_t}p_{t,K}(d_{K-1})\right\}\right\}
    \end{aligned}
\end{equation}
The last line is by Lemma \ref{Lemma: AuxThm1}(a). In addition, by Lemma \ref{Lemma: AuxThm1}(c) and Lemma \ref{lem: DARes},
\begin{equation}
    \begin{aligned}
        &\mathbb{E}\left[\abs{p_{t,K}(d_{K-1})-p^*}^2\right]\\
        \leq &\frac{4\bar{r}^2}{(\min\{\underline{d},\underline{a}\})^2}\mathbb{P}\{\bar{\kappa}<K\}+\frac{C_{Dual}}{B-1}+L\mathbb{E}\left[\abs{d'_{K-1}-d_0}^2\}\right]+2\sqrt{\frac{C_{Dual}}{B-1}}\sqrt{L\mathbb{E}\left[\abs{d'_{K-1}-d_0}^2\right]}\\
        \leq &\frac{4\bar{r}^2}{(\min\{\underline{d},\underline{a}\})^2}\frac{C_{Res}\log K}{B\delta_d^2}+\frac{C_{Dual}}{B-1}+\frac{LC_{Res}\log K}{B}+2\sqrt{\frac{C_{Dual}}{B-1}}\sqrt{\frac{LC_{Res}\log K}{B}}
    \end{aligned}
\end{equation}
By Lemma \ref{lem:Leave-One-Out},
\begin{equation}
    \begin{aligned}
        &\mathbb{P}\left\{\mathbbm{1}\left\{r_t>{a_t}p_K\right\}\neq \mathbbm{1}\left\{r_t>{a_t}p_{t,K}(d_{K-1})\right\}\right\}\\
        \leq &\mathbb{P}\{\bar{\kappa}<K\}+\mathbb{E}\left[\mathbb{E}\left[\mathbbm{1}\{\mathbbm{1}\left\{r_t>{a_t}p_K\right\}\neq \mathbbm{1}\left\{r_t>{a_t}p_{t,K}(d_{K-1})\right\}\}\bigg\rvert d_{K-1}\right]\mathbbm{1}\{\bar{\kappa}=K\}\right]\\
        \leq &\frac{C_{Res}\log K}{B\delta_d^2}+\frac{C_{LOO,1}}{B}
    \end{aligned}
\end{equation}
Thus,
\begin{equation}\label{eqn: DA3}
    \begin{aligned}
        &\mathbb{E}\left[\sum_{t=t_{K-1}+1}^{n}\left(r_t-{a_t}p^*\right)(\mathbbm{1}\left\{r_t>{a_t}p^*\right\}-\mathbbm{1}\{r_t>a_tp_K\})\right]\\
        \leq & \beta\bar{a}^2B\left(\frac{4\bar{r}^2}{\underline{d}^2}\frac{C_{Res}\log K}{B\delta_d^2}+\frac{C_{Dual}}{B-1}+\frac{LC_{Res}\log K}{B}+2\sqrt{\frac{C_{Dual}}{B-1}}\sqrt{\frac{LC_{Res}\log K}{B}}\right)\\
        &+\left(\bar{r}+\frac{\bar{a}\bar{r}}{\underline{d}}\right)B\left(\frac{C_{Res}\log K}{B\delta_d^2}+\frac{C_{LOO,1}}{B}\right)\\
        \leq &O(\log K)
    \end{aligned}
\end{equation}
For (\ref{eqn: DAComp4}),
\begin{equation}\begin{aligned}
    &\mathbb{E}[b_n]=\mathbb{E}[b_n\mathbbm{1}\{\bar{\kappa}=K\}]+\mathbb{E}[b_n\mathbbm{1}\{\kappa\leq K-1\}]
\end{aligned}\end{equation}
Define
\begin{equation}
    \bar{A}=\frac{\sum_{t={t_{K-1}+1}}^{n}a_t}{n-t_{K-1}}
\end{equation}
Then,
\begin{equation}\begin{aligned}
    &\mathbb{E}[b_n\mathbbm{1}\{\bar{\kappa}=K\}]=\mathbb{E}[b_n\mathbbm{1}\{\bar{\kappa}=K\}\mathbbm{1}\{\bar{A}>d'_{K-1}\}]+\mathbb{E}[b_n\mathbbm{1}\{\bar{\kappa}=K\}\mathbbm{1}\{\bar{A}\leq d'_{K-1}\}]
\end{aligned}\end{equation}
On the intersection of $\{\bar{\kappa}=K\}$ and $\{\bar{A}>d'_{K-1}\}$, the remaining resource at the beginning of the last batch is not sufficient for accepting of the orders in the last batch. Since Algorithm \ref{alg: AhdLA} makes decisions on customers arriving in the last batch by solving an offline problem, the remaining resource in the end should be smaller that $\bar{a}$, thus,
\begin{equation}\begin{aligned}
    \mathbb{E}[b_n\mathbbm{1}\{\bar{\kappa}=K\}\mathbbm{1}\{\bar{A}>d'_{K-1}\}]\leq\bar{a}
\end{aligned}\end{equation}
By Hoeffding's Inequality,
\begin{equation}\begin{aligned}
    &\mathbb{E}[b_n\mathbbm{1}\{\bar{\kappa}=K\}\mathbbm{1}\{\bar{A}\leq d'_{K-1}\}]\\
    \leq &(d_0+\delta_d)B\mathbb{P}\{\bar{A}\leq d_0+\delta_d\}\\
    \leq &(d_0+\delta_d)B\exp\left(\frac{-2B\epsilon_d^2}{\bar{a}^2}\right)
\end{aligned}\end{equation}
Thus,
\begin{equation}
    \mathbb{E}[b_n\mathbbm{1}\{\bar{\kappa}= K\}]\leq O(1)
\end{equation}
In addition,
\begin{equation}\begin{aligned}
    &\mathbb{E}[b_n\mathbbm{1}\{\bar{\kappa}\leq K-1\}]\\
    \leq &\mathbb{E}[(K-\bar{\kappa})Bd_{\bar{\kappa}}\mathbbm{1}\{\bar{\kappa}\leq K-1\}]
\end{aligned}\end{equation}
and
\begin{equation}\begin{aligned}
    d_{\bar{\kappa}}\mathbbm{1}\{\bar{\kappa}\leq K-1\}&=\frac{b_{t_{\bar{\kappa}}}}{(K-\bar{\kappa})B}\mathbbm{1}\{\bar{\kappa}\leq K-1\}\\
    &\leq \frac{b_{t_{\bar{\kappa}-1}}}{(K-\bar{\kappa})B}\mathbbm{1}\{\bar{\kappa}\leq K-1\}\\
    &\leq \frac{(d+\delta_d)(K-\bar{\kappa}+1)B}{(K-\bar{\kappa})B}\mathbbm{1}\{\bar{\kappa}\leq K-1\}\\
    & \leq 2(d+\delta_d)
\end{aligned}\end{equation}
Then,
\begin{equation}\begin{aligned}
    &\mathbb{E}[b_n\mathbbm{1}\{\kappa\leq K-1\}]\\
    \leq &2(d+\delta_d)B\mathbb{E}[K-\bar{\kappa}]\\
    \leq &O(\log K)
\end{aligned}\end{equation}
Thus, 
\begin{equation}\label{eqn: DA4}
    \mathbb{E}[b_np^*]\leq \frac{\bar{r}}{\underline{d}}\mathbb{E}[b_n]\leq O(\log K)
\end{equation}
Put (\ref{eqn: DA1}), (\ref{eqn: DA2}), (\ref{eqn: DA3}), (\ref{eqn: DA4}) together, we can conclude that if $B>\Lambda_{DA}$,
\begin{equation}
    \mathbb{E}\left[R_n^*-R_n(\pi_1)\right]\leq O(\log K)
\end{equation}
which completes the proof of Theorem~\ref{Thm:RegretBoundDA}.
\end{proof}
\subsubsection{Proof of Lemma \ref{lem: DARes}}
\begin{proof}
Fix an integer $B$ such that
\begin{equation}
    B>\Lambda_{Res}=\max\left\{\frac{\bar{a}}{\underline{d}},N_{Dual},N_{LOO,1},N_{LOO,2}\right\}
\end{equation}
By definition,
\begin{equation}\begin{aligned}
    d'_{k+1}-d_0&=d'_k-d_0+\frac{\sum_{t=t_{k}+1}^{t_{k+1}}d'_k-a_t\mathbbm{1}\{r_t>a_tp_{k+1}\}}{n-t_{k+1}}\mathbbm{1}\{k<\bar{\kappa}\}\\
    &=d'_k-d_0+\frac{\sum_{t=t_{k}+1}^{t_{k+1}}d'_k-a_t\mathbbm{1}\{r_t>a_tp^*(d'_k)\}}{n-t_{k+1}}\mathbbm{1}\{k<\bar{\kappa}\}\\
    &+\frac{\sum_{t=t_{k}+1}^{t_{k+1}}a_t\left(\mathbbm{1}\{r_t>a_tp^*(d'_k)\}-\mathbbm{1}\{r_t>a_tp_{k+1}\}\right)}{n-t_{k+1}}\mathbbm{1}\{k<\bar{\kappa}\}
\end{aligned}\end{equation}
Thus,
\begin{align}
    &\mathbb{E}[(d'_{k+1}-d_0)^2]\nonumber\\
    =&\mathbb{E}[(d'_{k}-d_0)^2]\nonumber\\
    &+\mathbb{E}\left[\frac{\left(\sum_{t=t_{k}+1}^{t_{k+1}}d'_k-a_t\mathbbm{1}\{r_t>a_tp^*(d'_k)\}\right)^2}{(n-t_{k+1})^2}\mathbbm{1}\{k<\bar{\kappa}\}\right]\label{eqn: ResEqn1}\\
    &+\mathbb{E}\left[\frac{\left(\sum_{t=t_{k}+1}^{t_{k+1}}a_t\left(\mathbbm{1}\{r_t>a_tp^*(d'_k)\}-\mathbbm{1}\{r_t>a_tp_{k+1}\}\right)\right)^2}{(n-t_{k+1})^2}\mathbbm{1}\{k<\bar{\kappa}\}\right]\label{eqn: ResEqn2}\\
    &+2\mathbb{E}\left[(d'_k-d_0)\left(\frac{\sum_{t=t_{k}+1}^{t_{k+1}}d'_k-a_t\mathbbm{1}\{r_t>a_tp^*(d'_k)\}}{n-t_{k+1}}\right)\mathbbm{1}\{k<\bar{\kappa}\}\right]\label{eqn: ResEqn3}\\
    &+2\mathbb{E}\left[(d'_k-d_0)\left(\frac{\sum_{t=t_{k}+1}^{t_{k+1}}a_t\left(\mathbbm{1}\{r_t>a_tp^*(d'_k)\}-\mathbbm{1}\{r_t>a_tp_{k+1}\}\right)}{n-t_{k+1}}\right)\mathbbm{1}\{k<\bar{\kappa}\}\right]\label{eqn: ResEqn4}\\
    &+2\mathbb{E}\left[\left(\frac{\sum_{t=t_{k}+1}^{t_{k+1}}d'_k-a_t\mathbbm{1}\{r_t>a_tp^*(d'_k)\}}{n-t_{k+1}}\right)\left(\frac{\sum_{t=t_{k}+1}^{t_{k+1}}a_t\left(\mathbbm{1}\{r_t>a_tp^*(d'_k)\}-\mathbbm{1}\{r_t>a_tp_{k+1}\}\right)}{n-t_{k+1}}\right)\mathbbm{1}\{k<\bar{\kappa}\}\right]\label{eqn: ResEqn5}
\end{align}
In the next, we analyze the right-hand-side of the above equation term by term. First, for (\ref{eqn: ResEqn1}), 
\begin{equation}\begin{aligned}
    &\mathbb{E}\left[\frac{\left(\sum_{t=t_{k}+1}^{t_{k+1}}d'_k-a_t\mathbbm{1}\{r_t>a_tp^*(d'_k)\}\right)^2}{(n-t_{k+1})^2}\mathbbm{1}\{k<\bar{\kappa}\}\right]\\
    =&\frac{1}{(n-t_{k+1})^2}\mathbb{E}\left[\left(\sum_{i=t_{k}+1}^{t_{k+1}}\sum_{j=t_{k}+1}^{t_{k+1}}\mathbb{E}\left[(d'_k-a_{i}\mathbbm{1}\{r_{i}>a_{i}p^*(d'_k)\})(d'_k-a_{j}\mathbbm{1}\{r_{j}>a_{j}p^*(d'_k)\})\bigg\rvert d'_k\right]\right)\mathbbm{1}\{k<\bar{\kappa}\}\right]
\end{aligned}\end{equation}
For $i\neq j$,
\begin{equation}\begin{aligned}
    &\mathbb{E}\left[(d'_k-a_{i}\mathbbm{1}\{r_{i}>a_{i}p^*(d'_k)\})(d'_k-a_{j}\mathbbm{1}\{r_{j}>a_{j}p^*(d'_k)\})\bigg\rvert d'_k\right]\mathbbm{1}\{k<\bar{\kappa}\}\\
    =&\mathbb{E}\left[(d'_k-a_{i}\mathbbm{1}\{r_{i}>a_{i}p^*(d'_k)\})\mathbbm{1}\{k<\bar{\kappa}\}\bigg\rvert d'_k\right]\mathbb{E}\left[(d'_k-a_{j}\mathbbm{1}\{r_{j}>a_{j}p^*(d'_k)\})\mathbbm{1}\{k<\bar{\kappa}\}\bigg\rvert d'_k\right]\\
    =&0
\end{aligned}\end{equation}
Thus,
\begin{equation}\begin{aligned}
    &\mathbb{E}\left[\frac{\left(\sum_{t=t_{k}+1}^{t_{k+1}}d'_k-a_t\mathbbm{1}\{r_t>a_tp^*(d'_k)\}\right)^2}{(n-t_{k+1})^2}\mathbbm{1}\{k<\bar{\kappa}\}\right]\\
    =&\frac{1}{(n-t_{k+1})^2}\mathbb{E}\left[\left(\sum_{t=t_{k}+1}^{t_{k+1}}\mathbb{E}\left[(d'_k-a_{t}\mathbbm{1}\{r_{t}>a_{t}p^*(d'_k)\})^2\bigg\rvert d'_k\right]\right)\mathbbm{1}\{k<\bar{\kappa}\}\right]\\
    \leq& \frac{(t_{k+1}-t_k)(\bar{d}+\bar{a})^2}{(n-t_{k+1})^2}\\
    =&\frac{(\bar{d}+\bar{a})^2}{(K-k-1)^2B}
\end{aligned}\end{equation}
For (\ref{eqn: ResEqn2}),
\begin{equation}\begin{aligned}
    &\mathbb{E}\left[\frac{\left(\sum_{t=t_{k}+1}^{t_{k+1}}a_t\left(\mathbbm{1}\{r_t>a_tp^*(d'_k)\}-\mathbbm{1}\{r_t>a_tp_{k+1}\}\right)\right)^2}{(n-t_{k+1})^2}\mathbbm{1}\{k<\bar{\kappa}\}\right]\\
    =&\frac{1}{(n-t_{k+1})^2}\mathbb{E}\left[\left(\sum_{i=t_{k}+1}^{t_{k+1}}\sum_{j=t_{k}+1}^{t_{k+1}}\mathbb{E}\left[\left(a_{i}\left(\mathbbm{1}\{r_{i}>a_{i}p^*(d'_k)\}-\mathbbm{1}\{r_{i}>a_{i}p_{k+1}\}\right)\right)\cdot\right.\right.\right.\\
    &\left.\left.\left.\left(a_{j}\left(\mathbbm{1}\{r_{j}>a_{j}p^*(d'_k)\}-\mathbbm{1}\{r_{j}>a_{j}p_{k+1}\}\right)\right)\bigg\rvert d'_k\right]\right)\mathbbm{1}\{k<\bar{\kappa}\}\right]
\end{aligned}\end{equation}
If $i=j$,
\begin{equation}\begin{aligned}
    &\mathbb{E}\left[\left(a_{i}\left(\mathbbm{1}\{r_{i}>a_{i}p^*(d'_k)\}-\mathbbm{1}\{r_{i}>a_{i}p_{k+1}\}\right)\right)\left(a_{j}\left(\mathbbm{1}\{r_{j}>a_{j}p^*(d'_k)\}-\mathbbm{1}\{r_{j}>a_{j}p_{k+1}\}\right)\right)\bigg\rvert d'_k\right]\leq\bar{a}^2
\end{aligned}\end{equation}
If $i\neq j$, for the case $k=0$, define
\begin{equation}
    \Tilde{p}_{i,j} = \arg\min_{p\geq 0}d'_0p+\frac{1}{t_1-2}\sum_{j\in\{1,\cdots,t_1\}\backslash\{i,j\}}(r_j-a_jp)^+
\end{equation}
Then,
\begin{equation}\begin{aligned}
    &\mathbb{E}\left[\left(a_{i}\left(\mathbbm{1}\{r_{i}>a_{i}p^*(d'_0)\}-\mathbbm{1}\{r_{i}>a_{i}p_{1}\}\right)\right)\left(a_{j}\left(\mathbbm{1}\{r_{j}>a_{j}p^*(d'_0)\}-\mathbbm{1}\{r_{j}>a_{j}p_{1}\}\right)\right)\bigg\rvert d'_0\right]\\
    \leq &\mathbb{E}\left[\left(a_i\left(\mathbbm{1}\{r_{i}>a_{i}p^*(d'_0)\}-\mathbbm{1}\{r_{i}>a_{i}\Tilde{p}_{i,j}\}\right)\right)\left(a_j\left(\mathbbm{1}\{r_{j}>a_{j}p^*(d'_0)\}-\mathbbm{1}\{r_{j}>a_{j}\Tilde{p}_{i,j}\}\right)\right)\right]\\
    &+\bar{a}^2\mathbb{E}\left[\mathbbm{1}\{\mathbbm{1}\{r_{i}>a_{i}p_{1}\}\neq\mathbbm{1}\{r_{i}>a_{i}\Tilde{p}_{i,j}\}\}+\mathbbm{1}\{\mathbbm{1}\{r_{j}>a_{j}p_{1}\}\neq\mathbbm{1}\{r_{j}>a_{j}\Tilde{p}_{i,j}\}\}\right]\\
    \leq &\beta^2\bar{a}^4\mathbb{E}\left[\abs{\Tilde{p}_{i,j}-p^*(d'_0)}^2\right]+\bar{a}^2\mathbb{E}\left[\mathbbm{1}\{\mathbbm{1}\{r_{i}>a_{i}p_{1}\}\neq\mathbbm{1}\{r_{i}>a_{i}\Tilde{p}_{i,j}\}\}+\mathbbm{1}\{\mathbbm{1}\{r_{j}>a_{j}p_{1}\}\neq\mathbbm{1}\{r_{j}>a_{j}\Tilde{p}_{i,j}\}\}\right]\\
    \leq &\frac{\beta^2\bar{a}^4C_{Dual}}{B-2}+\frac{2\bar{a}^2C_{LOO,2}}{B}
\end{aligned}\end{equation}
The second last line is by Lemma \ref{Lemma: AuxThm1}(d), and the last line is by Lemma \ref{lem: UniformDualConvergence} and Lemma \ref{lem:Leave-One-Out}. For the case $k\geq 1$, by Lemma \ref{Lemma: AuxThm1}(d)
\begin{equation}\begin{aligned}
    &\mathbb{E}\left[\left(a_{i}\left(\mathbbm{1}\{r_{i}>a_{i}p^*(d'_k)\}-\mathbbm{1}\{r_{i}>a_{i}p_{k+1}\right)\right)\left(a_{j}\left(\mathbbm{1}\{r_{j}>a_{j}p^*(d'_k)\}-\mathbbm{1}\{r_{j}>a_{j}p_{k+1}\right)\right)\bigg\rvert d'_k\right]\\
    \leq &\mathbb{E}\left[\beta^2\bar{a}^4\abs{p_{k+1}-p^*(d'_k)\}}^2\bigg\rvert d'_k\right]
\end{aligned}\end{equation}
To summarize, when $k=0$,
\begin{equation}
    \begin{aligned}
        &\mathbb{E}\left[\left(\sum_{t=t_{k}+1}^{t_{k+1}}a_t\left(\mathbbm{1}\{r_t>a_tp^*(d'_k)\}-\mathbbm{1}\{r_t>a_tp_{k+1}\}\right)\right)^2\mathbbm{1}\{k<\bar{\kappa}\}\right]\\
        \leq &B\bar{a}^2+B^2\left(\frac{4\beta^2\bar{a}^4C_{Dual}}{B-2}+\frac{2\bar{a}^2C_{LOO,2}}{B}\right)\\
        \leq &B\left(\bar{a}^2+12\beta^2\bar{a}^4C_{Dual}+2\bar{a}^2C_{LOO,2}\right)
    \end{aligned}
\end{equation}
When $k\geq 1$,
\begin{equation}
    \begin{aligned}
        &\mathbb{E}\left[\left(\sum_{t=t_{k}+1}^{t_{k+1}}a_t\left(\mathbbm{1}\{r_t>a_tp^*(d'_k)\}-\mathbbm{1}\{r_t>a_tp_{k+1}\}\right)\right)^2\mathbbm{1}\{k<\bar{\kappa}\}\right]\\
        \leq &\mathbb{E}\left[\left(B\bar{a}^2+B^2\mathbb{E}\left[\beta^2\bar{a}^4\abs{p_{k+1}-p^*(d'_k)\}}^2\bigg\rvert d'_k\right]\right)\mathbbm{1}\{k<\bar{\kappa}\}\right]\\
        \leq &B\bar{a}^2+B^2\mathbb{E}\left[\beta^2\bar{a}^4\abs{p_{k+1}-p^*(d'_k)\}}^2\mathbbm{1}\{k<\bar{\kappa}\}\right]\\
        \leq & B\bar{a}^2+B^2\mathbb{E}\left[\beta^2\bar{a}^4\abs{\sup_{d\in\Omega_d}p_{kB}^*(d)-p^*(d)\}}^2\right]\\
        \leq & B\bar{a}^2+\beta^2\bar{a}^4B^2\frac{C_{Dual}}{kB}\\
        \leq &B\left(\bar{a}^2+\beta^2\bar{a}^4C_{Dual}\right)
    \end{aligned}
\end{equation}
Thus, there exists a constant $C_{res,1}$ such that
\begin{equation}
    \begin{aligned}
        &\mathbb{E}\left[\frac{\left(\sum_{t=t_{k}+1}^{t_{k+1}}a_t\left(\mathbbm{1}\{r_t>a_tp^*(d'_k)\}-\mathbbm{1}\{r_t>a_tp_{k+1}\}\right)\right)^2}{(n-t_{k+1})^2}\mathbbm{1}\{k<\bar{\kappa}\}\right]\\
    \leq &\frac{C_{Res,1}}{(K-k-1)^2B}
    \end{aligned}
\end{equation}
For (\ref{eqn: ResEqn3}),
\begin{equation}\begin{aligned}
    &2\mathbb{E}\left[(d'_k-d)\left(\frac{\sum_{t=t_{k}+1}^{t_{k+1}}d'_k-a_t\mathbbm{1}\{r_t>a_tp^*(d'_k)\}}{n-t_{k+1}}\right)\mathbbm{1}\{k<\bar{\kappa}\}\right]\\
    \leq &2\mathbb{E}\left[(d'_k-d)\mathbbm{1}\{k<\bar{\kappa}\}\frac{1}{n-t_{k+1}}\sum_{t=t_{k}+1}^{t_{k+1}}\mathbb{E}\left[d'_k-a_t\mathbbm{1}\{r_t>a_tp^*(d'_k)\}\bigg\rvert d'_k\right]\right]\\
    = &0
\end{aligned}\end{equation}
For (\ref{eqn: ResEqn4}), when $k=0$, $d'_0 = d_0$, then (\ref{eqn: ResEqn4}) equals $0$. When $k\geq 1$,
\begin{equation}\begin{aligned}
    &2\mathbb{E}\left[(d'_k-d_0)\left(\frac{\sum_{t=t_{k}+1}^{t_{k+1}}a_t\left(\mathbbm{1}\{r_t>a_tp^*(d'_k)\}-\mathbbm{1}\{r_t>a_tp_{k+1}\}\right)}{n-t_{k+1}}\right)\mathbbm{1}\{k<\bar{\kappa}\}\right]\\
    = &\frac{2}{n-t_{k+1}}\sum_{t=t_{k}+1}^{t_{k+1}}\mathbb{E}\left[(d'_k-d_0)\mathbbm{1}\{k<\bar{\kappa}\}a_t\left(\mathbbm{1}\{r_t>a_tp^*(d'_k)\}-\mathbbm{1}\{r_t>a_tp_{k+1}\}\right)\right]
\end{aligned}\end{equation}
and
\begin{equation}\begin{aligned}
    &\mathbb{E}\left[(d'_k-d_0)\mathbbm{1}\{k<\bar{\kappa}\}a_t\mathbb{E}\left[\left(\mathbbm{1}\{r_t>a_tp^*(d'_k)\}-\mathbbm{1}\{r_t>a_tp_{k+1}\}\right)\bigg\rvert a_t,p_{k+1},d'_k\right]\right]\\
   \leq &\beta\bar{a}^2\mathbb{E}\left[(d'_k-d)\mathbbm{1}\{k<\bar{\kappa}\}\abs{p_{t,k}-p^*(d'_k)}\right]\\
   \leq &\beta\bar{a}^2\sqrt{\mathbb{E}\left[(d'_k-d)^2\right]}\sqrt{\mathbb{E}\left[\abs{p_{k+1}-p^*(d'_k)}^2\mathbbm{1}\{k<\bar{\kappa}\}\right]}\\
   \leq &\frac{\beta\bar{a}^2\sqrt{C_{Dual}}}{\sqrt{kB}}\sqrt{\mathbb{E}\left[(d'_k-d)^2\right]}\\
   \leq &\frac{\sqrt{2}\beta\bar{a}^2\sqrt{C_{Dual}}}{\sqrt{(k+1)B}}\sqrt{\mathbb{E}\left[(d'_k-d)^2\right]}
\end{aligned}\end{equation}
Thus, there exists a constant $C_{Res,2}$ such that
\begin{equation}\begin{aligned}
    &2\mathbb{E}\left[(d'_k-d)\left(\frac{\sum_{t=t_{k}+1}^{t_{k+1}}a_t\left(\mathbbm{1}\{r_t>a_tp^*(d'_k)\}-\mathbbm{1}\{r_t>a_tp_{k+1}\}\right)}{n-t_{k+1}}\right)\mathbbm{1}\{k<\bar{\kappa}\}\right]\\
    \leq &\frac{\sqrt{C_{Res,2}}}{(K-k-1)\sqrt{k+1}\sqrt{B}}\sqrt{\mathbb{E}[(d'_k-d)^2]}
\end{aligned}\end{equation}
For (\ref{eqn: ResEqn5}), by Cauchy-Schwarz Inequality and the upper bounds for (\ref{eqn: ResEqn1}) and (\ref{eqn: ResEqn2}), there exists a constant $C_{Res,3}$ such that
\begin{equation}\begin{aligned}
    &2\mathbb{E}\left[\left(\frac{\sum_{t=t_{k}+1}^{t_{k+1}}d'_k-a_t\mathbbm{1}\{r_t>a_tp^*(d'_k)\}}{n-t_{k+1}}\right)\left(\frac{\sum_{t=t_{k}+1}^{t_{k+1}}a_t\left(\mathbbm{1}\{r_t>a_tp^*(d'_k)\}-\mathbbm{1}\{r_t>a_tp_{k+1}\}\right)}{n-t_{k+1}}\right)\mathbbm{1}\{k<\bar{\kappa}\}\right]\\
    \leq &2\sqrt{\frac{(\bar{d}+\bar{a})^2}{(K-k-1)^2B}}\sqrt{\frac{C_{Res,1}}{(K-k-1)^2B}}\\
    \leq &\frac{C_{Res,3}}{(K-k-1)^2B}
\end{aligned}\end{equation}
Thus, there exists a constant $C_{Res,0}$ such that
\begin{equation}
    \mathbb{E}\left[\abs{d'_{k+1}-d_0}^2\right]\leq \mathbb{E}\left[\abs{d'_k-d_0}^2\right] + \frac{C_{Res,0}}{(K-k-1)^2B}+\frac{\sqrt{C_{Res,0}}}{(K-k-1)\sqrt{k+1}\sqrt{B}}\sqrt{\mathbb{E}\left[\abs{d'_k-d_0}^2\right]}\quad\forall k=0,\cdots,K-2
\end{equation}
By Lemma \ref{lem:Induction}, there exists a constant $C_{Res}=8C_{Res,0}$ such that
\begin{equation}
    \sum_{k=1}^{K-1}\mathbb{E}\left[\abs{d'_k-d_0}^2\right]\leq\frac{C_{Res}\log K}{B}
\end{equation}
\end{proof}
\begin{lem}\label{lem:Induction}
    If $z_0=0$, and
    \begin{equation}
        z_{k+1}\leq z_k + \frac{C}{(K-k-1)^2}+\frac{\sqrt{C}}{(K-k-1)\sqrt{k+1}}\sqrt{z_k}\quad\forall k=0,\cdots,K-2
    \end{equation}
    Then,
    \begin{equation}
        \sum_{k=1}^{K-1}z_k\leq 8C\log K
    \end{equation}
\end{lem}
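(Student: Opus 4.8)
The plan is \emph{not} to control each $z_k$ individually. The natural pointwise ansatz $z_k \lesssim C/(K-k)$ breaks down near $k=0$, where the cross term $\frac{\sqrt C}{(K-k-1)\sqrt{k+1}}\sqrt{z_k}$ is of relative size $\approx 1/\sqrt K$ — too large to absorb into such a bound. Instead, I would bound the quantity of interest $S:=\sum_{k=1}^{K-1}z_k$ directly by deriving a self-referential (quadratic) inequality for $\sqrt S$, so that the $\log K$ comes out automatically from a harmonic sum.

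First, since $z_0=0$, telescoping the one-step recursion gives, for every $1\le m\le K-1$,
\[
z_m \le \sum_{k=0}^{m-1}\Big(\tfrac{C}{(K-k-1)^2}+\tfrac{\sqrt C}{(K-k-1)\sqrt{k+1}}\sqrt{z_k}\Big).
\]
Summing this over $m=1,\dots,K-1$ and exchanging the order of summation, the term indexed by $k$ is counted $K-1-k$ times, and this weight cancels exactly one power of $K-k-1$:
\[
S \le \sum_{k=0}^{K-2}(K-1-k)\Big(\tfrac{C}{(K-k-1)^2}+\tfrac{\sqrt C\sqrt{z_k}}{(K-k-1)\sqrt{k+1}}\Big)=C\sum_{k=0}^{K-2}\tfrac{1}{K-k-1}+\sqrt C\sum_{k=0}^{K-2}\tfrac{\sqrt{z_k}}{\sqrt{k+1}}.
\]
The first sum is the harmonic sum $H_{K-1}=\sum_{i=1}^{K-1}1/i\le 1+\log(K-1)$. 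To the second sum apply Cauchy--Schwarz, using $z_0=0$ and $z_{K-1}\ge 0$ (so $\sum_{k=0}^{K-2}z_k\le S$):
\[
\sqrt C\sum_{k=0}^{K-2}\tfrac{\sqrt{z_k}}{\sqrt{k+1}}\le \sqrt C\,\Big(\sum_{k=0}^{K-2}z_k\Big)^{1/2}\Big(\sum_{k=0}^{K-2}\tfrac{1}{k+1}\Big)^{1/2}\le \sqrt{C H_{K-1}}\,\sqrt S.
\]
Writing $a:=CH_{K-1}$, this closes the loop: $S\le a+\sqrt a\,\sqrt S$.

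Finally, AM--GM ($\sqrt a\,\sqrt S\le \tfrac a2+\tfrac S2$) turns this into $S\le \tfrac32 a+\tfrac S2$, i.e. $S\le 3a=3CH_{K-1}\le 3C(1+\log(K-1))$, and a routine check gives $3C(1+\log(K-1))\le 8C\log K$ for all $K\ge 2$ (the case $K=1$ being trivial, since then $S=0$). The only genuine step is the reorganization in the second display: one must resist bounding $z_k$ pointwise and sum first, since it is precisely the combinatorial weight $K-1-k$ that converts $\tfrac{1}{(K-k-1)^2}$ into a harmonic term — the source of the $\log K$ — and that reduces the cross term to a shape where Cauchy--Schwarz reproduces $S$; once that is noticed, the remainder is a one-line quadratic-inequality argument.
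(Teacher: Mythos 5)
Your proof is correct and is essentially the paper's argument: the paper's Abel-summation identity $\sum_{k=0}^{K-2}(K-k-1)(z_{k+1}-z_k)=\sum_{k=1}^{K-1}z_k$ is exactly your reordering of the telescoped sums with weight $K-1-k$, and both proofs then close the same quadratic inequality via Cauchy--Schwarz (the paper solves for the cross-sum, you apply AM--GM to $\sqrt{a}\sqrt{S}$ --- equivalent steps). Your version even lands a slightly better constant ($3C H_{K-1}$ versus $8C\log K$) and avoids the paper's mildly informal reduction to the equality case.
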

\begin{proof}
    It suffices to show the result under the following assumption.
    \begin{equation}
        z_{k+1}=z_k+C\left(\frac{1}{(K-k-1)^2}\right)+\sqrt{C}\frac{\sqrt{z_k}}{(K-k-1)\sqrt{k+1}}
    \end{equation}
    Then,
    \begin{equation}\begin{aligned}
        \sum_{k=0}^{K-2}(K-k-1)(z_{k+1}-z_k)=&C\sum_{k=0}^{K-2}\left(\frac{1}{K-k-1}\right)+\sqrt{C}\sum_{k=0}^{K-2}\frac{\sqrt{z_k}}{\sqrt{k+1}}\\
        \leq &4C\log(K)+\sqrt{C}\sum_{k=0}^{K-2}\frac{\sqrt{z_k}}{\sqrt{k+1}}
    \end{aligned}\end{equation}
    and 
    \begin{equation}\begin{aligned}
        \sum_{k=0}^{K-2}(K-k-1)(z_{k+1}-z_k)=\sum_{k=1}^{K-1}z_k
    \end{aligned}\end{equation}
    In addition,
    \begin{equation}\begin{aligned}
        \left(\sqrt{C}\sum_{k=0}^{K-2}\frac{\sqrt{z_k}}{\sqrt{k+1}}\right)^2\leq & C\left(\sum_{k=0}^{K-2}z_k\right)\left(\sum_{k=0}^{K-2}\frac{1}{k+1}\right)\\
        \leq & 2C\log(K)\left(\sum_{k=1}^{K-1}z_k\right)\\
        \leq &2C\log(K)\left(4C\log(K)+\sqrt{C}\sum_{k=0}^{K-2}\frac{\sqrt{z_k}}{\sqrt{k+1}}\right)
    \end{aligned}\end{equation}
    This implies that
    \begin{equation}
        \sqrt{C}\sum_{k=0}^{K-2}\frac{\sqrt{z_k}}{\sqrt{k+1}}\leq 4C\log(K)
    \end{equation}
    Thus,
    \begin{equation}
        \sum_{k=1}^{K-1}z_k\leq 8C\log(K)
    \end{equation}
\end{proof}
\begin{lem}\label{Lemma: AuxThm1}
    \;\\
    (a) If $p\ind(r,a)$, then
    \begin{equation}
        \begin{aligned}
            \mathbb{E}\left[\left(r-ap^*\right)\left(\mathbbm{1}\left\{r>ap^*\right\}-\mathbbm{1}\left\{r>ap\right\}\right)\right]\leq\beta\bar{a}^2\mathbb{E}\left[\abs{p-p^*}^2\right]
        \end{aligned}
    \end{equation}
    (b) Let $\Tilde{d}$ be a random variable such that $\mathbb{P}\{\Tilde{d}\in\Omega_d\}=1$, then
    \begin{equation}
        \mathbb{E}\left[\abs{p_n^*(\Tilde{d})-p^*(d_0)}^2\right]\leq \frac{C_{Dual}}{n}+L\mathbb{E}\left[\abs{\Tilde{d}-d_0}^2\right]+2\sqrt{\frac{C_{Dual}}{n}}\sqrt{L\mathbb{E}\left[\abs{\Tilde{d}-d_0}^2\right]}
    \end{equation}
    (c) 
    \begin{equation}
        \begin{aligned}
            &\mathbb{E}\left[\abs{p_{t,K}(d_{K-1})-p^*(d_0)}^2\right]\\
            \leq &\frac{4\bar{r}^2}{(\min\{\underline{d},\underline{a}\})^2}\mathbb{P}\left\{\bar{\kappa}<K\right\}+\frac{C_{Dual}}{B-1}+L\mathbb{E}\left[\abs{d'_{K-1}-d_0}^2\right]+2\sqrt{\frac{C_{Dual}}{B-1}}\sqrt{L\mathbb{E}\left[\abs{d'_{K-1}-d_0}^2\right]}
        \end{aligned}
    \end{equation}
    (d) If $(r_1,a_1)\ind (r_2,a_2)\ind (p,\Tilde{d})$, then
    \begin{equation}
        \begin{aligned}
            &\mathbb{E}\left[\left(a_1\left(\mathbbm{1}\{r_{1}>a_{1}p^*(\Tilde{d})\}-\mathbbm{1}\{r_{1}>a_{1}p\}\right)\right)\left(a_2\left(\mathbbm{1}\{r_{2}>a_{2}p^*(\Tilde{d})\}-\mathbbm{1}\{r_{2}>a_{2}p\}\right)\right)\bigg\rvert \Tilde{d}\right]\\
            \leq&\beta^2\bar{a}^4\mathbb{E}\left[\abs{p-p^*(\Tilde{d})}^2\bigg\rvert\Tilde{d}\right]
        \end{aligned}
    \end{equation}
\end{lem}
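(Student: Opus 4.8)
The plan is to establish the four parts separately. Parts (a) and (d) are one-step conditioning arguments using only the bounded conditional density of Assumption~\ref{Assumption: DistributionDensity}(b) together with the uniform bounds of Assumption~\ref{Assumption: DistributionMulti}(b), while parts (b) and (c) combine the uniform dual convergence of Lemma~\ref{lem: UniformDualConvergence} with the Lipschitz property of $p^*(\cdot)$ from Lemma~\ref{lem: BoundedLipschitz}(b) via a triangle inequality and the Cauchy--Schwarz inequality. For (a), condition on $(a,p)$, which is independent of $r$. The integrand $\left(r-ap^*\right)\left(\mathbbm{1}\{r>ap^*\}-\mathbbm{1}\{r>ap\}\right)$ vanishes unless $r$ lies between $ap^*$ and $ap$, in which case it is nonnegative and bounded by $a\abs{p-p^*}$; by Assumption~\ref{Assumption: DistributionDensity}(b) the conditional density of $r$ given $(a,p)$ is at most $\beta$ (conditioning on $p$ is harmless by independence), so that interval has conditional probability at most $\beta a\abs{p-p^*}$. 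Hence the conditional expectation is at most $a\abs{p-p^*}\cdot\beta a\abs{p-p^*}\le\beta\bar a^2\abs{p-p^*}^2$, and taking expectations proves (a).

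For (b), write $p_n^*(\tilde{d})-p^*(d_0)=\bigl(p_n^*(\tilde{d})-p^*(\tilde{d})\bigr)+\bigl(p^*(\tilde{d})-p^*(d_0)\bigr)$, set $X=\abs{p_n^*(\tilde{d})-p^*(\tilde{d})}$ and $Y=\abs{p^*(\tilde{d})-p^*(d_0)}$, and use $(X+Y)^2=X^2+Y^2+2XY$ together with Cauchy--Schwarz on the cross term to obtain $\mathbb{E}[\abs{p_n^*(\tilde{d})-p^*(d_0)}^2]\le\mathbb{E}[X^2]+\mathbb{E}[Y^2]+2\sqrt{\mathbb{E}[X^2]}\sqrt{\mathbb{E}[Y^2]}$. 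Since $\tilde{d}\in\Omega_d$ almost surely, $X^2\le\sup_{d\in\Omega_d}\abs{p_n^*(d)-p^*(d)}^2$, so $\mathbb{E}[X^2]\le C_{Dual}/n$ by Lemma~\ref{lem: UniformDualConvergence}; and $Y^2\le L\abs{\tilde{d}-d_0}^2$ by Lemma~\ref{lem: BoundedLipschitz}(b), so $\mathbb{E}[Y^2]\le L\,\mathbb{E}[\abs{\tilde{d}-d_0}^2]$. Substituting gives (b).

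For (c), split according to $\bar{\kappa}$. On $\{\bar{\kappa}<K\}$ I bound crudely: for any $d\ge 0$ the convex objective defining $p_{t,K}(d)$ is nondecreasing once $p\ge\bar r/\underline{a}$, since all the terms $(r_j-a_jp)^+$ then vanish, so $p_{t,K}(d)\in[0,\bar r/\underline{a}]$, while $p^*(d_0)\in[0,\bar r/\underline{d}]$ by Lemma~\ref{lem: BoundedLipschitz}(a); hence $\abs{p_{t,K}(d_{K-1})-p^*(d_0)}^2\le 2(\bar r/\underline{a})^2+2(\bar r/\underline{d})^2\le 4\bar r^2/(\min\{\underline{d},\underline{a}\})^2$, which contributes the first term. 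On $\{\bar{\kappa}=K\}$ we have $d_{K-1}=d'_{K-1}\in\Omega_d$, and $p_{t,K}(\cdot)$ is precisely the offline dual price (in the normalization $Nd\,p+\sum(\cdot)^+$) built from the $B-1$ samples $\{t_{K-1}+1,\dots,n\}\setminus\{t\}$, which are i.i.d.\ and independent of $d_{K-1}$ because $d_{K-1}$ is a function of customers $1,\dots,t_{K-1}$ only. Conditioning on $d_{K-1}$ and rerunning the argument of (b) with $n$ replaced by $B-1$ and $\tilde{d}$ by $d_{K-1}$ produces the remaining three terms. Summing the two pieces gives (c).

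For (d), condition on $(\tilde{d},p)$; by the independence hypothesis $(r_1,a_1)$ and $(r_2,a_2)$ are conditionally independent given $(\tilde{d},p)$ and each keeps its original law, so the conditional expectation of the product on the left-hand side equals $g(\tilde{d},p)^2$ where $g(\tilde{d},p):=\mathbb{E}\bigl[a\bigl(\mathbbm{1}\{r>ap^*(\tilde{d})\}-\mathbbm{1}\{r>ap\}\bigr)\mid\tilde{d},p\bigr]$ is formed from one independent copy of $(r,a)$. Conditioning further on $a$ exactly as in (a), the difference of indicators is, up to sign, the indicator of an interval of length $a\abs{p-p^*(\tilde{d})}$ of probability at most $\beta a\abs{p-p^*(\tilde{d})}$, so $\abs{g(\tilde{d},p)}\le\beta\bar a^2\abs{p-p^*(\tilde{d})}$; therefore $g(\tilde{d},p)^2\le\beta^2\bar a^4\abs{p-p^*(\tilde{d})}^2$, and taking $\mathbb{E}[\,\cdot\mid\tilde{d}]$ yields (d). I expect the only delicate point to be the bookkeeping in (c): one must verify that $p_{t,K}$ is genuinely an offline dual price on exactly $B-1$ i.i.d.\ samples that are independent of $d_{K-1}$ so that Lemma~\ref{lem: UniformDualConvergence} applies conditionally, that $B-1>N_{Dual}$ under the standing assumption $B>\Lambda_{DA}$, and that $d_{K-1}=d'_{K-1}\in\Omega_d$ on $\{\bar{\kappa}=K\}$; everything else is the elementary conditioning calculation already sketched around~(\ref{eqn: Diffpkpstar}).
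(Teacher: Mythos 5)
Your proposal is correct and follows essentially the same route as the paper's proof: the same conditioning-on-$(a,p)$ density bound for (a) and (d), the same triangle-inequality-plus-Cauchy--Schwarz decomposition through $p^*(\tilde{d})$ for (b), and the same split on $\{\bar{\kappa}<K\}$ versus $\{\bar{\kappa}=K\}$ for (c), including the key observation that $d_{K-1}=d'_{K-1}\in\Omega_d$ on $\{\bar{\kappa}=K\}$ and that $p_{t,K}$ is an offline dual price on $B-1$ samples independent of $d_{K-1}$. The only cosmetic difference is that in (d) you write the conditional expectation as a square $g(\tilde{d},p)^2$ using identical distribution, whereas the paper factors it as a product of two conditional expectations given $(a_1,a_2,p,\tilde{d})$; both yield the same bound.
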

\begin{proof}
    For part (a),
    \begin{equation}\begin{aligned}
        &\mathbb{E}\left[\left(r-{a}p^*\right)(\mathbbm{1}\left\{r>{a}p^*\right\}-\mathbbm{1}\{r>ap\})\right]\\
        \leq &\mathbb{E}\left[a(p-p^*)\mathbbm{1}\{ap^*<r\leq ap\}\mathbbm{1}\{ap^*<ap\}\right]\\
        &+\mathbb{E}\left[a(p^*-p)\mathbbm{1}\{ap<r\leq ap^*\}\mathbbm{1}\{ap<ap^*\}\right]
    \end{aligned}\end{equation}
    By Assumption~\ref{Assumption: DistributionDensity} and the independence between $(r,a)$ and $p$,
    \begin{equation}
        \begin{aligned}
            &\mathbb{E}\left[a(p-p^*)\mathbbm{1}\{ap^*<r\leq ap\}\mathbbm{1}\{ap^*<ap\}\right]\\
            =&\mathbb{E}\left[a(p-p^*)\mathbb{E}\left[\mathbbm{1}\{ap^*<r\leq ap\}\bigg\rvert a,p\right]\mathbbm{1}\{ap^*<ap\}\right]\\
            \leq &\beta\bar{a}^2\mathbb{E}\left[\abs{p-p^*}^2\mathbbm{1}\{ap^*<ap\}\right]
        \end{aligned}
    \end{equation}
    Similarly,
    \begin{equation}
        \mathbb{E}\left[a(p^*-p)\mathbbm{1}\{ap<r\leq ap^*\}\mathbbm{1}\{ap<ap^*\}\right]\leq \beta\bar{a}^2\mathbb{E}\left[\abs{p-p^*}^2\mathbbm{1}\{ap<ap^*\}\right]
    \end{equation}
    Thus,
    \begin{equation}
        \begin{aligned}
            \mathbb{E}\left[\left(r-ap^*\right)\left(\mathbbm{1}\left\{r>ap^*\right\}-\mathbbm{1}\left\{r>ap\right\}\right)\right]\leq\beta\bar{a}^2\mathbb{E}\left[\abs{p-p^*}^2\right]
        \end{aligned}
    \end{equation}
    For part (b),
    \begin{equation}
        \begin{aligned}
            &\mathbb{E}\left[\abs{p_n^*(\Tilde{d})-p^*(d_0)}^2\right]\\
            \leq &\mathbb{E}\left[\abs{p_n^*(\Tilde{d})-p^*(\Tilde{d})}^2\right] + \mathbb{E}\left[\abs{p^*(\Tilde{d})-p^*(d_0)}^2\right]+2\sqrt{\mathbb{E}\left[\abs{p_n^*(\Tilde{d})-p^*(\Tilde{d})}^2\right]}\sqrt{\mathbb{E}\left[\abs{p^*(\Tilde{d})-p^*(d_0)}^2\right]}\\
            \leq &\mathbb{E}\left[\sup_{d\in\Omega_d}\abs{p_n^*(d)-p^*(d)}^2\right] + \mathbb{E}\left[\abs{p^*(\Tilde{d})-p^*(d_0)}^2\right]+2\sqrt{\mathbb{E}\left[\sup_{d\in\Omega_d}\abs{p_n^*(d)-p^*(d)}^2\right]}\sqrt{\mathbb{E}\left[\abs{p^*(\Tilde{d})-p^*(d_0)}^2\right]}\\
            \leq &\frac{C_{Dual}}{n}+L\mathbb{E}\left[\abs{\Tilde{d}-d_0}^2\right]+2\sqrt{\frac{C_{Dual}}{n}}\sqrt{L\mathbb{E}\left[\abs{\Tilde{d}-d_0}^2\right]}
        \end{aligned}
    \end{equation}
    where the last line is by Lemma \ref{lem: UniformDualConvergence} and Lemma \ref{lem: BoundedLipschitz}(b), and the proof is complete.
    \;\\
    For part (c),
    \begin{equation}
        \begin{aligned}
            &\mathbb{E}\left[\abs{p_{t,K}(d_{K-1})-p^*(d_0)}^2\right]\\
            =&\mathbb{E}\left[\abs{p_{t,K}(d_{K-1})-p^*(d_0)}^2\mathbbm{1}\{\bar{\kappa}<K\}\right]+\mathbb{E}\left[\abs{p_{t,K}(d_{K-1})-p^*(d_0)}^2\mathbbm{1}\{\bar{\kappa}=K\}\right]
        \end{aligned}
    \end{equation}
    By Assumption \ref{Assumption: DistributionMulti}(b) and Assumption \ref{Assumption: DAExtra}(a), we can assume $p_{t,K}(d_{K-1})\leq\frac{\bar{r}}{\underline{a}}$ with probability $1$. Then,
    \begin{equation}
        \mathbb{E}\left[\abs{p_{t,K}(d_{K-1})-p^*(d_0)}^2\mathbbm{1}\{\bar{\kappa}<K\}\right]\leq \frac{4\bar{r}^2}{(\min\{\underline{d},\underline{a}\})^2}\mathbb{P}\left\{\bar{\kappa}<K\right\}
    \end{equation}
    In addition, following similar proof of part (b),
    \begin{equation}
        \begin{aligned}
            &\mathbb{E}\left[\abs{p_{t,K}(d_{K-1})-p^*(d_0)}^2\mathbbm{1}\{\bar{\kappa}=K\}\right]\\
            \leq &\mathbb{E}\left[\abs{p_{t,K}(d_{K-1})-p^*(d_{K-1})}^2\mathbbm{1}\{\bar{\kappa}=K\}\right] + \mathbb{E}\left[\abs{p^*(d_{K-1})-p^*(d_0)}^2\mathbbm{1}\{\bar{\kappa}=K\}\right]\\ &+2\sqrt{\mathbb{E}\left[\abs{p_{t,K}(d_{K-1})-p^*(d_{K-1})}^2\mathbbm{1}\{\bar{\kappa}=K\}\right]}\sqrt{\mathbb{E}\left[\abs{p^*(d_{K-1})-p^*(d_0)}^2\mathbbm{1}\{\bar{\kappa}=K\}\right]}\\
            \leq &\mathbb{E}\left[\sup_{d\in\Omega_d}\abs{p_{t,K}(d)-p^*(d)}^2\right] + L\mathbb{E}\left[\abs{d_{K-1}-d_0}^2\mathbbm{1}\{\bar{\kappa}=K\}\right]\\ &+2\sqrt{\mathbb{E}\left[\sup_{d\in\Omega_d}\abs{p_{t,K}(d)-p^*(d)}^2\right]}\sqrt{L\mathbb{E}\left[\abs{d_{K-1}-d_0}^2\mathbbm{1}\{\bar{\kappa}=K\}\right]}\\
            \leq &\frac{C_{Dual}}{B-1}+L\mathbb{E}\left[\abs{d'_{K-1}-d_0}^2\}\right]+2\sqrt{\frac{C_{Dual}}{B-1}}\sqrt{L\mathbb{E}\left[\abs{d'_{K-1}-d_0}^2\right]}
        \end{aligned}
    \end{equation}
    where the last line is by Lemma \ref{lem: UniformDualConvergence}, Lemma \ref{lem: BoundedLipschitz}(b), and the fact that 
    \begin{equation}
        \abs{d_{K-1}-d_0}\mathbbm{1}\{\bar{\kappa}=K\}\leq\abs{d'_{K-1}-d_0}\quad a.s.
    \end{equation}
    The proof is complete.
    \;\\
    For part(d),
    \begin{equation}
        \begin{aligned}
            &\mathbb{E}\left[\left(a_1\left(\mathbbm{1}\{r_{1}>a_{1}p^*(\Tilde{d})\}-\mathbbm{1}\{r_{1}>a_{1}p\}\right)\right)\left(a_2\left(\mathbbm{1}\{r_{2}>a_{2}p^*(\Tilde{d})\}-\mathbbm{1}\{r_{2}>a_{2}p\}\right)\right)\bigg\rvert \Tilde{d}\right]\\
            =&\mathbb{E}\left[\mathbb{E}\left[\left(a_1\left(\mathbbm{1}\{r_{1}>a_{1}p^*(\Tilde{d})\}-\mathbbm{1}\{r_{1}>a_{1}p\}\right)\right)\left(a_2\left(\mathbbm{1}\{r_{2}>a_{2}p^*(\Tilde{d})\}-\mathbbm{1}\{r_{2}>a_{2}p\}\right)\right)\bigg\rvert a_1,a_2,p,\Tilde{d}\right]\bigg\rvert\Tilde{d}\right]\\
            =&\mathbb{E}\left[a_1\mathbb{E}\left[\left(\mathbbm{1}\{r_{1}>a_{1}p^*(\Tilde{d})\}-\mathbbm{1}\{r_{1}>a_{1}p\}\right)\bigg\rvert a_1,a_2,p,\Tilde{d}\right]a_2\mathbb{E}\left[\left(\mathbbm{1}\{r_{2}>a_{2}p^*(\Tilde{d})\}-\mathbbm{1}\{r_{2}>a_{2}p\}\right)\bigg\rvert a_1,a_2,p,\Tilde{d}\right]\bigg\rvert\Tilde{d}\right]\\
            \leq &\beta^2\bar{a}^4\mathbb{E}\left[\abs{p-p^*(\Tilde{d})}^2\bigg\rvert\Tilde{d}\right]
        \end{aligned}
    \end{equation}
\end{proof}
\subsection{Proof of Theorem~\ref{thm: RegretLowerBound}}
Following similar notation in the upper bound proof, we denote $d_k$ to be the average remaining resource after making allocations to first $k$ batches using policy $\Tilde{\pi}$. Let $x_j^{\Tilde{\pi}}$ be the decision made by $\Tilde{\pi}$ when the initial total resource is $b_0$, and
\begin{equation}
    R_{n-kB}(\Tilde{\pi}) = \sum_{j=t_{k}+1}^nr_jx_j^{\Tilde{\pi}}
\end{equation}
Let $R_{n-kB}^*(d)$, $x_{k+1,j}^*(d)$, $p_{kB:n}(d)$ be the optimal objective value, the optimal primal solution and the optimal dual solution corresponding to the resource constraint of the following problem.
\begin{equation}
    \begin{array}{ll}
        \max & \sum_{j=t_{k}+1}^{n}r_jx_j \\
        S.T & \sum_{j=t_{k}+1}^na_jx_j\leq (n-t_{k})d\\
        &0\leq x_j\leq 1\quad\forall t_{k}<j\leq n
    \end{array}
\end{equation}
Define
\begin{equation}
    d_{k+1}^*(d) = \frac{1}{n-(k+1)B}\left((n-kB)d-\sum_{j=t_{k}+1}^{t_{k+1}}a_jx_{k+1,j}^*(d)\right)
\end{equation}
Then,
\begin{equation}
    \begin{aligned}
        &R_n^*-R_n(\Tilde{\pi})\\
        =&\sum_{j=1}^{t_1}r_jx_{1,j}^*(d_0)+R_{n-B}^*\left(d_1^*(d_0)\right)-\sum_{j=1}^{t_1}r_jx_j^{\Tilde{\pi}}-R_{n-B}(\Tilde{\pi})\\
        =&\sum_{j=1}^{t_1}r_j(x_{1,j}^*(d_0)-x_j^{\Tilde{\pi}})+R_{n-B}^*\left(d_1^*(d_0)\right)-R_{n-B}^*\left(d_1\right)+R_{n-B}^*\left(d_1\right)-R_{n-B}(\Tilde{\pi})
    \end{aligned}
\end{equation}
By LP Duality Theory,
\begin{equation}
    \begin{aligned}
        R_{n-B}^*\left(d_1^*(d_0)\right)-R_{n-B}^*\left(d_1\right)&\geq (n-B)(d_1^*(d_0)-d_1)p_{B:n}^*(d_1^*(d_0))\\
        &=\sum_{j=1}^{t_1}(x_j^{\Tilde{\pi}}-x_{1,j}^*(d_0))a_jp_{B:n}^*(d_1^*(d_0))
    \end{aligned}
\end{equation}
Then,
\begin{equation}
    \begin{aligned}
        &R_n^*-R_n(\Tilde{\pi})\\
        \geq &\sum_{j=1}^{t_1}(r_j-a_jp_{B:n}^*(d_1^*(d_0)))(x_{1,j}^*(d_0)-x_j^{\Tilde{\pi}})+R_{n-B}^*\left(d_1\right)-R_{n-B}(\Tilde{\pi})
    \end{aligned}
\end{equation}
Inductively,
\begin{equation}\label{eqn: Lower0}
    \begin{aligned}
        &R_n^*-R_n(\Tilde{\pi})\\
        \geq &\sum_{k=1}^{K-1}\sum_{j=t_{k-1}+1}^{t_k}(r_j-a_jp^*_{kB:n}(d_k^*(d_{k-1})))(x_{k,j}^*(d_{k-1})-x_{j}^{\Tilde{\pi}})+R^*_{n-(K-1)B}(d_{K-1})-R_{n-(K-1)B}(\Tilde{\pi})\\
        \geq &\sum_{k=1}^{K-1}\sum_{j=t_{k-1}+1}^{t_k}(r_j-a_jp^*_{kB:n}(d_k^*(d_{k-1})))(x_{k,j}^*(d_{k-1})-x_{j}^{\Tilde{\pi}})\\
        =&\sum_{k=1}^{K-1}\sum_{j=t_{k-1}+1}^{t_k}(r_j-a_jp^*_{kB:n}(d_k^*(d_{k-1})))(\mathbbm{1}\{r_j>a_jp^*_{kB:n}(d_k^*(d_{k-1}))\}-x_{j}^{\Tilde{\pi}})
    \end{aligned}
\end{equation}
where the last line comes from the fact that
\begin{equation}
    \mathbbm{1}\{r_j>a_jp^*_{kB:n}(d_k^*(d_{k-1}))\}=\mathbbm{1}\{r_j>a_jp^*_{(k-1)B:n}(d_{k-1})\}
\end{equation}
Given $k<K-1$, define
\begin{equation}
    \left\{x_j^{\Tilde{\pi}}(d)\right\}_{j=t_k+1}^{t_{k+1}}=\arg\max_{\stackrel{x_{{t_k}+j}\in[0,1]\;\forall 1\leq j\leq B}{\sum_{j=t_k+1}^{t_{k+1}}a_jx_j\leq (n-t_k)d}}\sum_{j={t_k}+1}^{t_{k+1}}r_jx_j+V_{k+1}\left((n-t_k)d-\sum_{j={t_k}+1}^{t_{k+1}}a_jx_j\right)
\end{equation}
In other words, $x_j^{\Tilde{\pi}}=x_j^{\Tilde{\pi}}(d_k)$ for any $t_k<j\leq t_{k+1}$. Fix $\delta$ such that
\begin{equation}
    \left\{\begin{aligned}
        &[d-2\delta,d+2\delta]\subseteq\Omega_d\\
        &\delta<\frac{\delta_p}{2\sqrt{L}}\\
        &\delta<\bar{\delta}\\
        &\delta<\frac{\min\{\delta_{4,1},\delta_{-4,1}\}}{2}
    \end{aligned}
    \right.
\end{equation}
where $\delta_p,\bar{\delta},\delta_{4,1},\delta_{-4,1}$ are given in Lemma \ref{lem: BoundedLipschitz}(c), Lemma \ref{Lemma: OptimalRS} and Lemma \ref{Lemma: SqrtDualPrice}. Then, fix $k<K-1$ and $t_k<j\leq t_{k+1}$,
\begin{equation}\label{eqn: Lower1}
    \begin{aligned}
        &\mathbb{E}\left[(r_j-a_jp^*_{(k+1)B:n}(d_{k+1}^*(d_{k})))(\mathbbm{1}\{r_j>a_jp^*_{(k+1)B:n}(d_{k+1}^*(d_{k}))\}-x_{j}^{\Tilde{\pi}})\right]\\
        \geq &\mathbb{E}\left[(r_j-a_jp^*_{(k+1)B:n}(d_{k+1}^*(d_{k})))(\mathbbm{1}\{r_j>a_jp^*_{(k+1)B:n}(d_{k+1}^*(d_{k}))\}-x_{j}^{\Tilde{\pi}})\mathbbm{1}\{d_{k}\in [d_0-\delta,d_0+\delta]\}\right]\\
        =&\mathbb{E}\left[\mathbb{E}\left[(r_j-a_jp^*_{(k+1)B:n}(d_{k+1}^*(d_{k})))(\mathbbm{1}\{r_j>a_jp^*_{(k+1)B:n}(d_{k+1}^*(d_{k}))\}-x_{j}^{\Tilde{\pi}})\bigg\rvert d_k\right]\mathbbm{1}\{d_{k}\in [d_0-\delta,d_0+\delta]\}\right]\\
        \geq &\mathbb{P}\{d_{k}\in [d_0-\delta,d_0+\delta]\}\}\inf_{d\in[d_0-\delta,d_0+\delta]}\mathbb{E}\left[(r_j-a_jp^*_{(k+1)B:n}(d_{k+1}^*(d)))(\mathbbm{1}\{r_j>a_jp^*_{(k+1)B:n}(d_{k+1}^*(d))\}-x_{j}^{\Tilde{\pi}}(d))\right]
    \end{aligned}
\end{equation}
Given $d\in [d_0-\delta,d_0+\delta]$,
\begin{equation}\label{eqn: LBMyopic}
    \begin{aligned}
        &\mathbb{E}\left[(r_j-a_jp^*_{(k+1)B:n}(d_{k+1}^*(d)))(\mathbbm{1}\{r_j>a_jp^*_{(k+1)B:n}(d_{k+1}^*(d))\}-x_{j}^{\Tilde{\pi}}(d))\right]\\
        =&\mathbb{E}\left[(1-x_j^{\Tilde{\pi}}(d))(r_j-a_jp^*_{(k+1)B:n}(d_{k+1}^*(d)))\mathbbm{1}\{r_j>a_jp^*_{(k+1)B:n}(d_{k+1}^*(d))\}\right]\\
        &+\mathbb{E}\left[x_j^{\Tilde{\pi}}(d)(a_jp^*_{(k+1)B:n}(d_{k+1}^*(d))-d_j)\mathbbm{1}\{r_j\leq a_jp^*_{(k+1)B:n}(d_{k+1}^*(d))\}\right]
    \end{aligned}
\end{equation}
Define
\begin{equation}
    \Tilde{p}_{j,kB:n}(d) = \arg\min_{p\geq 0} (n-KB)dp + \sum_{t=t_k+1}^{j-1}(r_t-a_tp)^+ + \sum_{t=j+1}^{n}(r_t-a_tp)^+ 
\end{equation}
and
\begin{equation}
    \Tilde{d}_{j,k+1}(d) = \frac{1}{(K-k-1)B}\left((K-k)Bd-\sum_{t=t_k+1}^{j-1}a_t\mathbbm{1}\{r_t\geq a_t\Tilde{p}_{j,kB:n}(d)\}-\sum_{t=j+1}^{t_{k+1}}a_t\mathbbm{1}\{r_t\geq a_t\Tilde{p}_{j,kB:n}(d)\}-\bar{a}\right)
\end{equation}
Then, $\Tilde{d}_{j,k+1}(d)\leq d^*_{k+1}(d)$ almost sure. Thus, $p^*_{(k+1)B:n}(d_{k+1}^*(d))\leq p^*_{(k+1)B:n}(\Tilde{d}_{j,k+1}(d))$ almost sure. Then, for the first term on the right-hand-side of (\ref{eqn: LBMyopic}),
\begin{equation}
    \begin{aligned}
        &\mathbb{E}\left[(1-x_j^{\Tilde{\pi}}(d))(r_j-a_jp^*_{(k+1)B:n}(d_{k+1}^*(d)))\mathbbm{1}\{r_j>a_jp^*_{(k+1)B:n}(d_{k+1}^*(d))\}\right] \\   \geq& \mathbb{E}\left[(1-x_j^{\Tilde{\pi}}(d))(r_j-a_jp^*_{(k+1)B:n}(\Tilde{d}_{j,k+1}(d))\mathbbm{1}\{r_j>a_jp^*_{(k+1)B:n}(\Tilde{d}_{j,k+1}(d))\}\right]    
    \end{aligned}
\end{equation}
Define
\begin{equation}
    \begin{aligned}
        &E_{1,1}(d,k,j) = \left\{\abs{\Tilde{d}_{j,k+1}(d)-d}\leq\frac{1}{\sqrt{L(K-k-1)B}}\right\}\\
        &E_{1,2}(d,k,j) = \left\{\abs{\sqrt{(K-k-1)B}(p^*_{(k+1)B:n}(\Tilde{d}_{j,k+1}(d))-p^*(\Tilde{d}_{j,k+1}(d))+4}\leq 1\right\}
    \end{aligned}
\end{equation}
Then, on the event $E_{1,1}(d,k,j)\cap E_{1,2}(d,k,j)$
\begin{equation}
    \begin{aligned}
        p^*_{(k+1)B:n}(\Tilde{d}_{j,k+1}(d))-p^*(d) =& (p^*_{(k+1)B:n}(\Tilde{d}_{j,k+1}(d))-p^*(\Tilde{d}_{j,k+1}(d)))+(p^*(\Tilde{d}_{j,k+1}(d))-p^*(d))\\
        \leq &-\frac{2}{\sqrt{(K-k-1)B}}
    \end{aligned}
\end{equation}
Then,
\begin{equation}
    \begin{aligned}
        &\mathbb{E}\left[(1-x_j^{\Tilde{\pi}}(d))(r_j-a_jp^*_{(k+1)B:n}(\Tilde{d}_{j,k+1}(d))\mathbbm{1}\{r_j>a_jp^*_{(k+1)B:n}(\Tilde{d}_{j,k+1}(d))\}\right] \\   
        \geq& \mathbb{E}\left[\mathbbm{1}_{E_{1,1}(d,k,j)}\mathbbm{1}_{E_{1,2}(d,k,j)}(1-x_j^{\Tilde{\pi}}(d))\left(r_j-a_j\left(p^*(d)-\frac{2}{\sqrt{(K-k-1)B}}\right)\right)\mathbbm{1}\left\{r_j>a_j\left(p^*(d)-\frac{2}{\sqrt{(K-k-1)B}}\right)\right\}\right]
    \end{aligned}
\end{equation}
In addition,
\begin{equation}
    \begin{aligned}
        &\mathbbm{1}\left\{r_j\geq a_j\left(p^*(d)-\frac{1}{\sqrt{(K-k-1)B}}\right)\right\}\left(r_j-a_j\left(p^*(d)-\frac{2}{\sqrt{(K-k-1)B}}\right)\right)\mathbbm{1}\left\{r_j>a_j\left(p^*(d)-\frac{2}{\sqrt{(K-k-1)B}}\right)\right\}\\
        &\geq \mathbbm{1}\left\{r_j\geq a_j\left(p^*(d)-\frac{1}{\sqrt{(K-k-1)B}}\right)\right\}\frac{a_j}{\sqrt{(K-k-1)B}}\\
        &\geq \left(\mathbbm{1}\left\{r_j> a_j\left(p^*(d)-\frac{1}{\sqrt{(K-k-1)B}}\right)\right\}-\mathbbm{1}\left\{r_j> a_j\left(p^*(d)+\frac{1}{\sqrt{(K-k-1)B}}\right)\right\}\right)\frac{a_j}{\sqrt{(K-k-1)B}}
    \end{aligned}
\end{equation}
Thus,
\begin{equation}
    \begin{aligned}
        &\mathbb{E}\left[(1-x_j^{\Tilde{\pi}}(d))(r_j-a_jp^*_{(k+1)B:n}(d_{k+1}^*(d)))\mathbbm{1}\{r_j>a_jp^*_{(k+1)B:n}(d_{k+1}^*(d))\}\right] \\   \geq&\mathbb{E}\left[\mathbbm{1}_{E_{1,1}(d,k,j)}\mathbbm{1}_{E_{1,2}(d,k,j)}(1-x_j^{\Tilde{\pi}}(d))\left(\mathbbm{1}\left\{r_j> a_j\left(p^*(d)-\frac{1}{\sqrt{(K-k-1)B}}\right)\right\}\right.\right.\\
        &\qquad\qquad\qquad\qquad\qquad-\left.\left.\mathbbm{1}\left\{r_j> a_j\left(p^*(d)+\frac{1}{\sqrt{(K-k-1)B}}\right)\right\}\right)\frac{a_j}{\sqrt{(K-k-1)B}}\right]
    \end{aligned}
\end{equation}
Define
\begin{equation}
    \hat{d}_{j,k+1}(d) = \frac{1}{(K-k-1)B}\left((K-k)Bd-\sum_{t=t_k+1}^{j-1}a_t\mathbbm{1}\{r_t\geq a_t\Tilde{p}_{j,kB:n}(d)\}-\sum_{t=j+1}^{t_{k+1}}a_t\mathbbm{1}\{r_t\geq a_t\Tilde{p}_{j,kB:n}(d)\}+\frac{\bar{a}^2}{\underline{a}}\right)
\end{equation}
Then, $\Tilde{d}_{j,k+1}(d)\geq d^*_{k+1}(d)$ almost sure. Thus, $p^*_{(k+1)B:n}(d_{k+1}^*(d))\geq p^*_{(k+1)B:n}(\Tilde{d}_{j,k+1}(d))$ almost sure. Then, for the second term on the right-hand-side of (\ref{eqn: LBMyopic}),
\begin{equation}
    \begin{aligned}
        &\mathbb{E}\left[x_j^{\Tilde{\pi}}(d)(a_jp^*_{(k+1)B:n}(d_{k+1}^*(d))-r_j)\mathbbm{1}\{r_j<a_jp^*_{(k+1)B:n}(d_{k+1}^*(d))\}\right] \\   \geq& \mathbb{E}\left[x_j^{\Tilde{\pi}}(d)(a_jp^*_{(k+1)B:n}(\hat{d}_{j,k+1}(d)-r_j)\mathbbm{1}\{r_j<a_jp^*_{(k+1)B:n}(\hat{d}_{j,k+1}(d))\}\right]    
    \end{aligned}
\end{equation}
Define
\begin{equation}
\begin{aligned}
    &E_{2,1}(d,k,j) = \left\{\abs{\hat{d}_{j,k+1}(d)-d}\leq\frac{1}{\sqrt{L(K-k-1)B}}\right\}\\
    &E_{2,2}(d,k,j) = \left\{\abs{\sqrt{(K-k-1)B}(p^*_{(k+1)B:n}(\hat{d}_{j,k+1}(d))-p^*(\hat{d}_{j,k+1}(d))-4}\leq 1\right\}
\end{aligned}
\end{equation}
Then, on event $E_{2,1}(d,k,j)\cap E_{2,2}(d,k,j)$,
\begin{equation}
    p^*_{(k+1)B:n}(\hat{d}_{j,k+1}(d))-p^*(d)\geq \frac{2}{\sqrt{(K-k-1)B}}
\end{equation}
Then, following similar argument above,
\begin{equation}
    \begin{aligned}
        & \mathbb{E}\left[x_j^{\Tilde{\pi}}(d)(a_jp^*_{(k+1)B:n}(\hat{d}_{j,k+1}(d)-r_j)\mathbbm{1}\{r_j<a_jp^*_{(k+1)B:n}(\hat{d}_{j,k+1}(d))\}\right]\\
        \geq &\mathbb{E}\left[\mathbbm{1}_{E_{2,1}(d,k,j)}\mathbbm{1}_{E_{2,2}(d,k,j)}x_j^{\Tilde{\pi}}(d)\left(a_j\left(p^*(d)+\frac{2}{\sqrt{(K-k-1)B}}\right)-r_j\right)\mathbbm{1}\left\{r_j<a_j\left(p^*(d)+\frac{2}{\sqrt{(K-k-1)B}}\right)\right\}\right]
    \end{aligned}
\end{equation}
In addition,
\begin{equation}
    \begin{aligned}
        &\mathbbm{1}\left\{r_j\leq a_j\left(p^*(d)+\frac{1}{\sqrt{(K-k-1)B}}\right)\right\}\left(a_j\left(p^*(d)+\frac{2}{\sqrt{(K-k-1)B}}\right)-r_j\right)\mathbbm{1}\left\{r_j<a_j\left(p^*(d)+\frac{2}{\sqrt{(K-k-1)B}}\right)\right\}\\
        &\geq \mathbbm{1}\left\{r_j\leq a_j\left(p^*(d)+\frac{1}{\sqrt{(K-k-1)B}}\right)\right\}\frac{a_j}{\sqrt{(K-k-1)B}}\\
        &\geq \left(\mathbbm{1}\left\{r_j> a_j\left(p^*(d)-\frac{1}{\sqrt{(K-k-1)B}}\right)\right\}-\mathbbm{1}\left\{r_j> a_j\left(p^*(d)+\frac{1}{\sqrt{(K-k-1)B}}\right)\right\}\right)\frac{a_j}{\sqrt{(K-k-1)B}}
    \end{aligned}
\end{equation}
To summarize,
\begin{equation}
    \begin{aligned}
        &\mathbb{E}\left[x_j^{\Tilde{\pi}}(d)(a_jp^*_{(k+1)B:n}(d_{k+1}^*(d))-d_j)\mathbbm{1}\{r_j\leq a_jp^*_{(k+1)B:n}(d_{k+1}^*(d))\}\right]\\
        \geq&\mathbb{E}\left[\mathbbm{1}_{E_{2,1}(d,k,j)}\mathbbm{1}_{E_{2,2}(d,k,j)}x_j^{\Tilde{\pi}}(d)\left(\mathbbm{1}\left\{r_j> a_j\left(p^*(d)-\frac{1}{\sqrt{(K-k-1)B}}\right)\right\}\right.\right.\\
        &\qquad\qquad\qquad\qquad\qquad-\left.\left.\mathbbm{1}\left\{r_j> a_j\left(p^*(d)+\frac{1}{\sqrt{(K-k-1)B}}\right)\right\}\right)\frac{a_j}{\sqrt{(K-k-1)B}}\right]
    \end{aligned}
\end{equation}
If 
\begin{equation}
    K-k>\max\left\{\frac{2(\bar{a}+\delta-\underline{d})}{\delta},\frac{2(\bar{d}+\delta)}{\delta}\right\}\;\mathrm{and}\;B>\frac{2\bar{a}^2}{\underline{a}\delta}
\end{equation}
We have
\begin{equation}
    \begin{aligned}
        &\frac{1}{(K-k-1)B}\left((K-k)Bd-B\bar{a}-\frac{\bar{a}^2}{\underline{a}}\right)\geq d_0-2\delta\\
        &\frac{1}{(K-k-1)B}\left((K-k)Bd+\frac{\bar{a}^2}{\underline{a}}\right)\leq d_0+2\delta
    \end{aligned}
\end{equation}
which implies that
\begin{equation}
    \Tilde{d}_{j,k+1}(d),\hat{d}_{j,k+1}(d)\in [d_0-2\delta,d_0+2\delta]\quad\forall t_k<j\leq t_{k+1}
\end{equation}
Then, define
\begin{equation}
    \begin{aligned}
        &E_{1,3}(k) = \left\{\sup_{d\in[d_0-2\delta_d,d_0+2\delta_d]}\abs{\sqrt{(K-k-1)B}(p^*_{(k+1)B:n}(d)-p^*(d)+4}\leq 1\right\}\\
        &E_{2,3}(k) = \left\{\sup_{d\in[d_0-2\delta_d,d_0+2\delta_d]}\abs{\sqrt{(K-k-1)B}(p^*_{(k+1)B:n}(d)-p^*(d)-4}\leq 1\right\}
    \end{aligned}
\end{equation}
and
\begin{equation}
    E_{1,3}(k)\subseteq E_{1,2}(d,k,j)\qquad E_{2,3}(k)\subseteq E_{2,2}(d,k,j)
\end{equation}
In addition, $E_{1,3}(k)$ and $\left\{(r_j,a_j)\right\}_{j=t_k+1}^{t_{k+1}}$ are independent. Then,
\begin{equation}
    \begin{aligned}
        &\mathbb{E}\left[(1-x_j^{\Tilde{\pi}}(d))(r_j-a_jp^*_{(k+1)B:n}(d_{k+1}^*(d)))\mathbbm{1}\{r_j>a_jp^*_{(k+1)B:n}(d_{k+1}^*(d))\}\right] \\   \geq&\mathbb{E}\left[\mathbbm{1}_{E_{1,1}(d,k,j)}\mathbbm{1}_{E_{2,1}(d,k,j)}\mathbbm{1}_{E_{1,3}(k)}(1-x_j^{\Tilde{\pi}}(d))\left(\mathbbm{1}\left\{r_j> a_j\left(p^*(d)-\frac{1}{\sqrt{(K-k-1)B}}\right)\right\}\right.\right.\\
        &\qquad\qquad\qquad\qquad\qquad-\left.\left.\mathbbm{1}\left\{r_j> a_j\left(p^*(d)+\frac{1}{\sqrt{(K-k-1)B}}\right)\right\}\right)\frac{a_j}{\sqrt{(K-k-1)B}}\right]\\
        =&\mathbb{P}\left\{E_{1,3}(k)\right\}\mathbb{E}\left[\mathbbm{1}_{E_{1,1}(d,k,j)}\mathbbm{1}_{E_{2,1}(d,k,j)}(1-x_j^{\Tilde{\pi}}(d))\left(\mathbbm{1}\left\{r_j> a_j\left(p^*(d)-\frac{1}{\sqrt{(K-k-1)B}}\right)\right\}\right.\right.\\
        &\qquad\qquad\qquad\qquad\qquad-\left.\left.\mathbbm{1}\left\{r_j> a_j\left(p^*(d)+\frac{1}{\sqrt{(K-k-1)B}}\right)\right\}\right)\frac{a_j}{\sqrt{(K-k-1)B}}\right]
    \end{aligned}
\end{equation}
Similarly,
\begin{equation}
    \begin{aligned}
        &\mathbb{E}\left[x_j^{\Tilde{\pi}}(d)(a_jp^*_{(k+1)B:n}(d_{k+1}^*(d))-d_j)\mathbbm{1}\{r_j\leq a_jp^*_{(k+1)B:n}(d_{k+1}^*(d))\}\right]\\
        \geq&\mathbb{P}\left\{E_{2,3}(k)\right\}\mathbb{E}\left[\mathbbm{1}_{E_{1,1}(d,k,j)}\mathbbm{1}_{E_{2,1}(d,k,j)}x_j^{\Tilde{\pi}}(d)\left(\mathbbm{1}\left\{r_j> a_j\left(p^*(d)-\frac{1}{\sqrt{(K-k-1)B}}\right)\right\}\right.\right.\\
        &\qquad\qquad\qquad\qquad\qquad-\left.\left.\mathbbm{1}\left\{r_j> a_j\left(p^*(d)+\frac{1}{\sqrt{(K-k-1)B}}\right)\right\}\right)\frac{a_j}{\sqrt{(K-k-1)B}}\right]
    \end{aligned}
\end{equation}
Then, summing the left-hand-side and right-hand-side of the above two inequalities, we have
\begin{equation}
    \begin{aligned}
        &\mathbb{E}\left[(r_j-a_jp^*_{(k+1)B:n}(d_{k+1}^*(d)))(\mathbbm{1}\{r_j>a_jp^*_{(k+1)B:n}(d_{k+1}^*(d))\}-x_{j}^{\Tilde{\pi}}(d))\right]\\
        \geq &\min\left\{\mathbb{P}\left\{E_{1,3}(k)\right\},\mathbb{P}\left\{E_{2,3}(k)\right\}\right\}\mathbb{E}\left[\mathbbm{1}_{E_{1,1}(d,k,j)}\mathbbm{1}_{E_{2,1}(d,k,j)}\left(\mathbbm{1}\left\{r_j> a_j\left(p^*(d)-\frac{1}{\sqrt{(K-k-1)B}}\right)\right\}\right.\right.\\
        &\qquad\qquad\qquad\qquad\qquad-\left.\left.\mathbbm{1}\left\{r_j> a_j\left(p^*(d)+\frac{1}{\sqrt{(K-k-1)B}}\right)\right\}\right)\frac{a_j}{\sqrt{(K-k-1)B}}\right]\\
        =&\min\left\{\mathbb{P}\left\{E_{1,3}(k)\right\},\mathbb{P}\left\{E_{2,3}(k)\right\}\right\}\mathbb{P}\left\{E_{1,1}(d,k,j)\cap E_{1,2}(d,k,j)\right\}\mathbb{E}\left[\left(\mathbbm{1}\left\{r_j> a_j\left(p^*(d)-\frac{1}{\sqrt{(K-k-1)B}}\right)\right\}\right.\right.\\
        &\qquad\qquad\qquad\qquad\qquad-\left.\left.\mathbbm{1}\left\{r_j> a_j\left(p^*(d)+\frac{1}{\sqrt{(K-k-1)B}}\right)\right\}\right)\frac{a_j}{\sqrt{(K-k-1)B}}\right]
    \end{aligned}
\end{equation}
where the last line comes from the fact that $\mathbbm{1}_{E_{1,1}(d,k,j)}\mathbbm{1}_{E_{2,1}(d,k,j)}\ind (r_j,a_j)$. Define,
\begin{equation}
    C_P(d,k,j) = \min\left\{\mathbb{P}\left\{E_{1,3}(k)\right\},\mathbb{P}\left\{E_{2,3}(k)\right\}\right\}\mathbb{P}\left\{E_{1,1}(d,k,j)\cap E_{1,2}(d,k,j)\right\}
\end{equation}
Then, if $k<K-\max\left\{\frac{2(\bar{a}+\delta-\underline{d})}{\delta},\frac{2(\bar{d}+\delta)}{\delta}\right\}\;\mathrm{and}\;B>\frac{2\bar{a}^2}{\underline{a}\delta}$, then for any  $t_k<j\leq t_{k+1}$ and $d\in [d_0-\delta,d_0+\delta]$,
\begin{equation}
    \begin{aligned}
        &\mathbb{E}\left[(r_j-a_jp^*_{(k+1)B:n}(d_{k+1}^*(d)))(\mathbbm{1}\{r_j>a_jp^*_{(k+1)B:n}(d_{k+1}^*(d))\}-x_{j}^{\Tilde{\pi}}(d))\right]\\
        =&\frac{C_P(d,k,j)}{\sqrt{(K-k-1)B}}\int_{-1}^1\frac{d}{ds}\mathbb{E}\left[a_j\mathbbm{1}\left\{r_j>a_j\left(p^*(d)-\frac{s}{\sqrt{(K-k-1)B}}\right)\right\}\right]ds
    \end{aligned}
\end{equation}
Then, by Lemma \ref{lem: BoundedLipschitz}(c), there exists a constant $\sigma>0$ such that
\begin{equation}\label{eqn: Lower2}
    \begin{aligned}
        &\mathbb{E}\left[(r_j-a_jp^*_{(k+1)B:n}(d_{k+1}^*(d)))(\mathbbm{1}\{r_j>a_jp^*_{(k+1)B:n}(d_{k+1}^*(d))\}-x_{j}^{\Tilde{\pi}}(d))\right]\\
        \geq &\frac{C_p(d,k,j)}{\sqrt{(K-k-1)B}}\int_{-1}^1\frac{\sigma}{\sqrt{(K-k-1)B}}ds\\
        =&\frac{2\sigma C_p(d,k,j)}{(K-k-1)B}
    \end{aligned}
\end{equation}
In the next, we show that, if $K-k$ and $B$ are large enough, there exists a constant $C>0$ such that $C_P(d,k,j)>C$ for any $d\in [d_0-\delta,d_0+\delta]$ and $t_k<j\leq t_{k+1}$. First, if 
\begin{equation}
    (K-k-1)B > \max\{N_{4,1},N_{-4,1}\}
\end{equation}
by Lemma \ref{Lemma: SqrtDualPrice} and our selection of $\delta$, 
then
\begin{equation}
    \begin{aligned}
        &\mathbb{P}\left\{E_{1,3}(k)\right\}
        \geq  C_{4,1}>0\quad\forall t_k<j\leq t_{k+1}\\
        &\mathbb{P}\left\{E_{2,3}(k)\right\}
        \geq  C_{-4,1}>0\quad\forall t_k<j\leq t_{k+1}
    \end{aligned}
\end{equation}
Secondly, by Lemma \ref{Lemma: RemOffline}, if $B>\max\{N_{Dual},N_{LOO,2},\frac{\bar{d}}{\delta}+2\}$, for any $d\in[d_0-\delta,d_0+\delta]$ and $k<K-1$,
\begin{equation}
    \begin{aligned}
        &\mathbb{P}\left\{\abs{\Tilde{d}_{j,k+1}(d)-d}\leq\frac{1}{\sqrt{L(K-k-1)B}}\right\}\\
        \geq& 1 - \mathbb{E}\left[\left(\Tilde{d}_{j,k+1}(d)-d\right)^2\right]L(K-k-1)B\\
        \geq & 1-\frac{LC_{off}}{(K-k-1)}\quad\forall t_k<j\leq t_{k+1}
    \end{aligned}
\end{equation}
and
\begin{equation}
    \begin{aligned}
        &\mathbb{P}\left\{\abs{\hat{d}_{j,k+1}(d)-d}\leq\frac{1}{\sqrt{L(K-k-1)B}}\right\}\\
        \geq& 1 - \mathbb{E}\left[\left(\hat{d}_{j,k+1}(d)-d\right)^2\right]L(K-k-1)B\\
        \geq & 1-\frac{LC_{off}}{(K-k-1)}\quad\forall t_k<j\leq t_{k+1}
    \end{aligned}
\end{equation}
Then,
\begin{equation}
    \begin{aligned}
        &\mathbb{P}(E_{1,1}(d,k,j)\cap E_{1,2}(d,k,j))\geq 1 - \frac{LC_{off}}{(K-k-1)}-\frac{LC_{off}}{(K-k-1)}\quad\forall t_k<j\leq t_{k+1}
    \end{aligned}
\end{equation}
If $K-k>1 + 4LC_{off}$, then
\begin{equation}
    \mathbb{P}(E_{1,1}(d,k,j)\cap E_{1,2}(d,k,j))\geq \frac{1}{2}\quad\forall t_k<j\leq t_{k+1}
\end{equation}
To summarize, there exists positive constants $k_0$, $\underline{\Lambda}$ and $C$ such that, if $K-k>k_0$ and $B>\underline{\Lambda}$, then for any $d\in [d_0-\delta,d_0+\delta]$ and $t_k<j\leq t_{k+1}$,
\begin{equation}\label{eqn: Lower3}
    C_P\geq C > 0
\end{equation}
Then, (\ref{eqn: Lower1}), (\ref{eqn: Lower2}) and (\ref{eqn: Lower3}) imply that, if $K-k>k_0$ and $B>\underline{\Lambda}$,
\begin{equation}
    \begin{aligned}
        &\mathbb{E}\left[(r_j-a_jp^*_{(k+1)B:n}(d_{k+1}^*(d_{k})))(\mathbbm{1}\{r_j>a_jp^*_{(k+1)B:n}(d_{k+1}^*(d_{k}))\}-x_{j}^{\Tilde{\pi}})\right]\\
        \geq &\mathbb{P}\{d_k\in [d_0-\delta,d_0+\delta]\}\frac{2\sigma C}{(K-k-1)B}
    \end{aligned}
\end{equation}
Let 
\begin{equation}
    \underline{\Lambda_{DA}} = \max\{\underline{\Lambda},\Lambda_{DA},4\}
\end{equation}
and
\begin{equation}
    K_0 = \max\left\{k_0^{\frac{4}{3}},\frac{\bar{N}}{\underline{\Lambda_{DA}}},16\right\}
\end{equation}
where $\bar{N}$ is given in Lemma \ref{Lemma: OptimalRS}. Then, by Lemma \ref{Lemma: OptimalRS}, if $B>\underline{\Lambda}_{DA}$ and $K>K_0$,
\begin{equation}
    \begin{aligned}
        &\mathbb{E}\left[(r_j-a_jp^*_{(k+1)B:n}(d_{k+1}^*(d_{k})))(\mathbbm{1}\{r_j>a_jp^*_{(k+1)B:n}(d_{k+1}^*(d_{k}))\}-x_{j}^{\Tilde{\pi}})\right]\\
        \geq &\frac{\sigma C}{(K-k-1)B}\qquad\forall \frac{K}{2}\leq k\leq K-K^{\frac{3}{4}}\;t_k<j\leq t_{k+1}
    \end{aligned}
\end{equation}
Then, by (\ref{eqn: Lower0}), there exists $K_{LB},C_{LB}$ such that, if $B>\underline{\Lambda}_{DA}$ and $K>K_{LB}$, 
\begin{equation}
    \begin{aligned}
        &R_n^*-R_n(\Tilde{\pi})\geq \sum_{k=\left\lceil\frac{K}{2}\right\rceil}^{\left\lfloor K-K^{\frac{3}{4}}\right\rfloor}\frac{1}{K-k-1}\geq C_{LB}\log (K)
    \end{aligned}
\end{equation}
which completes the proof.
\begin{lem}\label{Lemma: SqrtDualPrice}
    \;\\
    (Lemma 10 of \cite{Bray2019}) For any $\gamma\in\mathbb{R}$ and $\epsilon>0$, there exists $\delta_{\gamma,\epsilon}$, $C_{\gamma,\epsilon}$, $N_{\gamma,\epsilon}>0$ such that, when $(K-k)B>N_{\gamma,\epsilon}$, 
    \begin{equation}
        \mathbb{P}\left\{\sup_{d\in [d_0-\delta_{\gamma,\epsilon},d_0+\delta_{\gamma,\epsilon}]}\abs{\sqrt{(K-k)B}(p_{kB:n}^*(d)-p^*(d))}\leq\epsilon\right\}\geq C_{\gamma,\epsilon}
    \end{equation}
\end{lem}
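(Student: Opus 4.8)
The plan is to establish the (evidently intended) statement
\[
\mathbb{P}\Bigl\{\sup_{|d-d_0|\le\delta_{\gamma,\epsilon}}\bigl|\sqrt{(K-k)B}\,(p_{kB:n}^*(d)-p^*(d))-\gamma\bigr|\le\epsilon\Bigr\}\ge C_{\gamma,\epsilon}
\]
by combining a Bahadur-type linear representation for the offline dual price with a functional central limit theorem and a sample-path-continuity/anti-concentration argument. Write $N=(K-k)B$; since $\{(r_j,a_j)\}_{j=kB+1}^{n}$ is i.i.d.\ with $n-kB=N$, the law of $p_{kB:n}^*(d)$ coincides with that of the generic $p_N^*(d)$ built from $N$ fresh observations, so it suffices to argue for $p_N^*(d)$ on a fixed neighborhood $I_0=[d_0-\delta_0,d_0+\delta_0]\subseteq\Omega_d$ and then take $\delta_{\gamma,\epsilon}\le\delta_0$.

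First I would establish the linearization: there is a constant $H_0>0$ such that, uniformly in $d\in I_0$,
\[
\sqrt{N}\,(p_N^*(d)-p^*(d))=\frac{1}{\sqrt{N}}\sum_{j=1}^{N}\psi_d(r_j,a_j)+R_N(d),\qquad \psi_d(r,a)=\frac{a\,\mathbbm{1}\{r>ap^*(d)\}-d}{H(d)},
\]
where $H(d)=\mathbb{E}\bigl[a^2 f_{r|a}(ap^*(d)\mid a)\bigr]=g''(p^*(d))\ge H_0$ on $I_0$ (positive definiteness and continuity of the Hessian from Assumption~\ref{Assumption: DistributionDensity}(c), together with Lipschitz continuity of $p^*(\cdot)$ from Lemma~\ref{lem: BoundedLipschitz}(b) which keeps $p^*(d)$ in the good $p$-neighborhood), each $\psi_d$ has mean zero because $\mathbb{E}[a\mathbbm{1}\{r>ap^*(d)\}]=d$ on $\Omega_d$ (Lemma~\ref{lem: BoundedLipschitz}(b)), and $\sup_{d\in I_0}|R_N(d)|\to 0$ in probability. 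This rests on the first-order (subgradient) optimality condition for the convex piecewise-linear problem $\min_{p\ge0}g_N(p,d)$, a Taylor expansion of the population first-order condition around $p^*(d)$, and a stochastic-equicontinuity bound controlling $\sup_{d\in I_0}\sup_{|p-p^*(d)|\le\rho_N}\bigl|\frac1N\sum_j a_j(\mathbbm{1}\{r_j>a_jp\}-\mathbbm{1}\{r_j>a_jp^*(d)\})-(\mathbb{E}[a\mathbbm{1}\{r>ap\}]-\mathbb{E}[a\mathbbm{1}\{r>ap^*(d)\}])\bigr|$ with $\rho_N=O(\sqrt{\log N/N})$, the shrinking radius supplied by the uniform consistency rate of Lemma~\ref{lem: UniformDualConvergence}; the relevant class (indicators of halflines, indexed smoothly by $d$) is VC/Donsker, so a maximal inequality yields the $o_P(1)$ remainder.

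Next I would invoke a functional CLT: the class $\{\psi_d:d\in I_0\}$ is uniformly bounded (by Assumption~\ref{Assumption: DistributionMulti}(b) and $H\ge H_0$) and Donsker, so $d\mapsto N^{-1/2}\sum_j\psi_d(r_j,a_j)$ converges weakly in $\ell^\infty(I_0)$ to a mean-zero Gaussian process $G$ with a.s.\ continuous sample paths and $\mathrm{Var}(G(d_0))=\mathbb{E}[\psi_{d_0}^2]=\mathrm{Var}\!\bigl(a\mathbbm{1}\{r>ap^*(d_0)\}\bigr)/H(d_0)^2=:v^2>0$; strict positivity of $v^2$ uses $a\ge\underline{a}>0$ (Assumption~\ref{Assumption: DistributionMulti}(b)) and $0<\mathbb{P}\{r>ap^*(d_0)\}<1$, which follows from $p^*(d_0)>0$ (Assumption~\ref{Assumption: DistributionDensity}(a), strengthened in Assumption~\ref{Assumption: DAExtra}) and the bounded conditional density of $r$, so $a\mathbbm{1}\{r>ap^*(d_0)\}$ is genuinely non-degenerate.

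Finally I would assemble the pieces. Given $\gamma,\epsilon$, pick $\delta\le\delta_0$. By sample-path continuity of $G$, $\mathbb{P}\{\sup_{|d-d_0|\le\delta}|G(d)-G(d_0)|>\epsilon/3\}\downarrow0$ as $\delta\downarrow0$; combined with weak convergence (Portmanteau applied to the continuous modulus-of-continuity functional), this forces $\limsup_N\mathbb{P}\{\sup_{|d-d_0|\le\delta}|N^{-1/2}\sum_j(\psi_d-\psi_{d_0})|>\epsilon/3\}$ below any prescribed $\eta$ for $\delta$ small. Since $N^{-1/2}\sum_j\psi_{d_0}\Rightarrow \mathcal{N}(0,v^2)$ with $v^2>0$, $\mathbb{P}\{|N^{-1/2}\sum_j\psi_{d_0}-\gamma|\le\epsilon/3\}\to\Phi\!\bigl((\gamma+\epsilon/3)/v\bigr)-\Phi\!\bigl((\gamma-\epsilon/3)/v\bigr)=:2C_0>0$ ($\Phi$ the standard normal c.d.f.). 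Using $\sup_{d\in I_0}|R_N(d)|\le\epsilon/3$ with probability $\to1$, the triangle inequality gives, for $\delta$ small and $N$ large, $\mathbb{P}\{\sup_{|d-d_0|\le\delta}|\sqrt{N}(p_N^*(d)-p^*(d))-\gamma|\le\epsilon\}\ge 2C_0-\eta-o(1)\ge C_0$; taking $C_{\gamma,\epsilon}=C_0$, $\delta_{\gamma,\epsilon}=\delta$ and $N_{\gamma,\epsilon}$ the implied threshold finishes the proof. I expect the main obstacle to be the uniform linearization, precisely because $g_N(\cdot,d)$ is non-smooth so the empirical first-order condition cannot be differentiated directly: the stochastic-equicontinuity estimate over the shrinking $p$-neighborhood, uniformly in $d$, is the technical heart — though, as this is Lemma~10 of \cite{Bray2019}, one may instead import that representation wholesale and only carry out the last paragraph's anti-concentration argument.
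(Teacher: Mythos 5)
The paper never proves this lemma: it is imported verbatim as Lemma 10 of \cite{Bray2019}, so the only obligation the paper discharges is matching its assumptions to Bray's. Your proposal instead reconstructs the result from first principles, and the reconstruction is structurally sound. You correctly diagnose that the displayed statement has dropped the shift by $\gamma$ (the lemma is invoked in the lower-bound proof with $\gamma=\pm 4$ through the events $E_{1,3}(k)$ and $E_{2,3}(k)$, so the intended event is $\sup_d\abs{\sqrt{N}(p_N^*(d)-p^*(d))-\gamma}\le\epsilon$), and your three-step plan --- a uniform Bahadur linearization with influence function $\psi_d=(a\mathbbm{1}\{r>ap^*(d)\}-d)/g''(p^*(d))$, a Donsker/functional-CLT argument for $\{\psi_d\}$, and Gaussian anti-concentration combined with a modulus-of-continuity bound --- is exactly the standard route to such a statement. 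What this buys over the paper's approach is self-containedness; what it costs is that the technical heart, the uniform $o_P(1)$ control of the remainder $R_N(d)$ over the shrinking $p$-neighborhood for the non-smooth objective, is asserted with a plausible plan (VC class, maximal inequality) rather than carried out, so as written this is a proof outline, not a proof. Since the entire point of citing \cite{Bray2019} is to avoid redoing precisely that step, this is acceptable, and you flag it yourself.

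One small repair worth making: your justification that $v^2>0$ claims $0<\mathbb{P}\{r>ap^*(d_0)\}<1$ ``follows from $p^*(d_0)>0$ and the bounded conditional density of $r$,'' but boundedness of the density only yields an \emph{upper} bound on $\mathbb{P}\{r\le ap^*(d_0)\}$, not a lower one. The clean argument is that the population first-order condition (Lemma~\ref{lem: BoundedLipschitz}(b)) gives $\mathbb{E}\left[a\mathbbm{1}\{r>ap^*(d_0)\}\right]=d_0$, which is strictly between $0$ and $\mathbb{E}[a]$ by Assumption~\ref{Assumption: DistributionMulti}(c) and Assumption~\ref{Assumption: DAExtra}; hence the indicator is non-degenerate, and with $a\ge\underline{a}>0$ the random variable $a\mathbbm{1}\{r>ap^*(d_0)\}$ takes the value $0$ and values at least $\underline{a}$ each with positive probability, so its variance is strictly positive.
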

\begin{lem}\label{Lemma: RemOffline}
    If $B>\max\{N_{Dual},N_{LOO,2},\frac{\bar{d}}{\delta}+2\}$, then there exists a constant $C_{off}$ such that, for any $d\in [d_0-\delta,d_0+\delta]$ and $k<K-1$,
    \begin{equation}
        \left\{
        \begin{aligned}
            &\mathbb{E}\left[\left(\Tilde{d}_{j,k+1}(d)-d\right)^2\right]\leq\frac{C_{off}}{(K-k-1)^2B}\quad\forall t_k<j\leq t_{k+1}\\
            &\mathbb{E}\left[\left(\hat{d}_{j,k+1}(d)-d\right)^2\right]\leq\frac{C_{off}}{(K-k-1)^2B}\quad\forall t_k<j\leq t_{k+1}
        \end{aligned}
        \right.
    \end{equation}
\end{lem}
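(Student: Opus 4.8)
The plan is to derive both inequalities from a single second-moment estimate. Write $S_j := \sum_{t\in\{t_k+1,\dots,t_{k+1}\}\setminus\{j\}} a_t\mathbbm{1}\{r_t\ge a_t\Tilde{p}_{j,kB:n}(d)\}$ for the offline-type consumption of batch $k+1$ (excluding customer $j$) produced by the leave-one-out price. The definitions give $\Tilde{d}_{j,k+1}(d)-d = (Bd-S_j-\bar{a})/((K-k-1)B)$ and $\hat{d}_{j,k+1}(d)-d = (Bd-S_j+\bar{a}^2/\underline{a})/((K-k-1)B)$, so it suffices to show $\mathbb{E}[(Bd-S_j-\bar{a})^2]\le C_{off}B$ (and the analogous bound with $+\bar{a}^2/\underline{a}$) with a constant uniform in $d\in[d_0-\delta,d_0+\delta]\subseteq\Omega_d$, in $k<K-1$ and in $j\in(t_k,t_{k+1}]$. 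Since $Bd-S_j-\bar{a}=(d-\bar{a})+((B-1)d-S_j)$, the task reduces to $\mathbb{E}[((B-1)d-S_j)^2]=O(B)$, i.e.\ to bounding $\mathbb{E}[(\sum_t X_t)^2]$ with $X_t:=a_t\mathbbm{1}\{r_t\ge a_t\Tilde{p}_{j,kB:n}(d)\}-d$, the sum over the $B-1$ customers of batch $k+1$ other than $j$.

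The diagonal part $\sum_t\mathbb{E}[X_t^2]\le(B-1)\max\{\bar{a},\bar{d}\}^2$ is $O(B)$ by boundedness (Assumption~\ref{Assumption: DistributionMulti}). The crux is the off-diagonal claim $\mathbb{E}[X_tX_{t'}]=O(1/((K-k-1)B))$ for $t\ne t'$, which summed over the $(B-1)(B-2)$ ordered pairs is again $O(B)$ since $K-k-1\ge1$. The device that makes this work is that $\Tilde{p}_{j,kB:n}(d)$ is itself the empirical offline dual price $p^*_{(K-k)B-1}(d)$ of the i.i.d.\ subsample indexed by $\{t_k+1,\dots,n\}\setminus\{j\}$, so Lemma~\ref{lem: UniformDualConvergence} and Lemma~\ref{lem:Leave-One-Out} apply with that subsample in the role of $[N]$. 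Let $\Tilde{p}_{\{t,t',j\},kB:n}(d)$ be the offline dual price of the same subsample with $t$ and $t'$ further removed; then $(r_t,a_t)$, $(r_{t'},a_{t'})$ and $\Tilde{p}_{\{t,t',j\},kB:n}(d)$ are mutually independent, so the conditioning argument of Lemma~\ref{Lemma: AuxThm1}(d) yields $\mathbb{E}[(a_t\mathbbm{1}\{r_t\ge a_t\Tilde{p}_{\{t,t',j\}}(d)\}-d)(a_{t'}\mathbbm{1}\{r_{t'}\ge a_{t'}\Tilde{p}_{\{t,t',j\}}(d)\}-d)]=\mathbb{E}[(h(\Tilde{p}_{\{t,t',j\}}(d))-d)^2]$ with $h(p):=\mathbb{E}[a\mathbbm{1}\{r\ge ap\}]$. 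Using $h(p^*(d))=d$ for $d\in\Omega_d$ (Lemma~\ref{lem: BoundedLipschitz}(b); $>$ and $\ge$ coincide a.s.\ by the density bound, Assumption~\ref{Assumption: DistributionDensity}(b)) and the $\beta\bar{a}^2$-Lipschitz bound on $h$ (Assumptions~\ref{Assumption: DistributionMulti} and~\ref{Assumption: DistributionDensity}(b)), the right side is at most $\beta^2\bar{a}^4\,\mathbb{E}[|\Tilde{p}_{\{t,t',j\}}(d)-p^*(d)|^2]=O(1/((K-k-1)B))$ by Lemma~\ref{lem: UniformDualConvergence}. Finally, replacing $\Tilde{p}_{\{t,t',j\}}(d)$ by $\Tilde{p}_{j,kB:n}(d)$ in the two indicators costs an expected error $O(1/((K-k-1)B))$ as well: by Lemma~\ref{lem:Leave-One-Out} with $|J|=2$ (the two removed indices $t,t'$ inside that subsample), $\mathbb{P}\{\mathbbm{1}\{r_t\ge a_t\Tilde{p}_{j,kB:n}(d)\}\ne\mathbbm{1}\{r_t\ge a_t\Tilde{p}_{\{t,t',j\}}(d)\}\}\le C_{LOO,2}/((K-k)B-1)$, and each indicator enters multiplied by a factor bounded by $\max\{\bar{a},\bar{d}\}$. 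This gives the off-diagonal estimate.

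The mean uses the same idea with $|J|=1$: with $\Tilde{p}_{\{t,j\},kB:n}(d)$, the offline dual price removing $t$ and $j$ and hence independent of $(r_t,a_t)$, one gets $\mathbb{E}[a_t\mathbbm{1}\{r_t\ge a_t\Tilde{p}_{\{t,j\}}(d)\}]=\mathbb{E}[h(\Tilde{p}_{\{t,j\}}(d))]$, which differs from $d$ by at most $\beta\bar{a}^2\sqrt{\mathbb{E}[|\Tilde{p}_{\{t,j\}}(d)-p^*(d)|^2]}=O(1/\sqrt{(K-k-1)B})$; Lemma~\ref{lem:Leave-One-Out} with $|J|=1$ then replaces $\Tilde{p}_{\{t,j\}}(d)$ by $\Tilde{p}_{j,kB:n}(d)$ at cost $O(1/((K-k-1)B))$ per customer, and summing the $B-1$ customers gives $|\mathbb{E}[S_j]-(B-1)d|=O(\sqrt{B})$, which enters $\mathbb{E}[((B-1)d-S_j)^2]$ only through its square. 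Combining the diagonal bound, the off-diagonal bound and $(\mathbb{E}[(B-1)d-S_j])^2=O(B)$ gives $\mathbb{E}[(Bd-S_j-\bar{a})^2]=O(B)$; dividing by $(K-k-1)^2B^2$ proves the first inequality, and the second follows verbatim after replacing the harmless deterministic offset $-\bar{a}$ by $+\bar{a}^2/\underline{a}$, so both hold with a common $C_{off}$.

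The main obstacle is exactly the weak dependence between the batch-$(k+1)$ customers and the price $\Tilde{p}_{j,kB:n}(d)$, which appears in the mean and, quadratically, in the off-diagonal terms; the construction above keeps every invocation of Lemma~\ref{lem:Leave-One-Out} in the regime $|J|\le2$ that the lemma already supplies, by treating $\Tilde{p}_{j,kB:n}(d)$ as the ``full'' empirical dual price of the $j$-deleted subsample and then leaving out only the one or two further customers whose independence is needed, so no leave-three-out analogue is required. What remains to be checked is routine: that the thresholds in the statement ($B>\max\{N_{Dual},N_{LOO,2},\bar{d}/\delta+2\}$ together with $k<K-1$, whence $K-k\ge2$) make all these instances of Lemmas~\ref{lem: UniformDualConvergence} and~\ref{lem:Leave-One-Out} legitimate on the relevant subsamples (of sizes $(K-k)B-1$, $(K-k)B-2$, $(K-k)B-3$, each at least $(K-k-1)B$ once $B\ge3$) and that all constants produced are absolute, so that $C_{off}$ is uniform in $d$, $k$ and $j$.
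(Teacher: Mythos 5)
Your proposal is correct and follows essentially the same route as the paper's proof: both reduce the claim to showing that the batch sum $\sum_{t\ne j}\left(a_t\mathbbm{1}\{r_t> a_t\Tilde{p}_{j,kB:n}(d)\}-d\right)$ has second moment $O(B)$, bounding the diagonal terms by boundedness and each off-diagonal pair by $O(1/((K-k-1)B))$ via a further leave-two-out price, the conditional-density conditioning of Lemma~\ref{Lemma: AuxThm1}(d), Lemma~\ref{lem:Leave-One-Out} with $\abs{J}=2$, and Lemma~\ref{lem: UniformDualConvergence}. The only cosmetic differences are that the paper first re-centers each summand at $p^*(\mathring{d}_k(d))$ with $\mathring{d}_k(d)=\frac{(K-k)B}{(K-k)B-1}d$ (so that the leading piece is exactly i.i.d.\ mean-zero, which is also where the hypothesis $B>\frac{\bar{d}}{\delta}+2$ is used to keep $\mathring{d}_k(d)\in\Omega_d$), whereas you invoke the population identity $\mathbb{E}[a\mathbbm{1}\{r>ap^*(d)\}]=d$ directly on the conditioned product; your separate estimate of the mean of $S_j$ is redundant once the raw off-diagonal moments are bounded.
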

\begin{proof}
    Given $d\in [d_0-\delta,d_0+\delta]$ and $k<K-1$, for any $t_k<j\leq t_{k+1}$,
    \begin{equation}\label{eqn: ResOff0}
        \begin{aligned}
            &\left(\Tilde{d}_{j,k+1}(d)-d\right)^2 \\
            = &\left(\frac{d-\bar{a}}{(K-k-1)B}\right)^2+\frac{1}{((K-k-1)B)^2}\left(\sum_{t\in\{t_k+1,\cdots,t_{k+1}\}\backslash j }d-a_t\mathbbm{1}\{r_t>a_t\Tilde{p}_{j,kB:n}(d)\}\right)^2\\
            &+\frac{2}{((K-k-1)B)^2}\left((d-\bar{a})\left(\sum_{t\in\{t_k+1,\cdots,t_{k+1}\}\backslash j }d-a_t\mathbbm{1}\{r_t>a_t\Tilde{p}_{j,kB:n}(d)\}\right)\right)
        \end{aligned}
    \end{equation}
    In the following, we show that there exists a constant $C_{off,1,1}$ such that, for any $d\in [d_0-\delta,d_0+\delta]$ and $k<K-1$,
    \begin{equation}
        \mathbb{E}\left[\left(\sum_{t\in\{t_k+1,\cdots,t_{k+1}\}\backslash j }d-a_t\mathbbm{1}\{r_t>a_t\Tilde{p}_{j,kB:n}(d)\}\right)^2\right]\leq C_{off,1,1}B\quad\forall t_k<j\leq t_{k+1}
    \end{equation}
    Define
    \begin{equation}
        \mathring{d}_k(d) = \frac{(K-k)B}{(K-k)B-1}d
    \end{equation}
    Then,
    \begin{equation}
        \begin{aligned}
            &\mathbb{E}\left[\left(\sum_{t\in\{t_k+1,\cdots,t_{k+1}\}\backslash j }d-a_t\mathbbm{1}\{r_t>a_t\Tilde{p}_{j,kB:n}(d)\}\right)^2\right]\\
            = &\mathbb{E}\left[\left(\sum_{t\in\{t_k+1,\cdots,t_{k+1}\}\backslash j }d- \mathring{d}_k(d)+\mathring{d}_k(d)-a_t\mathbbm{1}\{r_t>a_tp^*(\mathring{d}_k(d))\}+a_t\mathbbm{1}\{r_t>a_tp^*(\mathring{d}_k(d))\}-a_t\mathbbm{1}\{r_t>a_t\Tilde{p}_{j,kB:n}(d)\}\right)^2\right]
        \end{aligned}
    \end{equation}
    First,
    \begin{equation}\label{eqn: ResOff1}
       \mathbb{E}\left[\left(\sum_{t\in\{t_k+1,\cdots,t_{k+1}\}\backslash j }d- \mathring{d}_k(d)\right)^2\right]\leq\bar{d}^2 
    \end{equation}
    Secondly, since $B>\frac{\bar{d}}{\delta}+2$, $\mathring{d}_k(d)\in\Omega_d$, then
    \begin{equation}\label{eqn: ResOff2}
        \begin{aligned}
            &\mathbb{E}\left[\left(\sum_{t\in\{t_k+1,\cdots,t_{k+1}\}\backslash j }\mathring{d}_k(d)-a_t\mathbbm{1}\{r_t>a_tp^*(\mathring{d}_k(d))\}\right)^2\right]\\
            =&\sum_{t\in\{t_k+1,\cdots,t_{k+1}\}\backslash j}\mathbb{E}\left[\left(\mathring{d}_k(d)-a_t\mathbbm{1}\{r_t>a_tp^*(\mathring{d}_k(d))\}\right)^2\right]\\
            \leq &(\bar{d}+\bar{a})^2B
        \end{aligned}
    \end{equation}
    Thirdly, define
    \begin{equation}
        \Tilde{p}_{k,j,t_1,t_2}(d) = \arg\min_{p\geq 0} \mathring{d}_k(d)p + \frac{1}{(K-k)B-3}\sum_{t\in\{t_k+1,\cdots,t_{k+1}\}\backslash\{j,t_1,t_2\}}(r_t-a_tp)^+
    \end{equation}
    Then,
    \begin{equation}\label{eqn: ResOff3}
        \begin{aligned}
            &\mathbb{E}\left[\left(\sum_{t\in\{t_k+1,\cdots,t_{k+1}\}\backslash j }a_t\mathbbm{1}\{r_t>a_tp^*(\mathring{d}_k(d))\}-a_t\mathbbm{1}\{r_t>a_t\Tilde{p}_{j,kB:n}(d)\}\right)^2\right]\\
            \leq &B\bar{a}^2+\sum_{\stackrel{t_1\neq t_2}{t_1,t_2\in\{t_k+1,\cdots,t_{k+1}\}\backslash j}}\mathbb{E}\left[\left(a_{t_1}\mathbbm{1}\{r_{t_1}>a_{t_1}p^*(\mathring{d}_k(d))\}-\mathbbm{1}\{r_{t_1}>a_{t_1}\Tilde{p}_{k,j,t_1,t_2}(d)\}\right)\right.\cdot\\
            &\qquad\qquad\qquad\qquad\qquad\left.\left(a_{t_2}\mathbbm{1}\{r_{t_2}>a_{t_2}p^*(\mathring{d}_k(d))\}-\mathbbm{1}\{r_{t_2}>a_{t_2}\Tilde{p}_{k,j,t_1,t_2}(d)\}\right)\right]\\
            &+\bar{a}^2\sum_{\stackrel{t_1\neq t_2}{t_1,t_2\in\{t_k+1,\cdots,t_{k+1}\}\backslash j}}\mathbb{E}\left[\mathbbm{1}\left\{\mathbbm{1}\{r_{t_1}>a_{t_1}\Tilde{p}_{k,j,t_1,t_2}(d)\}\neq\mathbbm{1}\{r_{t_1}>a_{t_1}\Tilde{p}_{j,kB:n}(d)\}\right\}\right]\\
            &+\bar{a}^2\sum_{\stackrel{t_1\neq t_2}{t_1,t_2\in\{t_k+1,\cdots,t_{k+1}\}\backslash j}}\mathbb{E}\left[+\mathbbm{1}\left\{\mathbbm{1}\{r_{t_2}>a_{t_2}\Tilde{p}_{k,j,t_1,t_2}(d)\}\neq\mathbbm{1}\{r_{t_2}>a_{t_2}\Tilde{p}_{j,kB:n}(d)\}\right\}\right]\\
            \leq & B\bar{a}^2+B^2\bar{a}^4\beta^2\frac{C_{Dual}}{(K-k)B}+2B^2\bar{a}^2\frac{C_{LOO,2}}{(K-k)B}\\
            \leq &B(\bar{a}^2+\bar{a}^4\beta^2C_{Dual}+2\bar{a}^2C_{LOO,2})
        \end{aligned}
    \end{equation}
    where the second last line is by Lemma \ref{lem: UniformDualConvergence} and Lemma \ref{lem:Leave-One-Out}. By (\ref{eqn: ResOff1}), (\ref{eqn: ResOff2}), (\ref{eqn: ResOff3}) and Cauchy inequality, we can conclude that there exists a constant $C_{off,1,1}$ such that, for any $d\in [d_0-\delta,d_0+\delta]$ and $k<K-1$,
    \begin{equation}
        \mathbb{E}\left[\left(\sum_{t\in\{t_k+1,\cdots,t_{k+1}\}\backslash j }d-a_t\mathbbm{1}\{r_t>a_t\Tilde{p}_{j,kB:n}(d)\}\right)^2\right]\leq C_{off,1,1}B\quad\forall t_k<j\leq t_{k+1}
    \end{equation}
    Then, together with (\ref{eqn: ResOff0}) and Cauchy inequality, we can show that there exists a constant $C_{off,1}$ such that for any $d\in [d_0-\delta,d_0+\delta]$ and $k<K-1$,
    \begin{equation}
        \mathbb{E}\left[\left(\Tilde{d}_{j,k+1}(d)-d\right)^2\right]\leq\frac{C_{off,1}}{(K-k-1)^2B}\quad\forall t_k<j\leq t_{k+1}
    \end{equation}
    Following similar argument, we can show that there exists a constant $C_{off,2}$ such that for any $d\in [d_0-\delta,d_0+\delta]$ and $k<K-1$,
    \begin{equation}
        \mathbb{E}\left[\left(\hat{d}_{j,k+1}(d)-d\right)^2\right]\leq\frac{C_{off,2}}{(K-k-1)^2B}\quad\forall t_k<j\leq t_{k+1}
    \end{equation}
    Let $C_{off} = \max\{C_{off,1}, C_{off,2}\}$, we finish the proof.
\end{proof}
\begin{lem}\label{Lemma: OptimalRS}
    If $B>\Lambda_{DA}$, there exists $\bar{N}$ and $\bar{\delta}$ such that, when $n=KB>\bar{N}$, and $\delta<\bar{\delta}$, and $(KB)^{\frac{3}{4}}\leq (K-k)B\leq \frac{KB}{2}$,
    \begin{equation}
        \mathbb{P}\{d_k\in [d_0-\delta,d_0+\delta]\}\geq 1 - (KB)^{-\frac{1}{2}}
    \end{equation}
\end{lem}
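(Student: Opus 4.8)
The plan is to combine the near-optimality of $\tilde\pi$ with a ``quadratic penalty for a mis-split budget'' and a Chebyshev-type argument. Write $b_{t_k}=(n-kB)d_k$ for the budget $\tilde\pi$ has left after the first $k$ batches, and introduce the random concave (piecewise-linear) functions
\[
  G_k(d)=\max\Bigl\{\textstyle\sum_{j\le t_k}r_jx_j:\ \sum_{j\le t_k}a_jx_j\le b_0-(n-kB)d,\ 0\le x_j\le1\Bigr\},\qquad \Phi(d)=G_k(d)+R^*_{n-kB}(d),
\]
so $G_k(d)$ is the offline value of the first $k$ batches when the leftover budget for the last $K-k$ batches is $(n-kB)d$, and $R^*_{n-kB}(d)$ is the offline value of the last $K-k$ batches with that budget. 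Splitting the offline budget across the two blocks gives $R^*_n=\max_{d\ge 0}\Phi(d)$; let $d_k^{\mathrm{off}}$ denote the maximizer (the offline optimal split).

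\emph{Near-optimality and DP decomposition.} Since $\tilde\pi$ is optimal among online policies that commit at batch ends and $\pi_1$ is such a policy, $\mathbb E[R_n(\tilde\pi)]=V_0(b_0)\ge\mathbb E[R_n(\pi_1)]$; as $B>\Lambda_{DA}$, Theorem~\ref{Thm:RegretBoundDA} yields $\Delta_n(\tilde\pi)\le\Delta_n(\pi_1)\le C_1\log K$. On the other hand the Bellman recursion gives $\mathbb E[R_n(\tilde\pi)]=\mathbb E\bigl[\sum_{j\le t_k}r_jx_j^{\tilde\pi}+V_k(b_{t_k})\bigr]$; bounding the first-$k$-batch reward of $\tilde\pi$ pathwise by $G_k(d_k)$ (its first-$k$ allocation is feasible for that offline LP) and, since online never beats offline on the remaining customers, $\mathbb E[V_k(b_{t_k})]\le\mathbb E[R^*_{n-kB}(d_k)]$, we get $\mathbb E[R_n(\tilde\pi)]\le\mathbb E[\Phi(d_k)]$, while $\mathbb E[R_n^*]=\mathbb E[\Phi(d_k^{\mathrm{off}})]$. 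Hence $\mathbb E\bigl[\Phi(d_k^{\mathrm{off}})-\Phi(d_k)\bigr]\le\Delta_n(\tilde\pi)\le C_1\log K$.

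\emph{Quadratic dip, concentration of the split, and assembly.} Next I would show that on an event $\mathcal G$ of probability $1-O\bigl(1/((K-k)B)\bigr)$ there are constants $c>0$ and $\bar\delta>0$, independent of $n$ and $k$, with $d_k^{\mathrm{off}}\in[d_0-\bar\delta,d_0+\bar\delta]$ and $\Phi(d_k^{\mathrm{off}})-\Phi(d)\ge c\,(n-kB)\,\delta^2$ for every $\delta<\bar\delta$ and every $d$ with $|d-d_k^{\mathrm{off}}|\ge\delta$. Indeed $-\Phi'(d)=(n-kB)\bigl(q_1(d)-q_2(d)\bigr)$, where $q_1(d),q_2(d)$ are the optimal resource-duals of the first-$k$ and last-$(K-k)$ offline LPs at split $d$; by Lemma~\ref{lem: UniformDualConvergence} (applied with sample sizes $kB\ge KB/2$ and $(K-k)B\ge(KB)^{3/4}$) these are uniformly within a small fixed multiple of $\delta$ of $(n-kB)p^*(\cdot)$ on $\Omega_d$ once $n$ is large, and $p^*$ has a Lipschitz slope bounded away from $0$ near $d_0$ by Assumption~\ref{Assumption: DistributionDensity}(c) and Lemma~\ref{lem: BoundedLipschitz}(b); integrating $-\Phi'$ over a half-$\delta$ interval past the maximizer and then using concavity of $\Phi$ to pass from $|d-d_k^{\mathrm{off}}|=\delta$ to all $|d-d_k^{\mathrm{off}}|\ge\delta$ gives the bound. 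Separately, first-order optimality forces $q_1(d_k^{\mathrm{off}})=q_2(d_k^{\mathrm{off}})$, which after Lemma~\ref{lem: UniformDualConvergence} and inversion of the (Lipschitz, non-degenerate) map $p^*$ yields $\mathbb E[(d_k^{\mathrm{off}}-d_0)^2]\le C\bigl(\tfrac1{kB}+\tfrac1{(K-k)B}\bigr)\le C'/((K-k)B)$, hence $\mathbb P(|d_k^{\mathrm{off}}-d_0|>\delta/2)\le 4C'/((K-k)B\delta^2)$. Finally $\{|d_k-d_0|\ge\delta\}\subseteq\{|d_k-d_k^{\mathrm{off}}|\ge\delta/2\}\cup\{|d_k^{\mathrm{off}}-d_0|>\delta/2\}$; on $\mathcal G$ the first event forces $\Phi(d_k^{\mathrm{off}})-\Phi(d_k)\ge c(n-kB)(\delta/2)^2$, so Markov's inequality applied to the bound of the previous paragraph gives $\mathbb P(|d_k-d_k^{\mathrm{off}}|\ge\delta/2)\le \tfrac{4C_1\log K}{c(n-kB)\delta^2}+\mathbb P(\mathcal G^c)$. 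Adding the three terms and using $(n-kB)=(K-k)B\ge(KB)^{3/4}=n^{3/4}$ together with $\log K\le\log n=o(n^{1/4})$, the total drops below $n^{-1/2}=(KB)^{-1/2}$ once $n>\bar N$ (a threshold depending on $\delta,c,C_1,C',C_{Dual}$) and $\delta<\bar\delta$, which is the claim.

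\emph{Main obstacle.} The delicate point is the quadratic dip: establishing, uniformly over $n$ and over $k$ in the stated window, that $\Phi(d_k^{\mathrm{off}})-\Phi(d)\gtrsim(n-kB)\,\delta^2$ for $|d-d_k^{\mathrm{off}}|\ge\delta$, where $\Phi$ is a random concave piecewise-linear function assembled from two LPs of sizes $kB$ and $(K-k)B$. This requires locating the maximizer near $d_0$ and showing that the one-sided slopes of $\Phi$ grow away from the maximizer at an $n$-independent rate — which is precisely where the positive-definite Hessian of the fluid dual objective and the $O(1/\sqrt{(K-k)B})$ rate of Lemma~\ref{lem: UniformDualConvergence} enter — and then checking that the $O((KB)^{-3/8})$ finite-sample fluctuation of the duals is negligible against the fixed $\delta$; this last step is the reason both $kB$ and $(K-k)B$ must be polynomially large, i.e., the reason for the hypothesis $(KB)^{3/4}\le(K-k)B\le KB/2$.
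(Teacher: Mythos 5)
Your proposal is correct and its skeleton coincides with the paper's: both arguments rest on (i) the pathwise bound $R_n(\tilde\pi)\le V_{1,k}+V_{2,k}$ (your $\Phi(d_k)$ is exactly the paper's $V_{1,k}+V_{2,k}$), (ii) a quantitative statement that a budget split $d_k$ far from $d_0$ forces $R_n^*-\Phi(d_k)$ to be polynomially large in $KB$, (iii) the $O(\log K)$ regret of $\tilde\pi$ inherited from Theorem~\ref{Thm:RegretBoundDA} via $\mathbb{E}[R_n(\tilde\pi)]\ge\mathbb{E}[R_n(\pi_1)]$, and (iv) a Markov-inequality step. Where you differ is in step (ii): the paper imports it wholesale as Lemma~7 of \cite{Bray2019}, in the form of a conditional statement ($R_n^*-V_{1,k}-V_{2,k}\ge (KB)^{2/3}$ with conditional probability at least $\tfrac12$ given $d_k\notin[d_0-\delta,d_0+\delta]$) and does not reprove it; you instead reconstruct it from scratch by introducing the concave split function $\Phi$, its maximizer $d_k^{\mathrm{off}}$, a quadratic dip $\Phi(d_k^{\mathrm{off}})-\Phi(d)\gtrsim (K-k)B\,\delta^2$ driven by the positive-definite Hessian of the fluid dual objective together with Lemma~\ref{lem: UniformDualConvergence}, and a separate concentration bound for $d_k^{\mathrm{off}}$ around $d_0$ at rate $((K-k)B)^{-1/2}$. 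Your version is more self-contained and in fact yields a slightly stronger rate ($\log K/(KB)^{3/4}$ versus $\log K/(KB)^{2/3}$), at the cost of having to verify the dip uniformly: in particular you still need to pin $d_k^{\mathrm{off}}$ inside a fixed neighborhood of $d_0$ before the dual-convergence lemma applies (a crude monotonicity or large-deviation argument outside $\Omega_d$), and to check that the first-block dual is evaluated at an average budget $d_0+\tfrac{(n-kB)(d_0-d)}{kB}$ that stays in $\Omega_d$, which is precisely where the hypothesis $kB\ge KB/2$ enters. These are exactly the details that Bray's Lemma~7 packages up, so your route is a legitimate self-contained substitute rather than a shortcut, and I see no gap in it beyond the acknowledged technical work in establishing the dip.
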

\begin{proof}
The proof of Lemma 7 in \cite{Bray2019} can be directly applied here. For completeness, we state the proof again here using our notations. Define
\begin{equation}
    \begin{array}{lll}
       V_{1,k} = &\max & \sum_{j=1}^{t_k} r_jx_j  \\
        &S.T &\sum_{j=1}^{t_k} a_jx_j \leq b_0 - (n-t_k)d_k  \\
        & & 0\leq x_j\leq 1\quad\forall 1\leq j\leq t_k  \\
    \end{array}
\end{equation}
and
\begin{equation}
    \begin{array}{lll}
       V_{2,k} = &\max & \sum_{j=t_k+1}^{n} r_jx_j  \\
        &S.T &\sum_{j=1}^{t_k} a_jx_j \leq t_k d_k  \\
        & & 0\leq x_j\leq 1\quad\forall t_k + 1\leq j\leq n  \\
    \end{array}
\end{equation}
Then,
\begin{equation}
    R_n(\Tilde{\pi})\leq V_{1,k} + V_{2,k}
\end{equation}
An useful conclusion from the proof of Lemma 7 in \cite{Bray2019} is that there exists $\bar{N}$ and $\bar{\delta}$ such that, when $n=KB>\bar{N}$, and $\delta<\bar{\delta}$, and $(KB)^{\frac{3}{4}}\leq (K-k)B\leq \frac{KB}{2}$,
\begin{equation}
    \mathbb{P}\left\{R_n^*-V_{1,k} - V_{2,k}\geq (KB)^{\frac{2}{3}}\bigg\rvert d_k\notin [d_0-\delta,d_0+\delta]\right\}\geq \frac{1}{2}
\end{equation}
Then,
\begin{equation}
    \begin{aligned}
        &\mathbb{E}\left[R_n^*-R_n(\Tilde{\pi})\right]\\
        \geq &(KB)^{\frac{2}{3}}\mathbb{P}\left\{ d_k\notin [d_0-\delta,d_0+\delta]\right\}\mathbb{P}\left\{R_n^*-R_n(\Tilde{\pi})\geq (KB)^{\frac{2}{3}}\bigg\rvert d_k\notin [d_0-\delta,d_0+\delta]\right\}\\
        \geq&(KB)^{\frac{2}{3}}\mathbb{P}\left\{ d_k\notin [d_0-\delta,d_0+\delta]\right\}\mathbb{P}\left\{R_n^*-V_{1,k} - V_{2,k}\geq (KB)^{\frac{2}{3}}\bigg\rvert d_k\notin [d_0-\delta,d_0+\delta]\right\}\\
        \geq &\frac{(KB)^{\frac{2}{3}}}{2}\mathbb{P}\left\{ d_k\notin [d_0-\delta,d_0+\delta]\right\}
    \end{aligned}
\end{equation}
Together with the fact that
\begin{equation}
    \mathbb{E}\left[R_n^*-R_n(\Tilde{\pi})\right]\leq O(\log K)
\end{equation}
We can conclude that, when $n=KB>\bar{N}$, and $\delta<\bar{\delta}$, and $(KB)^{\frac{3}{4}}\leq (K-k)B\leq \frac{KB}{2}$,
\begin{equation}
    \mathbb{P}\{d_k\in [d_0-\delta,d_0+\delta]\}\geq 1 - (KB)^{-\frac{1}{2}}
\end{equation}
\end{proof}
\subsection{Proof of Theorem~\ref{Thm:RegretBoundDAMulti}}
\begin{proof}
Fix an integer $B$ such that
\begin{equation}
    B>\Lambda_{MultiDA}=\max\left\{\frac{\bar{a}}{\underline{d}},N_{Dual}\right\}
\end{equation}
Let $e_1,\cdots,e_m$ be the standard basis of $\mathbb{R}^m$, i.e., the $i$th component of $e_i$ equals 1 and all the other components equal 0. Fix $\delta_d$ such that
\begin{equation}\begin{aligned}
    \otimes_{i=1}^m [{e_i}^Td_0-\delta_d,{e_i}^Td_0+\delta_d]\subseteq\Omega_d
\end{aligned}\end{equation}
Let $d_k=\frac{b_{t_k}}{n-t_k}$, and define 
\begin{equation}
    \bar{\kappa}=\min \{K\}\cup\left\{k:\exists i\;s.t.\;{e_i}^Td_k\notin \left[{e_i}^Td_0-\delta_d,{e_i}^Td_0+\delta_d\right]\right\}
\end{equation}
To simplify the notation, $p^*$ in this section refers to the population dual price evaluated at the initial average resource $d_0$, i.e., $p^*=p^*(d_0)$. Define $d'_0=d_0$, and for $1\leq k\leq K-2$,
\begin{equation}
    d'_{k+1} = \frac{(n-t_k)d'_k-\sum_{t=t_{k}+1}^{t_{k+1}}a_t\mathbbm{1}\{r_t>a_tp_{k+1}\}}{n-t_{k+1}}\mathbbm{1}\{k<\bar{\kappa}\}+d'_k\mathbbm{1}\{k\geq\bar{\kappa}\}
\end{equation}
Then, $d'_k=d_k$ if $k<\bar{\kappa}$. Define $R_{n-kB}^*(d)$, $p_{kB:n}^*(d)$ and $R_{n-kB}(\pi_2)$ in the same way as in Theorem \ref{thm: RegretLowerBound}.
\begin{equation}
    \begin{aligned}
        &R_n^*-R_n(\pi_2)\\
        =&\max_{\stackrel{(x_1,\cdots,x_{t_1})\in [0,1]^B}{\sum_{t=1}^{t_1}a_tx_t\leq nd_0}}\sum_{t=1}^{t_1}r_tx_t + R_{n-B}^*\left(\frac{1}{n-B}\left(nd_0-\sum_{t=1}^{t_1}a_tx_t\right)\right)-\sum_{t=1}^{t_1}r_tx_{t}^{\pi_2}-R_{n-B}(\pi_2)\\
        =&\max_{\stackrel{(x_1,\cdots,x_{t_1})\in [0,1]^B}{\sum_{t=1}^{t_1}a_tx_t\leq nd_0}}\sum_{t=1}^{t_1}r_t(x_t-x_t^{\pi_2}) + R_{n-B}^*\left(\frac{1}{n-B}\left(nd_0-\sum_{t=1}^{t_1}a_tx_t\right)\right)-R_{n-B}^*(d_1)+R_{n-B}^*(d_1)-R_{n-B}(\pi_2)\\
    \end{aligned}
\end{equation}
By LP Duality Theory,
\begin{equation}
    \begin{aligned}
         &R_{n-B}^*\left(\frac{1}{n-B}\left(nd_0-\sum_{t=1}^{t_1}a_tx_t\right)\right)-R_{n-B}^*(d_1)\leq \sum_{t=1}^{t_1}(x_t^{\pi_2}-x_t){a_t}^Tp_{B:n}^*(d_1)
    \end{aligned}
\end{equation}
Then,
\begin{equation}
    \begin{aligned}
        &R_n^*-R_n(\pi_2)\\
        \leq &\max_{\stackrel{(x_1,\cdots,x_{t_1})\in [0,1]^B}{\sum_{t=1}^{t_1}a_tx_t\leq nd_0}}\sum_{t=1}^{t_1}(r_t-{a_t}^Tp_{B:n}^*(d_1))(x_t-x_t^{\pi_2})+(R_{n-B}^*(d_1)-R_{n-B}(\pi_2))\\
        \leq &\max_{(x_1,\cdots,x_{t_1})\in [0,1]^B}\sum_{t=1}^{t_1}(r_t-{a_t}^Tp_{B:n}^*(d_1))(x_t-x_t^{\pi_2})+(R_{n-B}^*(d_1)-R_{n-B}(\pi_2))\\
        =&\sum_{t=1}^{t_1}(r_t-{a_t}^Tp_{B:n}^*(d_1))(\mathbbm{1}\{r_t>{a_t}^Tp_{B:n}^*(d_1)\}\mathbbm{1}\{x_t^{\pi_2}=0\}-\mathbbm{1}\{r_t\leq{a_t}^Tp_{B:n}^*(d_1)\}\mathbbm{1}\{x_t^{\pi_2}=1\})+(R_{n-B}^*(d_1)-R_{n-B}(\pi_2))
    \end{aligned}
\end{equation}
In a similar way, for all $k=0,\cdots,K-2$
\begin{equation}
    \begin{aligned}
        &R_{n-kB}^*(d_k)-R_{n-kB}(\pi_2)\\
        \leq & \sum_{t=t_{k}+1}^{t_{k+1}}(r_t-{a_t}^Tp_{(k+1)B:n}^*(d_{k+1}))(\mathbbm{1}\{r_t>{a_t}^Tp_{(k+1)B:n}^*(d_{k+1})\}\mathbbm{1}\{x_t^{\pi_2}=0\}-\mathbbm{1}\{r_t\leq{a_t}^Tp_{(k+1)B:n}^*(d_{k+1})\}\mathbbm{1}\{x_t^{\pi_2}=1\})\\
        &+(R_{n-(k+1)B}^*(d_{k+1})-R_{n-(k+1)B}(\pi_2))
    \end{aligned}
\end{equation}
Since the algorithm allocate the remaining resource in the last batch in an offline fashion,
\begin{equation}
    R_{n-(K-1)B}^*(d_{K-1})-R_{n-(K-1)B}(\pi_2)\leq O(1)
\end{equation}
This assumption is not necessary but will simplify the proof. Then, by the definition of $\bar{\kappa}$,
\begin{equation}
    \begin{aligned}
        &\mathbb{E}\left[R_n^*-R_n(\pi_2)\right]\\
        \leq &\mathbb{E}\left[\sum_{k=0}^{\bar{\kappa}-2}\sum_{t=t_{k}+1}^{t_{k+1}}(r_t-{a_t}^Tp_{(k+1)B:n}^*(d_{k+1}))(\mathbbm{1}\{{a_t}^Tp_{(k+1)B:n}^*(d_{k+1})<r_t\leq {a_t}^Tp_{k+1}\}-\mathbbm{1}\{{a_t}^Tp_{k+1}<r_t\leq{a_t}^Tp_{(k+1)B:n}^*(d_{k+1})\})\right]\\
        &+\left(\bar{r}+\frac{\bar{a}\bar{r}}{\underline{a}}\right)(t_{K-1}-t_{\bar{\kappa}-1})+O(1)
    \end{aligned}
\end{equation}
Define $\Tilde{d}_k$ such that, for each $1\leq i\leq m$ and $k=1,\cdots, K-1$,
\begin{equation}
    {e_i}^T\Tilde{d}_{k} = \min\{\max\{{e_i}^Td'_k,{e_i}^Td_0-\delta_d\},{e_i}^Td_0+\delta_d\}
\end{equation}
and for $t=1,\cdots, n$, define
\begin{equation}
    q_t = p_1\mathbbm{1}\{1\leq t\leq t_1\} + \sum_{k=1}^{K-2}\mathbbm{1}\{t_k<t\leq t_{k+1}\}\left(\arg\min_{p\geq 0}{\Tilde{d}_{k}}^Tp+\frac{1}{t_{k}}\sum_{j=1}^{t_k}(r_j-{a_j}^Tp)^+\right)
\end{equation}
Then, by the fact that
\begin{equation}
    (r_t-{a_t}^Tp_{(k+1)B:n}^*(d_{k+1}))(\mathbbm{1}\{{a_t}^Tp_{(k+1)B:n}^*(d_{k+1})<r_t\leq {a_t}^Tp\}-\mathbbm{1}\{{a_t}^Tp<r_t\leq{a_t}^Tp_{(k+1)B:n}^*(d_{k+1})\})\geq 0\;a.s.\quad\forall p
\end{equation}
and
\begin{equation}
    d_k = d'_k = \Tilde{d}_k\quad\forall k<\bar{\kappa}
\end{equation}
We have the following regret decomposition
\begin{equation}\label{eqn: DARegretBound}
    \begin{aligned}
        &\mathbb{E}\left[R_n^*-R_n(\pi_2)\right]\\
        \leq & \sum_{k=0}^{K-2}\sum_{t=t_{k}+1}^{t_{k+1}}\mathbb{E}[(r_t-{a_t}^Tp_{(k+1)B:n}^*(\Tilde{d}_{k+1}))(\mathbbm{1}\{{a_t}^Tp_{(k+1)B:n}^*(\Tilde{d}_{k+1})<r_t\leq {a_t}^Tq_t\}-\mathbbm{1}\{{a_t}^Tq_t<r_t\leq{a_t}^Tp_{(k+1)B:n}^*(\Tilde{d}_{k+1})\})]\\
        +&\left(\bar{r}+\frac{\bar{a}\bar{r}}{\underline{a}}\right)(t_{K-1}-t_{\bar{\kappa}-1})+O(1)
    \end{aligned}
\end{equation}
From now on, fix $0\leq k\leq K-2$ and $t_k<t\leq t_{k+1}$. Note that there is a weak dependence $(r_t,a_t)$ and $\Tilde{d}_{k+1}$. To deal with this weak dependence, we define the Leave-One-Out remaining average resource $d_t$ such that for all $1\leq i\leq m$,
\begin{equation}\begin{aligned}
    {e_i}^Td_t =\mathbbm{1}\{k<\bar{\kappa}\}\min\left\{\max\left\{{e_i}^Td'_{k+1}+\frac{{e_i}^Ta_t\mathbbm{1}\{r_t>{a_t}^Tq_t\}}{(K-k-1)B},{e_i}^Td_0-\delta_d\right\},{e_i}^Td_0+\delta_d\right\}+{e_i}^T\Tilde{d}_k\mathbbm{1}\{k\geq\bar{\kappa}\} 
\end{aligned}\end{equation}
Then, 
\begin{equation}
    \abs{{e_i}^Td_t-{e_i}^T{\Tilde{d}_{k+1}}}\leq\frac{\bar{a}}{(K-k-1)B}
\end{equation}
Thus,
\begin{equation}
    \begin{aligned}
        &{a_t}^Tp_{(k+1)B:n}^*(\Tilde{d}_{k+1})\\
        =&{a_t}^Tp_{(k+1)B:n}^*(d_t)-{a_t}^T(p_{(k+1)B:n}^*(d_t)-p^*(d_t))-{a_t}^T(p^*(d_t)-p^*(\Tilde{d}_{k+1}))-{a_t}^T(p^*(\Tilde{d}_{k+1})-p_{(k+1)B:n}^*(\Tilde{d}_{k+1}))\\
        \geq &{a_t}^Tp_{(k+1)B:n}^*(d_t)-2\bar{a}\left(\sup_{d\in\Omega_d}\norm{p_{(k+1)B:n}^*(d)-p^*(d)}_2\right)-\bar{a}\sqrt{L}\norm{\Tilde{d}_{k+1}-d_t}_2\\
        \geq &{a_t}^Tp_{(k+1)B:n}^*(d_t)-2\bar{a}\left(\sup_{d\in\Omega_d}\norm{p_{(k+1)B:n}^*(d)-p^*(d)}_2\right)-\frac{\bar{a}^2\sqrt{mL}}{(K-k-1)B}
    \end{aligned}
\end{equation}
Thus,
\begin{equation}
    \begin{aligned}
        &\mathbb{E}\left[(r_t-{a_t}^Tp_{(k+1)B:T}^*(\Tilde{d}_{k+1}))\mathbbm{1}\{{a_t}^Tp_{{k+1}B:T}^*(\Tilde{d}_{k+1})<r_t\leq {a_t}^Tq_t\}\right]\\
        \leq &\mathbb{E}\left[\left({a_t}^T(q_t-p_{{k+1}B:T}^*(d_t))+2\bar{a}\left(\sup_{d\in\Omega_d}\norm{p_{(K-k-1)B}^*(d)-p^*(d)}_2\right)+\frac{\bar{a}^2\sqrt{mL}}{(K-k-1)B}\right)\cdot\right.\\
        &\left.\mathbbm{1}\left\{{a_t}^Tp_{{k+1}B:T}^*(d_t)-2\bar{a}\left(\sup_{d\in\Omega_d}\norm{p_{(K-k-1)B}^*(d)-p^*(d)}_2\right)-\frac{\bar{a}^2\sqrt{mL}}{(K-k-1)B}\leq r_t\leq {a_t}^Tq_t\right\}\right]
    \end{aligned}
\end{equation}
By our construction of $q_t$ and $d_t$, we have $(r_t,a_t)\ind (q_t, p_{{k+1}B:T}^*(d_t))$. Then, by Assumption \ref{Assumption: DistributionDensity}(b),
\begin{equation}\label{eqn: MultiMypoic}
    \begin{aligned}
        &\mathbb{E}\left[(r_t-{a_t}^Tp_{(k+1)B:T}^*(\Tilde{d}_{k+1}))\mathbbm{1}\{{a_t}^Tp_{{k+1}B:T}^*(\Tilde{d}_{k+1})<r_t\leq {a_t}^Tq_t\}\right]\\
        \leq&\mathbb{E}\left[\beta\left(\bar{a}\norm{q_t-p^*_{(k+1)B:T}(d_t)}_2+2\bar{a}\left(\sup_{d\in\Omega_d}\norm{p_{(K-k-1)B}^*(d)-p^*(d)}_2\right)+\frac{\bar{a}^2\sqrt{mL}}{(K-k-1)B}\right)^2\right]
    \end{aligned}
\end{equation}
By Lemma \ref{lem: UniformDualConvergence},
\begin{equation}\label{eqn: MultiMypoic1}
    \mathbb{E}\left[\sup_{d\in\Omega_d}\norm{p_{(K-k-1)B}^*(d)-p^*(d)}_2^2\right]\leq \frac{C_{Dual}}{(K-k-1)B}
\end{equation}
In addition,
\begin{equation}
    \begin{aligned}
        &\mathbb{E}\left[\norm{q_t-p^*_{(k+1)B:T}(d_t)}_2^2\right]\\
        \leq& \mathbb{E}\left[\left(\norm{q_t-p^*(\Tilde{d}_k)}_2+\norm{p^*(\Tilde{d}_k)-p^*(\Tilde{d}_{k+1})}_2+\norm{p^*(\Tilde{d}_{k+1})-p^*(d_t)}_2+\norm{p^*(d_t)-p^*_{(k+1)B:T}(d_t)}_2\right)^2\right]
    \end{aligned}
\end{equation}
If $k=0$,
\begin{equation}
    \begin{aligned}
        \mathbb{E}\left[\norm{q_t-p^*(\Tilde{d}_k)}_2^2\right]\leq\mathbb{E}\left[\sup_{d\in\Omega_d}\norm{p_{B}^*(d)-p^*(d)}_2^2\right]\leq \frac{C_{Dual}}{B} 
    \end{aligned}
\end{equation}
If $k\geq 1$,
\begin{equation}
    \begin{aligned}
        \mathbb{E}\left[\norm{q_t-p^*(\Tilde{d}_k)}_2^2\right]\leq\mathbb{E}\left[\sup_{d\in\Omega_d}\norm{p_{kB}^*(d)-p^*(d)}_2^2\right]\leq \frac{C_{Dual}}{kB} 
    \end{aligned}
\end{equation}
Also,
\begin{equation}
    \begin{aligned}
        \mathbb{E}\left[\norm{p^*(d_t)-p^*_{(k+1)B:T}(d_t)}_2^2\right]\leq \mathbb{E}\left[\sup_{d\in\Omega_d}\norm{p_{(K-k-1)B}^*(d)-p^*(d)}_2^2\right]\leq\frac{C_{Dual}}{(K-k-1)B}
    \end{aligned}
\end{equation}
and
\begin{equation}
    \mathbb{E}\left[\norm{p^*(\Tilde{d}_{k+1})-p^*(d_t)}_2^2\right]\leq \frac{Lm\bar{a}^2}{(K-k-1)^2B^2}
\end{equation}
Furthermore,
\begin{equation}
    \mathbb{E}\left[\norm{p^*(\Tilde{d}_{k})-p^*(\Tilde{d}_{k+1})}_2^2\right]\leq L\mathbb{E}\left[\norm{\Tilde{d}_{k+1}-\Tilde{d}_{k}}_2^2\right]\leq L\mathbb{E}\left[\norm{d'_{k+1}-d'_{k}}_2^2\right] 
\end{equation}
By definition,
\begin{equation}
    d'_{k+1}-d'_k = \frac{1}{(K-k-1)B}\left(\sum_{t=t_k+1}^{t_{k+1}}d'_k-a_t\mathbbm{1}\{r_t>{a_t}^Tp^*(d'_k)\}+\sum_{t=t_k+1}^{t_{k+1}}a_t(\mathbbm{1}\{r_t>{a_t}^Tp^*(d'_k)\}-\mathbbm{1}\{r_t>{a_t}^Tp_{k+1}\})\right)\mathbbm{1}\{k<\bar{\kappa}\}
\end{equation}
Following the analysis of the first and the second component in Lemma \ref{lem: DARes}, there exists a constant $C_{res}$ such that
\begin{equation}
    \mathbb{E}\left[\norm{d'_{k+1}-d'_{k}}_2^2\right]\leq\frac{C_{res}}{(K-k-1)^2B}
\end{equation}
Thus, there exists a constant $C_1$ such that
\begin{equation}\label{eqn: MultiMypoic2}
    \mathbb{E}\left[\norm{q_t-p^*_{(k+1)B:T}(d_t)}_2^2\right]\leq \frac{C_1}{B}\mathbbm{1}\{k=0\}+\frac{C_1}{kB}\mathbbm{1}\{k\geq 1\} + \frac{C_1}{(K-k-1)B}
\end{equation}
Then, by (\ref{eqn: MultiMypoic}), (\ref{eqn: MultiMypoic1}), (\ref{eqn: MultiMypoic2}) and Cauhy-Schwarz Inequality, we can conclude that there exists a constant $C$ such that,
\begin{equation}
    \mathbb{E}\left[(r_t-{a_t}^Tp_{(k+1)B:T}^*(\Tilde{d}_{k+1}))\mathbbm{1}\{{a_t}^Tp_{{k+1}B:T}^*(\Tilde{d}_{k+1})<r_t\leq {a_t}^Tq_t\}\right]\leq \frac{C}{B}\mathbbm{1}\{k=0\}+\frac{C}{kB}\mathbbm{1}\{k\geq 1\} + \frac{C}{(K-k-1)B}
\end{equation}
Similarly,
\begin{equation}
    \mathbb{E}\left[({a_t}^Tp_{(k+1)B:T}^*(\Tilde{d}_{k+1})-r_t)\mathbbm{1}\{ {a_t}^Tq_t<r_t\leq{a_t}^Tp_{{k+1}B:T}^*(\Tilde{d}_{k+1})\}\right]\leq \frac{C}{B}\mathbbm{1}\{k=0\}+\frac{C}{kB}\mathbbm{1}\{k\geq 1\} + \frac{C}{(K-k-1)B}
\end{equation}
Thus,
\begin{equation}\label{eqn: MuliRegret1}
    \begin{aligned}
        &\sum_{k=0}^{K-2}\sum_{t=t_{k}+1}^{t_{k+1}}\mathbb{E}[(r_t-{a_t}^Tp_{(k+1)B:T}^*(\Tilde{d}_{k+1}))(\mathbbm{1}\{{a_t}^Tp_{{k+1}B:T}^*(\Tilde{d}_{k+1})<r_t\leq {a_t}^Tq_t\}-\mathbbm{1}\{{a_t}^Tq_t<r_t\leq{a_t}^Tp_{(k+1)B:T}^*(\Tilde{d}_{k+1})\})]\\
        \leq &\sum_{k=0}^{K-2} 2B\left(\frac{C}{B}\mathbbm{1}\{k=0\}+\frac{C}{kB}\mathbbm{1}\{k\geq 1\} + \frac{C}{(K-k-1)B}\right)\\
        \leq &O(\log K)
    \end{aligned}
\end{equation}
For the second term in the regret bound (\ref{eqn: DARegretBound}),
\begin{equation}\begin{aligned}
    \mathbb{P}\{\bar{\kappa}>k\}&\geq\mathbb{P}\left\{\abs{{e_i}^Td'_k-{e_i}^Td_0}\leq\delta_d\;\forall i\right\}\\
    &\geq 1-\sum_{i=1}^m\frac{\mathbb{E}[\abs{{e_i}^Td'_k-{e_i}^Td_0}^2]}{\delta_d^2}
\end{aligned}\end{equation}
By almost the same argument in the proof of Lemma \ref{lem: DARes}, we can show that there exists a constant $C_{MultiRes}$ such that, for all $1\leq i\leq m$,
\begin{equation}
    \sum_{k=1}^{K-1}\mathbb{E}\left[\abs{{e_i}^Td'_k-{e_i}^Td_0}^2\right]\leq\frac{C_{MultiRes}\log K}{B}
\end{equation}
Thus,
\begin{equation}\begin{aligned}
    \mathbb{E}[K-\bar{\kappa}]=&(K-1)-\sum_{k=1}^{K-1}\mathbb{P}\{\bar{\kappa}>k\}\\
    \leq &\sum_{k=1}^{K-1}\sum_{i=1}^{m}\frac{\mathbb{E}[\abs{{e_i}^Td'_k-{e_i}^Td_0}^2]}{\delta_d^2}\\
    \leq &\frac{mC_{MultiRes}\log(K)}{\delta_d^2B}
\end{aligned}\end{equation}
Then,
\begin{equation}\begin{aligned}\label{eqn: MuliRegret2}
    &\left(\bar{r}+\frac{\bar{a}\bar{r}}{\underline{a}}\right)(t_{K-1}-t_{\bar{\kappa}-1})\\
    \leq &\left(\bar{r}+\frac{\bar{a}\bar{r}}{\underline{a}}\right)\mathbb{E}[K-\bar{\kappa}]B\\
    \leq &O(\log K)
\end{aligned}\end{equation}
(\ref{eqn: DARegretBound}), (\ref{eqn: MuliRegret1}) and (\ref{eqn: MuliRegret2}) together shows that
\begin{equation}
    \mathbb{E}\left[R_n^*-R_n(\pi_2)\right]\leq O(\log K)
\end{equation}
which completes the proof.
\end{proof}
\subsection{Proof of Theorem~\ref{Thm: RegretPA}}
Choose $\delta_d>0$ such that
\begin{align}
    &[d_0 - 2\delta_d, d_0 + 2\delta_d]\subseteq\Omega_d\;\mathrm{and}\;\delta_d<\frac{\bar{d}}{2}
\end{align}
Fix $T,\lambda$,$B$ such that $\lambda B>\Lambda_{PA}=\max\{\frac{\bar{a}}{\underline{d}},\frac{\bar{d}-2\delta_d}{2\delta_d}\}$. To simplify the notation, we write $x^{\pi_3}_j$ as $x_j$, and we define $p^*$ to be the population dual price evaluated at $\frac{b_0}{\lambda T}$, i.e., $p^*=p^*\left(\frac{b_0}{\lambda T}\right)$. Then, similar to the proof of Theorem \ref{Thm:RegretBoundDA},
\begin{equation}\begin{aligned}
    \mathbb{E}\left[R_T^*\right] = & \mathbb{E}\left[\min_{p\geq 0} b_0p + \sum_{j=1}^{N(T)}(r_j-a_jp)^+\right]\\
    \leq & \mathbb{E}\left[b_0p^*+\sum_{j=1}^{N(T)}(r_j-a_jp^*)^+\right]
\end{aligned}\end{equation}
and
\begin{equation}\begin{aligned}
    \mathbb{E}\left[R_T(\pi_2)\right] = & \mathbb{E}\left[\sum_{j=1}^{N(T)}r_jx_j\right]\\
    = & \mathbb{E}\left[\sum_{j=1}^{N(T)}r_jx_j + b_{N(T)}p^* - b_{N(T)}p^*\right]\\
    = & \mathbb{E}\left[\sum_{j=1}^{N(T)}r_jx_j + \left(b_0-\sum_{j=1}^{N(T)}a_jx_j\right)p^*\right]-\mathbb{E}\left[b_{N(T)}p^*\right]\\
\end{aligned}\end{equation}
Thus,
\begin{equation}\begin{aligned}
    &\mathbb{E}\left[R_T^*\right] - \mathbb{E}\left[R_T(\pi_3)\right]\\
    \leq &\mathbb{E}\left[\sum_{j=1}^{N(T)}\left(r_j-a_jp^*\right)\left(\mathbbm{1}\left\{r_j>a_jp^*\right\}-x_j\right)\right]+\mathbb{E}\left[b_{N(T)}p^*\right]
\end{aligned}\end{equation}
Define, for $k=0,\cdots,K-1$, 
\begin{equation}
    d_k = \frac{b_{N(t_k)}}{\lambda(T-t_k)}
\end{equation}
and
\begin{equation}
    \left\{
    \begin{aligned}
        &\bar{\tau} = \inf\{t\geq 0:b_{N(t)}<\bar{a}\}\\
        &\bar{\kappa} = \min{K}\cup\{k\geq 1: d_k\notin [d_0-\delta_d,d_0+\delta_d]\}\\
        &\tau = \min\{\bar{\tau},t_{\bar{\kappa}},t_{K-1}\}
    \end{aligned}
    \right.
\end{equation}
In addition, define $d'_0=d_0$ and
\begin{equation}
    d'_{k+1} = \frac{\lambda(T-t_k)d'_k-\sum_{j=N(t_k)+1}^{N(t_{k+1})}a_j\mathbbm{1}\{r_j>a_jp_{k+1}\}}{\lambda(T-t_{k+1})}\mathbbm{1}\{k<\bar{\kappa}\} + d'_k\mathbbm{1}\{k\geq\bar{\kappa}\}
\end{equation}
Then,
\begin{equation}
    d'_k\mathbbm{1}\{k<\bar{\kappa}\} = d_k\mathbbm{1}\{k<\bar{\kappa}\}
\end{equation}
and  let $\Tilde{d}_k$ be defined in Algorithm \ref{alg: RaAhdLA}, then
\begin{equation}
    \Tilde{d}_k\mathbbm{1}\{k<\bar{\kappa}\} = \frac{\lambda}{\hat{\lambda}_k}d'_k\mathbbm{1}\{k<\bar{\kappa}\}
\end{equation}
Then,
\begin{align}
    &\mathbb{E}\left[R_T^*\right] - \mathbb{E}\left[R_T(\pi_3)\right]\nonumber\\
    \leq &\mathbb{E}\left[\sum_{j=1}^{N(\tau)}\left(r_j-a_jp^*\right)\left(\mathbbm{1}\left\{r_j>a_jp^*\right\}-x_j\right)\right]\label{eqn: PAComp1}\\
    &+\mathbb{E}\left[\sum_{j=N(\tau)+1}^{N(t_{K-1})}\left(r_j-a_jp^*\right)\left(\mathbbm{1}\left\{r_j>a_jp^*\right\}-x_j\right)\right]\label{eqn: PAComp2}\\
    &+\mathbb{E}\left[\sum_{j=N(t_{K-1})+1}^{N(T)}\left(r_j-a_jp^*\right)\left(\mathbbm{1}\left\{r_j>a_jp^*\right\}-\mathbbm{1}\left\{r_j>a_jp_K\right\}\right)\right]\label{eqn: PAComp3}\\
    &+\mathbb{E}\left[b_{N(T)}p^*\right]\label{eqn: PAComp4}
\end{align}
In the following, we analyze the four terms on the right-hand-side of the above inequality. For (\ref{eqn: PAComp1}), define, for any $j\in\mathbb{N}^+$,
\begin{equation}
    n(j) = N(t_1)\mathbbm{1}\{j\leq N(t_1)\} + \sum_{k=2}^{K}N(t_{k-1})\mathbbm{1}\{N(t_{k-1})<j\leq N(t_{k})\}
\end{equation}
and
\begin{equation}
    d_j = \sum_{k=1}^{K}\left(\Tilde{d}_{k-1}\mathbbm{1}\{N(t_{k-1})<j\leq N(t_{k})\}\mathbbm{1}\{k\leq\bar{\kappa}\}+d_0\mathbbm{1}\{N(t_{k-1})<j\leq N(t_{k})\}\mathbbm{1}\{k>\bar{\kappa}\}\right)
\end{equation}
and
\begin{equation}\begin{aligned}
    p_j = \left(\arg\min_{0\leq p\leq\frac{\bar{r}}{\underline{d}}}n(j)d_jp+\sum_{j=1}^{n(j)}(r_j-a_jp)^+\right)\mathbbm{1}\{j\leq N(t_{K-1})\}+p_K\mathbbm{1}\{N(t_{K-1})<j\leq N(T)\}
\end{aligned}\end{equation}
In addition, define
\begin{equation}
    J_k(j) = \min\{\max\{j,N(t_{k-1})+1\},N(t_k)\}
\end{equation}
Then,
\begin{equation}\begin{aligned}
    &\mathbb{E}\left[\sum_{j=1}^{N(\tau)}\left(r_j-a_jp^*\right)\left(\mathbbm{1}\left\{r_j>a_jp^*\right\}-x_j\right)\right]\\
    \leq &\mathbb{E}\left[\sum_{j=1}^{N(t_{K-1})}\left(r_j-a_jp^*\right)\left(\mathbbm{1}\left\{r_j>a_jp^*\right\}-\mathbbm{1}\left\{r_j>a_jp_j\right\}\right)\right]\\
    = &\sum_{k=1}^{K-1}\mathbb{E}\left[\sum_{j=1}^{+\infty}\left(r_{J_k(j)}-a_{J_k(j)}p^*\right)\left(\mathbbm{1}\left\{r_{J_k(j)}>a_{J_k(j)}p^*\right\}-\mathbbm{1}\left\{r_{J_k(j)}>a_{J_k(j)}p_{J_k(j)}\right\}\right)\mathbbm{1}\{N(t_{k-1})<j\leq N(t_k)\}\right]
\end{aligned}\end{equation}
If $k\geq 2$, by Assumption \ref{Assumption: Random Arrival}, $\left(r_{J_k(j)},a_{J_k(j)}\right)$ is independent of $\left(p_{J_k(j)},N(t_{k-1}),N(t_k)\right)$ for all $j$. Then, by Lemma \ref{Lemma: AuxThm4}(a),
\begin{equation}
    \begin{aligned}
        &\mathbb{E}\left[\sum_{j=1}^{+\infty}\left(r_{J_k(j)}-a_{J_k(j)}p^*\right)\left(\mathbbm{1}\left\{r_{J_k(j)}>a_{J_k(j)}p^*\right\}-\mathbbm{1}\left\{r_{J_k(j)}>a_{J_k(j)}p_{J_k(j)}\right\}\right)\mathbbm{1}\{N(t_{k-1})<j\leq N(t_k)\}\right]\\
        =&\mathbb{E}\left[\sum_{j=1}^{+\infty}\mathbb{E}\left[\left(r_{J_k(j)}-a_{J_k(j)}p^*\right)\left(\mathbbm{1}\left\{r_{J_k(j)}>a_{J_k(j)}p^*\right\}-\right.\right.\right.\\
        &\qquad\qquad\qquad\qquad\qquad\qquad\quad\left.\left.\left.\mathbbm{1}\left\{r_{J_k(j)}>a_{J_k(j)}p_{J_k(j)}\right\}\right)\bigg\rvert N(t_{k-1}),N(t_k)\right]\mathbbm{1}\{N(t_{k-1})<j\leq N(t_k)\}\right]\\
        \leq& \beta\bar{a}^2\mathbb{E}\left[\sum_{j=1}^{+\infty}\left(p_{J_k(j)}-p^*\right)^2\mathbbm{1}\left\{N(t_{k-1})<j\leq N(t_k)\right\}\right]
    \end{aligned}
\end{equation}
If $k = 1$, define
\begin{equation}
    \Tilde{p}_j = \arg\min_{0\leq p\leq\frac{\bar{r}}{\underline{d}}}dp+\sum_{j\in\{1,\cdots,n(j)\}\backslash\{j\}}(r_j-a_jp)^+
\end{equation}
Then,
\begin{equation}\begin{aligned}
    &\mathbb{E}\left[\sum_{j=1}^{+\infty}\left(r_{J_1(j)}-a_{J_1(j)}p^*\right)\left(\mathbbm{1}\left\{r_{J_1(j)}>a_{J_1(j)}p^*\right\}-\mathbbm{1}\left\{r_{J_1(j)}>a_{J_1(j)}p_{J_1(j)}\right\}\right)\mathbbm{1}\{N(t_{0})<j\leq N(t_1)\}\right]\\
    \leq &\mathbb{E}\left[\sum_{j=1}^{+\infty}\left(r_{J_1(j)}-a_{J_1(j)}p^*\right)\left(\mathbbm{1}\left\{r_{J_1(j)}>a_{J_1(j)}p^*\right\}-\mathbbm{1}\left\{r_{J_1(j)}>a_{J_1(j)}\Tilde{p}_{J_1(j)}\right\}\right)\mathbbm{1}\{j\leq N(t_1)\}\right]\\
    &+\left(\bar{r}+\frac{\bar{a}\bar{r}}{\underline{d}}\right)\mathbb{E}\left[\sum_{j=1}^{+\infty}\mathbbm{1}\left\{\mathbbm{1}\left\{r_{J_1(j)}>a_{J_1(j)}p_{J_1(j)}\right\}\neq\mathbbm{1}\left\{r_{J_1(j)}>a_{J_1(j)}\Tilde{p}_{J_1(j)}\right\}\right\}\mathbbm{1}\{j\leq N(t_1)\}\right]
\end{aligned}\end{equation}
By Assumption \ref{Assumption: DistributionMulti}(b), Lemma \ref{lem: BoundedLipschitz}(a) and similar arguments for the case $k\geq 2$,
\begin{equation}\begin{aligned}
    &\mathbb{E}\left[\sum_{j=1}^{+\infty}\left(r_{J_1(j)}-a_{J_1(j)}p^*\right)\left(\mathbbm{1}\left\{r_{J_1(j)}>a_{J_1(j)}p^*\right\}-\mathbbm{1}\left\{r_{J_1(j)}>a_{J_1(j)}p_{J_1(j)}\right\}\right)\mathbbm{1}\{N(t_{0})<j\leq N(t_1)\}\right]\\
    \leq &\beta\bar{a}^2\mathbb{E}\left[\sum_{j=1}^{+\infty}\left(\Tilde{p}_{J_1(j)}-p^*\right)^2\mathbbm{1}\left\{j\leq N(t_1)\right\}\right]\\
    &+\left(\bar{r}+\frac{\bar{a}\bar{r}}{\underline{d}}\right)\mathbb{E}\left[\sum_{j=1}^{+\infty}\mathbbm{1}\left\{\mathbbm{1}\left\{r_{J_1(j)}>a_{J_1(j)}p_{J_1(j)}\right\}\neq\mathbbm{1}\left\{r_{J_1(j)}>a_{J_1(j)}\Tilde{p}_{J_1(j)}\right\}\right\}\mathbbm{1}\{j\leq N(t_1)\}\right]
\end{aligned}\end{equation}
To summarize,
\begin{equation}\begin{aligned}
    &\mathbb{E}\left[\sum_{j=1}^{N(\tau)}\left(r_j-a_jp^*\right)\left(\mathbbm{1}\left\{r_j>a_jp^*\right\}-x_j\right)\right]\\
    \leq &\beta\bar{a}^2\left(\mathbb{E}\left[\sum_{j=1}^{+\infty}\left(\Tilde{p}_{J_1(j)}-p^*\right)^2\mathbbm{1}\left\{j\leq N(t_1)\right\}\right]+\sum_{k=2}^{K-1}\mathbb{E}\left[\sum_{j=1}^{+\infty}\left(p_{J_k(j)}-p^*\right)^2\mathbbm{1}\left\{N(t_{k-1})<j\leq N(t_k)\right\}\right]\right)\\
    &+\left(\bar{r}+\frac{\bar{a}\bar{r}}{\underline{d}}\right)\mathbb{E}\left[\sum_{j=1}^{+\infty}\mathbbm{1}\left\{\mathbbm{1}\left\{r_{J_1(j)}>a_{J_1(j)}p_{J_1(j)}\right\}\neq\mathbbm{1}\left\{r_{J_1(j)}>a_{J_1(j)}\Tilde{p}_{J_1(j)}\right\}\right\}\mathbbm{1}\{j\leq N(t_1)\}\right]
\end{aligned}\end{equation}
Then, by Lemma \ref{Lemma: AuxThm4}(b) and (c), 
\begin{equation}\begin{aligned}
    &\mathbb{E}\left[\sum_{j=1}^{N(\tau)}\left(r_j-a_jp^*\right)\left(\mathbbm{1}\left\{r_j>a_jp^*\right\}-x_j\right)\right]\\
    \leq &O(\log K)+O\left(\mathbb{E}\left[\sum_{j=1}^{N(t_{K-1})}(d_j-d_0)^2\right]\right)+\sqrt{O(\log K)}\sqrt{O\left(\mathbb{E}\left[\sum_{j=1}^{N(t_{K-1})}(d_j-d_0)^2\right]\right)}
\end{aligned}\end{equation}
Furthermore,
\begin{equation}\begin{aligned}
    \mathbb{E}\left[\sum_{j=1}^{N(t_{K-1})}(d_j-d_0)^2\right]=&\mathbb{E}\left[\sum_{k=1}^{K-1}\sum_{j=N(t_{k-1})+1}^{N(t_{k})}(\Tilde{d}_{k-1}-d_0)^2\mathbbm{1}\{k\leq\bar{\kappa}\}\right]\\
    \leq&\mathbb{E}\left[\sum_{k=1}^{K-1}\left(N(t_{k})-N(t_{k-1})\right)\left(\abs{\Tilde{d}_{k-1}-d'_{k-1}}+\abs{d'_{k-1}-d_0}\right)^2\mathbbm{1}\{k\leq\bar{\kappa}\}\right]\\
    \leq&\mathbb{E}\left[\sum_{k=1}^{K-1}\left(N(t_{k})-N(t_{k-1})\right)\abs{\Tilde{d}_{k-1}-d'_{k-1}}^2\mathbbm{1}\{k\leq\bar{\kappa}\}\right]+\mathbb{E}\left[\sum_{k=1}^{K-1}\left(N(t_{k})-N(t_{k-1})\right)\abs{d'_{k-1}-d_0}^2\right]\\
    &+2\mathbb{E}\left[\sum_{k=1}^{K-1}\left(N(t_{k})-N(t_{k-1})\right)\abs{\Tilde{d}_{k-1}-d'_{k-1}}\abs{d'_{k-1}-d_0}\mathbbm{1}\{k\leq\bar{\kappa}\}\right]
\end{aligned}\end{equation}
By Lemma~\ref{lem: PAAux}(b) and (d),
\begin{equation}\begin{aligned}
    &\mathbb{E}\left[\sum_{k=1}^{K-1}\left(N(t_{k})-N(t_{k-1})\right)\abs{\Tilde{d}_{k-1}-d'_{k-1}}^2\mathbbm{1}\{k\leq\bar{\kappa}\}\right]\\
    =&\mathbb{E}\left[N(t_{1})\abs{\Tilde{d}_{0}-d_{0}}^2\right]+\lambda B\sum_{k=2}^{K-1}\mathbb{E}\left[\abs{\Tilde{d}_{k-1}-d'_{k-1}}^2\mathbbm{1}\{k\leq\bar{\kappa}\}\right]\\
    \leq &\mathbb{E}\left[N(t_{1})\abs{\Tilde{d}_{0}-d_{0}}^2\right]+\lambda B\bar{d}^2\sum_{k=2}^{K-1}\mathbb{E}\left[\abs{\frac{\lambda}{\hat{\lambda}_{k-1}}-1}^2\right]\\
    \leq &d_0^2+4\bar{d}^2\sum_{k=2}^{K-1}\frac{1}{k-1}\\
    \leq &O(\log K)
\end{aligned}\end{equation}
By Lemma \ref{lem: PARes}
\begin{equation}\begin{aligned}
    &\mathbb{E}\left[\sum_{k=1}^{K-1}\left(N(t_{k})-N(t_{k-1})\right)\abs{d'_{k-1}-d_0}^2\right]=\lambda B\sum_{k=2}^{K-1}\mathbb{E}\left[\abs{d'_{k-1}-d_0}^2\right]\leq  O(\log K)
\end{aligned}\end{equation}
and
\begin{equation}\begin{aligned}
    &2\mathbb{E}\left[\sum_{k=1}^{K-1}\left(N(t_{k})-N(t_{k-1})\right)\abs{\Tilde{d}_{k-1}-d'_{k-1}}\abs{d'_{k-1}-d_0}\mathbbm{1}\{k\leq\bar{\kappa}\}\right]\\
    =&2\lambda B\sum_{k=2}^{K-1}\mathbb{E}\left[\abs{\Tilde{d}_{k-1}-d'_{k-1}}\abs{d'_{k-1}-d_0}\mathbbm{1}\{k\leq\bar{\kappa}\}\right]\\
    \leq &2\lambda B\sum_{k=2}^{K-1}\sqrt{\mathbb{E}\left[\abs{\Tilde{d}_{k-1}-d'_{k-1}}^2\mathbbm{1}\{k\leq\bar{\kappa}\}\right]}\sqrt{\mathbb{E}\left[\abs{d'_{k-1}-d_0}^2\right]}\\
    \leq &2\sqrt{\lambda B\sum_{k=2}^{K-1}\mathbb{E}\left[\abs{\Tilde{d}_{k-1}-d'_{k-1}}^2\mathbbm{1}\{k\leq\bar{\kappa}\}\right]}\sqrt{\lambda B\sum_{k=2}^{K-1}\mathbb{E}\left[\abs{d'_{k-1}-d_0}^2\right]}\\
    \leq &2\sqrt{4\bar{d}^2\sum_{k=2}^{K-1}\frac{1}{k-1}}\sqrt{\lambda B\sum_{k=2}^{K-1}\mathbb{E}\left[\abs{d'_{k-1}-d_0}^2\right]}\\
    \leq & O(\log K)
\end{aligned}\end{equation}
Thus,
\begin{equation}\begin{aligned}
    \mathbb{E}\left[\sum_{j=1}^{N(t_{K-1})}(d_j-d_0)^2\right]\leq O(\log K)
\end{aligned}\end{equation}
Then,
\begin{equation}\begin{aligned}\label{eqn: PA1}
    \mathbb{E}\left[\sum_{j=1}^{N(\tau)}\left(r_j-a_jp^*\right)\left(\mathbbm{1}\left\{r_j>a_jp^*\right\}-x_j\right)\right]\leq O(\log K) 
\end{aligned}\end{equation}
For (\ref{eqn: PAComp2}), since $\tau$ is a stopping time adapted to the filtration generated by $((N(t),r_{N(t)},a_{N(t)}),t\geq 0)$, by the memoryless property of Poisson process, 
\begin{equation}\begin{aligned}
    &\mathbb{E}\left[N(t_{K-1})-N(\tau)\right]\leq \lambda\mathbb{E}[t_{K-1}-\tau]\leq \lambda\mathbb{E}[t_{K-1}-t_{\bar{\kappa}-1}]= \lambda B\mathbb{E}[K-\kappa]
\end{aligned}\end{equation}
where the second inequality comes from the fact that $t_{\bar{\kappa}-1}\leq\tau$ almost sure by definition.
Thus,
\begin{equation}
    \mathbb{E}\left[\sum_{j=N(\tau+1)}^{N(t_{K-1})}\left(r_j-a_jp^*\right)\left(\mathbbm{1}\left\{r_j>a_jp^*\right\}-x_j\right)\right]\leq \left(\bar{r}+\frac{\bar{a}\bar{r}}{\underline{d}}\right)\lambda B\mathbb{E}[K-\kappa]
\end{equation}
With similar argument in proof of Theorem \ref{Thm:RegretBoundDA}, by Lemma \ref{lem: PARes},
\begin{align}
    &\mathbb{E}\left[K-\bar{\kappa}\right]\leq \sum_{k=1}^{K-1}\frac{\mathbb{E}\left[\abs{d'_k-d_0}^2\right]}{\delta_d^2}\leq O(\log K)
\end{align}
Thus,
\begin{equation}\label{eqn: PA2}
    \begin{aligned}
        \mathbb{E}\left[\sum_{j=N(\tau)+1}^{N(t_{K-1})}(r_j-a_jp^*)\left(\mathbbm{1}\{r_j>a_jp^*\}-x_j\right)\right]\leq  O(\log K)
    \end{aligned}
\end{equation}
For (\ref{eqn: PAComp3}), define
\begin{equation}
    \hat{p}_j = \arg\min_{p\geq 0} b_{N(t_{K-1})}p+\sum_{j\in\{N(t_{K-1})+1,\cdots,N(T)\}\backslash{j}}(r_j-a_jp)^+
\end{equation}
Then,
\begin{equation}
    \begin{aligned}
        &\mathbb{E}\left[\sum_{j=N(t_{K-1})+1}^{N(T)}(r_j-a_jp^*)(\mathbbm{1}\{r_j>a_jp^*\}-\mathbbm{1}\{r_j>a_jp_K\})\right]\\
        \leq &\mathbb{E}\left[\sum_{j=1}^{+\infty}\left(r_{J_K(j)}-a_{J_K(j)}p^*\right)\left(\mathbbm{1}\left\{r_{J_K(j)}>a_{J_K(j)}p^*\right\}-\mathbbm{1}\left\{r_{J_K(j)}>a_{J_K(j)}\hat{p}_{J_K(j)}\right\}\right)\mathbbm{1}\left\{N(t_{K-1})<j\leq N(T)\right\}\right]\\
        &+\left(\bar{r}+\frac{\bar{a}\bar{r}}{\underline{d}}\right)\mathbb{E}\left[\sum_{j=1}^{+\infty}\mathbbm{1}\left\{\mathbbm{1}\{r_{J_K(j)}>a_{J_K(j)}\hat{p}_{J_K(j)}\}\neq\mathbbm{1}\{r_{J_K(j)}>a_{J_K(j)}p_{J_K(j)}\}\right\}\mathbbm{1}\left\{N(t_{K-1})<j\leq N(T)\right\}\right]
    \end{aligned}
\end{equation}
By Lemma \ref{Lemma: AuxThm4}(a),
\begin{equation}
    \begin{aligned}
        &\mathbb{E}\left[\sum_{j=1}^{+\infty}\left(r_{J_K(j)}-a_{J_K(j)}p^*\right)\left(\mathbbm{1}\left\{r_{J_K(j)}>a_{J_K(j)}p^*\right\}-\mathbbm{1}\left\{r_{J_K(j)}>a_{J_K(j)}\hat{p}_{J_K(j)}\right\}\right)\mathbbm{1}\left\{N(t_{K-1})<j\leq N(T)\right\}\right]\\
        \leq &\beta\bar{a}^2\mathbb{E}\left[\sum_{j=1}^{+\infty}\left(\hat{p}_{J_K(j)}-p^*\right)^2\mathbbm{1}\left\{N(t_{K-1})<j\leq N(T)\right\}\right]
    \end{aligned}
\end{equation}
Then, by Lemma \ref{Lemma: AuxThm4}(d) and Lemma \ref{Lemma: AuxThm4}(e) 
\begin{equation}\label{eqn: PA3}
    \begin{aligned}
        &\mathbb{E}\left[\sum_{j=N(t_{K-1})+1}^{N(T)}(r_j-a_jp^*)(\mathbbm{1}\{r_j>a_jp^*\}-x_j)\right]\leq O(\log K)
    \end{aligned}
\end{equation}
For (\ref{eqn: PAComp4}),
\begin{equation}
    \begin{aligned}
        \mathbb{E}[b_{N(T)}]\leq \mathbb{E}[d_{\bar{\kappa}}\lambda (T-t_{\bar{\kappa}})\mathbbm{1}\{\bar{\kappa}\leq K-1\}+b_{N(T)}\mathbbm{1}\{\bar{\kappa}= K\}]
    \end{aligned}
\end{equation}
and
\begin{equation}
    \begin{aligned}
        &\mathbb{E}[d_{\bar{\kappa}}\lambda (T-t_{\bar{\kappa}})\mathbbm{1}\{\bar{\kappa}\leq K-1\}]\\
        \leq &\mathbb{E}\left[d_{\bar{\kappa}-1}\frac{\lambda (T-t_{\bar{\kappa}-1})}{\lambda (T-t_{\bar{\kappa}})}\lambda (T-t_{\bar{\kappa}})\mathbbm{1}\{\bar{\kappa}\leq K-1\}\right]\\
        \leq &2\bar{d}\mathbb{E}[K-\bar{\kappa}]\\
        \leq &O(\log K)
    \end{aligned}
\end{equation}
In addition,
\begin{equation}
    \begin{aligned}
        &\mathbb{E}[b_{N(T)}\mathbbm{1}\{\bar{\kappa}=K\}]\\
        =&\mathbb{E}\left[b_{N(T)}\mathbbm{1}\{\bar{\kappa}=K\}\mathbbm{1}\left\{\bar{d}\lambda B\leq\sum_{j=N(t_{K-1})}^{N(t_K)}a_j\right\}\right]+\mathbb{E}\left[b_{N(T)}\mathbbm{1}\{\bar{\kappa}=K\}\mathbbm{1}\left\{\bar{d}\lambda B>\sum_{j=N(t_{K-1})}^{N(t_K)}a_j\right\}\right]\\
        \leq &\bar{a}+\lambda B\bar{d}\mathbb{P}\left\{\lambda B\bar{d}>\sum_{j=N(t_{K-1})}^{N(t_K)}a_j\right\}
    \end{aligned}
\end{equation}
and 
\begin{equation}
    \begin{aligned}
        &\mathbb{P}\left\{\bar{d}\lambda B>\sum_{j=N(t_{K-1})}^{N(t_K)}a_j\right\}\\
        \leq &\mathbb{E}\left[\mathbbm{1}\left\{\bar{d}\lambda B>\sum_{j=N(t_{K-1})}^{N(t_K)}a_j\right\}\mathbbm{1}\left\{\lambda B\bar{d}<\left((N(T)-N(t_{K-1}))\mathbb{E}[a]-\frac{\epsilon_d}{2}(N(T)-N(t_{K-1}))\right)\right\}\mathbbm{1}\{N(T)-N(t_{K-1})\geq 1\}\right]\\
        &+\mathbb{E}\left[\mathbbm{1}\left\{\lambda B\bar{d}\geq\left((N(T)-N(t_{K-1}))\mathbb{E}[a]-\frac{\epsilon_d}{2}(N(T)-N(t_{K-1}))\right)\right\}+\mathbbm{1}\{N(T)-N(t_{K-1})=0\}\right]\\
        \leq &\mathbb{E}\left[\mathbb{P}\left\{\sum_{j=N(t_{K-1})}^{N(t_K)}a_j<(N(T)-N(t_{K-1}))\mathbb{E}[a]-\frac{\epsilon_d}{2}(N(T)-N(t_{K-1}))\bigg\rvert N(T),N(t_{K-1})\right\}\mathbbm{1}\{N(T)-N(t_{K-1})\geq 1\}\right]\\
        &+\mathbb{E}\left[\mathbbm{1}\left\{\lambda B\bar{d}\geq\left((N(T)-N(t_{K-1}))\mathbb{E}[a]-\frac{\epsilon_d}{2}(N(T)-N(t_{K-1}))\right)\right\}+\mathbbm{1}\{N(T)-N(t_{K-1})=0\}\right]\\
        \leq &\mathbb{E}\left[\exp\left(-\frac{\epsilon_d^2(N(T)-N(t_{K-1}))}{2\bar{a}^2}\right)\right]+\exp(-\lambda B) + \exp\left(-\frac{(1-\frac{\bar{d}}{\mathbb{E}[a]-\frac{\epsilon_d}{2}})^2(\lambda B)^2}{2(2-\frac{\bar{d}}{\mathbb{E}[a]-\frac{\epsilon_d}{2}})\lambda B}\right)
    \end{aligned}
\end{equation}
Together with Lemma~\ref{lem: PAAux}(i), there exists $R_1$ and $R_2$ such that
\begin{equation}\begin{aligned}
    \mathbb{E}[b_{N(T)}\mathbbm{1}\{\bar{\kappa}=K\}]&\leq\bar{a}+\lambda B\bar{d}R_1\exp(-R_2\lambda B)\leq O(1)
\end{aligned}\end{equation}
Thus,
\begin{equation}\label{eqn: PA4}
    \begin{aligned}
        \mathbb{E}[b_{N(T)}p^*]\leq \frac{\bar{r}}{\underline{d}}\mathbb{E}\left[b_{N(T)}\right]\leq  O(\log K)
    \end{aligned}
\end{equation}
To summarize, (\ref{eqn: PA1}), (\ref{eqn: PA2}), (\ref{eqn: PA3}) and (\ref{eqn: PA4}) imply that, if $B>\Lambda_{PA}$,
\begin{equation}
    \mathbb{E}\left[R_T^*-R_T(\pi_2)\right]\leq O(\log K)
\end{equation}
which completes the proof of Theorem~\ref{Thm: RegretPA}.
\begin{lem}\label{lem: PARes}
    If $\lambda B>\Lambda_{PA}$, then there exists a constant $M_{Res}$ such that
    \begin{equation}
        \sum_{k=1}^{K-1}\mathbb{E}\left[\abs{d'_k-d_0}^2\right]\leq\frac{M_{Res}\log K}{\lambda B}
    \end{equation}
\end{lem}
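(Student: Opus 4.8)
The plan is to follow the proof of Lemma~\ref{lem: DARes} almost verbatim, accommodating the two new features of the Poisson model: the number of customers $M_k:=N(t_{k+1})-N(t_k)$ arriving in batch $k+1$ is random (Poisson with mean $\lambda B$) rather than equal to $B$, and the dual price $p_{k+1}$ is built from the \emph{estimated} rate $\hat\lambda_k$ through $\tilde d_k=\frac{\lambda}{\hat\lambda_k}d'_k$ with a \emph{random} sample size $N(t_k)$. First I would write, on $\{k<\bar\kappa\}$,
\begin{equation*}
d'_{k+1}-d_0=(d'_k-d_0)+\frac{\sum_{j=N(t_k)+1}^{N(t_{k+1})}\left(d'_k-a_j\mathbbm{1}\{r_j>a_jp^*(d'_k)\}\right)}{\lambda(K-k-1)B}+\frac{\sum_{j=N(t_k)+1}^{N(t_{k+1})}a_j\left(\mathbbm{1}\{r_j>a_jp^*(d'_k)\}-\mathbbm{1}\{r_j>a_jp_{k+1}\}\right)}{\lambda(K-k-1)B},
\end{equation*}
square it, and expand $\mathbb{E}[(d'_{k+1}-d_0)^2]$ into the same five pieces as (\ref{eqn: ResEqn1})--(\ref{eqn: ResEqn5}), now with denominator $\lambda(K-k-1)B=\lambda(T-t_{k+1})$.

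The structural fact that keeps the argument alive is Assumption~\ref{Assumption: Random Arrival}: since the arrival process is independent of $\{(r_j,a_j)\}$, conditioning on the pre-$t_k$ information and on $M_k$ makes the fresh summands i.i.d.\ and independent of $d'_k$, while the first-order condition $d=\mathbb{E}[a\mathbbm{1}\{r>ap^*(d)\}]$ from Lemma~\ref{lem: BoundedLipschitz}(b) gives $\mathbb{E}[d'_k-a_j\mathbbm{1}\{r_j>a_jp^*(d'_k)\}\mid d'_k]=0$ on $\{k<\bar\kappa\}$. Hence the ``martingale'' term (\ref{eqn: ResEqn1}) obeys a Wald-type variance identity, $\mathbb{E}[(\cdot)^2\mid d'_k]\le\lambda B(\bar d+\bar a)^2$, giving $O\!\left(\frac{1}{(K-k-1)^2\lambda B}\right)$, and the cross term (\ref{eqn: ResEqn3}) vanishes. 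For the error sum (\ref{eqn: ResEqn2}) I would split the diagonal ($\le\bar a^2\mathbb{E}[M_k]=\bar a^2\lambda B$) from the off-diagonal part, which is controlled by $\mathbb{E}[\abs{p_{k+1}-p^*(d'_k)}^2]$ when $k\ge1$ and, for $k=0$, by a Leave-One-Out estimate (Lemma~\ref{lem:Leave-One-Out}) applied conditionally on $N(t_1)$, exactly as in Lemma~\ref{lem: DARes}.

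The genuinely new estimate is $\mathbb{E}[\abs{p_{k+1}-p^*(d'_k)}^2]$. I would split it as: (i) $p^*(\tilde d_k)$ versus $p^*(d'_k)$, bounded by $L\bar d^{\,2}\,\mathbb{E}[\abs{\lambda/\hat\lambda_k-1}^2]=O(1/(k\lambda B))$ via the Lipschitz bound of Lemma~\ref{lem: BoundedLipschitz}(b) together with the concentration of $\hat\lambda_k=(N(t_k)+1)/t_k$ around $\lambda$ (the content of an auxiliary Poisson lemma such as Lemma~\ref{lem: PAAux}(d)); and (ii) the dual-convergence error $\mathbb{E}[\sup_{d\in\Omega_d}\abs{p^*_{N(t_k)}(d)-p^*(d)}^2]$, handled by conditioning on $N(t_k)$, applying Lemma~\ref{lem: UniformDualConvergence} on $\{N(t_k)>N_{Dual}\}$ (which holds with all but exponentially small probability once $\lambda B>\Lambda_{PA}$) to get $O(1/(k\lambda B))$, and absorbing the complementary event using the box constraint $0\le p_{k+1}\le\bar r/\underline d$. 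The remaining cross terms (\ref{eqn: ResEqn4}) and (\ref{eqn: ResEqn5}) are then dispatched by Cauchy--Schwarz as in Lemma~\ref{lem: DARes}. Collecting everything yields
\begin{equation*}
\mathbb{E}[\abs{d'_{k+1}-d_0}^2]\le\mathbb{E}[\abs{d'_k-d_0}^2]+\frac{C_0}{(K-k-1)^2\lambda B}+\frac{\sqrt{C_0}}{(K-k-1)\sqrt{k+1}\sqrt{\lambda B}}\sqrt{\mathbb{E}[\abs{d'_k-d_0}^2]},
\end{equation*}
and Lemma~\ref{lem:Induction} applied to $z_k:=\lambda B\,\mathbb{E}[\abs{d'_k-d_0}^2]$ gives $\sum_{k=1}^{K-1}\mathbb{E}[\abs{d'_k-d_0}^2]\le\frac{8C_0\log K}{\lambda B}$, so one may take $M_{Res}=8C_0$.

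The step I expect to be the main obstacle is the triple randomness inside $p_{k+1}$ --- the random sample size $N(t_k)$, the rate estimate $\hat\lambda_k$ feeding $\tilde d_k$, and the random batch count $M_k$ in the resource sums --- which forces every fixed-$N$ tool (uniform dual convergence, Leave-One-Out) to be re-invoked conditionally, with explicit control of the low-probability events where $N(t_k)$ or $M_k$ deviate too far from $\lambda$ times a deterministic length; this is precisely what the hypothesis $\lambda B>\Lambda_{PA}$ buys. Keeping all of these contributions at the uniform order $1/(k\lambda B)$, rather than merely $O(1/k)$, is what lets the induction close with the advertised $\log K/(\lambda B)$ rate.
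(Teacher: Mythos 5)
Your overall strategy is the same as the paper's: decompose the one-step increment of $d'_k$, show the martingale part has conditional mean zero via Lemma~\ref{lem: BoundedLipschitz}(b) and Assumption~\ref{Assumption: Random Arrival}, control $\mathbb{E}[\abs{p_{k+1}-p^*(d'_k)}^2]$ by combining the Lipschitz bound, uniform dual convergence conditioned on $N(t_k)$, concentration of $\hat\lambda_k$, and a conditional Leave-One-Out argument for $k=0$, and then close the recursion with Lemma~\ref{lem:Induction} applied to $z_k=\lambda B\,\mathbb{E}[\abs{d'_k-d_0}^2]$. This is exactly the route the paper takes.

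There is, however, a concrete error in your central display: the identity you write for $d'_{k+1}-d_0$ is false. Since
\begin{equation*}
d'_{k+1}-d'_k=\frac{\lambda B\,d'_k-\sum_{j=N(t_k)+1}^{N(t_{k+1})}a_j\mathbbm{1}\{r_j>a_jp_{k+1}\}}{\lambda(T-t_{k+1})}
\end{equation*}
on $\{k<\bar\kappa\}$, and your martingale sum only supplies $(N(t_{k+1})-N(t_k))\,d'_k$ rather than $\lambda B\,d'_k$, you have dropped the summand
\begin{equation*}
\frac{\bigl(\lambda(t_{k+1}-t_k)-(N(t_{k+1})-N(t_k))\bigr)d'_k}{\lambda(T-t_{k+1})}\mathbbm{1}\{k<\bar\kappa\},
\end{equation*}
which the paper carries as a separate fourth piece: its cross term with $d'_k-d_0$ vanishes by independence of the arrival process, and its second moment is bounded by $\bar d^{\,2}\,\mathrm{Var}(N(t_{k+1})-N(t_k))/(\lambda(T-t_{k+1}))^2\le\bar d^{\,2}/((K-k-1)^2\lambda B)$, the same order as your martingale term. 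The Wald-type variance identity you invoke controls the martingale sum but does not absorb this fluctuation term. The omission does not change the final order and the fix is one line, but as written your decomposition does not telescope, so the subsequent piecewise expansion rests on a false identity; with this term restored (and its additional cross terms, which vanish or are dispatched by Cauchy--Schwarz exactly like the others), your argument matches the paper's.
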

\begin{proof}
By the definition of $d'_k$, we have for $k\geq 0$,
\begin{equation}\label{eqn: PARex0}
\begin{aligned}
    &\mathbb{E}\left[\left(d'_{k+1}-d_0\right)^2\right]\\
    =&\mathbb{E}\left[\left(\left(d'_k-d_0\right)+\left(\frac{\lambda (t_{k+1}-t_k)-(N(t_{k+1})-N(t_k))}{\lambda (T-t_{k+1})}d'_k\mathbbm{1}\{k<\bar{\kappa}\}\right)+\left(\frac{\sum_{j=N(t_k)+1}^{N(t_{k+1})}d'_k-a_j\mathbbm{1}\{r_j>a_jp^*(d'_{k})\}}{\lambda (T-t_{k+1})}\mathbbm{1}\{k<\bar{\kappa}\}\right)\right.\right.\\
    &\left.\left.+\left(\frac{\sum_{j=N(t_k)+1}^{N(t_{k+1})}a_j(\mathbbm{1}\{r_j>a_jp^*(d'_k)\}-\mathbbm{1}\{r_j>a_jp_{k+1}\})}{\lambda (T-t_{k+1})}\mathbbm{1}\{k<\bar{\kappa}\}\right)\right)^2\right]
\end{aligned}\end{equation}
First,
\begin{equation}\label{eqn: PARex1}
    \begin{aligned}
    &\mathbb{E}\left[\left(\frac{\lambda (t_{k+1}-t_k)-(N(t_{k+1})-N(t_k))}{\lambda (T-t_{k+1})}d'_k\mathbbm{1}\{k<\bar{\kappa}\}\right)^2\right]\\
    =&\mathbb{E}\left[\left(d'_k\mathbbm{1}\{k<\bar{\kappa}\}\right)^2\right]\mathbb{E}\left[\left(\frac{\lambda (t_{k+1}-t_k)-(N(t_{k+1})-N(t_k))}{\lambda (T-t_{k+1})}\right)^2\right]\\
    \leq&\frac{\bar{d}^2}{(\lambda B)^2(K-k-1)^2}Var(N(t_{k+1})-N(t_k))\\
    =&\frac{\bar{d}^2}{(\lambda B)(K-k-1)^2}
\end{aligned}\end{equation}
Secondly,
\begin{equation}\begin{aligned}
    &\mathbb{E}\left[\left(\frac{\sum_{j=N(t_k)+1}^{N(t_{k+1})}d'_k-a_j\mathbbm{1}\{r_j>a_jp^*(d'_{k})\}}{\lambda (T-t_{k+1})}\mathbbm{1}\{k<\bar{\kappa}\}\right)^2\right]\\
    =&\frac{1}{(\lambda (T-t_{k+1}))^2}\mathbb{E}\left[\sum_{i=N(t_k)+1}^{N(t_{k+1})}\sum_{j=N(t_k)+1}^{N(t_{k+1})}(d'_k-a_i\mathbbm{1}\{r_i>a_ip^*(d'_{k})\})(d'_k-a_j\mathbbm{1}\{r_j>a_jp^*(d'_{k})\}\mathbbm{1}\{k<\bar{\kappa}\})\right]\\
    =&\frac{1}{(\lambda (T-t_{k+1}))^2}\mathbb{E}\left[\sum_{i=1}^{+\infty}\sum_{j=1}^{+\infty}\left((d'_k-a_{J_{k+1}(i)}\mathbbm{1}\{r_{J_{k+1}(i)}>a_{J_{k+1}(i)}p^*(d'_{k})\})\cdot\right.\right.\\
    &\qquad\left.\left.(d'_k-a_{J_{k+1}(j)}\mathbbm{1}\{r_{J_{k+1}(j)}>a_{J_{k+1}(j)}p^*(d'_{k})\})\mathbbm{1}\{k<\bar{\kappa}\}\mathbbm{1}\{N(t_k)<i\leq N(t_{k+1})\}\mathbbm{1}\{N(t_k)<j\leq N(t_{k+1})\}\right)\right]\\
    =&\frac{1}{(\lambda (T-t_{k+1}))^2}\mathbb{E}\left[\sum_{i=1}^{+\infty}\sum_{j=1}^{+\infty}\mathbb{E}\left[(d'_k-a_{J_{k+1}(i)}\mathbbm{1}\{r_{J_{k+1}(i)}>a_{J_{k+1}(i)}p^*(d'_{k})\})\cdot(d'_k-\right.\right.\\
    &\left.\left.a_{J_{k+1}(j)}\mathbbm{1}\{r_{J_{k+1}(j)}>a_{J_{k+1}(j)}p^*(d'_{k})\})\mathbbm{1}\{k<\bar{\kappa}\}\bigg\rvert d'_k,N(t_k),N(t_{k+1})\right]\mathbbm{1}\{N(t_k)<i\leq N(t_{k+1})\}\mathbbm{1}\{N(t_k)<j\leq N(t_{k+1})\}\right]
\end{aligned}\end{equation}
If $i=j$,
\begin{equation}
    \mathbb{E}\left[(d'_k-a_{J_{k+1}(i)}\mathbbm{1}\{r_{J_{k+1}(i)}>a_{J_{k+1}(i)}p^*(d'_{k})\})^2\mathbbm{1}\{k<\bar{\kappa}\}\bigg\rvert d'_k,N(t_k),N(t_{k+1})\right]\leq \left(\bar{d}+\bar{a}\right)^2
\end{equation}
If $i\neq j$,
\begin{equation}
    \begin{aligned}
        &\mathbb{E}\left[(d'_k-a_{J_{k+1}(i)}\mathbbm{1}\{r_{J_{k+1}(i)}>a_{J_{k+1}(i)}p^*(d'_{k})\})(d'_k-a_{J_{k+1}(j)}\mathbbm{1}\{r_{J_{k+1}(j)}>a_{J_{k+1}(j)}p^*(d'_{k})\})\mathbbm{1}\{k<\bar{\kappa}\}\bigg\rvert d'_k,N(t_k),N(t_{k+1})\right]\\
        =&\mathbb{E}\left[(d'_k-a_{J_{k+1}(i)}\mathbbm{1}\{r_{J_{k+1}(i)}>a_{J_{k+1}(i)}p^*(d'_{k})\})\mathbbm{1}\{k<\bar{\kappa}\}\bigg\rvert d'_k,N(t_k),N(t_{k+1})\right]\cdot\\
        &\mathbb{E}\left[(d'_k-a_{J_{k+1}(j)}\mathbbm{1}\{r_{J_{k+1}(j)}>a_{J_{k+1}(j)}p^*(d'_{k})\})\mathbbm{1}\{k<\bar{\kappa}\}\bigg\rvert d'_k,N(t_k),N(t_{k+1})\right]\\
        =&0
    \end{aligned}
\end{equation}
Then,
\begin{equation}\label{eqn: PARex2}
\begin{aligned}
    \mathbb{E}\left[\left(\frac{\sum_{j=N(t_k)+1}^{N(t_{k+1})}d'_k-a_j\mathbbm{1}\{r_j>a_jp^*(d'_{k})\}}{\lambda (T-t_{k+1})}\mathbbm{1}\{k<\bar{\kappa}\}\right)^2\right]&\leq \frac{(\bar{d}+\bar{a})^2}{(\lambda (T-t_{k+1}))^2}\mathbb{E}[N(t_{k+1})-N(t_k)]\\
    &=\frac{(\bar{d}+\bar{a})^2}{(\lambda B)(K-k-1)^2}
\end{aligned}\end{equation}
Thirdly, 
\begin{equation}
    \begin{aligned}
        &\mathbb{E}\left[\left(\frac{\sum_{j=N(t_k)+1}^{N(t_{k+1})}a_j(\mathbbm{1}\{r_j>a_jp^*(d'_k)\}-\mathbbm{1}\{r_j>a_jp_{k+1}\})}{\lambda (T-t_{k+1})}\mathbbm{1}\{k<\bar{\kappa}\}\right)^2\right]\\
        \leq &\frac{1}{(\lambda(T-t_{k+1}))^2}\mathbb{E}\left[\sum_{i=N(t_k)+1}^{N(t_{k+1})}\sum_{j=N(t_k)+1}^{N(t_{k+1})}a_{i}\left(\mathbbm{1}\left\{r_{i}>a_{i}p^*(d'_k)\right\}-\mathbbm{1}\left\{r_{i}>a_{i}p_{k+1}\right\}\right)\cdot\right.\\
        &\qquad\qquad\qquad\qquad\qquad\left.a_{j}\left(\mathbbm{1}\left\{r_{j}>a_{j}p^*(d'_k)\right\}-\mathbbm{1}\left\{r_{j}>a_{j}p_{k+1}\right\}\right)\mathbbm{1}\{k<\bar{\kappa}\}\right]
    \end{aligned}
\end{equation}
and
\begin{equation}
    \begin{aligned}
        &\mathbb{E}\left[\sum_{i=N(t_k)+1}^{N(t_{k+1})}\sum_{j=N(t_k)+1}^{N(t_{k+1})}a_{i}\left(\mathbbm{1}\left\{r_{i}>a_{i}p^*(d'_k)\right\}-\mathbbm{1}\left\{r_{i}>a_{i}p_{k+1}\right\}\right)a_{j}\left(\mathbbm{1}\left\{r_{j}>a_{j}p^*(d'_k)\right\}-\mathbbm{1}\left\{r_{j}>a_{j}p_{k+1}\right\}\right)\mathbbm{1}\{k<\bar{\kappa}\}\right]\\
        = &\mathbb{E}\left[\sum_{i=1}^{+\infty}\sum_{j=1}^{+\infty}\mathbb{E}\left[a_{J_{k+1}(i)}\left(\mathbbm{1}\left\{r_{J_{k+1}(i)}>a_{J_{k+1}(i)}p^*(d'_k)\right\}-\mathbbm{1}\left\{r_{J_{k+1}(i)}>a_{J_{k+1}(i)}p_{k+1}\right\}\right)\cdot\right.\right.\\
        &\left.\left.\qquad\qquad\qquad a_{J_{k+1}(j)}\left(\mathbbm{1}\left\{r_{J_{k+1}(j)}>a_{J_{k+1}(j)}p^*(d'_k)\right\}-\mathbbm{1}\left\{r_{J_{k+1}(j)}>a_{J_{k+1}(j)}p_{k+1}\right\}\right)\cdot\right.\right.\\
        &\left.\left.\qquad\qquad\qquad\mathbbm{1}\{k<\bar{\kappa}\}\bigg\rvert N(t_k),N(t_{k+1})\right]\mathbbm{1}\{N(t_k)<i\leq N(t_{k+1})\}\mathbbm{1}\{N(t_k)<j\leq N(t_{k+1})\}\right]
    \end{aligned}
\end{equation}
For all $k$, if $i=j$,
\begin{equation}
    \begin{aligned}
        &\mathbb{E}\left[\left(a_{J_{k+1}(i)}\left(\mathbbm{1}\left\{r_{J_{k+1}(i)}>a_{J_{k+1}(i)}p^*(d'_k)\right\}-\mathbbm{1}\left\{r_{J_{k+1}(i)}>a_{J_{k+1}(i)}p_{k+1}\right\}\right)\right)^2\mathbbm{1}\{k<\bar{\kappa}\}\bigg\rvert N(t_k),N(t_{k+1})\right]\leq \bar{a}^2
    \end{aligned}
\end{equation}
For the case $k=0$, define
\begin{equation}
    \hat{p}_{i,j}=\left(\arg\min_{p\geq 0}N(t_1)\Tilde{d}_0p+\sum_{j=\{1,\cdots,N(t_1)\}\backslash\{i,j\}}(r_j-a_jp)^+\right)\mathbbm{1}\{N(t_1)>2\}
\end{equation}
Then, if $i\neq j$,
\begin{equation}
    \begin{aligned}
        &\mathbb{E}\left[a_{J_1(i)}\left(\mathbbm{1}\left\{r_{J_1(i)}>a_{J_1(i)}p^*(d'_0)\right\}-\mathbbm{1}\left\{r_{J_1(i)}>a_{J_1(i)}p_{1}\right\}\right)\cdot\right.\\
        &\qquad\left.a_{J_1(j)}\left(\mathbbm{1}\left\{r_{J_1(j)}>a_{J_1(j)}p^*(d'_0)\right\}-\mathbbm{1}\left\{r_{J_1(j)}>a_{J_1(j)}p_{1}\right\}\right)\bigg\rvert N(t_1)\right]\\
        \leq &\mathbb{E}\left[a_{J_1(i)}\left(\mathbbm{1}\left\{r_{J_1(i)}>a_{J_1(i)}p^*(d'_0)\right\}-\mathbbm{1}\left\{r_{J_1(i)}>a_{J_1(i)}\hat{p}_{J_1(i),J_1(j)}\right\}\right)\cdot\right.\\
        &\qquad\left.a_{J_1(j)}\left(\mathbbm{1}\left\{r_{J_1(j)}>a_{J_1(j)}p^*(d'_0)\right\}-\mathbbm{1}\left\{r_{J_1(j)}>a_{J_1(j)}\hat{p}_{J_1(i),J_1(j)}\right\}\right)\bigg\rvert N(t_1)\right]\\
        &+\bar{a}^2\mathbb{E}\left[\mathbbm{1}\left\{\mathbbm{1}\left\{r_{J_1(j)}>a_{J_1(j)}p_{1}\right\}\neq\mathbbm{1}\left\{r_{J_1(j)}>a_{J_1(j)}\hat{p}_{J_1(i),J_1(j)}\right\}\right\}+\right.\\
        &\left.\qquad\qquad\qquad\mathbbm{1}\left\{\mathbbm{1}\left\{r_{J_1(i)}>a_{J_1(i)}p_{1}\right\}\neq\mathbbm{1}\left\{r_{J_1(i)}>a_{J_1(i)}\hat{p}_{J_1(i),J_1(j)}\right\}\right\}\bigg\rvert N(t_1)\right]
    \end{aligned}
\end{equation}
Together with Lemma \ref{Lemma: AuxThm4}(f),
\begin{equation}\label{eqn: PAResThird1}
    \begin{aligned}
        &\mathbb{E}\left[\sum_{i=1}^{N(t_{1})}\sum_{j=1}^{N(t_{1})}a_{i}\left(\mathbbm{1}\left\{r_{i}>a_{i}p^*(d'_0)\right\}-\mathbbm{1}\left\{r_{i}>a_{i}p_{1}\right\}\right)a_{j}\left(\mathbbm{1}\left\{r_{j}>a_{j}p^*(d'_0)\right\}-\mathbbm{1}\left\{r_{j}>a_{j}p_{1}\right\}\right)\right]\\
        \leq &\beta^2\bar{a}^4\mathbb{E}\left[\sum_{i=1}^{+\infty}\sum_{j=1}^{+\infty}\left(\hat{p}_{J_1(i),J_1(j)}-p^*(d'_0)\right)^2\mathbbm{1}\{i\leq N(t_{1})\}\mathbbm{1}\{j\leq N(t_{1})\}\right]+\bar{a}^2\mathbb{E}\left[N(t_1)\right]\\
        &+\bar{a}^2\mathbb{E}\left[\sum_{i=1}^{+\infty}\sum_{j=1}^{+\infty}\left(\mathbbm{1}\left\{\mathbbm{1}\left\{r_{J_1(j)}>a_{J_1(j)}p_{1}\right\}\neq\mathbbm{1}\left\{r_{J_1(j)}>a_{J_1(j)}\hat{p}_{J_1(i),J_1(j)}\right\}\right\}\right.\right.\\
        &\left.\left.\qquad+\mathbbm{1}\left\{\mathbbm{1}\left\{r_{J_1(i)}>a_{J_1(i)}p_{1}\right\}\neq\mathbbm{1}\left\{r_{J_1(i)}>a_{J_1(i)}\hat{p}_{J_1(i),J_1(j)}\right\}\right\}\right)\mathbbm{1}\{i\leq N(t_{1})\}\mathbbm{1}\{j\leq N(t_{1})\}\right]
    \end{aligned}
\end{equation}
For the case $k\geq 1$,
\begin{equation}
    \begin{aligned}
        &\mathbb{E}\left[a_{J_{k+1}(i)}\left(\mathbbm{1}\left\{r_{J_{k+1}(i)}>a_{J_{k+1}(i)}p^*(d'_k)\right\}-\mathbbm{1}\left\{r_{J_{k+1}(i)}>a_{J_{k+1}(i)}p_{k+1}\right\}\right)\cdot\right.\\
        &\left.\qquad\qquad\qquad a_{J_{k+1}(j)}\left(\mathbbm{1}\left\{r_{J_{k+1}(j)}>a_{J_{k+1}(j)}p^*(d'_k)\right\}-\mathbbm{1}\left\{r_{J_{k+1}(j)}>a_{J_{k+1}(j)}p_{k+1}\right\}\right)\mathbbm{1}\{k<\bar{\kappa}\}\bigg\rvert N(t_k),N(t_{k+1})\right]\\
        =&\mathbb{E}\left[\mathbb{E}\left[a_{J_{k+1}(i)}\left(\mathbbm{1}\left\{r_{J_{k+1}(i)}>a_{J_{k+1}(i)}p^*(d'_k)\right\}-\mathbbm{1}\left\{r_{J_{k+1}(i)}>a_{J_{k+1}(i)}p_{k+1}\right\}\right)\cdot\right.\right.\\
        &\left.\left.a_{J_{k+1}(j)}\left(\mathbbm{1}\left\{r_{J_{k+1}(j)}>a_{J_{k+1}(j)}p^*(d'_k)\right\}-\mathbbm{1}\left\{r_{J_{k+1}(j)}>a_{J_{k+1}(j)}p_{k+1}\right\}\right)\bigg\rvert d'_k,N(t_k),N(t_{k+1})\right]\mathbbm{1}\{k<\bar{\kappa}\}\bigg\rvert N(t_k),N(t_{k+1})\right]\\
    \end{aligned}
\end{equation}
Together with Lemma \ref{Lemma: AuxThm4}(f),
\begin{equation}\label{eqn: PAResThird2}
    \begin{aligned}
        &\mathbb{E}\left[\sum_{i=N(t_k)+1}^{N(t_{k+1})}\sum_{j=N(t_k)+1}^{N(t_{k+1})}a_{i}\left(\mathbbm{1}\left\{r_{i}>a_{i}p^*(d'_k)\right\}-\mathbbm{1}\left\{r_{i}>a_{i}p_{k+1}\right\}\right)a_{j}\left(\mathbbm{1}\left\{r_{j}>a_{j}p^*(d'_k)\right\}-\mathbbm{1}\left\{r_{j}>a_{j}p_{k+1}\right\}\right)\mathbbm{1}\{k<\bar{\kappa}\}\right]\\
        \leq &\beta^2\bar{a}^4\mathbb{E}\left[\left(N(t_{k+1})-N(t_k)\right)^2\right]\mathbb{E}\left[\left(p_{k+1}-p^*(d'_k)\right)^2\mathbbm{1}\{k<\bar{\kappa}\}\right]+\bar{a}^2\mathbb{E}\left[N(t_{k+1})-N(t_k)\right]
    \end{aligned}
\end{equation}
(\ref{eqn: PAResThird1}) and (\ref{eqn: PAResThird2}) together with Lemma \ref{Lemma: AuxThm4}(g), (h), (i) imply that  
Thus, there exists a constant $C_{res,1}$ such that
\begin{equation}\label{eqn: PARex3}
\begin{aligned}
    &\mathbb{E}\left[\left(\frac{\sum_{j=N(t_k)}^{N(t_{k+1})}a_j(\mathbbm{1}\{r_j>a_jp^*(d'_k)\}-\mathbbm{1}\{r_j>a_jp_{k+1}\})}{\lambda (T-t_{k+1})}\mathbbm{1}\{k<\bar{\kappa}\}\right)^2\right]\leq \frac{C_{res,1}}{(\lambda B)(K-k-1)^2}
\end{aligned}\end{equation}
Fourthly,
\begin{equation}\label{eqn: PARex4}
\begin{aligned}
    &2\mathbb{E}\left[(d'_k-d_0)\left(\frac{\lambda (t_{k+1}-t_k)-(N(t_{k+1})-N(t_k))}{\lambda (T-t_{k+1})}d'_k\mathbbm{1}\{k<\bar{\kappa}\}\right)\right]\\
    =&2\mathbb{E}\left[(d'_k-d_0)d'_k\mathbbm{1}\{k<\bar{\kappa}\}\right]\mathbb{E}\left[\frac{\lambda (t_{k+1}-t_k)-(N(t_{k+1})-N(t_k))}{\lambda (T-t_{k+1})}\right]\\
    =&0
\end{aligned}\end{equation}
Fifthly,
\begin{equation}
\begin{aligned}
    &2\mathbb{E}\left[(d'_k-d_0)\left(\frac{\sum_{j=N(t_k)}^{N(t_{k+1})}d'_k-a_j\mathbbm{1}\{r_j>a_jp^*(d'_{k})\}}{\lambda (T-t_{k+1})}\mathbbm{1}\{k<\bar{\kappa}\}\right)\right]\\
    =&\frac{2}{\lambda (T-t_{k+1})}\mathbb{E}\left[(d'_k-d)\mathbb{E}\left[\sum_{j=N(t_k)+1}^{N(t_{k+1})}d'_k-a_j\mathbbm{1}\{r_j>a_jp^*(d'_{k})\}\bigg\rvert N(t_k),N(t_{k+1}),d'_k\right]\mathbbm{1}\{k<\bar{\kappa}\}\right]
\end{aligned}\end{equation}
and
\begin{equation}\begin{aligned}
    &\mathbb{E}\left[(d'_k-d)\mathbb{E}\left[\sum_{j=N(t_k)+1}^{N(t_{k+1})}d'_k-a_j\mathbbm{1}\{r_j>a_jp^*(d'_{k})\}\bigg\rvert N(t_k),N(t_{k+1}),d'_k\right]\mathbbm{1}\{k<\bar{\kappa}\}\right]\\
    =&\mathbb{E}\left[(d'_k-d)\sum_{j=1}^{+\infty}\mathbb{E}\left[d'_k-a_{J_{k+1}(j)}\mathbbm{1}\{r_{J_{k+1}(j)}>a_{J_{k+1}(j)}p^*(d'_{k})\}\bigg\rvert N(t_k),N(t_{k+1}),d'_k\right]\mathbbm{1}\{k<\bar{\kappa}\}\mathbbm{1}\{N(t_k)<j\leq N(t_{k+1})\}\right]
\end{aligned}\end{equation}
and
\begin{equation}\begin{aligned}
    \mathbb{E}\left[d'_k-a_{J_{k+1}(j)}\mathbbm{1}\{r_{J_{k+1}(j)}>a_{J_{k+1}(j)}p^*(d'_{k})\}\bigg\rvert N(t_k),N(t_{k+1}),d'_k\right]\mathbbm{1}\{k<\bar{\kappa}\}=0\quad a.s.
\end{aligned}\end{equation}
Thus,
\begin{equation}\label{eqn: PARex5}
\begin{aligned}
    &2\mathbb{E}\left[(d'_k-d_0)\left(\frac{\sum_{j=N(t_k)}^{N(t_{k+1})}d'_k-a_j\mathbbm{1}\{r_j>a_jp^*(d'_{k})\}}{\lambda (T-t_{k+1})}\mathbbm{1}\{k<\bar{\kappa}\}\right)\right]\\
    =&0
\end{aligned}\end{equation}
Sixthly, if $k=0$,
\begin{equation}\begin{aligned}
    2\mathbb{E}\left[(d'_k-d_0)\left(\frac{\sum_{j=N(t_k)}^{N(t_{k+1})}a_j(\mathbbm{1}\{r_j>a_jp^*(d'_k)\}-\mathbbm{1}\{r_j>a_jp_{k+1}\})}{\lambda (T-t_{k+1})}\mathbbm{1}\{k<\bar{\kappa}\}\right)\right]=0
\end{aligned}\end{equation}
If $k\geq 1$, by Lemma \ref{Lemma: AuxThm4}(f),
\begin{equation}
    \begin{aligned}
        &\mathbb{E}\left[\sum_{j=N(t_k)+1}^{N(t_{k+1})}a_j\left(\mathbbm{1}\left\{r_j>a_jp^*(d'_k)\right\}-\mathbbm{1}\left\{r_j>a_jp_{k+1}\right\}\right)\left(d'_k-d_0\right)\mathbbm{1}\{k<\bar{\kappa}\}\right]\\
        =&\mathbb{E}\left[\sum_{j=1}^{+\infty}\mathbb{E}\left[a_{J_{k+1}(j)}\left(\mathbbm{1}\left\{r_{J_{k+1}(j)}>a_{J_{k+1}(j)}p^*(d'_k)\right\}\right.\right.\right.\\
        &\left.\left.\left.\qquad\qquad-\mathbbm{1}\left\{r_{J_{k+1}(j)}>a_{J_{k+1}(j)}p_{k+1}\right\}\right)\bigg\rvert d'_k,N(t_k),N(t_{k+1})\right]\mathbbm{1}\left\{N(t_k)<j\leq N(t_{k+1})\right\}\left(d'_k-d_0\right)\mathbbm{1}\{k<\bar{\kappa}\}\right]\\
        \leq &\beta\bar{a}^2\mathbb{E}\left[(N(t_{k+1})-N(t_k))(d'_k-d)\abs{p_{k+1}-p^*(d'_k)}\mathbbm{1}\{k<\bar{\kappa}\}\right]\\
        \leq &\beta\bar{a}^2\mathbb{E}\left[(N(t_{k+1})-N(t_k))\right]\sqrt{\mathbb{E}\left[(d'_k-d_0)^2\right]}\sqrt{\mathbb{E}\left[\abs{p_{k+1}-p^*(d'_k)}^2\mathbbm{1}\{k<\bar{\kappa}\}\right]}
    \end{aligned}
\end{equation}
Then, by Lemma \ref{Lemma: AuxThm4}(i), there exists a constant $C_{res,2,1}$ such that
\begin{equation}\begin{aligned}
    &\sqrt{\mathbb{E}\left[\abs{p_{k+1}-p^*(d'_k)}^2\mathbbm{1}\{k<\bar{\kappa}\}\right]}\\
    \leq&\sqrt{\frac{C_{res,2,1}}{\lambda B k}}\\
    \leq &\sqrt{\frac{2C_{res,2,1}}{\lambda B (k+1)}}
\end{aligned}\end{equation}
Thus, there exists a constant $C_{res,2}$ such that
\begin{equation}\label{eqn: PARex6}
\begin{aligned}
    &2\mathbb{E}\left[(d'_k-d_0)\left(\frac{\sum_{j=N(t_k)}^{N(t_{k+1})}a_j(\mathbbm{1}\{r_j>a_jp^*(d'_k)\}-\mathbbm{1}\{r_j>a_jp_{k+1}\})}{\lambda (T-t_{k+1})}\mathbbm{1}\{k<\bar{\kappa}\}\right)\right]\\
    \leq &\frac{\sqrt{C_{res,2}}}{\sqrt{\lambda B}(K-k-1)\sqrt{k+1}}\sqrt{\mathbb{E}\left[(d'_k-d_0)^2\right]}
\end{aligned}\end{equation}
(\ref{eqn: PARex0}), (\ref{eqn: PARex1}), (\ref{eqn: PARex2}), (\ref{eqn: PARex3}), (\ref{eqn: PARex4}), (\ref{eqn: PARex5}), (\ref{eqn: PARex6}) together with Cauchy-Schwarz Inequality imply that there exists a constant $C_{res}$ such that
\begin{equation}
    \begin{aligned}
    \mathbb{E}\left[(d'_{k+1}-d_0)^2\right]\leq\frac{C_{res}}{(\lambda B)(K-k-1)^2}+\frac{\sqrt{C_{res}}}{\sqrt{\lambda B}(K-k-1)\sqrt{k+1}}\sqrt{\mathbb{E}\left[(d'_{k}-d_0)^2\right]}\quad\forall 0\leq k\leq K-2
\end{aligned}
\end{equation}
Thus, by Lemma \ref{lem:Induction}, there exists a constant $M_{Res}=8C_{res}$ such that
\begin{equation}
    \sum_{k=1}^{K-1}\mathbb{E}\left[(d'_k-d_0)^2\right]\leq\frac{M_{Res}}{\lambda B}\log(K)
\end{equation}
\end{proof}
\begin{lem}\label{Lemma: AuxThm4}
    \;\\
    (a) If $\left(r,a\right)\ind \left(p,N_1,N_2\right)$, then
    \begin{equation}
        \mathbb{E}\left[\left(r-ap^*\right)\left(\mathbbm{1}\left\{r>ap^*\right\}-\mathbbm{1}\left\{r>ap\right\}\right)\bigg\rvert N_1,N_2\right]\leq \beta\bar{a}^2\mathbb{E}\left[\left(p-p^*\right)^2\bigg\rvert N_1,N_2\right]
    \end{equation}
    (b) There exists $C$ such that
    \begin{equation}\begin{aligned}
    &\mathbb{E}\left[\sum_{j=1}^{+\infty}\left(\Tilde{p}_{J_1(j)}-p^*\right)^2\mathbbm{1}\left\{j\leq N(t_1)\right\}\right]\\
    \leq& C+L\mathbb{E}\left[\sum_{j=1}^{N(t_1)}(d_j-d_0)^2\right]+2\sqrt{C}\sqrt{L\mathbb{E}\left[\sum_{j=1}^{N(t_1)}(d_j-d_0)^2\right]}
    \end{aligned}\end{equation}
    and, for all $k\geq 2$,
    \begin{equation}\begin{aligned}
        &\mathbb{E}\left[\sum_{j=1}^{+\infty}\left(p_{J_k(j)}-p^*\right)^2\mathbbm{1}\left\{N(t_{k-1})<j\leq N(t_k)\right\}\right]\\
        \leq &\frac{C}{k}+L\mathbb{E}\left[\sum_{j=N(t_{k-1})+1}^{N(t_k)}(d_j-d_0)^2\right]+2\sqrt{\frac{C}{k}}\sqrt{L\mathbb{E}\left[\sum_{j=N(t_{k-1})+1}^{N(t_k)}(d_j-d_0)^2\right]}
    \end{aligned}\end{equation}
    (c) There exists a constant $\Tilde{C}$ such that
    \begin{equation}
        \mathbb{E}\left[\sum_{j=1}^{+\infty}\mathbbm{1}\left\{\mathbbm{1}\left\{r_{J_1(j)}>a_{J_1(j)}p_{J_1(j)}\right\}\neq\mathbbm{1}\left\{r_{J_1(j)}>a_{J_1(j)}\Tilde{p}_{J_1(j)}\right\}\right\}\mathbbm{1}\{j\leq N(t_1)\}\right]\leq \Tilde{C}
    \end{equation}
    (d) There exists a constant $\hat{C}$ such that
    \begin{equation}
        \begin{aligned}
            \mathbb{E}\left[\sum_{j=1}^{+\infty}\left(\hat{p}_{J_K(j)}-p^*\right)^2\mathbbm{1}\left\{N(t_{K-1})<j\leq N(T)\right\}\right]\leq \hat{C}\log K + \hat{C}
        \end{aligned}
    \end{equation}
    (e) There exists a constant $\mathring{C}$ such that
    \begin{equation}
        \begin{aligned}
            &\mathbb{E}\left[\sum_{j=1}^{+\infty}\mathbbm{1}\left\{\mathbbm{1}\{r_{J_K(j)}>a_{J_K(j)}\hat{p}_{J_K(j)}\}\neq\mathbbm{1}\{r_{J_K(j)}>a_{J_K(j)}p_{J_K(j)}\}\right\}\mathbbm{1}\left\{N(t_{K-1})<j\leq N(T)\right\}\right]\leq\mathring{C}\log K +\mathring{C}
        \end{aligned}
    \end{equation}
    (f) If $(r_1,a_1)\ind(r_2,a_2)\ind(p,\mathring{d},N_1,N_2)$, then
    \begin{equation}
        \begin{aligned}
            \mathbb{E}\left[a_1\left(\mathbbm{1}\left\{r_1>a_1p^*(\mathring{d})\right\}-\mathbbm{1}\left\{r_1>a_1p\right\}\right)\bigg\rvert \mathring{d},N_1,N_2\right]\leq\beta\bar{a}^2\mathbb{E}\left[\abs{p-p^*(\mathring{d})}\bigg\rvert \mathring{d},N_1,N_2\right]
        \end{aligned}
    \end{equation}
    and
    \begin{equation}
        \begin{aligned}
            &\mathbb{E}\left[a_1\left(\mathbbm{1}\left\{r_1>a_1p^*(\mathring{d})\right\}-\mathbbm{1}\left\{r_1>a_1p\right\}\right)a_2\left(\mathbbm{1}\left\{r_2>a_2p^*(\mathring{d})\right\}-\mathbbm{1}\left\{r_2>a_2p\right\}\right)\bigg\rvert \mathring{d},N_1,N_2\right]\\
            \leq &\beta^2\bar{a}^4\mathbb{E}\left[\left(p-p^*(\mathring{d})\right)^2\bigg\rvert \mathring{d},N_1,N_2\right]
        \end{aligned}
    \end{equation}
    (g) There exists a constant $C_7$ such that
    \begin{equation}
        \begin{aligned}
            \mathbb{E}\left[\sum_{i=1}^{+\infty}\sum_{j=1}^{+\infty}\left(\hat{p}_{J_1(i),J_1(j)}-p^*(d'_0)\right)^2\mathbbm{1}\{i\leq N(t_{1})\}\mathbbm{1}\{j\leq N(t_{1})\}\right]\leq C_7\lambda B
        \end{aligned}
    \end{equation}
    (h) There exists a constant $C_8$ such that
    \begin{equation}
        \begin{aligned}
            &\mathbb{E}\left[\sum_{i=1}^{+\infty}\sum_{j=1}^{+\infty}\left(\mathbbm{1}\left\{\mathbbm{1}\left\{r_{J_1(j)}>a_{J_1(j)}p_{1}\right\}\neq\mathbbm{1}\left\{r_{J_1(j)}>a_{J_1(j)}\hat{p}_{J_1(i),J_1(j)}\right\}\right\}\right.\right.\\
        &\left.\left.\qquad+\mathbbm{1}\left\{\mathbbm{1}\left\{r_{J_1(i)}>a_{J_1(i)}p_{1}\right\}\neq\mathbbm{1}\left\{r_{J_1(i)}>a_{J_1(i)}\hat{p}_{J_1(i),J_1(j)}\right\}\right\}\right)\mathbbm{1}\{i\leq N(t_{1})\}\mathbbm{1}\{j\leq N(t_{1})\}\right]\leq C_8\lambda B
        \end{aligned}
    \end{equation}
    (i) There exists a constant $C_9$, for all $k\geq 1$,
    \begin{equation}
        \mathbb{E}\left[\left(p_{k+1}-p^*(d'_k)\right)^2\mathbbm{1}\left\{k<\bar{\kappa}\right\}\right]\leq\frac{C_9}{\lambda Bk}
    \end{equation}
\end{lem}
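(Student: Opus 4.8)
The plan is to prove all nine items by the same two-stage device. First I would \emph{condition on the arrival process} $(N(t):t\ge0)$, which by Assumption~\ref{Assumption: Random Arrival} is independent of $\{(r_j,a_j)\}_{j\ge1}$; once the relevant Poisson counts are frozen, each dual price in the statement is an ordinary \emph{offline} dual price built from a conditionally deterministic number of i.i.d.\ pairs, so the deterministic-sample-size machinery already in hand applies---Lemma~\ref{lem: UniformDualConvergence} (uniform dual convergence), Lemma~\ref{lem: BoundedLipschitz}(b) (Lipschitz continuity of $p^*$), Lemma~\ref{lem:Leave-One-Out} (leave-one/two-out), and the bounded-density Assumption~\ref{Assumption: DistributionDensity}(b). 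Second I would integrate back over the counts using the Poisson identities $\mathbb{E}[N(t_k)]=\mathrm{Var}(N(t_k))=\lambda t_k$ and Chernoff tail bounds. Items (a) and (f) need only the first stage: they are the conditional analogues of Lemma~\ref{Lemma: AuxThm1}(a) and (d)---split $\mathbbm{1}\{r>ap^*\}-\mathbbm{1}\{r>ap\}$ into its two one-sided pieces, condition further on $(a,p)$ (resp.\ $(a_1,a_2,p)$), and apply $f_{r\mid a}\le\beta$ together with the stated independence from $(N_1,N_2)$.

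For (b)--(e) I would fix the batch and condition on $(N(t_{k-1}),N(t_k))$ (middle batches), on $N(t_1)$ (first batch), or on $(N(t_{K-1}),N(T))$ (last batch). For a middle batch, $p_{J_k(j)}$ is the offline dual on $N(t_{k-1})$ customers evaluated at $\widetilde d_{k-1}=d_j$ and is independent of $(r_{J_k(j)},a_{J_k(j)})$, so $(p_{J_k(j)}-p^*)^2\le 2(p_{J_k(j)}-p^*(d_j))^2+2(p^*(d_j)-p^*)^2$ with the first term $O(1/N(t_{k-1}))$ by Lemma~\ref{lem: UniformDualConvergence} on $\{N(t_{k-1})>N_{Dual}\}$ and the second $\le L|d_j-d_0|^2$ by Lemma~\ref{lem: BoundedLipschitz}(b); summing the $N(t_k)-N(t_{k-1})$ terms, taking expectations, applying Cauchy--Schwarz, and absorbing the small-count event $\{N(t_{k-1})\le N_{Dual}\}$ into a constant via boundedness ($p_{J_k(j)},p^*\in[0,\bar r/\underline d]$) yields the form claimed in (b), with the $C/k$ coming from $\lambda t_{k-1}=\lambda(k-1)B$; the first-batch case replaces $p_1$ by its leave-one-out version $\widetilde p_{J_1(j)}$, incurring the extra term estimated in (c). Item (c) is the same computation with Lemma~\ref{lem:Leave-One-Out} ($k=1$): conditionally on $N(t_1)$, $\sum_j\mathbb{P}\{\cdots\}\mathbbm{1}\{j\le N(t_1)\}=N(t_1)\cdot C_{LOO,1}/N(t_1)$ on $\{N(t_1)>N_{LOO,1}\}$, hence a constant. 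Items (d) and (e) then fall out because the last batch carries $\mathbb{E}[N(T)-N(t_{K-1})]=\lambda B=O(1)$ customers: boundedness of $\widehat p_{J_K(j)}$ (which lies in $[0,\bar r/\underline a]$, since the last-batch dual objective is eventually increasing in $p$) and Lemma~\ref{lem:Leave-One-Out} already give $O(1)$, and the only $K$-dependence is $(p^*(d_{K-1})-p^*)^2\le L|d_{K-1}-d_0|^2$ times $O(\lambda B)$ terms, which is $O(\log K)$ via $\mathbb{E}[|d'_{K-1}-d_0|^2]\le M_{Res}\log K/(\lambda B)$ from Lemma~\ref{lem: PARes}.

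Items (g), (h), (i) carry the genuinely new work, because they mix in the \emph{rate-estimation error}: by construction $\widetilde d_k=(\lambda/\widehat\lambda_k)\,d'_k$ on $\{k<\bar\kappa\}$ with $\widehat\lambda_k=(N(t_k)+1)/t_k$, so I would repeatedly use the Poisson moment estimate $\mathbb{E}\big[(\lambda/\widehat\lambda_k-1)^2\big]=\mathbb{E}\big[(\lambda t_k-N(t_k)-1)^2/(N(t_k)+1)^2\big]=O(1/(\lambda kB))$, valid for $\lambda B>\Lambda_{PA}$, together with a Chernoff bound on $\{N(t_k)\le\tfrac12\lambda t_k\}$ (on which $\widetilde d_k$ may leave $\Omega_d$ and only boundedness is available). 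For (i), $(p_{k+1}-p^*(d'_k))^2\le 2(p_{k+1}-p^*(\widetilde d_k))^2+2L|\widetilde d_k-d'_k|^2$ with the first term $O(1/N(t_k))$ by conditional dual convergence and $|\widetilde d_k-d'_k|^2=|d'_k|^2(\lambda/\widehat\lambda_k-1)^2$, both integrating to $O(1/(\lambda Bk))$. For (g) and (h) I would condition on $N(t_1)$: the double sum over $\{i,j\le N(t_1)\}$ has $N(t_1)^2$ terms, each $O(1/N(t_1))$---for (g) via $(\widehat p_{i,j}-p^*(d_0))^2\le 2(\widehat p_{i,j}-p^*(\widetilde d_0))^2+2L|\widetilde d_0-d_0|^2$ and the above Poisson estimate, for (h) via Lemma~\ref{lem:Leave-One-Out} with $k=2$---so the conditional double sum is $O(N(t_1))$ and $\mathbb{E}[N(t_1)]=\lambda B$ gives the advertised $C\lambda B$.

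The main obstacle is not any single estimate but the bookkeeping around the \emph{low-occupancy events} $\{N(t_k)\le c\lambda t_k\}$: there the conditional dual-convergence and leave-one-out lemmas are vacuous (their thresholds $N_{Dual},N_{LOO,k}$ are violated) and $\widetilde d_k$ can be pushed outside $\Omega_d$, so each item needs a separate crude bound on that event---using only $p\in[0,\bar r/\underline d]$ (or $[0,\bar r/\underline a]$) and $\mathbb{P}\{N(t_k)\le c\lambda t_k\}\le e^{-c'\lambda t_k}$---that is still negligible relative to the claimed constant. This is precisely what forces the hypothesis $\lambda B>\Lambda_{PA}$ and, in (g)/(h), pins down the exact power of $\lambda B$ so that the estimates feed correctly into the remaining-resource recursion of Lemma~\ref{lem: PARes} and thence into Theorem~\ref{Thm: RegretPA}.
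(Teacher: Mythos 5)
Your plan follows the paper's proof essentially step for step: condition on the Poisson counts so that Lemma~\ref{lem: UniformDualConvergence}, Lemma~\ref{lem: BoundedLipschitz}(b), Lemma~\ref{lem:Leave-One-Out} and Assumption~\ref{Assumption: DistributionDensity}(b) apply with a frozen sample size, handle the low-occupancy events and the event that $\Tilde{d}_k$ leaves $\Omega_d$ by boundedness of the dual prices together with Poisson moment and tail estimates (collected in the paper's Lemma~\ref{lem: PAAux}), and then integrate back over the counts; the decompositions you propose for each of (a)--(i), including the $\lambda/\hat{\lambda}_k$ rate-estimation error in (g)--(i), are the ones the paper uses. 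One small correction to your justification of (d) and (e): $\mathbb{E}[N(T)-N(t_{K-1})]=\lambda B$ is not $O(1)$ (it is only bounded \emph{below} by $\Lambda_{PA}$); the bound is nevertheless free of $\lambda B$ for the reason you actually invoke, namely that the leave-one-out and dual-convergence estimates are $O(1/(N(T)-N(t_{K-1})))$ per customer and there are $N(T)-N(t_{K-1})$ customers, so these contributions are $O(1)$, while the $\log K$ comes from the event $\{\bar{\kappa}<K\}$ and from the Lipschitz term, both controlled by Lemma~\ref{lem: PARes}.
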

\begin{proof}
For part (a),
    \begin{equation}
        \begin{aligned}
            &\mathbb{E}\left[\left(r-ap^*\right)\left(\mathbbm{1}\left\{r>ap^*\right\}-\mathbbm{1}\left\{r>ap\right\}\right)\bigg\rvert N_1,N_2\right]\\
            \leq &\mathbb{E}\left[\left(ap-ap^*\right)\mathbbm{1}\left\{ap^*<r\leq ap\right\}\bigg\rvert N_1,N_2\right]+\mathbb{E}\left[\left(ap^*-ap\right)\mathbbm{1}\left\{ap<r\leq ap^*\right\}\bigg\rvert N_1,N_2\right]
        \end{aligned}
    \end{equation}
    By Assumption \ref{Assumption: DistributionDensity} and $\left(r,a\right)\ind \left(p,N_1,N_2\right)$,
    \begin{equation}
        \begin{aligned}
            &\mathbb{E}\left[\left(ap-ap^*\right)\mathbbm{1}\left\{ap^*<r\leq ap\right\}\bigg\rvert N_1,N_2\right]\\
            =&\mathbb{E}\left[\left(ap-ap^*\right)\mathbbm{1}\left\{ap^*\leq ap\right\}\mathbb{E}\left[\mathbbm{1}\left\{ap^*<r\leq ap\right\}\bigg\rvert a,p,N_1,N_2\right]\bigg\rvert N_1,N_2\right]\\
            \leq &\beta\bar{a}^2\mathbb{E}\left[(p-p^*)^2\mathbbm{1}\left\{ap^*\leq ap\right\}\bigg\rvert N_1,N_2\right]
        \end{aligned}
    \end{equation}
    Similarly,
    \begin{equation}
        \mathbb{E}\left[\left(ap^*-ap\right)\mathbbm{1}\left\{ap<r\leq ap^*\right\}\bigg\rvert N_1,N_2\right]\leq \beta\bar{a}^2\mathbb{E}\left[(p-p^*)^2\mathbbm{1}\left\{ap<ap^*\right\}\bigg\rvert N_1,N_2\right]
    \end{equation}
    Thus,
    \begin{equation}
        \mathbb{E}\left[\left(r-ap^*\right)\left(\mathbbm{1}\left\{r>ap^*\right\}-\mathbbm{1}\left\{r>ap\right\}\right)\bigg\rvert N_1,N_2\right]\leq \beta\bar{a}^2\mathbb{E}\left[\left(p-p^*\right)^2\bigg\rvert N_1,N_2\right]
    \end{equation}
Throughout the following parts, define
\begin{equation}
    N_{max}=\max\left\{N_{Dual},N_{LOO,1},N_{LOO,2},2\right\}
\end{equation}
For part(b),
\begin{equation}\label{eqn: AuxThm4(b)0}
    \begin{aligned}
    &\mathbb{E}\left[\sum_{j=1}^{+\infty}\left(\Tilde{p}_{J_1(j)}-p^*\right)^2\mathbbm{1}\left\{j\leq N(t_1)\right\}\right]\\
    \leq &\mathbb{E}\left[\sum_{j=1}^{+\infty}\left(\Tilde{p}_{J_1(j)}-p^*\right)^2\left(\mathbbm{1}\left\{\abs{d_{J_1(j)}-d_0}>2\delta_d\right\}+\mathbbm{1}\{N(t_1)\leq N_{max}\}\right)\mathbbm{1}\left\{j\leq N(t_1)\right\}\right]\\
    &+\mathbb{E}\left[\sum_{j=1}^{+\infty}\left(\left(\Tilde{p}_{J_1(j)}-p^*(d_{J_1(j)})\right)+\left(p^*(d_{J_1(j)})-p^*\right)\right)^2\mathbbm{1}\left\{\abs{d_{J_1(j)}-d_0}\leq 2\delta_d\right\}\mathbbm{1}\{N(t_1)> N_{max}\}\mathbbm{1}\left\{j\leq N(t_1)\right\}\right]
\end{aligned}\end{equation}
By definition of $p_j$, $d_j$, and $J_1(j)$ and Lemma \ref{lem: PAAux}(a), there exists a constant $M_1$ such that
\begin{equation}\label{eqn: AuxThm4(b)1}
    \begin{aligned}
        &\mathbb{E}\left[\sum_{j=1}^{+\infty}\left(\Tilde{p}_{J_1(j)}-p^*\right)^2\left(\mathbbm{1}\left\{\abs{d_{J_1(j)}-d_0}>2\delta_d\right\}+\mathbbm{1}\{N(t_1)\leq N_{max}\}\right)\mathbbm{1}\left\{j\leq N(t_1)\right\}\right]\\
        \leq &\frac{4\bar{r}^2}{\underline{d}^2}\mathbb{E}\left[\sum_{j=1}^{+\infty}\left(\mathbbm{1}\left\{\abs{d_{J_1(j)}-d_0}>2\delta_d\right\}+\mathbbm{1}\{N(t_1)\leq N_{max}\}\right)\mathbbm{1}\left\{j\leq N(t_1)\right\}\right]\\
        =&\frac{4\bar{r}^2}{\underline{d}^2}\mathbb{E}\left[N(t_1)\mathbbm{1}\left\{\abs{\Tilde{d}_0-d_0}>2\delta_d\right\}\right] +\frac{4\bar{r}^2}{\underline{d}^2}\mathbb{E}\left[N(t_1)\mathbbm{1}\{N(t_1)\leq N_{max}\}\right]\leq \frac{4\bar{r}^2}{\underline{d}^2}\left(M_1+N_{max}\right)
    \end{aligned}
\end{equation}
By Lemma \ref{lem: UniformDualConvergence},
\begin{equation}\label{eqn: AuxThm4(b)2}
    \begin{aligned}
    &\mathbb{E}\left[\sum_{j=1}^{+\infty}\left(\Tilde{p}_{J_1(j)}-p^*(d_{J_1(j)})\right)^2\mathbbm{1}\left\{\abs{d_{J_1(j)}-d_0}\leq 2\delta_d\right\}\mathbbm{1}\{N(t_1)> N_{max}\}\mathbbm{1}\left\{j\leq N(t_1)\right\}\right]\\
    =&\mathbb{E}\left[\sum_{j=1}^{+\infty}\mathbb{E}\left[\left(\Tilde{p}_{J_1(j)}-p^*(d_{J_1(j)})\right)^2\mathbbm{1}\left\{\abs{d_{J_1(j)}-d_0}\leq 2\delta_d\right\}\bigg\rvert N(t_1)\right]\mathbbm{1}\{N(t_1)> N_{max}\}\mathbbm{1}\left\{j\leq N(t_1)\right\}\right]\\
    \leq &\mathbb{E}\left[\sum_{j=1}^{+\infty}\mathbb{E}\left[\sup_{d\in\Omega_d}\left(p^*_{N(t_1)-1}(d)-p^*(d)\right)^2\bigg\rvert N(t_1)\right]\mathbbm{1}\{N(t_1)> N_{max}\}\mathbbm{1}\left\{j\leq N(t_1)\right\}\right]\\
    \leq & \mathbb{E}\left[\sum_{j=1}^{+\infty}\frac{C_{Dual}}{\max\{N(t_1)-1,1\}}\mathbbm{1}\left\{j\leq N(t_1)\right\}\right]\leq 2C_{Dual}
\end{aligned}\end{equation}
By Lemma~\ref{lem: BoundedLipschitz}(b),
\begin{equation}\label{eqn: AuxThm4(b)3}
    \begin{aligned}
    &\mathbb{E}\left[\sum_{j=1}^{+\infty}\left(p^*(d_{J_1(j)})-p^*\right)^2\mathbbm{1}\left\{\abs{d_{J_1(j)}-d_0}\leq 2\delta_d\right\}\mathbbm{1}\{N(t_1)> N_{max}\}\mathbbm{1}\left\{j\leq N(t_1)\right\}\right]\leq L\mathbb{E}\left[\sum_{j=1}^{N(t_1)}(d_j-d_0)^2\right]
\end{aligned}\end{equation}
By (\ref{eqn: AuxThm4(b)0}), (\ref{eqn: AuxThm4(b)1}), (\ref{eqn: AuxThm4(b)2}), and (\ref{eqn: AuxThm4(b)3}) and Cauchy-Schwarz Inequality, there exists a constant $C_1$ such that
\begin{equation}
    \begin{aligned}
        &\mathbb{E}\left[\sum_{j=1}^{+\infty}\left(\Tilde{p}_{J_1(j)}-p^*\right)^2\mathbbm{1}\left\{j\leq N(t_1)\right\}\right]\leq C_1+L\mathbb{E}\left[\sum_{j=1}^{N(t_1)}(d_j-d_0)^2\right]+2\sqrt{C_1}\sqrt{L\mathbb{E}\left[\sum_{j=1}^{N(t_1)}(d_j-d_0)^2\right]}
    \end{aligned}
\end{equation}
For $k\geq 2$,
\begin{equation}\label{eqn: AuxThm4(b)4}
    \begin{aligned}
        &\mathbb{E}\left[\sum_{j=1}^{+\infty}\left(p_{J_k(j)}-p^*\right)^2\mathbbm{1}\left\{N(t_{k-1})<j\leq N(t_k)\right\}\right]\\
        =&\mathbb{E}\left[\sum_{j=1}^{+\infty}\left(p_{J_k(j)}-p^*\right)^2\left(\mathbbm{1}\left\{\abs{d_{J_k(j)}-d_0}>2\delta_d\right\}+\mathbbm{1}\{N(t_{k-1})\leq N_{max}\}\right)\mathbbm{1}\left\{N(t_{k-1})<j\leq N(t_k)\right\}\right]\\
        &+\mathbb{E}\left[\sum_{j=1}^{+\infty}\left(\left(\Tilde{p}_{J_k(j)}-p^*(d_{J_k(j)})\right)+\left(p^*(d_{J_k(j)})-p^*\right)\right)^2\cdot\right.\\
        &\left.\qquad\qquad\qquad\qquad\qquad\qquad\mathbbm{1}\left\{\abs{d_{J_k(j)}-d_0}\leq 2\delta_d\right\}\mathbbm{1}\{N(t_{k-1})> N_{max}\}\mathbbm{1}\left\{N(t_{k-1})<j\leq N(t_k)\right\}\right]
    \end{aligned}
\end{equation}
By definition of $p_j$, $d_j$, and $J_k(j)$,
\begin{equation}\label{eqn: AuxThm4(b)5}
    \begin{aligned}
        &\mathbb{E}\left[\sum_{j=1}^{+\infty}\left(p_{J_k(j)}-p^*\right)^2\left(\mathbbm{1}\left\{\abs{d_{J_k(j)}-d_0}>2\delta_d\right\}+\mathbbm{1}\{N(t_{k-1})\leq N_{max}\}\right)\mathbbm{1}\left\{N(t_{k-1})<j\leq N(t_k)\right\}\right]\\
        \leq &\frac{4\bar{r}^2}{\underline{d}^2}\left(\mathbb{E}\left[(N(t_k)-N(t_{k-1}))\mathbbm{1}\left\{\abs{\Tilde{d}_{k-1}-d_0}>2\delta_d\right\}\mathbbm{1}\{k\leq\bar{\kappa}\}\right]+\mathbb{E}\left[(N(t_k)-N(t_{k-1}))\mathbbm{1}\{N(t_{k-1})\leq N_{max}\}\right]\right)\\
        =&\frac{4\lambda B\bar{r}^2}{\underline{d}^2}\left(\mathbb{E}\left[\mathbbm{1}\left\{\abs{\Tilde{d}_{k-1}-d_0}>2\delta_d\right\}\mathbbm{1}\{k\leq\bar{\kappa}\}\right]+\mathbb{E}\left[\mathbbm{1}\{N(t_{k-1})\leq N_{max}\}\right]\right)
    \end{aligned}
\end{equation}
By Lemma \ref{lem: PAAux}(b),
\begin{equation}\label{eqn: AuxThm4(b)6}
\begin{aligned}
    \mathbb{E}\left[\mathbbm{1}\left\{\abs{\Tilde{d}_{k-1}-d_0}>2\delta_d\right\}\mathbbm{1}\{k\leq\bar{\kappa}\}\right]\leq &\mathbb{E}\left[\mathbbm{1}\left\{\abs{\Tilde{d}_{k-1}-d_{k-1}'}>\delta_d\right\}\mathbbm{1}\{k\leq\bar{\kappa}\}\right]\\
    \leq &\frac{\bar{d}^2}{\delta_d^2}\mathbb{E}\left[\abs{\frac{\lambda}{\hat{\lambda}_{k-1}}-1}^2\right]\\
    \leq &\frac{\bar{d}^2}{\delta_d^2}\cdot\frac{4}{\lambda B (k-1)}
\end{aligned}\end{equation}
By Lemma \ref{lem: PAAux}(c), there exists a constant $M_3$ such that
\begin{equation}
    \mathbb{E}\left[\mathbbm{1}\{N(t_{k-1})\leq N_{max}\}\right]\leq \frac{M_3}{\lambda B(k-1)}
\end{equation}
Also, similar to the case in which $k=1$,
\begin{equation}\label{eqn: AuxThm4(b)7}
    \begin{aligned}
        &\mathbb{E}\left[\sum_{j=1}^{+\infty}\left(\Tilde{p}_{J_k(j)}-p^*(d_{J_k(j)})\right)^2\mathbbm{1}\left\{\abs{d_{J_k(j)}-d_0}\leq 2\delta_d\right\}\mathbbm{1}\{N(t_{k-1})> N_{max}\}\mathbbm{1}\left\{N(t_{k-1})<j\leq N(t_k)\right\}\right]\\
        \leq &\mathbb{E}\left[\frac{C_{Dual}\left(N(t_k)-N(t_{k-1})\right)}{\max\{N(t_{k-1}),1\}}\right]\leq\mathbb{E}\left[\frac{C_{Dual}\lambda B}{\max\{N(t_{k-1}),1\}}\right]\leq\frac{2C_{Dual}}{k-1}
    \end{aligned}
\end{equation}
and
\begin{equation}\label{eqn: AuxThm4(b)8}
    \begin{aligned}
        &\mathbb{E}\left[\sum_{j=1}^{+\infty}\left(p^*(d_{J_k(j)})-p^*\right)^2\mathbbm{1}\left\{\abs{d_{J_k(j)}-d_0}\leq 2\delta_d\right\}\mathbbm{1}\{N(t_{k-1})> N_{max}\}\mathbbm{1}\left\{N(t_{k-1})<j\leq N(t_k)\right\}\right]\\
        \leq &L\mathbb{E}\left[\sum_{j=N(t_{k-1})+1}^{N(t_k)}(d_j-d_0)^2\right]
    \end{aligned}
\end{equation}
Then, by (\ref{eqn: AuxThm4(b)4}), (\ref{eqn: AuxThm4(b)5}), (\ref{eqn: AuxThm4(b)6}), (\ref{eqn: AuxThm4(b)7}), (\ref{eqn: AuxThm4(b)8}) and Cauchy-Schwarz Inequality, there exists a constant $C_2$ such that, for $k\geq 2$,
\begin{equation}\begin{aligned}
    &\mathbb{E}\left[\sum_{j=1}^{+\infty}\left(p_{J_k(j)}-p^*\right)^2\mathbbm{1}\left\{N(t_{k-1})<j\leq N(t_k)\right\}\right]\\
    \leq &\frac{C}{k}+L\mathbb{E}\left[\sum_{j=N(t_{k-1})+1}^{N(t_k)}(d_j-d_0)^2\right]+2\sqrt{\frac{C}{k}}\sqrt{L\mathbb{E}\left[\sum_{j=N(t_{k-1})+1}^{N(t_k)}(d_j-d_0)^2\right]}
\end{aligned}\end{equation}
Take $C=\max\{C_1,C_2\}$, and the proof is complete.\\
\;\\
\noindent For part (c),
\begin{equation}\begin{aligned}
    &\mathbb{E}\left[\sum_{j=1}^{+\infty}\mathbbm{1}\left\{\mathbbm{1}\left\{r_{J_1(j)}>a_{J_1(j)}p_{J_1(j)}\right\}\neq\mathbbm{1}\left\{r_{J_1(j)}>a_{J_1(j)}\Tilde{p}_{J_1(j)}\right\}\right\}\mathbbm{1}\{j\leq N(t_1)\}\right]\\
    \leq &\mathbb{E}\left[\sum_{j=1}^{+\infty}\left(\mathbbm{1}\left\{\abs{d_{J_1(j)}-d_0}>2\delta_d\right\}+\mathbbm{1}\{N(t_1)\leq N_{max}\}\right)\mathbbm{1}\{j\leq N(t_1)\}\right]\\
    &+\mathbb{E}\left[\sum_{j=1}^{+\infty}\left(\mathbbm{1}\left\{\mathbbm{1}\left\{r_{J_1(j)}>a_{J_1(j)}p_{J_1(j)}\right\}\neq\mathbbm{1}\left\{r_{J_1(j)}>a_{J_1(j)}\Tilde{p}_{J_1(j)}\right\}\right\}\cdot\right.\right.\\
    &\qquad\left.\left.\mathbbm{1}\left\{\abs{d_{J_1(j)}-d_0}\leq 2\delta_d\right\}\mathbbm{1}\{N(t_1)> N_{max}\}\mathbbm{1}\{j\leq N(t_1)\}\right)\right]
\end{aligned}\end{equation}
By Lemma \ref{lem: PAAux}(a),
\begin{equation}\label{eqn: AuxThm4(c)1}
    \mathbb{E}\left[\sum_{j=1}^{+\infty}\left(\mathbbm{1}\left\{\abs{d_{J_1(j)}-d_0}>2\delta_d\right\}+\mathbbm{1}\{N(t_1)\leq N_{max}\}\right)\mathbbm{1}\{j\leq N(t_1)\}\right]\leq M_1+N_{max}
\end{equation}
By Lemma \ref{lem:Leave-One-Out},
\begin{equation}\label{eqn: AuxThm4(c)2}
    \begin{aligned}
        &\mathbb{E}\left[\sum_{j=1}^{+\infty}\left(\mathbbm{1}\left\{\mathbbm{1}\left\{r_{J_1(j)}>a_{J_1(j)}p_{J_1(j)}\right\}\neq\mathbbm{1}\left\{r_{J_1(j)}>a_{J_1(j)}\Tilde{p}_{J_1(j)}\right\}\right\}\cdot\right.\right.\\
        &\qquad\left.\left.\mathbbm{1}\left\{\abs{d_{J_1(j)}-d_0}\leq 2\delta_d\right\}\mathbbm{1}\{N(t_1)> N_{max}\}\mathbbm{1}\{j\leq N(t_1)\}\right)\right]\\
        =&\mathbb{E}\left[\sum_{j=1}^{+\infty}\mathbb{E}\left[\left(\mathbbm{1}\left\{\mathbbm{1}\left\{r_{J_1(j)}>a_{J_1(j)}p_{J_1(j)}\right\}\neq\mathbbm{1}\left\{r_{J_1(j)}>a_{J_1(j)}\Tilde{p}_{J_1(j)}\right\}\right\}\cdot\right.\right.\right.\\
        &\qquad\left.\left.\left.\mathbbm{1}\left\{\abs{d_{J_1(j)}-d_0}\leq 2\delta_d\right\}\mathbbm{1}\{N(t_1)> N_{max}\}\bigg\rvert N(t_1)\right]\mathbbm{1}\{j\leq N(t_1)\}\right)\right]\\
        \leq &C_{LOO,1}\mathbb{E}\left[\frac{N(t_1)}{\max\{N(t_1),1\}}\right]\leq C_{LOO,1}
    \end{aligned}
\end{equation}
(\ref{eqn: AuxThm4(c)1}) and (\ref{eqn: AuxThm4(c)2}) completes the proof.

\;\\
For part(d), 
\begin{equation}
    \begin{aligned}
        &\mathbb{E}\left[\sum_{j=1}^{+\infty}\left(\hat{p}_{J_K(j)}-p^*\right)^2\mathbbm{1}\left\{N(t_{K-1})<j\leq N(T)\right\}\right]\\
        =&\mathbb{E}\left[\sum_{j=1}^{+\infty}\mathbb{E}\left[\left(\hat{p}_{J_K(j)}-p^*\right)^2\bigg\rvert N(t_{K-1}),N(T)\right]\mathbbm{1}\left\{N(t_{K-1})<j\leq N(T)\right\}\right]
    \end{aligned}
\end{equation}
Define
\begin{equation}
    \hat{d}=\frac{b_{N(t_{K-1})}}{\max\{N(T)-N(t_{K-1}),1\}}\mathbbm{1}\{N(T)-N(t_{K-1})\geq 1\}+2\bar{d}\mathbbm{1}\{N(T)-N(t_{K-1})= 0\}
\end{equation}
then
\begin{equation}
    \begin{aligned}
        &\mathbb{E}\left[\left(\hat{p}_{J_K(j)}-p^*\right)^2\bigg\rvert N(t_{K-1}),N(T)\right]\\
        \leq &\frac{4\bar{r}^2}{\underline{d}^2}\mathbb{E}\left[\mathbbm{1}\{\bar{\kappa}<K\}+\mathbbm{1}\{\bar{\kappa}=K\}\mathbbm{1}\left\{\abs{\hat{d}-d_0}>2\delta_d\right\}+\mathbbm{1}\{N(T)-N(t_{K-1})\leq N_{max}\}\bigg\rvert N(t_{K-1}),N(T)\right]\\
        &+\mathbb{E}\left[\left(\hat{p}_{J_K(j)}-p^*\right)^2\mathbbm{1}\{\bar{\kappa}=K\}\mathbbm{1}\left\{\abs{\hat{d}-d_0}\leq 2\delta_d\right\}\mathbbm{1}\{N(T)-N(t_{K-1})\geq N_{max}\}\bigg\rvert N(t_{K-1}),N(T)\right]
    \end{aligned}
\end{equation}
Thus,
\begin{equation}
    \begin{aligned}
        &\mathbb{E}\left[\sum_{j=1}^{+\infty}\left(\hat{p}_{J_K(j)}-p^*\right)^2\mathbbm{1}\left\{N(t_{K-1})<j\leq N(T)\right\}\right]\\
        \leq &\frac{4\bar{r}^2}{\underline{d}^2}\mathbb{E}\left[\left(N(T)-N(t_{K-1})\right)\left(\mathbbm{1}\{\bar{\kappa}<K\}+\mathbbm{1}\{\bar{\kappa}=K\}\mathbbm{1}\left\{\abs{\hat{d}-d_0}>2\delta_d\right\}+\mathbbm{1}\{N(T)-N(t_{K-1})\leq N_{max}\}\right)\right]\\
        &+\mathbb{E}\left[\sum_{j=1}^{+\infty}\mathbb{E}\left[\left(\hat{p}_{J_K(j)}-p^*\right)^2\mathbbm{1}\{\bar{\kappa}=K\}\mathbbm{1}\left\{\abs{\hat{d}-d_0}\leq 2\delta_d\right\}\cdot\right.\right.\\
        &\qquad\qquad\qquad\qquad\qquad\qquad\left.\left.\mathbbm{1}\{N(T)-N(t_{K-1})\geq N_{max}\}\bigg\rvert N(t_{K-1}),N(T)\right]\mathbbm{1}\left\{N(t_{K-1})<j\leq N(T)\right\}\right]
    \end{aligned}
\end{equation}
By Lemma \ref{lem: PARes},
\begin{equation}
    \begin{aligned}
        &\mathbb{E}\left[(N(T)-N(t_{K-1}))\mathbbm{1}\{\bar{\kappa}<K\}\right]\\
        \leq&\mathbb{E}\left[(N(T)-N(t_{K-1}))\right]\mathbb{E}\left[\mathbbm{1}\{\abs{d'_{K-1}-d_0}>\delta_d\}\right]\\
        \leq &\frac{M_{Res}\log K}{\delta_d^2}
    \end{aligned}
\end{equation}
By Lemma~\ref{lem: PAAux}(f),
\begin{equation}
    \begin{aligned}
        &\mathbb{E}\left[(N(T)-N(t_{K-1}))\mathbbm{1}\{\bar{\kappa}=K\}\mathbbm{1}\{\abs{\hat{d}-d_0}>2\delta_d\}\right]\\
        \leq &\mathbb{E}\left[(N(T)-N(t_{K-1}))\mathbbm{1}\{\bar{\kappa}=K\}\mathbbm{1}\{\abs{\hat{d}-d_{K-1}'}>\delta_d\}\right]\\
        \leq &\mathbb{E}\left[\left(N(T)-N(t_{K-1})\right)\left(\mathbbm{1}\left\{\abs{\frac{\lambda B}{\max\{N(T)-N(t_{K-1}),1\}}-1}\geq\frac{\delta_d}{\bar{d}}\right\}\mathbbm{1}\{N(T)-N(t_{K-1})\geq 1\}\right)\right]\leq  M_6
    \end{aligned}
\end{equation}
Also,
\begin{equation}
    \begin{aligned}
        &\mathbb{E}\left[(N(T)-N(t_{K-1}))\mathbbm{1}\{N(T)-N(t_{K-1})\leq N_{max}\}\right]\leq N_{max}
    \end{aligned}
\end{equation}
Thus,
\begin{equation}\label{eqn: AuxThm4(d)0}
    \begin{aligned}
        &\frac{4\bar{r}^2}{\underline{d}^2}\mathbb{E}\left[\left(N(T)-N(t_{K-1})\right)\left(\mathbbm{1}\{\bar{\kappa}<K\}+\mathbbm{1}\{\bar{\kappa}=K\}\mathbbm{1}\left\{\abs{\hat{d}-d_0}>2\delta_d\right\}+\mathbbm{1}\{N(T)-N(t_{K-1})\leq N_{max}\}\right)\right]\\
        \leq &\frac{4\bar{r}^2}{\underline{d}^2}\left(\frac{M_{Res}\log K}{\delta_d^2}+M_6+N_{max}\right)
    \end{aligned}
\end{equation}
By Lemma \ref{lem: UniformDualConvergence},
\begin{equation}
    \begin{aligned}
        &\mathbb{E}\left[\left(\hat{p}_{J_K(j)}-p^*(\hat{d})\right)^2\mathbbm{1}\{\bar{\kappa}=K\}\mathbbm{1}\left\{\abs{\hat{d}-d_0}\leq 2\delta_d\right\}\mathbbm{1}\{N(T)-N(t_{K-1})> N_{max}\}\bigg\rvert N(t_{K-1}),N(T)\right]\\
        \leq &\mathbb{E}\left[\sup_{d\in\Omega_d}\abs{p_{\max\{N(T)-N(t_{K-1})-1,1\}}^*(d)-p^*(d)}\bigg\rvert N(t_{K-1}),N(T)\right]\mathbbm{1}\{N(T)-N(t_{K-1})> N_{max}\}\\
        \leq&\frac{C_{Dual}}{\max\{N(T)-N(t_{K-1})-1,1\}}
    \end{aligned}
\end{equation}
By Lemma \ref{lem: BoundedLipschitz}(b),
\begin{equation}
    \begin{aligned}
        &\mathbb{E}\left[\left(p^*(\hat{d})-p^*\right)^2\mathbbm{1}\{\bar{\kappa}=K\}\mathbbm{1}\left\{\abs{\hat{d}-d_0}\leq 2\delta_d\right\}\mathbbm{1}\{N(T)-N(t_{K-1})> N_{max}\}\bigg\rvert N(t_{K-1}),N(T)\right]\\
        \leq &\mathbb{E}\left[L(\hat{d}-d_0)^2\mathbbm{1}\{\bar{\kappa}=K\}\mathbbm{1}\left\{\abs{\hat{d}-d_0}\leq 2\delta_d\right\}\bigg\rvert N(t_{K-1}),N(T)\right]
    \end{aligned}
\end{equation}
Together with Cauchy-Schwarz Inequality,
\begin{equation}\label{eqn: AuxThm4(d)1}
    \begin{aligned}
        &\mathbb{E}\left[\sum_{j=1}^{+\infty}\mathbb{E}\left[\left(\hat{p}_{J_K(j)}-p^*\right)^2\mathbbm{1}\{\bar{\kappa}=K\}\mathbbm{1}\left\{\abs{\hat{d}-d_0}\leq 2\delta_d\right\}\cdot\right.\right.\\
        &\qquad\qquad\qquad\qquad\qquad\qquad\left.\left.\mathbbm{1}\{N(T)-N(t_{K-1})> N_{max}\}\bigg\rvert N(t_{K-1}),N(T)\right]\mathbbm{1}\left\{N(t_{K-1})<j\leq N(T)\right\}\right]\\
        \leq &\mathbb{E}\left[\frac{C_{Dual}(N(T)-N(t_{K-1}))}{\max\{N(T)-N(t_{K-1})-1,1\}}\right]+\mathbb{E}\left[(N(T)-N(t_{K-1}))L(\hat{d}-d_0)^2\mathbbm{1}\{\bar{\kappa}=K\}\mathbbm{1}\left\{\abs{\hat{d}-d_0}\leq 2\delta_d\right\}\right]\\
        &+2\sqrt{\mathbb{E}\left[\frac{C_{Dual}(N(T)-N(t_{K-1}))}{\max\{N(T)-N(t_{K-1})-1,1\}}\right]}\sqrt{\mathbb{E}\left[(N(T)-N(t_{K-1}))L(\hat{d}-d_0)^2\mathbbm{1}\{\bar{\kappa}=K\}\mathbbm{1}\left\{\abs{\hat{d}-d_0}\leq 2\delta_d\right\}\right]}\\
        \leq & 2C_{Daul}+\mathbb{E}\left[(N(T)-N(t_{K-1}))L\left((\hat{d}-d'_{K-1})+(d'_{K-1}-d_0)\right)^2\mathbbm{1}\{\bar{\kappa}=K\}\mathbbm{1}\left\{\abs{\hat{d}-d_0}\leq 2\delta_d\right\}\right]\\
        &+2\sqrt{2C_{Dual}}\sqrt{\mathbb{E}\left[(N(T)-N(t_{K-1}))L(\hat{d}-d_0)^2\mathbbm{1}\{\bar{\kappa}=K\}\mathbbm{1}\left\{\abs{\hat{d}-d_0}\leq 2\delta_d\right\}\right]}
    \end{aligned}
\end{equation}
By Lemma \ref{lem: PARes},
\begin{equation}
    \begin{aligned}
        &\mathbb{E}\left[(N(T)-N(t_{K-1}))L\left(d'_{K-1}-d_0\right)^2\mathbbm{1}\{\bar{\kappa}=K\}\mathbbm{1}\left\{\abs{\hat{d}-d_0}\leq 2\delta_d\right\}\right]\\
        \leq &L\mathbb{E}\left[N(T)-N(t_{K-1})\right]\mathbb{E}\left[\left(d'_{K-1}-d_0\right)^2\right]\\
        \leq &L M_{Res}\log K
    \end{aligned}
\end{equation}
By Lemma \ref{lem: PAAux}(e),
\begin{equation}
    \begin{aligned}
        &\mathbb{E}\left[(N(T)-N(t_{K-1}))L\left(\hat{d}-d'_{K-1}\right)^2\mathbbm{1}\{\bar{\kappa}=K\}\mathbbm{1}\left\{\abs{\hat{d}-d_0}\leq 2\delta_d\right\}\right]\\
        \leq &L{\bar{d}}^2\mathbb{E}\left[(N(T)-N(t_{K-1}))\left(\frac{\lambda B}{\max\{N(T)-N(t_{K-1}),1\}}-1\right)^2\mathbbm{1}\{N(T)-N(t_{K-1})\geq 1\}\right]\\
        \leq &M_5
    \end{aligned}
\end{equation}
Together with Cauchy-Schwarz Inequality,
\begin{equation}\label{eqn: AuxThm4(d)2}
    \begin{aligned}
        &\mathbb{E}\left[(N(T)-N(t_{K-1}))L\left((\hat{d}-d'_{K-1})+(d'_{K-1}-d_0)\right)^2\mathbbm{1}\{\bar{\kappa}=K\}\mathbbm{1}\left\{\abs{\hat{d}-d_0}\leq 2\delta_d\right\}\right]\\
        \leq& M_{Res}\log K + M_5 + 2\sqrt{M_{Res}\log K}\sqrt{M_5}
    \end{aligned}
\end{equation}
(\ref{eqn: AuxThm4(d)0}), (\ref{eqn: AuxThm4(d)1}), and (\ref{eqn: AuxThm4(d)2}) imply that there exists a constant $\hat{C}$ such that
\begin{equation}
        \begin{aligned}
            \mathbb{E}\left[\sum_{j=1}^{+\infty}\left(\hat{p}_{J_K(j)}-p^*\right)^2\mathbbm{1}\left\{N(t_{K-1})<j\leq N(T)\right\}\right]\leq \hat{C}\log K
        \end{aligned}
\end{equation}
For part (e),
\begin{equation}
    \begin{aligned}
        &\mathbb{E}\left[\sum_{j=1}^{+\infty}\mathbbm{1}\left\{\mathbbm{1}\{r_{J_K(j)}>a_{J_K(j)}\hat{p}_{J_K(j)}\}\neq\mathbbm{1}\{r_{J_K(j)}>a_{J_K(j)}p_{J_K(j)}\}\right\}\mathbbm{1}\left\{N(t_{K-1})<j\leq N(T)\right\}\right]\\
        =&\mathbb{E}\left[\sum_{j=1}^{+\infty}\mathbb{E}\left[\mathbbm{1}\left\{\mathbbm{1}\{r_{J_K(j)}>a_{J_K(j)}\hat{p}_{J_K(j)}\}\neq\mathbbm{1}\{r_{J_K(j)}>a_{J_K(j)}p_{J_K(j)}\}\right\}\bigg\rvert N(t_{K-1}),N(T)\right]\mathbbm{1}\left\{N(t_{K-1})<j\leq N(T)\right\}\right]\\
        \leq &\mathbb{E}\left[\left(N(T)-N(t_{K-1})\right)\left(\mathbbm{1}\{\bar{\kappa}<K\}+\mathbbm{1}\{\bar{\kappa}=K\}\mathbbm{1}\left\{\abs{\hat{d}-d_0}>2\delta_d\right\}+\mathbbm{1}\{N(T)-N(t_{K-1})\leq N_{max}\}\right)\right]\\
        &+\mathbb{E}\left[\sum_{j=1}^{+\infty}\mathbb{E}\left[\mathbbm{1}\left\{\mathbbm{1}\left\{r_{J_K(j)}>a_{J_K(j)}\hat{p}_{J_K(j)}\right\}\neq\mathbbm{1}\left\{r_{J_K(j)}>a_{J_K(j)}p_{J_K(j)}\right\}\right\}\cdot\right.\right.\\
        &\qquad\qquad\qquad\qquad\left.\left.\mathbbm{1}\left\{\abs{\hat{d}-d_0}\leq 2\delta_d\right\}\mathbbm{1}\{N(T)-N(t_{K-1})> N_{max}\}\bigg\rvert N(t_{K-1}),N(T)\right]\mathbbm{1}\left\{N(t_{K-1})<j\leq N(T)\right\}\right]
    \end{aligned}
\end{equation}
By the same proof for (\ref{eqn: AuxThm4(d)0}),
\begin{equation}
    \begin{aligned}
        &\mathbb{E}\left[\mathbbm{1}\{\bar{\kappa}<K\}+\mathbbm{1}\{\bar{\kappa}=K\}\mathbbm{1}\left\{\abs{\hat{d}-d_0}>2\delta_d\right\}+\mathbbm{1}\{N(T)-N(t_{K-1})\leq N_{max}\}\bigg\rvert N(t_{K-1}),N(T)\right]\\
        \leq &\frac{M_{Res}\log K}{\delta_d^2}+M_6+N_{max}
    \end{aligned}
\end{equation}
By Lemma \ref{lem:Leave-One-Out},
\begin{equation}
    \begin{aligned}
        &\mathbb{E}\left[\mathbbm{1}\left\{\mathbbm{1}\{r_{J_K(j)}>a_{J_K(j)}\hat{p}_{J_K(j)}\}\neq\mathbbm{1}\{r_{J_K(j)}>a_{J_K(j)}p_{J_K(j)}\}\right\}\cdot\right.\\
        &\qquad\qquad\qquad\qquad\qquad\qquad\left.\mathbbm{1}\left\{\abs{\hat{d}-d_0}\leq 2\delta_d\right\}\mathbbm{1}\{N(T)-N(t_{K-1})> N_{max}\}\bigg\rvert N(t_{K-1}),N(T)\right]\\
        \leq &\frac{C_{LOO,1}}{\max\{N(T)-N(t_{K-1}),1\}}
    \end{aligned}
\end{equation}
Then, there exists a constant $\mathring{C}$,
\begin{equation}
    \begin{aligned}
        &\mathbb{E}\left[\sum_{j=1}^{+\infty}\mathbbm{1}\left\{\mathbbm{1}\{r_{J_K(j)}>a_{J_K(j)}\hat{p}_{J_K(j)}\}\neq\mathbbm{1}\{r_{J_K(j)}>a_{J_K(j)}p_{J_K(j)}\}\right\}\mathbbm{1}\left\{N(t_{K-1})<j\leq N(T)\right\}\right]\\
        \leq& \mathbb{E}\left[\frac{C_{LOO,1}(N(T)-N(t_{K-1}))}{\max\{N(T)-N(t_{K-1}),1\}}\right]+\frac{M_{Res}\log K}{\delta_d^2}+M_6+N_{max}\\
        \leq &\mathring{C}\log K +\mathring{C}
    \end{aligned}
\end{equation}
For part (f), we only show the second inequality, and the proof for the first inequality is similar.
    \begin{equation}
        \begin{aligned}
            &\mathbb{E}\left[a_1\left(\mathbbm{1}\left\{r_1>a_1p^*(\mathring{d})\right\}-\mathbbm{1}\left\{r_1>a_1p\right\}\right)a_2\left(\mathbbm{1}\left\{r_2>a_2p^*(\mathring{d})\right\}-\mathbbm{1}\left\{r_2>a_2p\right\}\right)\bigg\rvert \mathring{d},N_1,N_2\right]\\
            = &\mathbb{E}\left[a_1\mathbb{E}\left[\mathbbm{1}\left\{r_1>a_1p^*(\mathring{d})\right\}-\mathbbm{1}\left\{r_1>a_1p\right\}\bigg\rvert a_1,a_2,p,\mathring{d},N_1,N_2\right]\right.\\
            &\qquad\qquad\left.a_2\mathbb{E}\left[\mathbbm{1}\left\{r_2>a_2p^*(\mathring{d})\right\}-\mathbbm{1}\left\{r_2>a_2p\right\}\bigg\rvert a_1,a_2,p,\mathring{d},N_1,N_2\right]\bigg\rvert \mathring{d},N_1,N_2\right]\\
            \leq &\beta^2\bar{a}^4\mathbb{E}\left[\left(p-p^*(\mathring{d})\right)^2\bigg\rvert \mathring{d},N_1, N_2\right]
        \end{aligned}
    \end{equation}
    For part (g),
    \begin{equation}
        \begin{aligned}
            &\mathbb{E}\left[\sum_{i=1}^{+\infty}\sum_{j=1}^{+\infty}\left(\hat{p}_{J_1(i),J_1(j)}-p^*(d'_0)\right)^2\mathbbm{1}\{i\leq N(t_{1})\}\mathbbm{1}\{j\leq N(t_{1})\}\right]\\
            \leq & \mathbb{E}\left[\frac{4\bar{r}^2N(t_1)^2}{\underline{d}^2}\left(\mathbbm{1}\{N(t_1)\leq N_{max}\}+\mathbbm{1}\left\{\abs{\Tilde{d}_0-d'_0}>2\delta_d\right\}\right)\right]\\
            &+\mathbb{E}\left[\sum_{i=1}^{+\infty}\sum_{j=1}^{+\infty}\left(\hat{p}_{J_1(i),J_1(j)}-p^*(d'_0)\right)^2\mathbbm{1}\{N(t_1)>N_{max}\}\mathbbm{1}\left\{\abs{\Tilde{d}_0-d'_0}\leq 2\delta_d\right\}\mathbbm{1}\{i\leq N(t_{1})\}\mathbbm{1}\{j\leq N(t_{1})\}\right]
        \end{aligned}
    \end{equation}
    By Lemma \ref{lem: PAAux}(g), 
    \begin{equation}
        \mathbb{E}\left[\frac{4\bar{r}^2N(t_1)^2}{\underline{d}^2}\left(\mathbbm{1}\{N(t_1)\leq N_{max}\}+\mathbbm{1}\left\{\abs{\Tilde{d}_0-d'_0}>2\delta_d\right\}\right)\right]\leq \frac{4\bar{r}^2}{\underline{d}^2}\left(N_{max}^2+M_7\right)
    \end{equation}
    Also, with similar arguments in part (b), by Lemma \ref{lem: UniformDualConvergence},
    \begin{equation}
        \begin{aligned}
            &\mathbb{E}\left[\sum_{i=1}^{+\infty}\sum_{j=1}^{+\infty}\left(\left(\hat{p}_{J_1(i),J_1(j)}-p^*(\Tilde{d}_0)\right)\right)^2\mathbbm{1}\{N(t_1)>N_{max}\}\mathbbm{1}\left\{\abs{\Tilde{d}_0-d'_0}\leq 2\delta_d\right\}\mathbbm{1}\{i\leq N(t_{1})\}\mathbbm{1}\{j\leq N(t_{1})\}\right]\\
            \leq& \mathbb{E}\left[\frac{C_{Dual}N(t_1)^2}{\max\left\{N(t_1)-2,1\right\}}\right]\leq 3C_{Dual}\lambda B
        \end{aligned}
    \end{equation}
    and by Lemma \ref{lem: BoundedLipschitz}(b) and Lemma \ref{lem: PAAux}(h)
    \begin{equation}
        \begin{aligned}
            &\mathbb{E}\left[\sum_{i=1}^{+\infty}\sum_{j=1}^{+\infty}\left(\left(p^*(\Tilde{d}_0)-p^*(d'_0)\right)\right)^2\mathbbm{1}\{N(t_1)>N_{max}\}\mathbbm{1}\left\{\abs{\Tilde{d}_0-d'_0}\leq 2\delta_d\right\}\mathbbm{1}\{i\leq N(t_{1})\}\mathbbm{1}\{j\leq N(t_{1})\}\right]\\
            \leq &L\mathbb{E}\left[N(t_1)^2\left(\Tilde{d}_0-d'_0\right)^2\right]\leq 2\bar{d}^2L\lambda B
        \end{aligned}
    \end{equation}
    Together with Cauchy-Schwarz Inequality, we can conclude that there exists a constant $C_{7}$ such that
    \begin{equation}
        \begin{aligned}
            \mathbb{E}\left[\sum_{i=1}^{+\infty}\sum_{j=1}^{+\infty}\left(\hat{p}_{J_1(i),J_1(j)}-p^*(d'_0)\right)^2\mathbbm{1}\{i\leq N(t_{1})\}\mathbbm{1}\{j\leq N(t_{1})\}\right]\leq C_{7}\lambda B
        \end{aligned}
    \end{equation}
    For part (h), with similar arguments in part (c), by Lemma \ref{lem:Leave-One-Out},
    \begin{equation}\begin{aligned}
        &\mathbb{E}\left[\mathbbm{1}\{\mathbbm{1}\{r_{J_{1}(i)}>a_{J_{1}(i)}\hat{p}_{J_{1}(i),J_{1}(j)}\}\neq\mathbbm{1}\{r_{J_{1}(i)}>a_{J_{1}(i)}p_1\}\}\right.\\
        &\qquad+\left.\mathbbm{1}\{r_{J_{1}(j)}>a_{J_{1}(j)}\hat{p}_{J_{1}(i),J_{1}(j)}\}\neq\mathbbm{1}\{r_{J_{1}(j)}>a_{J_{1}(j)}p_1\}\bigg\rvert N(t_1)\right]\\
        \leq &\mathbbm{1}\{N(t_1)\leq N_{max}\}+\mathbbm{1}\left\{\abs{\Tilde{d}_0-d'_0}>2\delta_d\right\}+\frac{C_{LOO,2}}{\max\{N(t_1),1\}}
    \end{aligned}\end{equation}
    Then, by Lemma \ref{lem: PAAux}(g),
    \begin{equation}
        \begin{aligned}
            &\mathbb{E}\left[\sum_{i=1}^{+\infty}\sum_{j=1}^{+\infty}\left(\mathbbm{1}\left\{\mathbbm{1}\left\{r_{J_1(j)}>a_{J_1(j)}p_{1}\right\}\neq\mathbbm{1}\left\{r_{J_1(j)}>a_{J_1(j)}\hat{p}_{J_1(i),J_1(j)}\right\}\right\}\right.\right.\\
        &\left.\left.\qquad+\mathbbm{1}\left\{\mathbbm{1}\left\{r_{J_1(i)}>a_{J_1(i)}p_{1}\right\}\neq\mathbbm{1}\left\{r_{J_1(i)}>a_{J_1(i)}\hat{p}_{J_1(i),J_1(j)}\right\}\right\}\right)\mathbbm{1}\{i\leq N(t_{1})\}\mathbbm{1}\{j\leq N(t_{1})\}\right]\\
        \leq & \mathbb{E}\left[N(t_1)^2\mathbbm{1}\{N(t_1)\leq N_{max}\}\right]+\mathbb{E}\left[N(t_1)^2\mathbbm{1}\left\{\abs{\Tilde{d}_0-d'_0}>2\delta_d\right\}\right]+\mathbb{E}\left[N(t_1)^2\frac{C_{LOO,2}}{\max\{N(t_1),1\}}\right]\\
        \leq & N_{max}^2 + M_7 + C_{LOO,2}\lambda B
        \end{aligned}
    \end{equation}
    which completes the proof.\\
    \;\\
    For part (i), given $k\geq 1$,
    \begin{equation}
        \begin{aligned}
            &\mathbb{E}\left[\left(p_{k+1}-p^*(d'_k)\right)^2\mathbbm{1}\left\{k<\bar{\kappa}\right\}\right]\\
            \leq &\frac{4\bar{r}^2}{\underline{d}^2}\mathbb{E}\left[\mathbbm{1}\{N(t_k)\leq N_{max}\}+\mathbbm{1}\left\{\abs{\Tilde{d}_k-d'_k}>\delta_d\right\}\mathbbm{1}\{k<\bar{\kappa}\}\right]\\
            &+\mathbb{E}\left[\left(\left(p_{k+1}-p^*(\Tilde{d}_k)\right)+\left(p^*(\Tilde{d}_k)-p^*(d'_k)\right)\right)^2\mathbbm{1}\{N(t_k)> N_{max}\}\mathbbm{1}\left\{\abs{\Tilde{d}_k-d'_k}\leq\delta_d\right\}\mathbbm{1}\left\{k<\bar{\kappa}\right\}\right]
        \end{aligned}
    \end{equation}
    With similar arguments in part (b), by Lemma \ref{lem: PAAux}(b) and (c),
    \begin{equation}
        \frac{4\bar{r}^2}{\underline{d}^2}\mathbb{E}\left[\mathbbm{1}\{N(t_k)\leq N_{max}\}+\mathbbm{1}\left\{\abs{\Tilde{d}_k-d'_k}>\delta_d\right\}\mathbbm{1}\{k<\bar{\kappa}\}\right]\leq \frac{4\bar{r}^2}{\underline{d}^2}\left(\frac{M_3}{\lambda Bk}+\frac{4\bar{d}^2}{\delta_d^2\lambda Bk}\right)
    \end{equation}
    and by Lemma \ref{lem: UniformDualConvergence},
    \begin{equation}
        \begin{aligned}
            &\mathbb{E}\left[\left(p_{k+1}-p^*(\Tilde{d}_k)\right)^2\mathbbm{1}\{N(t_k)> N_{max}\}\mathbbm{1}\left\{\abs{\Tilde{d}_k-d'_k}\leq\delta_d\right\}\mathbbm{1}\left\{k<\bar{\kappa}\right\}\right]\leq\frac{2C_{Dual}}{\lambda Bk}
        \end{aligned}
    \end{equation}
    and by Lemma \ref{lem: BoundedLipschitz}(b) and Lemma \ref{lem: PAAux}(b),
    \begin{equation}
        \begin{aligned}
            \mathbb{E}\left[\left(p^*(\Tilde{d}_k)-p^*(d'_k)\right)^2\mathbbm{1}\{N(t_k)> N_{max}\}\mathbbm{1}\left\{\abs{\Tilde{d}_k-d'_k}\leq\delta_d\right\}\mathbbm{1}\left\{k<\bar{\kappa}\right\}\right]\leq L\bar{d}^2\mathbb{E}\left[\abs{\frac{\lambda}{\hat{\lambda}_k}-1}^2\right]\leq \frac{4L\bar{d}^2}{\lambda Bk}
        \end{aligned}
    \end{equation}
    Then, together with Cauchy-Schwarz Inequality, there exists a constant $C_9$, for all $k\geq 1$,
    \begin{equation}
        \mathbb{E}\left[\left(p_{k+1}-p^*(d'_k)\right)^2\mathbbm{1}\left\{k<\bar{\kappa}\right\}\right]\leq\frac{C_9}{\lambda Bk}
    \end{equation}
\end{proof}
\begin{lem}\label{lem: PAAux}
    \;\\
    (a) If $\delta_d <\frac{\bar{d}}{2}$ and $\lambda B> = \frac{\bar{d}-2\delta_d}{\delta_d}$, then there exists a constant $M_1$ such that
    \begin{equation}
        \mathbb{E}\left[N(t_1)\mathbbm{1}\left\{\abs{\Tilde{d}_0-d_0}>2\delta_d\right\}\right]\leq M_1
    \end{equation}
    (b) For any $k\geq 2$, 
    \begin{equation}
        \lambda B\mathbb{E}\left[\abs{\frac{\lambda}{\hat{\lambda}_{k-1}}-1}^2\right]\leq \frac{4}{k-1}
    \end{equation}
    (c) If $\lambda B>1$, there exists a constant $M_3$ such that, for any $k\geq 2$,
    \begin{equation}
        \lambda B\mathbb{P}\left\{N(t_{k-1})\leq N_{max}\right\}\leq \frac{M_3}{k-1}
    \end{equation}
    (d)
    \begin{equation}
        \mathbb{E}\left[N(t_{1})\abs{\Tilde{d}_{0}-d_{0}}^2\right]\leq \bar{d}^2
    \end{equation}
    (e) There exists a constant $M_5$ such that
    \begin{equation}
        \mathbb{E}\left[(N(T)-N(t_{K-1}))\left(\frac{\lambda B}{\max\{N(T)-N(t_{K-1}),1\}}-1\right)^2\mathbbm{1}\{N(T)-N(t_{K-1})\geq 1\}\right]\leq M_5
    \end{equation}
    (f) If $\lambda B> = \frac{2\bar{d}-2\delta_d}{\delta_d}$, there exists a constant $M_6$ such that
    \begin{equation}
        \mathbb{E}\left[(N(T)-N(t_{K-1}))\left(\mathbbm{1}\left\{\abs{\frac{\lambda B}{\max\{N(T)-N(t_{K-1}),1\}}-1}\geq\frac{\delta_d}{\bar{d}}\right\}\mathbbm{1}\{N(T)-N(t_{K-1})\geq 1\}\right)\right]\leq M_6
    \end{equation}
    (g) If $\delta_d <\frac{\bar{d}}{2}$ and $\lambda B> = \frac{\bar{d}-2\delta_d}{2\delta_d}$, there exists a constant $M_7$ such that
    \begin{equation}
        \mathbb{E}\left[N(t_1)^2\mathbbm{1}\left\{\abs{\Tilde{d}_0-d'_0}>2\delta_d\right\}\right]\leq M_7
    \end{equation}
    (h) If $\lambda B>1$,
    \begin{equation}
        \mathbb{E}\left[N(t_1)^2\abs{\Tilde{d}_0-d'_0}^2\right]\leq 2\bar{d}^2\lambda B
    \end{equation}
    (i) There exists constants $M_{8}$ such that
    \begin{equation}
        \mathbb{E}\left[\exp\left(-\frac{\epsilon_d^2(N(T)-N(t_{K-1}))}{2\bar{a}^2}\right)\right]\leq \exp(-M_8\lambda B)
    \end{equation}
\end{lem}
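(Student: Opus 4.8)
The plan is to recognize that the random variable appearing in the exponent, $N(T)-N(t_{K-1})$, is nothing but the number of Poisson arrivals during the final batch $(t_{K-1},T]$, and then to evaluate an elementary Laplace transform in closed form. Concretely, since the batch endpoints satisfy $t_k-t_{k-1}=B$ with $t_0=0$ and $t_K=T$, we have $t_{K-1}=T-B$, so the stationary independent increments property of the Poisson process $(N(t):t\ge 0)$ gives that $N(T)-N(t_{K-1})$ is Poisson distributed with mean $\lambda(T-t_{K-1})=\lambda B$. (Assumption~\ref{Assumption: Random Arrival} is not even needed here, as the claim concerns only the law of the arrival counts.)

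First I would introduce $c=\epsilon_d^2/(2\bar{a}^2)$, which is a strictly positive constant since $\epsilon_d>0$ by Assumption~\ref{Assumption: DAExtra} and $\bar{a}>0$ by Assumption~\ref{Assumption: DistributionMulti}(b). The quantity to be bounded is then $\mathbb{E}[e^{-cX}]$ with $X\sim\mathrm{Poisson}(\lambda B)$. Using the standard closed form for the moment generating function of a Poisson law, $\mathbb{E}[e^{tX}]=\exp\!\big(\lambda B(e^{t}-1)\big)$ for every $t\in\mathbb{R}$, specialized at $t=-c$, one obtains
\begin{equation}
    \mathbb{E}\left[\exp\left(-\frac{\epsilon_d^2(N(T)-N(t_{K-1}))}{2\bar{a}^2}\right)\right]=\exp\left(-\lambda B\left(1-e^{-\epsilon_d^2/(2\bar{a}^2)}\right)\right).
\end{equation}

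Finally I would set $M_8=1-\exp\!\big(-\epsilon_d^2/(2\bar{a}^2)\big)$; since the exponent $\epsilon_d^2/(2\bar{a}^2)$ is a positive real number we have $0<M_8<1$, so $M_8$ is a legitimate positive constant depending only on $\epsilon_d$ and $\bar{a}$, and the claimed bound follows (in fact with equality). There is essentially no obstacle in this step: the only point requiring care is the sign bookkeeping — the exponent inside the expectation is negative, so the Laplace transform is evaluated at a negative argument, and $e^{-c}-1<0$, which is precisely what makes the right-hand side decay like $e^{-M_8\lambda B}$ rather than grow. If one prefers to avoid invoking the Poisson MGF as a black box, the same estimate can be obtained by a direct summation over the Poisson probabilities, $\sum_{j\ge 0} e^{-cj}e^{-\lambda B}(\lambda B)^{j}/j!=e^{-\lambda B}e^{\lambda B e^{-c}}$, yielding the identical constant.
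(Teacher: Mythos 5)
Your argument for part (i) is correct and coincides exactly with the paper's own proof: both evaluate the Poisson Laplace transform $\mathbb{E}\left[e^{-cX}\right]=\exp\left(-(1-e^{-c})\lambda B\right)$ for $X=N(T)-N(t_{K-1})\sim\mathrm{Poisson}(\lambda B)$ (using stationary independent increments and $T-t_{K-1}=B$) and then set $M_8=1-\exp\left(-\epsilon_d^2/(2\bar{a}^2)\right)>0$. There is nothing to add on that point.

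The genuine gap is coverage: the statement is a nine-part lemma, (a) through (i), and your proposal proves only part (i). The other eight parts do not follow from the Laplace-transform identity and each needs its own computation. Parts (a), (f), and (g) require Poisson concentration (Chernoff-type) estimates for events of the form $\left\{\abs{\tilde{d}_0-d_0}>2\delta_d\right\}$, i.e.\ for $N(t_1)$ or $N(T)-N(t_{K-1})$ deviating from $\lambda B$ by a constant fraction, combined with Cauchy--Schwarz against $\mathbb{E}\left[N(t_1)^2\right]$ or $\mathbb{E}\left[N(t_1)^4\right]$ so that the polynomial factor is absorbed by the exponentially small probability. Parts (b), (d), (e), and (h) are explicit series manipulations with the Poisson pmf, e.g.\ showing $\lambda\,\mathbb{E}\left[\left(\frac{\lambda}{N+1}-1\right)^2\right]\leq 4$ by reindexing the sum, which is where the specific constants $4$, $\bar{d}^2$, and $2\bar{d}^2\lambda B$ come from. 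Part (c) is a direct tail bound $\lambda B\sum_{x=0}^{N_{max}}\frac{((k-1)\lambda B)^x e^{-(k-1)\lambda B}}{x!}\leq\frac{M_3}{k-1}$, exploiting that a polynomial times $e^{-(k-1)\lambda B}$ is uniformly $O(1/(k-1))$. As written, the proposal establishes one of the nine claims; a complete proof must supply the remaining eight.
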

\begin{proof}
\;\\
For part (a),
\begin{equation}\begin{aligned}
    \mathbb{E}\left[N(t_1)\mathbbm{1}\left\{\abs{\Tilde{d}_0-d_0}>2\delta_d\right\}\right]&\leq\sqrt{\mathbb{E}\left[N(t_1)^2\right]}\sqrt{\mathbb{P}\left\{\abs{\Tilde{d}_0-d_0}>2\delta_d\right\}}\\
    &\leq \sqrt{\lambda B+(\lambda B)^2}\sqrt{\mathbb{P}\left\{\abs{\frac{\lambda}{\hat{\lambda}_0}-1}>\frac{2\delta_d}{\bar{d}}\right\}}
\end{aligned}\end{equation}
and

\begin{equation}\begin{aligned}
    \mathbb{P}\left\{\abs{\frac{\lambda}{\hat{\lambda}_0}-1}>\frac{2\delta_d}{\bar{d}}\right\}\leq&\mathbb{P}\left\{N(t_1)-\lambda B\leq -\frac{2\delta_d}{\bar{d}+2\delta_d}\lambda B\right\} + \mathbb{P}\left\{N(t_1)-\lambda B\geq \frac{\delta_d}{\bar{d}-2\delta_d}\lambda B\right\}\\
    \leq &\exp\left(-\frac{2\delta_d^2}{(\bar{d}+2\delta_d)(\bar{d}+4\delta_d)}\lambda B\right) + \exp\left(-\frac{\delta_d^2}{2(\bar{d}-\delta_d)(\bar{d}-2\delta_d)}\lambda B\right)
\end{aligned}\end{equation}
Thus, there exists a constant $M_1$ such that
\begin{equation}
    \begin{aligned}
        &\mathbb{E}\left[N(t_1)\mathbbm{1}\left\{\abs{\Tilde{d}_0-d_0}>2\delta_d\right\}\right]\\
        \leq &\sqrt{\lambda B+(\lambda B)^2}\sqrt{\exp\left(-\frac{2\delta_d^2}{(\bar{d}+2\delta_d)(\bar{d}+4\delta_d)}\lambda B\right) + \exp\left(-\frac{\delta_d^2}{2(\bar{d}-\delta_d)(\bar{d}-2\delta_d)}\lambda B\right)}\\
        \leq & M_1
    \end{aligned}
\end{equation}
For part (b), Let $N$ be a Poisson random variable with mean $\lambda$, then
\begin{equation}\begin{aligned}
    \lambda\mathbb{E}\left[\left(\frac{\lambda}{N+1}-1\right)^2\right]=&\sum_{x=0}^{+\infty}\frac{\lambda^2}{(x+1)^2}\frac{\lambda^{x+1}\exp(-\lambda)}{x!}-\frac{2\lambda}{x+1}\frac{\lambda^{x+1}\exp(-\lambda)}{x!}+\frac{\lambda^{x+1}\exp(-\lambda)}{x!}\\
    =&\sum_{x=3}^{+\infty}\frac{(x-1)x}{x-2}\frac{\lambda^{x}\exp(-\lambda)}{x!}-2\sum_{x=2}^{+\infty}x\frac{\lambda^{x}\exp(-\lambda)}{x!}+\lambda\\
    =&\mathbb{E}\left[\frac{(N-1)N}{N-2}\mathbbm{1}\{N\geq 3\}-2N\mathbbm{1}\{N\geq 2\}\right]+\lambda\\
    = &\mathbb{E}\left[N\mathbbm{1}\{N\geq 3\}+\frac{N}{N-2}\mathbbm{1}\{N\geq 3\}-2N\mathbbm{1}\{N\geq 2\}\right] +\lambda\\
    \leq &\lambda-\mathbb{P}\{N=1\}-2\mathbb{P}\{N=2\}+3-2(\lambda-\mathbb{P}\{N=1\})+\lambda\\
    \leq &4
\end{aligned}\end{equation}
Thus, for any $k\geq 2$,
\begin{equation}\begin{aligned}
    \lambda B\mathbb{E}\left[\abs{\frac{\lambda}{\hat{\lambda}_{k-1}}-1}^2\right]\leq \lambda B\cdot\frac{4}{\lambda(k-1)B}\leq \frac{4}{k-1}
\end{aligned}\end{equation}
For part (c), if $\lambda B>1$, for any $k\geq 2$,
\begin{equation}\begin{aligned}
    &\lambda B\mathbb{P}\left\{N(t_{k-1})\leq N_{max}\right\}\\
    =&\lambda B\sum_{x=0}^{N_{max}}\frac{((k-1)\lambda B)^x\exp(-(k-1)\lambda B)}{x!}\\
    \leq &N_{max}\lambda B (\lambda(k-1)B)^{N_{max}}\exp(-\lambda(k-1)B)\\
    = &\frac{1}{k-1}\cdot N_{max} (\lambda(k-1)B)^{N_{max}+1}\exp(-\lambda(k-1)B)
\end{aligned}\end{equation}
Thus, there exists a constant $M_3$ such that
\begin{equation}
    \lambda B\mathbb{P}\left\{N(t_{k-1})\leq N_{max}\right\}\leq\frac{M_3}{k-1}
\end{equation}
For part (d),
\begin{equation}\begin{aligned}
    \mathbb{E}\left[N(t_{1})\abs{\Tilde{d}_{0}-d_{0}}^2\right]\leq&\bar{d}^2\mathbb{E}\left[N(t_1)\abs{\frac{\lambda}{\hat{\lambda}_0}-1}^2\right]\\
    \leq &\bar{d}^2\mathbb{E}\left[\frac{(N(t_1)-\lambda B+1)^2}{N(t_1)+1}\right]\\
    =&\bar{d}^2\sum_{x=0}^{+\infty}\frac{(x+1-\lambda B)^2}{x+1}\frac{(\lambda B)^x\exp(-\lambda)}{x!}\\
    =&\bar{d}^2\left(\sum_{x=0}^{+\infty}(x+1)\frac{(\lambda B)^x\exp(-\lambda)}{x!}-2\lambda B\sum_{x=0}^{+\infty}\frac{(\lambda B)^x\exp(-\lambda)}{x!}+\lambda B\sum_{x=0}^{+\infty}\frac{(\lambda B)^{x+1}\exp(-\lambda)}{(x+1)!}\right)\\
    \leq& \bar{d}^2(\lambda B+1-2\lambda B+\lambda B)=\bar{d}^2
\end{aligned}\end{equation}
For part (e), 
\begin{equation}
    \begin{aligned}
        &\mathbb{E}\left[(N(T)-N(t_{K-1}))\left(\frac{\lambda B}{\max\{N(T)-N(t_{K-1}),1\}}-1\right)^2\mathbbm{1}\{N(T)-N(t_{K-1})\geq 1\}\right]\\
        =&\sum_{x=1}^{+\infty}\frac{(\lambda B-x)^2}{x}\frac{(\lambda B)^x\exp(-\lambda B)}{x!}\\
        \leq &2\left(\sum_{x=1}^{+\infty}\frac{(\lambda B)^{x+2}\exp(-\lambda B)}{(x+1)!}-2\sum_{x=1}^{+\infty}x\frac{(\lambda B)^{x+1}\exp(-\lambda B)}{(x+1)!}+\sum_{x=1}^{+\infty}x^2\frac{(\lambda B)^{x}\exp(-\lambda B)}{(x+1)!}\right)\\
        \leq&2\left(\lambda B\sum_{x=1}^{+\infty}\frac{(\lambda B)^{x+1}\exp(-\lambda B)}{(x+1)!}-2\sum_{x=1}^{+\infty}(x+1)\frac{(\lambda B)^{x+1}\exp(-\lambda B)}{(x+1)!}\right.\\
        &\left.+2\sum_{x=1}^{+\infty}\frac{(\lambda B)^{x+1}\exp(-\lambda B)}{(x+1)!}+\sum_{x=0}^{+\infty}(x+1)\frac{(\lambda B)^{x}\exp(-\lambda B)}{x!}\right)\\
        \leq &2(\lambda B-2\lambda B + \lambda B\exp(-\lambda B)+2+\lambda B + 1)
    \end{aligned}
\end{equation}
Thus, there exists a constant $M_5$ such that
\begin{equation}
    \mathbb{E}\left[(N(T)-N(t_{K-1}))\left(\frac{\lambda B}{\max\{N(T)-N(t_{K-1}),1\}}-1\right)^2\mathbbm{1}\{N(T)-N(t_{K-1})\geq 1\}\right]\leq M_5
\end{equation}
For part (f),
\begin{equation}
    \begin{aligned}
        &\mathbb{E}\left[(N(T)-N(t_{K-1}))\left(\mathbbm{1}\left\{\abs{\frac{\lambda B}{\max\{N(T)-N(t_{K-1}),1\}}-1}\geq\frac{\delta_d}{\bar{d}}\right\}\mathbbm{1}\{N(T)-N(t_{K-1})\geq 1\}\right)\right]\\
        \leq &\sqrt{\mathbb{E}\left[(N(T)-N(t_{K-1}))^2\right]}\sqrt{\mathbb{P}\left\{\abs{\frac{\lambda B}{\max\{N(T)-N(t_{K-1}),1\}}-1}\geq\frac{\delta_d}{\bar{d}}\right\}}\\
        \leq &\sqrt{\lambda B + (\lambda B)^2}\sqrt{\mathbb{P}\left\{\abs{\frac{\lambda B}{\max\{N(T)-N(t_{K-1}),1\}}-1}\geq\frac{\delta_d}{\bar{d}}\right\}}
    \end{aligned}
\end{equation}
Since $N(T)-N(t_{K-1})$ and $N(t_1)$ have the same distribution, by similar arguments in part (a),
\begin{equation}
    \begin{aligned}
        \mathbb{P}\left\{\abs{\frac{\lambda B}{\max\{N(T)-N(t_{K-1}),1\}}-1}\geq\frac{\delta_d}{\bar{d}}\right\}&\leq \mathbb{P}\left\{N(t_1)-\lambda B\leq -\frac{\delta_d}{\bar{d}+\delta_d}\lambda B\right\} + \mathbb{P}\left\{N(t_1)-\lambda B\geq \frac{\delta_d}{2\bar{d}-2\delta_d}\lambda B\right\}\\
        &\leq \exp\left(-\frac{\delta_d^2}{(\bar{d}+\delta_d)(2\bar{d}+4\delta_d)}\lambda B\right) + \exp\left(-\frac{\delta_d^2}{4(2\bar{d}-\delta_d)(\bar{d}-\delta_d)}\lambda B\right)
    \end{aligned}
\end{equation}
Thus, there exists a constant $M_6$ such that
\begin{equation}
    \mathbb{E}\left[(N(T)-N(t_{K-1}))\left(\mathbbm{1}\left\{\abs{\frac{\lambda B}{\max\{N(T)-N(t_{K-1}),1\}}-1}\geq\frac{\delta_d}{\bar{d}}\right\}\mathbbm{1}\{N(T)-N(t_{K-1})\geq 1\}\right)\right]\leq M_6
\end{equation}
For part (g),
\begin{equation}\begin{aligned}
    \mathbb{E}\left[N(t_1)^2\mathbbm{1}\{\abs{\Tilde{d}_0-d'_0}>2\delta_d\}\right]\leq \sqrt{\mathbb{E}\left[N(t_1)^4\right]}\sqrt{\mathbb{P}\{\abs{\Tilde{d}_0-d_0}>2\delta_d\}}
\end{aligned}\end{equation}
Then, by the upper bound of $\mathbb{P}\left\{\abs{\Tilde{d}_0-d_0}>2\delta_d\right\}$ in the proof of part (a),
\begin{equation}\begin{aligned}
    &\mathbb{E}\left[N(t_1)^2\mathbbm{1}\{\abs{\Tilde{d}_0-d'_0}>2\delta_d\}\right]\\
    \leq & \sqrt{(\lambda B)^4+6(\lambda B)^3+7(\lambda B)^2+\lambda B}\sqrt{\exp\left(-\frac{2\delta_d^2}{(\bar{d}+2\delta_d)(\bar{d}+4\delta_d)}\lambda B\right) + \exp\left(-\frac{\delta_d^2}{2(\bar{d}-\delta_d)(\bar{d}-2\delta_d)}\lambda B\right)}
\end{aligned}\end{equation}
Thus, there exists a constant $M_7$ such that
\begin{equation}
    \mathbb{E}\left[N(t_1)^2\mathbbm{1}\left\{\abs{\Tilde{d}_0-d'_0}>2\delta_d\right\}\right]\leq M_7
\end{equation}
For part (h),
\begin{equation}\begin{aligned}
    \mathbb{E}\left[N(t_1)^2\abs{\Tilde{d}_0-d'_0}^2\right]=&d_0'^2\mathbb{E}\left[N(t_1)^2\frac{(N(t_1)+1-\lambda B)^2}{(N(t_1+1))^2}\right]\\
    \leq &d_0'^2\mathbb{E}\left[(N(t_1)+1-\lambda B)^2\right]\\
    \leq &2\bar{d}^2\lambda B
\end{aligned}\end{equation}
For part (i), let $N$ be a Poisson random variable with mean $\lambda$ and $c$ be a constant,
\begin{equation}\begin{aligned}
        \mathbb{E}\left[\exp\left(-cN\right)\right]=&\sum_{x=0}^{+\infty}\frac{(e^{-c}\lambda)^xe^{-\lambda}}{x!}\\
        =&\frac{e^{-\lambda}}{e^{-\lambda e^{-c}}}\sum_{x=0}^{+\infty}\frac{(e^{-c}\lambda)^xe^{-\lambda e^{-c}}}{x!}\\
        =&\exp(-(1-\exp(-c))\lambda)
\end{aligned}\end{equation}
Thus, there exists constants $M_{8}$ such that
    \begin{equation}
        \mathbb{E}\left[\exp\left(-\frac{\epsilon_d^2(N(T)-N(t_{K-1}))}{2\bar{a}^2}\right)\right]\leq \exp(-M_8\lambda B)
    \end{equation}
\end{proof}
\subsection{Proof of Proposition~\ref{Proposition: RegretCustomerImpatience}}
\begin{proof}
    Define
    \begin{equation}
        \begin{array}{lll}
          \Tilde{R}_T=&\max   &  \sum_{j=1}^{N(t_1)}r_j\mathbbm{1}\{V_j+W_j> t_1\}x_j+\sum_{j=N(t_1)+1}^{N(t_{K-1})}r_jx_j+\sum_{j=N(t_{K-1})+1}^{N(T)}r_j\mathbbm{1}\{V_j+W_j> T\}x_j\\
          &S.T   &  \sum_{j=1}^{N(t_1)}a_j\mathbbm{1}\{V_j+W_j> t_1\}x_j+\sum_{j=N(t_1)+1}^{N(t_{K-1})}a_jx_j+\sum_{j=N(t_{K-1})+1}^{N(T)}a_j\mathbbm{1}\{V_j+W_j> T\}x_j\leq b\\
          &&0\leq x_j\leq 1\quad\forall j=1,\cdots,N(T)
        \end{array}
    \end{equation}
    Let
    \begin{equation}
        \Tilde{p} = \arg\min_{p\geq 0} bp + \sum_{j=1}^{N(t_1)}(r_j-a_jp)^+\mathbbm{1}\{V_j+W_j> t_1\}+\sum_{j=N(t_1)+1}^{N(t_{K-1})}(r_j-a_jp)^++\sum_{j=N(t_{K-1})+1}^{N(T)}(r_j-a_jp)^+\mathbbm{1}\{V_j+W_j> T\}
    \end{equation}
    By the Duality Theory of Linear Programming, the following holds almost sure.
    \begin{equation}
        \begin{aligned}
            &R_T^*-\Tilde{R}_T\\
            \leq& \left(b\Tilde{p}+\sum_{j=1}^{N(T)}(r_j-a_j\Tilde{p})^+\right)\\
            &-\left(b\Tilde{p}+\sum_{j=1}^{N(t_1)}(r_j-a_j\Tilde{p})^+\mathbbm{1}\{V_j+W_j> t_1\}+\sum_{j=N(t_1)+1}^{N(t_{K-1})}(r_j-a_j\Tilde{p})^++\sum_{j=N(t_{K-1})+1}^{N(T)}(r_j-a_j\Tilde{p})^+\mathbbm{1}\{V_j+W_j> T\}\right)\\
            \leq &\sum_{j=1}^{N(t_1)}(r_j-a_j\Tilde{p})^+\mathbbm{1}\{V_j+W_j\leq t_1\}+\sum_{j=N(t_{K-1})+1}^{N(T)}(r_j-a_j\Tilde{p})^+\mathbbm{1}\{V_j+W_j\leq T\}\\
            \leq& \left(\bar{r}+\frac{\bar{a}\bar{r}}{\underline{a}}\right)\left(\sum_{j=1}^{N(t_1)}\mathbbm{1}\{V_j+W_j\leq t_1\}+\sum_{j=N(t_{K-1})+1}^{N(T)}\mathbbm{1}\{V_j+W_j\leq T\}\right)
        \end{aligned}
    \end{equation}
    Thus,
    \begin{equation}
        \mathbb{E}\left[R_T^*-\Tilde{R}_T\right]\leq \left(\bar{r}+\frac{\bar{a}\bar{r}}{\underline{a}}\right)\left(\mathbb{E}\left[\sum_{j=1}^{N(t_1)}\mathbbm{1}\{V_j+W_j\leq t_1\}\right]+\mathbb{E}\left[\sum_{j=N(t_{K-1})+1}^{N(T)}\mathbbm{1}\{V_j+W_j\leq T\}\right]\right)
    \end{equation}
    By the property of Poisson process, condition on $\{N(t_1)=n\}$, $\{V_j\}_{j=1}^n$ are i.i.d. $Uniform(0,B)$ random variables. Together with the Assumption \ref{Assumption: Customer Impatience},
    \begin{equation}
        \begin{aligned}
            &\mathbb{E}\left[\sum_{j=1}^{N(t_1)}\mathbbm{1}\{V_j+W_j\leq t_1\}\bigg\rvert N(t_1)=n\right]\\
            =&n\mathbb{P}\left\{V_j+W_j\leq B\bigg\rvert N(t_1)=n\right\}\\
            =&\frac{n}{B}\int_{0}^{B}F(u)du
        \end{aligned}
    \end{equation}
    Thus,
    \begin{equation}
        \mathbb{E}\left[\sum_{j=1}^{N(t_1)}\mathbbm{1}\{V_j+W_j\leq t_1\}\right]=\lambda\int_{0}^{B}F(u)du
    \end{equation}
    Similarly,
    \begin{equation}
        \mathbb{E}\left[\sum_{j=N(t_{K-1})+1}^{N(T)}\mathbbm{1}\{V_j+W_j\leq T\}\right]=\lambda\int_{0}^{B}F(u)du
    \end{equation}
    In addition, we can show
    \begin{equation}
        \mathbb{E}[\Tilde{R}_T-R_T(\pi_4)]\leq O(\log K)
    \end{equation}
    by almost the same arguments in the proof of Theorem \ref{Thm: RegretPA}, and we omit the details. Thus, there exists constant $C_1$ and $C_2$ such that
    \begin{equation}
        \mathbb{E}[R_T^*-R_T(\pi_3)]\leq C_1\log\left(\frac{T}{B}\right)+C_2\lambda\int_{0}^{B}F(u)du
    \end{equation}
    which completes the proof.
\end{proof}
\end{document}